\DeclareMathOperator*{\argmin}{arg\,min}
\DeclareMathOperator*{\argmax}{arg\,max}
\DeclareMathOperator*{\Expec}{\mathbb{E}}
\newcommand{\expt}[2]{\ensuremath{\underset{{#1}}{\mathbb E}{\left[{#2}\right]}}}
\newcommand{\prob}[1]{\ensuremath{\mathbb{P}({#1})}}
\algrenewcommand\algorithmicrequire{\textbf{Input:}}
\algrenewcommand\algorithmicensure{\textbf{Input:}}
\algnewcommand{\LineComment}[1]{\State \(\triangleright\) #1}
\newcommand{\ibsp}{\ensuremath{\texttt{iX-BSP}}\xspace}
\newcommand{\fullbsp}{\ensuremath{\texttt{X-BSP}}\xspace}
\newcommand{\mlbsp}{\ensuremath{\texttt{ML-BSP}}\xspace}
\newcommand{\imlbsp}{\ensuremath{\texttt{iML-BSP}}\xspace}
\newcommand{\wf}{\ensuremath{\texttt{wildfire}}\xspace}
\newcommand{\JD}{\ensuremath{\mathbb{D}_{\sqrt{J}}}\xspace}
\newcommand{\DA}{\ensuremath{\mathbb{D}_{DA}}\xspace}
\newcommand{\aH}{\ensuremath{\alpha\!-\!H\overset{..}{o}lder}\xspace}
\providecommand{\customgenericname}{}
\newcommand{\newcustomtheorem}[2]{%
  \newenvironment{#1}[1]
  {%
   \renewcommand\customgenericname{#2}%
   \renewcommand\theinnercustomgeneric{##1}%
   \innercustomgeneric
  }
  {\endinnercustomgeneric}
}
\newtheorem{theorem}{Theorem}
\newtheorem{lemma}{Lemma}
\newtheorem{assumption}{Assumption}
\newtheorem{corollary}{Corollary}
\title{\ibsp: Incremental Belief Space Planning}
\author{\renewcommand\footnotemark{}Elad I. Farhi and Vadim Indelman
	\thanks{Elad I. Farhi is with the Technion Autonomous Systems Program (TASP), Technion - Israel Institute of Technology, Haifa 32000, Israel, {\tt{eladf@campus.technion.ac.il}}. Vadim Indelman is with the Department of Aerospace Engineering, Technion - Israel Institute of Technology, Haifa 32000, Israel, {\tt vadim.indelman@technion.ac.il}.
	}	
	\vspace{-15pt}
}
\date{}
\begin{document}

\maketitle

\vspace{-12pt}
\begin{center}
	\begin{abstract}
%opening line
Deciding \emph{what's next?} is a fundamental problem in robotics and Artificial Inteligence.   
%what is being done
Under belief space planning (BSP), in a partially observable setting, it involves calculating the expected accumulated belief-dependent reward (cost), where the expectation is with respect to all future measurements.
%what is the gap
Since solving this general un-approximated problem quickly becomes intractable, state of the art approaches turn to approximations while still calculating each planning session from scratch.
%what we propose
In this work we propose a novel paradigm, Incremental BSP (\ibsp), based on the key insight that calculations across planning sessions are similar in nature and can thus be appropriately re-used.
%what in this paper
We calculate the expectation incrementally by utilizing Multiple Importance Sampling techniques for selective re-sampling and re-use of measurement from previous planning sessions. The formulation of our approach considers general distributions and accounts for data association aspects. 
We demonstrate how \ibsp could benefit existing approximations of the general problem, introducing \imlbsp, which re-uses calculations across planning sessions under the common Maximum Likelihood assumption.
We evaluate both methods and demonstrate a substantial reduction in computation time while statistically preserving accuracy. The evaluation includes both simulation and real-world experiments considering autonomous vision-based navigation and SLAM.
As a further contribution, we introduce to \ibsp the non-integral \wf approximation, allowing one to trade accuracy for computational performance by averting from updating re-used beliefs when they are "close enough". We evaluate \ibsp under \wf demonstrating a substantial reduction in computation time while controlling the accuracy sacrifice. We also provide with analytical and empirical bounds of the effect \wf holds over the objective value. 
\end{abstract}

\end{center}

\vspace{-8pt}

\section{Introduction}
\label{sec:introduction}
%% problem
Computationally efficient approaches for decision making under uncertainty, are desirable in any autonomous system, and are entrusted with providing the next optimal action(s). 
%a common example being Belief Space Planning (BSP). [confusing - example of what?]}
%\VI{Reasoning about belief evolution along different candidate actions while considering future measurements [maybe better to just say "BSP is and instantiation of POMDP..." and move the colored text down next to eq. 1)]}, is an instantiation of the Partially Observable Markov Decision Process (POMDP) problem, known as PSAPCE-complete \citep{Papadimitriou87math}. 
%A common example for a decision making under uncertainty approach, is Belief Space Planning (BSP), an instantiation of the Partially Observable Markov Decision Process (POMDP) problem, known as PSAPCE-complete \citep{Papadimitriou87math}. 
%\VI{[suggest to rephrase:; maybe first state  a partially observable setting is common/important, captured/handled by POMDP; then a particular instantiation is BSP. Both are computationally intractable]}
The common and important setting in which the state is partially observable is usually handled by the Partially Observable Markov Decision Process (POMDP) problem known as PSAPCE-complete \citep{Papadimitriou87math}. Belief Space Planning (BSP) is a particular instantiation of the POMDP problem, and as such finding its globally optimal solution is also computationally intractable.
%Decision making under uncertainty and belief space planning (BSP) approaches are entrusted with providing the next optimal action sequence given a certain objective. The aforementioned is accomplished by reasoning about belief evolution for different candidate actions while taking into account different sources of uncertainty. The corresponding problem is an instantiation of a Partially Observable Markov Decision Process (POMDP) problem, which is known to be computationally intractable \citep{Bernstein02math} for all but the smallest problems, i.e. no more than few dozen states \citep{Kaelbling98ai}. %\newline

%%
%\begin{wrapfigure}{r}{0.5\linewidth}
\begin{figure}
	\centering
        \subfloat[]{\includegraphics[trim={0 0 0 0},clip, width=0.5\columnwidth]{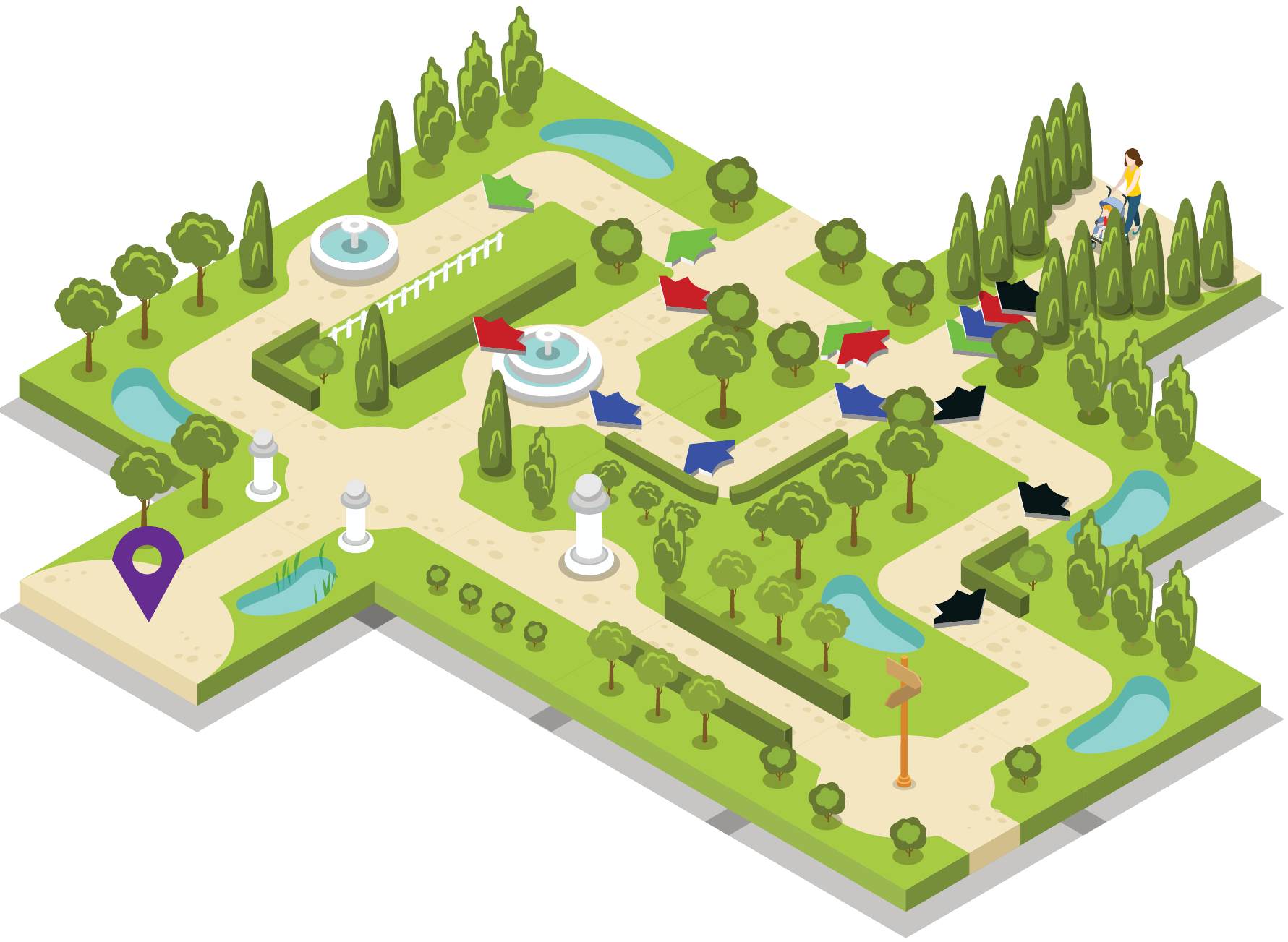}\label{fig:parkBSP:planning_k}}
        \subfloat[]{\includegraphics[trim={0 0 0 0},clip, width=0.5\columnwidth]{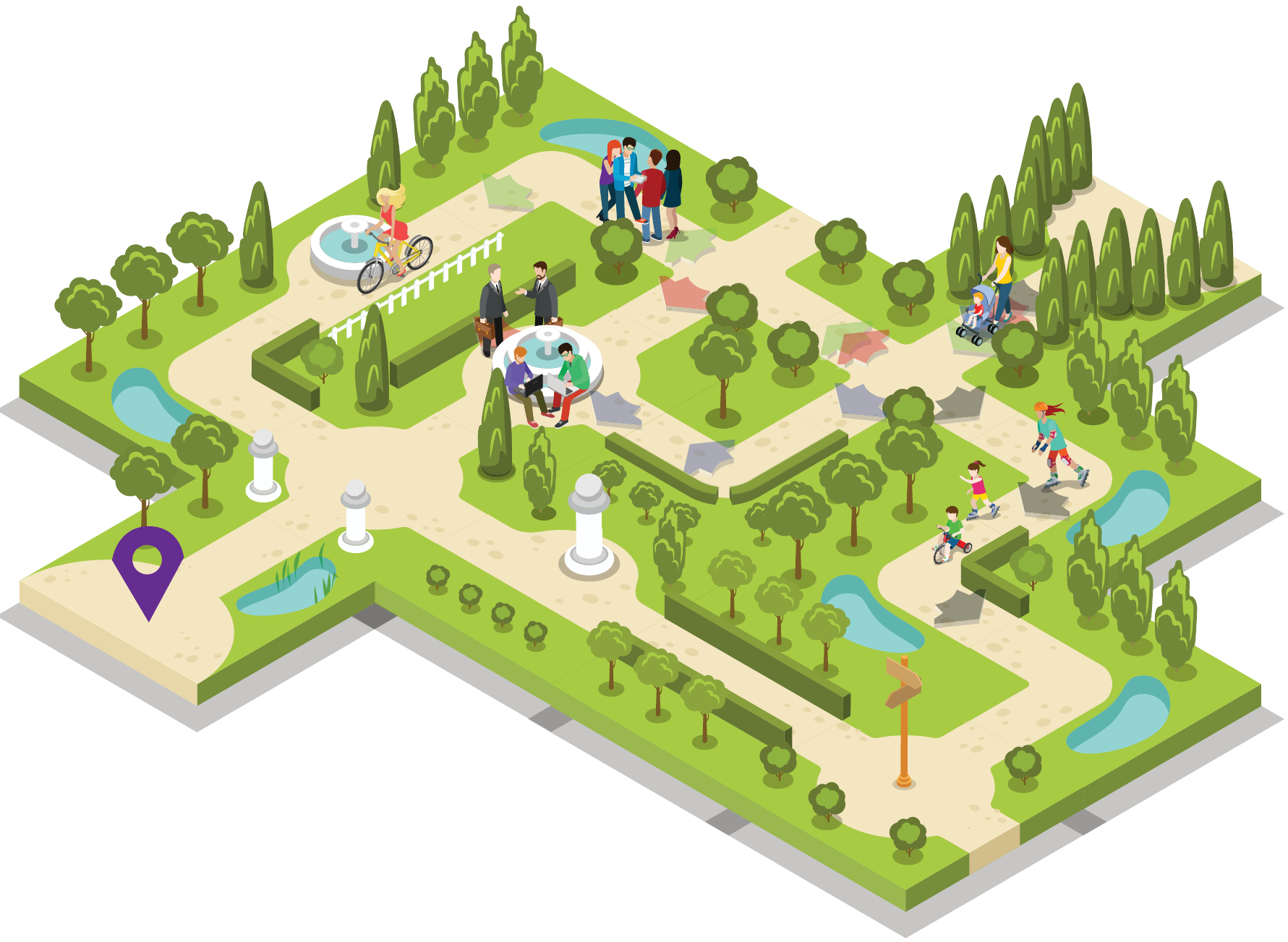}\label{fig:parkBSP:planning_k+1}}
        \caption{Illustrating the difference between \ibsp and \fullbsp using a simple decision making problem. Veronica would like to cross the park with her son as quickly as possible. (a) Before entering the park Veronica, knowing the layout, considers all possible routes, denoted by colored arrows. (b) After entering the park Veronica obtains new information which renders her existing plan suboptimal. Under \fullbsp Veronica would replan from scratch while under \ibsp she would simply incrementally update her existing plan with the new information.}
        \label{fig:parkBSP}
\end{figure}
%\end{wrapfigure}
%%

The main cause for the BSP problem intractability, resides with the use of expectation in the objective function, i.e. reasoning about belief evolution along different candidate actions while considering all possible future measurements %$J(\mathcal{U}) = \Expec_{z}[\sum_i r_{i}\left(b_i,u_{i-1} \right)]$.  
\begin{equation}
	J(\mathcal{U}) = \expt{z}{ 
	\sum\limits_{i} r_{i}\left(b_i,u_{i-1} \right)
	}.
\end{equation}
The objective over a candidate action sequence $\mathcal{U}$, is obtained by calculating the expected value of all possible rewards (costs) $r$ received from following $\mathcal{U}$. Since the reward (cost) function is a function of the belief $b$ and the action led to it $u$, in practice the objective considers all future beliefs obtained from following $\mathcal{U}$, i.e. the expectation considers the joint measurement likelihood of all future measurements $z$.
We refer to this general problem as the full solution of BSP, denoted by \fullbsp, expectation based BSP.

 The exponential growth of possible measurements and candidate actions, usually denoted as the \emph{curse of history}, is the key aspect targeted by a lot of research efforts. 
Performing inference over multiple future beliefs is the key reason for the costly computation time of \fullbsp. In a planning session with a horizon of 3 steps ahead, 3 candidate actions per step and 3 samples per action, we are required to solve a staggering number of 819 beliefs. Cutting down on the computation time of each belief would benefit the overall computation time of the planning process. 
 
 As in any computational problem, one can either streamline the solution process or change the problem, i.e. take simplifying assumptions or approximations.% of the original problem.

%% solutions today
%Two solution tracks: streamline the problem, approximate the problem.
	% approximations to the BSP problem

Indeed, over the years, numerous approaches have been developed to trade-off suboptimal performance with reduced computational complexity of POMDP, see e.g.~\cite{Pineau06jair, Kurniawati08rss, Hollinger14ijrr,Toussaint09icml}. While the majority of these approaches, including \cite{Prentice09ijrr, Platt10rss,Platt11isrr,Bry11icra, VanDenBerg12ijrr}, assumed some sources of absolute information (GPS, known landmarks) are available or considered the environment to be known, recent research relaxed these assumptions, accounting for the uncertainties in the mapped environment thus far as part of the decision making process (e.g.~\cite{Kim14ijrr, Indelman15ijrr}) at the price of increased state dimensionality. 
Other than assuming available sources of absolute information, some of these approaches use discretization in order to reduce computational complexity.
 Sampling based approaches, e.g.~\cite{AghaMohammadi14ijrr, Bry11icra,Hollinger14ijrr,Prentice09ijrr,Kurniawati08rss}, discretize the state space using randomized exploration strategies to locate the belief's optimal strategy. Other approaches, e.g.~\cite{Richter15isrr}, discretize the action space thus trading optimality with reduced computational load.
 While many sampling based approaches (e.g.~\cite{Kavraki96tra, Lavalle01ijrr, Karaman11ijrr})
 , including probabilistic roadmap (PRM)~\citep{Kavraki96tra}, rapidly exploring random trees (RRT)~\citep{Lavalle01ijrr}, RRT*~\citep{Karaman11icra} and rapidly exploring random graph (RRG)~\citep{Karaman11ijrr} assume perfect knowledge of the state (i.e. MDP framework), along with deterministic control and known environment, efforts have been made to assuage these simplifying assumptions. These efforts vary in the alleviated-assumptions, from the belief roadmap (BRM)~\citep{Prentice09ijrr} and the rapidly exploring random belief trees (RRBT)~\citep{Bry11icra}, through, Partially Observable Monte-Carlo Planning (POMCP)~\citep{Silver10nips}, Determinized Sparse Partially Observable Tree (DESPOT)~\citep{Ye17jair,Luo16ijrr} and up to active full SLAM in discrete \citep{Stachniss04iros} and  continuous \citep{Indelman15ijrr} domains accounting for uncertainties in the environment mapped thus far as part of the decision making process (e.g.~\cite{Stachniss04iros, Valencia13tro, Kim14ijrr, Indelman15ijrr}) at the price of increased state dimensionality.
 
While all the aforementioned research efforts tackle the curse of history through providing various approximations to the \fullbsp problem, a common denominator for some of them is the Maximum Likelihood (ML) assumption \citep{Platt10rss}, which allows to prune \fullbsp by considering only the maximum likelihood measurements rather than all possible ones. We denote the use of ML in BSP as \mlbsp. %\VI{[POMCP and DESPOT do not take the ML assumption]} \EF{never meant to suggest that, is the formulation better now?}

%\VI{[note until now the reader does not have a clue what do you propose in this paper. consider mentioning briefly; this will also better transition to the next section ]}
While the aforementioned research efforts mainly focused on approximating the \fullbsp, we suggest a new paradigm, challenging the standard formulation of \fullbsp by re-using calculations across planning sessions thus saving valuable computation time not at the expense of accuracy. 
The following section provides a more in-depth cover of research efforts related to calculation re-use in decision making under uncertainty.

% Related Work
\subsection{Related Work}\label{subsec:relatedWork}

	% streamline approximations - in the form of re-use information
As mentioned above, in strike contrast to the vast amount of research invested in approximating the \fullbsp problem, only few tried re-using calculations, this section shortly reviews some of these efforts.	

POMCP \citep{Silver10nips} is the POMDP generalization of MCTS \citep{Coulom06iccg}. 
POMCP is a point-based POMDP; as such, it is implicitly using particle filtering to express and update future beliefs,  thus breaking the curse of dimensionality. It also maintains a history search tree, containing all possible actions and only sampled measurements, thus breaking the curse of history.
At each planning session POMCP develops the tree using a black box simulator and search for the optimal policy through maintaining an upper bound over each tree node using the PO-UTC algorithm (see section 3.1 in \cite{Silver10nips}). Once an action has been executed, POMCP is pruning the existing tree of all branches but the one containing the current history, thus re-using the statistics obtained so far for the appropriate segment of the tree. Other than parameter tuning, POMCP does not require any offline calculations, although it does require the existence of a black box simulator for sampling future scenarios and their rewards. %As an approximation of the original POMDP problem, POMCP implicitly evaluates the objective through empirical mean return rather than expected return.

Adaptive Belief Tree (ABT) \citep{Kurniawati16chapter} builds on POMCP but thrives to reuse an existing policy instead of calculating a new one from scratch. Given some offline calculated policy, ABT resamples only the parts affected by newly acquired information, thus re-using the rest of the policy. In addition to offline policy calculation, ABT also requires as an input the set of all states affected by the newly acquired information, which is considered as given. Although re-using unaffected parts of the policy, ABT substitutes the affected parts of the policy with freshly sampled rather than incrementally updating them with the new information.

DESPOT \citep{Ye17jair} is also a point-based POMDP, which implicitly uses particle filtering to express and update the belief. Same as POMCP it also maintains a history search tree containing all possible actions and only sampled measurements. 
Similarly to POMCP, at each planning session DESPOT incrementally develops the tree using a simulator while searching for the optimal policy. Unlike POMCP, DESPOT makes use of upper and lower bounds over an estimate of the regularized weighted return value of each tree node, which is supposed to be less greedy than the use of PO-UCT. The lower bound is calculated over the user defined default policy (calculated offline).
 Once a policy has been chosen, the DESPOT tree is pruned of all other actions; this action-pruned DESPOT is denoted as the DESPOT policy. While following the DESPOT policy, if the agent encounters a scenario not present in the DESPOT policy it will follow the default policy from then on. Similarly it follows the default policy when encountering a leaf node of DESPOT policy. 
DESPOT also makes use of regularization in order to assure the desired performance bound while everting from potentially overfitting due to the sampled scenarios. The regularization constant as well as other heuristic related parameters are calculated offline and are case sensitive. As opposed to POMPC which operates under an MPC setting, DESPOT follows a DESPOT policy to the end of its horizon and does not make any re-use of the statistical results.  

is-DESPOT \citep{Luo19ijrr} builds on DESPOT \citep{Ye17jair} and introduces importance sampling into the scenario sampling procedure. Under is-DESPOT, scenarios are sampled from some importance distribution. The importance sampling distribution is calculated offline and is also case sensitive without explicitly addressing the deployment problem.

DESPOT-$\alpha$ \citep{Garg19rss} also builds on DESPOT \citep{Ye17jair} and introduces $\alpha$ vector approximation to the implicit particle filtering problem of propagating and maintaining future beliefs, thus better representing belief uncertainty under relatively large measurement space. 

%\VI{[consider mentioning/discussing  belief-dependent rewards. 	Not all of the above works can support them; DESPOT-$\alpha$ can]} \EF{true, yet I feel like it would steal the focus, even if all of them consider belief dependent reward they still won't make the cut. I will mention that we do have a belief dependent reward unlike most of the related research when reviewing our contribution}
	
In contrast to the aforementioned efforts, although under some more simplifying assumptions amongst them ML, both \cite{Chaves16iros} and \cite{Kopitkov17ijrr} re-use computationally expensive calculations during planning.
In their work, Chaves and Eustice \citep{Chaves16iros}, consider a Gaussian belief under \mlbsp in a Bayes tree \citep{Kaess10wafr} representation. All candidate action sequences consider a shared location (entrance pose), thus enabling to re-use a lot of the calculations through state ordering constrains. That work enables to efficiently evaluate a single candidate action across multiple time steps, and is conceptually applicable to multiple candidate actions at a single time step.
While Kopitkov and Indelman \citep{Kopitkov17ijrr,Kopitkov19ijrr}, also consider a Gaussian belief under \mlbsp, they utilize a  factor graph representation of the belief while considering  an information theoretic cost. Using an (augmented) determinant lemma, they are able to avert from belief propagation while re-using calculations throughout the planning session. Although they consider calculation re-use within the same planning session, their work can be augmented to consider re-use also between planning sessions.

%% gap 
To the best of our knowledge, in-spite of the  aforementioned research efforts, calculation re-use has only been done over \mlbsp, with restricting assumptions. While \mlbsp is widely used, the pruning of \fullbsp by considering only the most likely measurements, might mean choosing a sub-optimal action in case the biggest available reward is not the most likely one, in particular in presence of significant estimation uncertainty. 
Although under some conditions few of the aforementioned approaches re-use previously calculated policies, and even selectively re-calculate parts of the policy to account for newly acquired information, they do not address the problem of identifying the parts of the policy affected by the new information nor they incrementally update the policy to account for it.
  
As for today, up to our previous work \cite{Farhi19icra}, \fullbsp approaches do not re-use calculations across planning sessions.

\subsection{Paper Contributions}\label{subsec:contribution}	
%% iXBSP 

%\VI{[refer from somewhere to the belief tree diagram (currently fig. 3)?]}\EF{why?}

In this paper we introduce Incremental eXpectation BSP, or \ibsp, which incrementally updates the expectation related calculations in \fullbsp by re-using previous planning sessions. In contrast to both POMPC and DESPOT which are point-based approximations of \fullbsp, \ibsp is identical to the general intractable formulation of \fullbsp with the single exception of calculation re-use. \ibsp requires no offline calculations and does not rely on any case sensitive heuristics. 
Instead of calculating the expected belief-dependent rewards, which form the objective value, from scratch, \ibsp selectively re-uses previously sampled measurements along with their associated beliefs. 
While POMPC re-uses previously calculated statistics between planning sessions of the pruned search tree it disregards impact of the newly received information over the sampled scenarios. Although ABT accounts for this new information by re-calculating affected parts of the policy, it does not address how to locate the affected segments of the policy or how to incrementally update them rather then discard them.
On the other hand, \ibsp incrementally updates the re-used beliefs with the updated posterior information, using our previous work on efficient belief update \citep{Farhi17icra,Farhi19arxiv}. 

Although in \cite{Farhi19icra}\footnote{A conference version of the current paper.} we already re-used previously sampled measurements in order to facilitate calculation re-use from precursory planning sessions, while addressing the general case, a general distribution, where the state is partially observable and thus has to be inferred within a POMDP framework, it is still based on the simplifying assumption that all samples taken from a single distribution, can be re-used.
By forcing samples, without assuring they constitute an adequate representation of the nominal distribution we should have sampled from, we risk blindly affecting both the estimation value and variance, of the expected objective. Moreover, the use of a single distribution, although not the nominal one, restricts the re-use potential of this paradigm.
Selectively resampling measurements as part of \ibsp, potentially results in estimating the expected objective value through multiple different distributions. We identify this problem within the Multiple Importance Sampling (MIS) problem \citep{Veach95siggraph}, residing in the field of importance sampling \citep{Glynn89ms} and formulate \ibsp accordingly. 

Figure~\ref{fig:parkBSP} illustrates the advantage of \ibsp over \fullbsp through a simple decision making under uncertainty problem, in-which Veronica and her son would like to cross the park in the fastest possible route. Assuming Veronica is already familiar with the park layout, she could solve her decision making problem by considering all four possible routes through the park (denoted by green, red, blue and black arrows in Figure~\ref{fig:parkBSP:planning_k}). As a result, Veronica chooses the blue path and enters the park only to find it populated as illustrated in Figure~\ref{fig:parkBSP:planning_k+1}. This newly acquired information changes the optimality and even validity of Veronica's solution. Under \fullbsp Veronica would calculate a new plan from scratch; instead, under \ibsp Veronica can simply update her existing plan with the newly acquired information, resulting with picking the black path as the fastest one.

% Wildfire
While Veronica's planning problem in Figure~\ref{fig:parkBSP} provides some intuition on how \ibsp provides the same solution as \fullbsp but faster, in real life scenarios the desirable accuracy sometime lies within the bounds of a suboptimal solution.
As mentioned, \ibsp is not an approximation of \fullbsp, it updates some precursory planning tree with all of the new posterior information. 

Additional contribution of our work comes to satisfy the desire to controllably sacrifice accuracy for performance, introducing the \wf approximation.
%\VI{[clarify this is another feature/contribution of our work]}Considering the desire to controllably sacrifice accuracy for performance we introduce an approximation into \ibsp denoted as \wf.
While \ibsp updates some existing planning tree to exactly match current information, \wf introduces a concept of "close enough" defined by a \wf distance threshold. Under \ibsp with \wf, whenever a belief of the existing planning tree is "close enough" to its updated counterpart, this "close enough" belief is considered as already updated, along with all of its decedents in the planning tree.
The choice of \wf threshold value would directly affect the obtainable objective value. When the threshold is taken to its minimum, i.e. zero distance, there is no approximation and the obtainable objective value would statistically match the one obtained by \ibsp without \wf. As we increase the allowable \wf distance threshold, it is as if we consider some or even all of the newly acquired information as irrelevant, which would directly impact the obtainable objective value. In this work we formulate the affect of the \wf threshold over the obtainable objective value, and support it with both analytical proof and empirical results. Moreover, we provide results indicating considerable reduction in computation time under the use of \wf in \ibsp.
% \VI{[clarify our results indicate that using \wf can drastically reduce re-planning time.]}\EF{\wf basically skips planning sessions where re-planning is not required, because we deal with static environment the reduction due to \wf}

% iML
As \ibsp is formulated over the original un-approximated problem of \fullbsp, we go further and support our claim made in \citep{Farhi19icra}, that \ibsp can be utilized to also reduce valuable computation time of existing approximations of \fullbsp.
Considering the commonly used approximation \mlbsp, we formulate incremental \mlbsp,  referred to as \imlbsp, by simply enforcing the ML assumption over \ibsp, as being done over \fullbsp.
Under \mlbsp, beliefs are propagated with zero innovation by considering just the most likely measurement for each candidate action, thus averting from expectation and minimizing the \emph{curse of history}. Given access to calculations from precursory planning, at each look ahead step $i$ in the current planning session, \imlbsp considers the appropriate sample from the $i_{th}$ look ahead step in the precursory planning session for re-use. If the sample constitutes an adequate representation, of the measurement likelihood we would have considered at the $i_{th}$ look ahead step in current planning session, then \imlbsp utilizes the associated previously solved belief from the precursory planning session. If the mentioned sample is considered as an inadequate representation of the mentioned measurement likelihood, \ibsp follows the course of \mlbsp , and the most likely measurement of the nominal measurement likelihood is considered instead.

%% Contributions
To summarize, our contributions in this paper %\footnote{An earlier version appeared in \cite{Farhi19icra}.} 
are as follows: 
$(a)$ We present a novel paradigm for incremental expectation belief space planning with selective resampling (\ibsp). Our approach incrementally calculates the expectation over future observations by a set of samples comprising of newly sampled measurements and re-used samples generated at different planning sessions. 
$(b)$ We identify the problem of \ibsp with selective resampling as a Multiple Importance sampling problem, and provide the proper formulation while considering the balance heuristic. 
$(c)$ We evaluate \ibsp in simulation and provide statistical comparison to \fullbsp, which calculates expectation from scratch, while considering the problem of autonomous navigation in unknown environments, across different randomized scenarios.
$(d)$ We introduce the \wf approximation into \ibsp, which enables one to controllably trade accuracy for performance.
$(e)$ We provide an analytical proof of the affect the choice of a \wf threshold would have over the objective value, in the form of bounds over the objective value error.
$(f)$ We provide empirical results of using \wf within \ibsp, as well as the affect \wf holds over the objective value error.
$(g)$ We support our claim, that \ibsp can be used to improve approximations of the general problem of \fullbsp, by introducing to \ibsp the commonly used ML approximation, and denote it as \imlbsp. The novel approach of \imlbsp, incrementally calculates the expectation over future observations, while considering either the most likely observation or some previously sampled observation, given from a precursory planning session.  
$(h)$ We evaluate \imlbsp in simulation as well as in real-world experiments and compare it to the commonly used approximation for the \fullbsp problem, \mlbsp, while considering the problem of autonomous navigation in unknown environments and active visual-SLAM setting with belief over high dimensional state space. %\VI{[clarify these experiments consider an active visual SLAM setting, which involves beliefs over high dimensional states]}

\subsection{Paper Outline}
This paper is organized as follows. Section~\ref{sec:problem-formulation} formulates the discussed problem, and provides the necessary theoretical background. Section~\ref{sec:approach} presents the suggested approach and its mathematical formulation. Section~\ref{sec:results} presents a thorough analysis of the suggested approach as well as a comparison to related work. Section~\ref{sec:conclusions} captivates the conclusions of our work along with future work and possible usage. To improve coherence some theoretical background as well as proofs are covered in appendices.

\vspace{-5pt}
\section{Background and Problem Formulation}
\label{sec:problem-formulation}

This section provides the theoretical background for belief space planning (BSP), starting with belief definition, followed by the BSP formulation and the common Maximum Likelihood (ML) approximation. While the formulation, as well as the suggested paradigm, are impartial to a specific belief distribution, throughout this paper we also provide the conventional case which deals with Gaussian distributions.

\subsection{Belief Definition}\label{subsec:belief}
Let $x_{t}$ denote the agent's state at time instant $t$ and $\mathcal{L}$ represent the mapped environment thus far. The joint state, up to and including time $k$, is defined as $X_{k}=\{x_{0},...,x_{k},\mathcal{L}\}$. We shall be using the notation ${t|k}$ to refer to some time instant $t$ while considering information up to and including time $k$. 
The unique time notation is required since this paper makes use of both current and future time indices in the same equations.
Let $z_{t|k}$ and $u_{t|k}$ denote, respectively, measurements and the control action at time $t$, while the current time is $k$. The measurements and controls up to time $t$ given current time is $k$, are represented by
\begin{equation}
z_{1:t|k}\doteq \{z_{1|k},...,z_{t|k}\} \ , \ u_{0:t-1|k} \doteq \{u_{0|k},...,u_{t-1|k}\},
\end{equation}
The posterior probability density function (pdf) over the joint state, denoted as the \emph{belief}, is given by
\begin{equation}\label{eq:p_k} 
	b[X_{t|k}] \doteq \prob{X_{t}|z_{1:t|k},u_{0:t-1|k}} = \prob{X_{t}|H_{t|k}}.
\end{equation}
where $H_{t|k} \! \doteq \! \{ u_{0:t-1|k},z_{1:t|k} \} $ represents history at time $t$ given current time $k$. The propagated belief at time $t$, i.e. belief $b[X_{t|k}]$ lacking the measurements of time $t$, is denoted by
\begin{equation}
	b^{-}[X_{t|k}] \! \doteq \! b[X_{t-1|k}]\!\cdot\prob{x_{t}|x_{t-1},u_{t-1|k}} \!= \prob{X_{t}|H^{-}_{t|k}},
\end{equation}
where $H_{t|k}^{-} \! \doteq \! H_{t-1|k} \cup \{u_{t-1|k}\}$.
Using Bayes rule, Eq.~(\ref{eq:p_k}) can be rewritten as 
\begin{equation}
	b[X_{t|k}] \propto \mathbb{P}(X_{0})  \prod_{i=1}^{t} \! \left[ \!\prob{x_{i}|x_{i-1},\!u_{i-1|k}} \prod_{j \in \mathcal{M}_{i|k}}  \prob{z_{i,j|k}|x_{i},l_{j}} \!\right] ,
\end{equation}
where $\prob{X_{0}}$ is the prior on the initial joint state, and $\prob{x_{i}|x_{i-1},u_{i-1|k}}$ and $\prob{z_{i,j|k}|x_{i},l_{j}}$  denote, respectively,  the motion and measurement likelihood models. Here, $z_{i,j|k}$ represents an observation of landmark $l_j$ from robot pose $x_i$, while the set $\mathcal{M}_{i|k}$ contains all landmark indices observed at time $i$, i.e. it denotes data association (DA). The DA of a few time steps is denoted by $\mathcal{M}_{1:i|k} \doteq \{\mathcal{M}_{1|k},\cdots,\mathcal{M}_{i|k}\}$. 

\subsection{Belief Space Planning}\label{subsec:bsp}

The purpose of BSP is to determine an optimal action given an objective function $J$,  belief $b[X_{k|k}]$ at planning time instant $k$ and, considering a discrete action space, a set of candidate actions $\mathcal{U}_k$. While these actions can be with different planning horizons, we consider for simplicity the same horizon of $L$ look ahead steps for all actions, i.e.~$\mathcal{U}_k=\{ u_{k:k+L-1} \}$. The optimal action is given by 
\begin{equation}\label{eq:optAction}
	u_{k:k+L-1|k}^{\star} = \underset{u_{k:k+L-1|k}\in \mathcal{U}_k}{\argmax} J(u_{k:k+L-1|k}),  
\end{equation}
where the general objective function $J(.)$ is defined as
\begin{flalign}\label{eq:objective}
	J(u) \! \doteq 
	\underset{z_{k+1:k+L|k}}{\Expec} \! \left[ \sum_{i=k+1}^{k+L} r_i \left( b[X_{i|k}],u_{i-1|k} \right) \right] \!,\!\!
\end{flalign}
with $u\doteq u_{k:k+L-1|k}$, immediate rewards (or costs) $r_i$ and where the expectation is with respect to future observations $z_{k+1:k+L|k}$ while,
\begin{equation}\label{eq:JmeasLikelihood}
	z_{k+1:k+L|k} \sim \prob{z_{k+1:k+L|k}|H_{k|k},u_{k:k+L-1}}.
\end{equation}
The expectation in (\ref{eq:objective}) can be written explicitly 
\begin{equation}\label{eq:objIntegral_full}
	J(u)  = \!\!\!\!\!
	\int\limits_{z_{k+1|k}} \!\!\!\!\! \prob{z_{k+1|k}|H_{k|k},u_{k|k}} \cdot r_{k+1}(.) + \hdots + \!\!\!\!\! \int\limits_{z_{k+1:i|k}} \!\!\!\!\! \prob{z_{k+1:i|k}|H_{k|k},u_{k:i-1|k}} \cdot r_{i}(.) +  \hdots. 
\end{equation}
Using the chain rule and the Markov assumption, we can re-formulate the joint measurement likelihood (\ref{eq:JmeasLikelihood}), as  
\begin{equation}\label{eq:measLikelihood}
	\prob{z_{k+1:k+L|k}|H_{k|k},u_{k:k+L-1}} = \prod_{i = k+1}^{k+L} \prob{z_{i|k}|H^-_{i|k}}
\end{equation}
where $H_{i|k}^{-}$ is a function of a specific sequence of measurement realization, i.e. 
\begin{equation}\label{eq:H_i_minus}
	H_{i|k}^{-} =H_{k|k} \cup \{z_{k+1:i-1|k},u_{k:i-1|k}\}.
\end{equation}
Using (\ref{eq:measLikelihood}), we can re-formulate (\ref{eq:objIntegral_full}) as 
\begin{equation}%\footnotesize
	J(u)  = \!\!\!\!\!
	\int\limits_{z_{k+1|k}} \!\!\!\!\!\!  
	\prob{z_{k+1|k}|H_{k+1|k}^{-}}   \left[ r_{k+1}\left( b[X_{k+1|k}],u_{k|k} \right) + \ldots  \!\!\!
	\int\limits_{z_{i|k}} \!\! \prob{z_{i|k}|H_{i|k}^{-}}  \left[ r_i \left( b[X_{i|k}],u_{i-1|k} \right)+ \ldots \right] \right],   
	\label{eq:objIntegral}
\end{equation}
where each integral accounts for all possible measurement realizations from an appropriate look ahead step, with $i\in (k+1,k+L]$ and $b[X_{i|k}] = \prob{X_{i|k} | H_{i|k}^{-}, z_{i|k}}$.

Evaluating the objective   for each candidate action  in $\mathcal{U}_k$ involves calculating (\ref{eq:objIntegral}), considering all different measurement realizations. As solving these integrals analytically is typically not feasible, in practice these are  approximated by sampling future measurements. Although the measurement likelihood $\prob{z_{i|k}|H^-_{i|k}}$ is unattainable, one can still sample from it.
Specifically, consider the $i$-th future step and the corresponding $H_{i|k}^{-}$ to some realization of measurements from the previous steps. In order to sample from $\prob{z_{i|k}|H_{i|k}^{-}}$, we should marginalize over the future robot pose $x_i$ and landmarks $\mathcal{L}$
\begin{equation}
\prob{z_{i|k}|H_{i|k}^{-}} =  \!\! \int\limits_{{x}_{i}} \int\limits_{ \mathcal{L}} \prob{z_{i|k}|x_{i}, \mathcal{L}} \cdot \prob{x_{i}, \mathcal{L}|H_{i|k}^{-}} dx_i d\mathcal{L},
\label{eq:likelihoodInt}
\end{equation}
where $\prob{x_{i}, \mathcal{L}|H_{i|k}^{-}}$ can be calculated from the  belief $b^{-}[X_{i|k}]\doteq \prob{X_{i|k}|H_{i|k}^{-}}$. We approximate the above integral via sampling as summarized in Alg.~\ref{alg:sampling_z}.
One can also choose to approximate further by considering only landmark estimates $\mathcal{\hat{L}}$ (i.e.~without sampling $ \mathcal{L}$). 

\begin{algorithm}
	%\scriptsize
	\caption{Sampling $z_{i|k} \sim \prob{z_{i|k}|H_{i|k}^{-}}$
		\label{alg:sampling_z}}
	\begin{algorithmic}[1]
		%\Require{ TBD}
		%\Statex
		
		\State\label{alg:sampling_z:chi} $\chi_i\doteq \{x_i, \mathcal{L} \} \sim \prob{x_i, \mathcal{L}  | H^-_{i|k}}$

		\State\label{alg:sampling_z:DA} Determine data association $\mathcal{M}_{i|k}(x_i, \mathcal{L})$

		\State\label{alg:sampling_z:sample} $z_{i|k} = \{z_{i,j|k} \}_{j\in \mathcal{M}_{i|k}(\chi_i)}$ with  $z_{i,j|k} \sim \prob{z_{i,j|k} | x_i, l_j}$

		\State \Return{$z_{i|k}$ and $\chi_i$}
	\end{algorithmic}
\end{algorithm}

Each sample $\chi_i$ and the determined DA (lines \ref{alg:sampling_z:chi}-\ref{alg:sampling_z:DA} of Alg.~\ref{alg:sampling_z}) define a measurement likelihood $\prob{z_{i|k} | \chi_i, \mathcal{M}_{i|k}(\chi_i)} = \prod_{j\in \mathcal{M}_{i|k}(\chi_i)} \prob{z_{i,j|k} | x_i, l_j}$ from which observations are sampled in line~\ref{alg:sampling_z:sample}. Considering $n_x$ samples, $\{\chi_i^n\}_{n=1}^{n_x}$, we can approximate Eq.~(\ref{eq:likelihoodInt}) by
\begin{equation}\label{eq:likelihoodApprox}
\prob{z_{i|k}|H_{i|k}^{-}} \approx  \eta_i \!\! 
\sum\limits_{n=1}^{n_x} \! w_i^n \cdot 
\prob{z_{i|k}| \chi_i^n, \mathcal{M}_{i|k}(\chi_i^n)}, 
\end{equation}
where $w_i^n$ represents the $n$-th sample weight, $\chi_i^n$, and $\eta_i^{-1} \doteq \sum_{n=1}^{n_x} w_i^n$. % \dfrac{1}{\sum_{n=1}^{n_x} w_i^n}$. 
%Here, since this is not importance sampling, see line~\ref{alg:sampling_z:chi}, we have identical weights.
Here, since all samples are generated from their original distribution (corresponding to the proposal distribution in importance sampling), see line~\ref{alg:sampling_z:chi}, we have identical weights. 

For each sample $\chi_i^n \in \{\chi_i^n\}_{n=1}^{n_x}$, we can generally consider $n_z$ measurement samples (line~\ref{alg:sampling_z:sample}), providing the set $\{ z_{i|k}^{n,m} \}_{m=1}^{n_z}$. In other words, Alg.~\ref{alg:sampling_z} yields $n_x\cdot n_z$ sampled measurements, denoted by $\{ z_{i|k} \}$, for a given realization of $z_{k+1:i-1|k}$. Thus, considering all such possible realizations, we get $(n_x\cdot n_z)^{i-k}$ sampled measurements for the look ahead step at time instant $i$, i.e. the $(i-k)$-th look ahead step for planning time instant $k$.

We can now write an unbiased estimator for (\ref{eq:objIntegral}), considering the $(n_x\cdot n_z)^{i-k}$ sampled measurements. In particular, for the look ahead step at time $i$, we get  
\begin{equation}\label{eq:ExApprx}
	\Expec_{z_{k+1:i|k}} \! \left[ r_i\left( b[X_{i|k}],u_{i-1|k} \right)  \right] \! \approx \!   \eta_{k+1} \!\!
	\underset{\{z_{k+1|k}\}} 
	{\sum} w_{k+1}^n \left(
	\cdots   
	\left(\eta_{i} \!\!
	\underset{\{z_{i|k}\}}{\sum} \!\!
	w_i^n \cdot r_i\left( b[X_{i|k}],u_{i-1|k} \right) \right) \cdots \right)
\end{equation}
where $H_{i|k}^{-}$  varies with each measurement realization. 
When the measurements that are used to estimate the expectation are being sampled from their nominal distributions, then all weights equal $1$, i.e. $\omega_i^{n}=1 \ \forall i, \ n$ , and evidently each normalizer equals the inverse of the sum of samples, i.e. $\eta_i^{-1} = n_x\cdot x_z \ \forall i$
\begin{equation}\label{eq:fullObj}
	J(u) = \sum_{i=k+1}^{k+L} \left[  \frac{1}{(n_x\cdot x_z)^{i-k}} \!\!\!\sum_{\{z_{k+1|k}\}} \hdots \sum_{\{z_{i|k}\}} r_i\left(b[X_{i|k}],u_{i-1|k} \right) \right].
\end{equation}
The above exponential complexity makes the described calculations quickly infeasible, due to both curse of dimensionality and history. In practice, approximate approaches, e.g.~Monte-Carlo tree search  \cite{Silver10nips}, must be used.  However, in this work we prefer to present our  paradigm considering the above formulation, without any further approximations, referring to it as \fullbsp. We believe our proposed concept can be applied in conjunction with existing approximate approaches; in particular, we demonstrate this on the commonly used approximation for the \fullbsp problem - the Maximum Likelihood approximation.

\subsection{Belief Space Planning under ML}\label{subsec:mlbsp} 
A very common approximation to Eq.~(\ref{eq:objective}) is based on the maximum likelihood (ML) observations assumption (see e.g.~\cite{Platt10rss, Kim14ijrr, Indelman15ijrr}). This approximation, referred to as \mlbsp, is often used in BSP and in particular in the context of active SLAM:  Instead of accounting for different measurement realizations, only the most likely observation is considered at each look ahead step, which corresponds to $n_x=n_z=1$ where the single sample is the most likely one. So under ML, the expectation from Eq.~(\ref{eq:objective}) is omitted, and the new objective formulation is given by
\begin{flalign}\label{eq:objectiveML}
	J^{ML}(u) \! \doteq %\!\!\!\!\!
	\sum_{i=k+1}^{k+L} r_i \left( b[X_{i|k}],u_{i-1|k} \right) ,
\end{flalign}
thus drastically reducing complexity at the expense of sacrificing performance. While the future belief $b[X_{i|k}]$ is given by $\prob{X_{0:i}|H_{k|k},u_{k:i-1},z^{ML}_{k+1:i|k}}$, and for the Gaussian case $z^{ML}_{k+1:i|k}$ are the measurement model mean-values.
 
\subsection{Problem Statement}\label{PS}
Consider the planning session at time instant $k$ has been solved by evaluating the objective (\ref{eq:objective}) via appropriate measurement sampling for each action in $\mathcal{U}_k$ and subsequently choosing the optimal action $u_{k:k+L-1|k}^{\star}$. 
A subset of this action, $u_{k:k+l-1|k}^{\star}\in u_{k:k+L-1|k}^{\star}$ with $l\in[1,L)$, is now executed, new measurements $z_{k+1:k+l | k+l}$ are obtained and the posterior belief $b[X_{k+l|k+l}]$ in inference is calculated, upon which a new planning session is initiated. 

Determining the optimal action sequence at time instant $k+l$ involves evaluating the objective function for each candidate action $u'\doteq u_{k+l:k+l+L-1|k+l} \in \mathcal{U}_{k+l}$
\begin{equation}\label{eq:objective_kl}
	J(u') \! \doteq 
	\Expec \! \left[ \sum_{i=k+l+1}^{k+l+L} r_i \left( b[X_{i|k+l}],u'_{i-1|k+l} \right) \right] \!,\!\!
\end{equation}
where the expectation is with respect to future observations $z_{k+l+1:k+l+L | k+l}$. Existing approaches perform these costly evaluations from scratch for each candidate action. 
Our \emph{key observation} is that expectation related calculations from two successive \fullbsp planning sessions at time instances $k$ and $k+l$ are similar and hence can often be re-used.
Our goal in this work is to develop an approach for evaluating the objective function (\ref{eq:objective_kl}) more efficiently by appropriately  re-using calculations from preceding planning sessions. %\VI{[can consider: Moreover, we aim for an approach that provides an unbiased estimator of (\ref{eq:objective_kl})...]}

At this point, we summarize our assumptions in this work.
\begin{assumption}\label{assumption:savedData}
Calculations from a previous planning session are accessible from the current planning session.
\end{assumption}

\begin{assumption}\label{assumption:overlapHorizon}
The planning horizon of current time  $k+l$, overlaps the planning horizon of the precursory planning time $k$, i.e.~$l \in \left[ 1,L\right)$.
\end{assumption}

\begin{assumption}\label{assumption:actions}
	Action sets $\mathcal{U}_{k+l}$ and $\mathcal{U}_{k}$ overlap in the sense that actions in $\mathcal{U}_{k}$ that overlap in the executed portion of the optimal action also partially reside in $\mathcal{U}_{k+l}$. In other words,
	$\forall u \in \mathcal{U}_{k}$ with $u \doteq \{u_{k:k+l-1|k}, u_{k+l:k+L-1|k}\}$ and 
	$u_{k:k+l|k}  \equiv  u_{k:k+l-1|k}^{\star}$, 
	$\exists u' \in  \mathcal{U}_{k+l}$ such that 
	$u' \doteq  \{u'_{k+l:k+L-1}, u'_{k+L:k+l+L-1} \}$
	and $u'_{k+l:k+L-1} \cap u_{k+l:k+L-1|k} \notin \emptyset $. 
\end{assumption}

Unlike assumption~\ref{assumption:savedData}, which is an integral part of \ibsp, assumptions~\ref{assumption:overlapHorizon}-\ref{assumption:actions} exist only as a mean to create a smaller group of candidate beliefs for re-use. By limiting ourselves to beliefs with a shared history, mostly the same action sequence and of the same future time, we obtain a relatively small set that is likely to produce a viable candidate for re-use. One can relax these assumptions after addressing the problem of efficiently searching a set of candidate beliefs.

%\EF{Basically the second and third assumptions exist only as a mean to create a small group of belief candidates for re-use faster, maybe we should mention here that unlike the first assumption, they are not a most. Then in our general explanations we can refer only to the first assumption as required, and in examples we can refer the last two.}\VI{[ok, makes sense to me]}

\vspace{-5pt}
\section{Approach}
\label{sec:approach}
% Starter
Based on our key observation, we now present our incremental BSP (\ibsp) approach, which enables to incrementally calculate the objective function by re-using calculations from previous planning sessions, thus saving valuable computation time while at the same time preserving the benefits of the expectation solution provided by \fullbsp.
As explained in Section~\ref{subsec:bsp} the immediate rewards, required for calculating the objective value, are in the general case a function of candidate actions and future posterior beliefs calculated over sampled measurements. 
The way \ibsp re-uses previous calculations is by enforcing specific measurements as opposed to sampling them from the appropriate measurement likelihood distribution. 
The measurements being enforced, were considered and sampled in some precursory planning session(s), in which each of the measurements had corresponding posterior belief and immediate reward. By enforcing some previously considered measurement, we can make use of the previously calculated posterior beliefs, instead of performing inference from scratch. In order to make use of the data acquired since these re-used beliefs have been calculated, when needed, we can incrementally update them to match the information up to current time. 

In the following we first analyze the similarities between two successive planning sessions (Section~\ref{subsec:compPlanning}), and use those insights as foundation to develop the paradigm for \ibsp. In Section~\ref{subsec:overview} we provide an overview of the entire \ibsp paradigm, and continue with covering each of the building blocks of \ibsp: Selecting beliefs for re-use and deciding whether there is sufficient data for calculations re-use (Section~\ref{subsec:bestBranch}), validating samples for re-use, incorporating forced samples and belief update (Section~\ref{subsec:beliefUpdate}), calculating expectation incrementally with forced samples (Section~\ref{subsec:incExp}).
In Section~\ref{subsec:wf} we introduce a non-integral approximation of \ibsp, denoted as \wf. Under the use of \wf, one can sacrifice estimation accuracy for computational performance, by setting a threshold for re-using beliefs without any update. 
Further, to demonstrate how \ibsp can be utilized to improve existing approximations of the original \fullbsp problem, we consider the particular case of \imlbsp (Section~\ref{subsec:iML}), which denotes \ibsp under the ML assumption.

It is worth mentioning that the following sections are accompanied by high-level algorithms, describing key aspects of \ibsp. In an effort to simplify these algorithms for the readers' behalf, some of them are written in a sub-optimal manner (complexity-wise). When coming to implement \ibsp, we trust the readers to adhere to the governing principles of \ibsp while writing the source code in a complexity efficient manner.   
%It is worth mentioning that all algorithms supplied here on out are written in a sub-optimal form (computation wise) in an effort to simplify them for the readers behalf. 

\subsection{Comparing Planning Sessions}\label{subsec:compPlanning}
This section analyzes the similarities between two planing sessions that comply with Assumptions~\ref{assumption:savedData}-~\ref{assumption:actions}. 
In order to do so, let us consider two planning sessions, both with horizon of $L$ steps ahead, the first occurred at time $k$ and the second at time $k+l$. Under Assumption~\ref{assumption:overlapHorizon} both planning horizons overlap, i.e. $l < L$, and under Assumption~\ref{assumption:actions} both planning sessions share some actions. For this comparison let us consider the action chosen at planning time $k$ which also partially resides in a candidate action from planning time $k+l$, and denote both as $u^{\star}_{k:k+L} = \{ u^{\star}_{k} , \hdots , u^{\star}_{k+L}\}$.
\begin{figure}
	\centering
		\includegraphics[trim={0 0 0 0},clip, width=0.55\columnwidth]{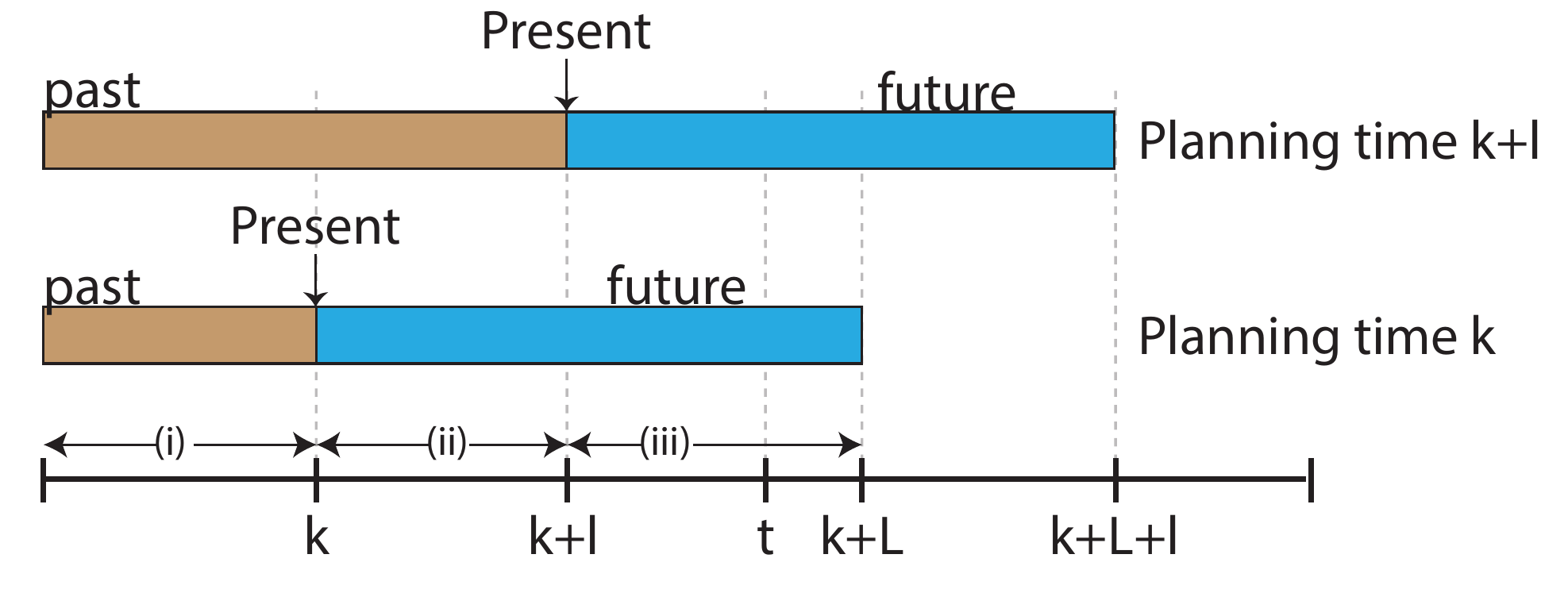}
        \caption{Horizon overlap between planning time k and planning time k+l, both with L steps horizon and same candidate actions: (i) The shared history of both planning sessions (ii) The possibly outdated information of planning time k, since in planning time k+l this time span is considered as known history (iii) Although in both it represents future prediction, it is conditioned over different history hence possibly different.}
        \label{fig:horizonOverlap}
\end{figure}
Figure~\ref{fig:horizonOverlap}, illustrates the aforementioned horizon overlap between two  beliefs at look ahead step at time $t$, given planning time $k$, i.e.~$b[X_{t|k}]$, and given planning time $k+l$, i.e.~$b[X_{t|k+l}]$, while the interesting shared sections, separated by time instances, are denoted as (i) (ii) and (iii). 

At future time $t \in \left[ k+l+1,k+L\right]$, the belief created by the action sequence $\overset{\star}{u}_{k:t-1}$ is given by
% We will now examine a belief of future time $t \in \left[ k+l+1,k+L\right]$ for each of the planning sessions, created by the same action sequence $\tilde{u}_{k:k+L}$. For planning at time $k$
%
\begin{multline}\label{eq:belief_t_k}
	b[X_{t|k}] \propto \underbrace{b[X_{k|k}]}_{(a)} \cdot \\
	\underbrace{\prod_{s=k+1}^{k+l} \! \left[ \!\prob{x_{s}|x_{s-1},u^{\star}_{s-1}} \! \prod_{g \in \mathcal{M}_{s|k}} \! \prob{\!z_{s,g|k}|x_{s},l_{g}} \right]}_{(b)} \cdot \! \underbrace{\prod_{i=k+l+1}^{t} \!\! \left[ \!\prob{x_{i}|x_{i-1},u^{\star}_{i-1}} \! \prod_{j \in \mathcal{M}_{i|k}} \!\! \prob{z_{i,j|k}|x_{i},l_{j}} \right]}_{(c)},
\end{multline}
where (\ref{eq:belief_t_k})$_{(a)}$ is the inference posterior at time $k$ corresponding to the lower-bar area $(i)$ in Figure~\ref{fig:horizonOverlap}, (\ref{eq:belief_t_k})$_{(b)}$ are the motion and observation factors of future times $k+1:k+l$ corresponding to the lower-bar area $(ii)$ and (\ref{eq:belief_t_k})$_{(c)}$ are the motion and observation factors of future times $k+l+1:t$ corresponding to the lower-bar area $(iii)$. For the same future time $t$ and the same candidate action, the belief for planning time $k+l$ is given by, 
\begin{multline}\label{eq:belief_t_k+l}
	b[X_{t|k+l}] \propto \underbrace{b[X_{k|k+l}]}_{(a)} \cdot \\
	\underbrace{\prod_{s=k+1}^{k+l} \!\!\! \left[ \!\prob{x_{s}|x_{s-1},u^{\star}_{s-1}} \!\!\!\! \prod_{g \in \mathcal{M}_{s|k+l}} \!\!\! \prob{\!z_{s,g|k+l}|x_{s},l_{g}\!} \!\right]}_{(b)} \cdot \!\!\!\! \underbrace{\prod_{i=k+l+1}^{t} \!\! \left[ \!\prob{x_{i}|x_{i-1},u^{\star}_{i-1}} \!\! \prod_{j \in \mathcal{M}_{i|k+l}} \!\! \prob{z_{i,j|k+l}|x_{i},l_{j}} \right]}_{(c)},
\end{multline}
where (\ref{eq:belief_t_k+l})$_{(a)}$ is the inference posterior at time $k$ corresponding to the upper-bar area $(i)$ in Figure~\ref{fig:horizonOverlap}, (\ref{eq:belief_t_k+l})$_{(b)}$ are the motion and observation factors of past times $k+1:k+l$ corresponding to the upper-bar area $(ii)$  and (\ref{eq:belief_t_k+l})$_{(c)}$ are the motion and observation factors of future times $k+l+1:t$ corresponding to the upper-bar area $(iii)$. 

Although seemingly conditioned on a different history ($k$ vs $k+l$), (\ref{eq:belief_t_k})$_{(a)}$ and (\ref{eq:belief_t_k+l})$_{(a)}$ are identical and denote the same posterior obtained at time $k$ (see Figure~\ref{fig:horizonOverlap} area $(i)$), leaving the difference between (\ref{eq:belief_t_k}) and (\ref{eq:belief_t_k+l}) restricted to (.)$_{(b)}$ and (.)$_{(c)}$. While (\ref{eq:belief_t_k})$_{(b)}$ represents future actions and future measurements predicted at time $k$, (\ref{eq:belief_t_k+l})$_{(b)}$ represents executed actions and previously acquired measurements, this can can be seen more clearly using area (ii) in Figure~\ref{fig:horizonOverlap}. At planning time $k$ (i.e. lower bar), area (ii) denotes future prediction for the time interval $k:k+l$, while at planning time $k+l$ (i.e. upper bar), the same time interval denotes past measurements, and so (\ref{eq:belief_t_k})$_{(b)}$ and (\ref{eq:belief_t_k+l})$_{(b)}$ are potentially different, depending on how accurate was the prediction at planning time $k$. As thoroughly explained in our previous work on \texttt{RUB~inference} \citep{Farhi19arxiv}, the difference between the predictions made during planning and the actual measurements obtained in present time is twofold, the difference in measurement values \citep[Section~3.4]{Farhi19arxiv} and the difference in data association \citep[Section~3.5]{Farhi19arxiv}.   

Even-though both (\ref{eq:belief_t_k})$_{(c)}$ and (\ref{eq:belief_t_k+l})$_{(c)}$ refer to future actions and measurements (see area (iii) in Figure~\ref{fig:horizonOverlap}) they do so with possibly different values and data association since they were sampled from possibly different probability densities. Solving the objective (\ref{eq:objIntegral}), requires sampling from (\ref{eq:likelihoodInt}) (e.g. using Alg.~\ref{alg:sampling_z}), the samples from planning time $k$ were sampled from $\prob{z_{t|k}|H_{t|k}^{-}}$, while the samples from planning time $k+l$ were sampled from $\prob{z_{t|k+l}|H_{t|k+l}^{-}}$. These probabilities would be identical only if conditioned on the same history, i.e. only if the predictions made for time interval $k:k+l$ at planning time $k$ were accurate both in data association and measurement values. 

As such, in order to mind the gap between (\ref{eq:belief_t_k}) and (\ref{eq:belief_t_k+l}), and obtain identical expressions one must update (\ref{eq:belief_t_k})$_{(b)}$ to match (\ref{eq:belief_t_k+l})$_{(b)}$, and second, to adjust the samples from (\ref{eq:belief_t_k})$_{(c)}$ to properly represent the updated measurement probability density.

%first presented in \cite{Farhi19icra}, under the simplifying assumption, that all samples from precursory planning session can be re-used. 
%While this assumption supplied simpler expressions, as all samples being used shared the same distribution $\prob{z_{i|k}|H_{i|k}^{-}}$, it might have caused an unrepresentative sample for $\prob{z_{i|k+l}|H_{i|k+l}^{-}}$. As such, it would create a statistical gap between the output of \ibsp and \fullbsp. 
%In this work we relax this simplifying assumption and complete the formulation of \ibsp for selective resampling between successive planning sessions. The purpose of the selective resampling is to adequately represent $\prob{z_{i|k+l}|H_{i|k+l}^{-}}$ by a set of samples sampled from various different distributions.

\subsection{Approach Overview}\label{subsec:overview}
This section presents an overview of \ibsp at planning time $k+l$, whilst the relevant precursory planning session occurred at planning time $k$, as summarized in Alg.~\ref{alg:ibsp}. 
For the reader's convenience all the notations of this section are summarized in Table~\ref{table:overview}.
After executing $l$ steps out of the (sub)optimal action sequence suggested by planning at time $k$, and performing inference over the newly received measurements, we obtain $b[X_{k+l|k+l}]$. Performing planning at time $k+l$ under \ibsp, requires first 
deciding on the planning sub-tree from the precursory planning session to be considered for re-use (Alg.~\ref{alg:ibsp} line~\ref{alg:ibsp:selectBranch}). 
Considering belief roots of candidate planning sub-trees, the selected sub-tree is the one with the "closest" belief root to $b[X_{k+l|k+l}]$, i.e. the one with the minimal distance to it while considering some appropriate probability density function distance. We denote the closest belief root and the appropriate planning sub-tree as $\overset{\sim}{b}[X_{k+l|k}]$ and $\mathcal{B}_{k+l|k}$ respectively. 
%
%first locating the "closest belief" to $b[X_{k+l|k+l}]$ from the precursory planning session (Alg.~\ref{alg:ibsp} line~\ref{alg:ibsp:selectBranch}). The "closest belief" to $b[X_{k+l|k+l}]$, denoted as $\overset{\sim}{b}[X_{k+l|k}]$, is the one with the minimal distance to it, while considering some appropriate probability density function distance. 
In case the distance of the closest belief to $b[X_{k+l|k+l}]$ (denoted by \texttt{Dist}) is larger than some critical value $\epsilon_c$, i.e. the closest prediction from the precursory planning session is too far off, \ibsp would presumably have no advantage over the standard \fullbsp so the latter is executed (Alg.~\ref{alg:ibsp} line~\ref{alg:ibsp:fullbsp}). 
On the other hand, if \texttt{Dist} is smaller than the critical value $\epsilon_{wf}$, we consider the difference between the beliefs as insignificant and continue with re-using the precursory planning session without any additional update (Alg.~\ref{alg:ibsp} line~\ref{alg:ibsp:wf}). We denote the aforementioned as \wf.
When the precursory planning is close-enough (see Figure~\ref{fig:distZones}), we can appropriately re-use it to save valuable computation time. 
While we go further and elaborate on specific methods we used in this work, e.g. determining belief distance (see Section~\ref{subsec:bestBranch}) or representative sample (see Section~\ref{ssubsec:repSample}), \ibsp is indifferent to any specific method, as long as it serves its intended purpose. 

\begin{algorithm}
	%\scriptsize
	\caption{iX-BSP: Planning time k+l}\label{alg:ibsp}
	\begin{flushleft}
		\textbf{Input:} \\
		\hspace{30pt} \texttt{data} \Comment{Calculations used for the precursory planning session }\\
		\hspace{30pt} $b[X_{k+l|k+l}]$ \Comment{The up-to-date inference posterior for time $k+l$}\\	
		\hspace{30pt} \texttt{useWF}, $\epsilon_{wf}$, $\epsilon_{c}$ \Comment{User defined flags \& thresholds} \hspace{30pt} 
	\end{flushleft}
	\begin{algorithmic}[1]
		\State Dist , $\mathcal{B}_{k+l|k}$ $\gets$ \Call{SelectClosestBranch}{$b[X_{k+l|k+l}]$, \texttt{data}} \label{alg:ibsp:selectBranch} \Comment{see Section~\ref{ssubsec:candidateSet}}
%		\State Dist $\gets$ $D_{\alpha jsd}(\overset{\sim}{b}[X_{k+l|k}] ,b[X_{k+l|k+l}])$ \label{alg:ibsp:distance} \Comment{see Appendix~\ref{app:IS} for $\alpha jsd$ metric}
		\If{Dist $\leq$ $\epsilon_{c}$ } \label{alg:ibsp:distCond} \Comment{belief distance threshold $\epsilon_{c}$}
			\If{\texttt{useWF} $\cap$ (Dist $\leq$ $\epsilon_{wf}$) } \label{alg:ibsp:wfCond} \Comment{\wf threshold $\epsilon_{wf}$}
				\State \texttt{data} $\gets$ $\mathcal{B}_{k+l|k}$  \label{alg:ibsp:wf} \Comment{Reusing the entire selected branch without any update, see Section~\ref{subsec:wf}}
			\Else
				\State \texttt{data} $\gets$ \Call{IncUpdateBeliefTree}{$\mathcal{B}_{k+l|k}$} \label{alg:ibsp:updatePlanning} \Comment{see Section~\ref{subsec:beliefUpdate}}
			\EndIf
			\State \texttt{data} $\gets$ perform \fullbsp over horizon steps $k+L+1:k+L+l$ \label{alg:ibsp:lastSteps} 
			\State Solve Eq.~\ref{eq:objIntegral}, for each candidate action \Comment{see Section~\ref{subsec:incExp}}
			\State $u^{\star}_{k+i:k+L|k+i}$ $\gets$ find best action \label{alg:ibsp:findAction}
		\Else
			\State $u^{\star}_{k+i:k+L|k+i}$ $\gets$ perform \Call{\fullbsp}{$b[X_{k+l|k+l}]$} \label{alg:ibsp:fullbsp}
		\EndIf
		\State \Return{$u^{\star}_{k+i:k+L|k+i}$, \texttt{data}} \label{alg:ibsp:data}
	\end{algorithmic}
\end{algorithm}
The planning sub-tree $\mathcal{B}_{k+l|k}$ is comprised of all future beliefs, i.e. $k+l+1:k+L$, calculated as part of the planning session from time $k$, which originate in $\overset{\sim}{b}[X_{k+l|k}]$. 

We update these beliefs with the information received in inference between time instances $k+1$ and $k+l$, and selectively re-sample predicted measurements (line~\ref{alg:ibsp:updatePlanning}) in an effort to maintain a representative set of samples for the nominal distribution. In case one of the aforementioned beliefs also meets the \wf condition (see Section \ref{subsec:wf}) we consider it, and all of its descendants as already updated. Once the update is complete, we have a planning horizon of just $L-l$ steps, i.e. to the extent of the horizon overlap, hence we need to calculate the rest from scratch, i.e. perform \fullbsp for the final $l$ steps (line~\ref{alg:ibsp:lastSteps}). We are now in position to update the immediate reward function values and calculate their expected value in search of the (sub)optimal action sequence (line~\ref{alg:ibsp:findAction}), thus completing the planning for time $k+l$.

Since we are re-using samples from different planning sessions at planning time $k+l$ we are required to compensate for the different measurement likelihood, through proper formulation. In the sequel we show that our problem falls within the Multiple Importance Sampling problem (see~Appendix~\ref{app:IS}), so we estimate the expected reward values using importance sampling based estimator, thus completing the planning for time $k+l$. 

Differently from \fullbsp which returns only the selected action sequence, \ibsp is also required to return more data from the planning process in order to facilitate re-use (line~\ref{alg:ibsp:data}). 

\begin{table}%{R}{0.5\textwidth}
	\caption{Notations for Sections~\ref{subsec:overview}-\ref{subsec:bestBranch}} % title of Table
	\centering % used for centering table
	\begin{tabular}{c c} % centered columns (4 columns)
		\hline\hline %inserts double horizontal lines
		\textbf{Variable} & \textbf{Description}  \\ [0.5ex] % inserts table heading
		\hline % inserts single horizontal line
		$ \Box_{t|k}$ & Of time $t$ while current time is $k$ \\[1ex]
		$\mathcal{M}_{t|k}$ & Data Association at time $t$ while current time is $k$  \\[1ex]
		$b[X_{t|k}]$ & belief at time $t$ while current time is $k$ \\[1ex]
		$b^-[X_{t|k}]$ & belief at time $t-1$ propagated only with action $u_{t-1|k}$ \\[1ex]
		$\mathcal{B}_{k|k}$ & The entire belief tree from planning at time $k$\\[1ex]
		$\overset{\sim}{b}[X_{t|k}]$ & The root of the selected branch for re-use in planning at time $t$ \\[1ex]
		$\mathcal{B}_{t|k}$ & The set of all beliefs from planning time $k$ rooted in $\overset{\sim}{b}[X_{t|k}]$\\[1ex]
		\texttt{Dist} & The distance between $\overset{\sim}{b}[X_{t|k}]$ and the corresponding posterior $b[X_{t|t}]$ \\[1ex]
		\texttt{data} & All available calculations from current and precursory planning session \\[1ex]
		$u^{\star}_{k:k+L|k}$ & The (sub)optimal action sequence of length $L$ chosen in planning at time $k$ \\[1ex]
		$\epsilon_c$ & belief distance critical threshold, above it re-use has no computational advantage \\[1ex]
		$\epsilon_{wf}$ & \wf threshold, bellow it distance is considered close-enough for re-use without any update \\[1ex]
		\texttt{useWF} & a binary flag determining whether or not the \wf condition is considered \\ [1ex]
		$\mathbb{D}(.)$ & belief divergence / metric \\[1ex]
		% [1ex] adds vertical space
		\hline\hline
	\end{tabular}
	\label{table:overview} 
\end{table}
\subsection{Selecting Beliefs for Re-use}\label{subsec:bestBranch}

\begin{figure}[]
	\centering
	\includegraphics[trim={0 0 0 0},clip, width=0.60\columnwidth]{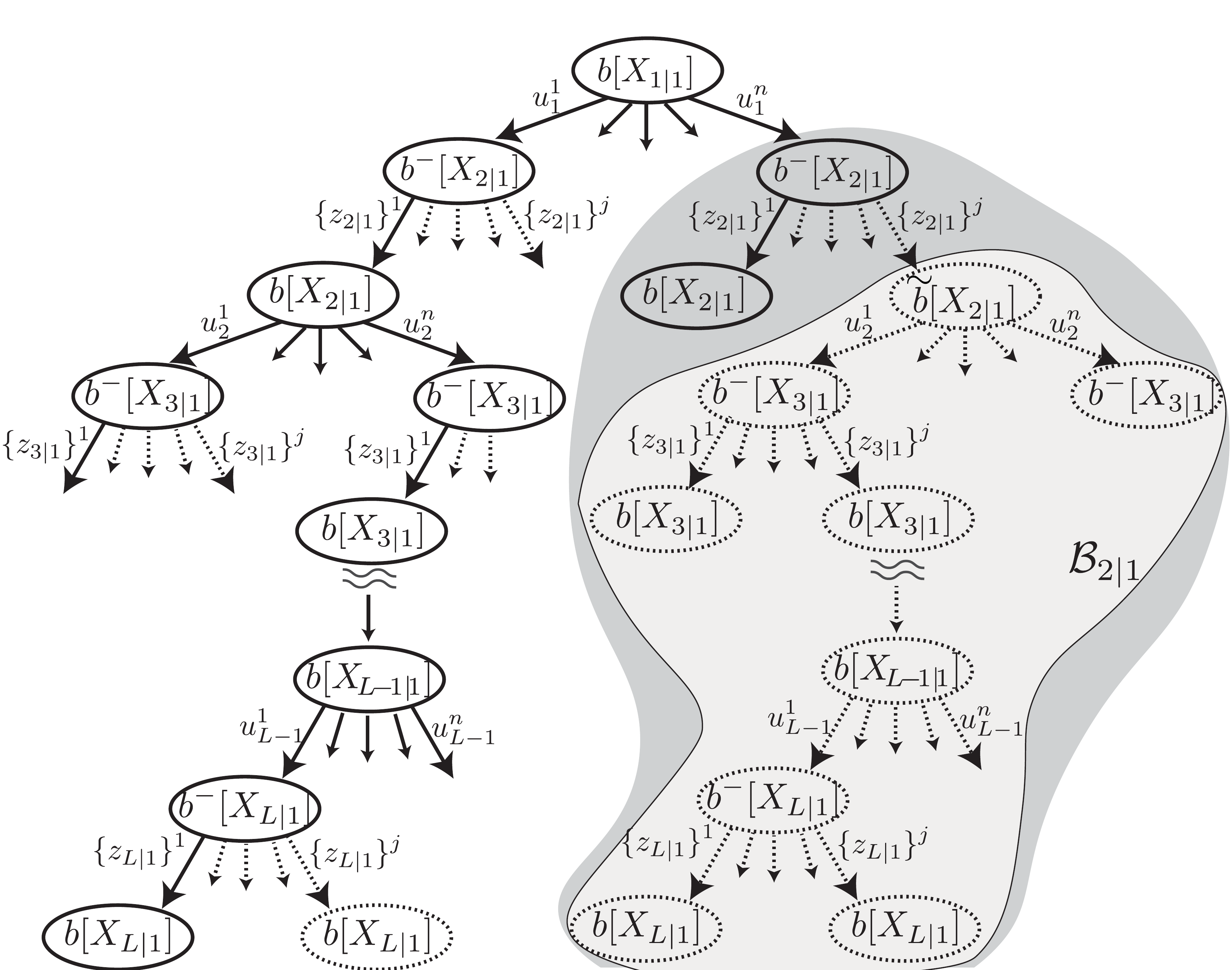}
	\caption{ \fullbsp performs lookahead search on a tree with depth $L$. Each belief tree node represents a belief. For each node, the tree branches either for a candidate action or a sampled measurement. 
		The corresponding belief tree for \mlbsp is marked with solid lines, while the dashed lines represent the parts of \fullbsp that relate to sampled measurements. Under \ibsp, the gray-marked parts of the tree are being re-used for the succeeding planning session.\vspace{-20pt}} 
	\label{fig:selectBranch}
\end{figure}
%% 
% what will we cover in this section
This section covers an integral part of \ibsp, dealing with how to select which beliefs to re-use, and from where.   
% why do we need to choose the closest branch
At each step along the planning horizon, \ibsp is required to choose beliefs for re-use. Our goal is to minimize any required updates, i.e. the beliefs we would like to re-use should be as "close" as possible to the beliefs we would have obtained through standard \fullbsp.
In \ibsp, as well as in \fullbsp, the number of beliefs per future time step is derived from the number of samples per action per time step; in order to re-use previous calculations while avoiding a computational load, we need to choose which beliefs to consider for re-use.
The need to obtain the closest belief for re-use entails three fairly complicated problems: Where to find it (Section~\ref{ssubsec:candidateSet}), how to find it (Section~\ref{ssubsec:beliefDist}), and how to determine what is considered "close" in belief space (Section~\ref{ssubsec:closeEnough}). %In the following we address all three. 
For the reader's convenience all the notations of this section are summarized in Table~\ref{table:overview}.

\subsubsection{Selecting the Candidate Set for Re-Use}\label{ssubsec:candidateSet}
While every set of previously calculated beliefs can serve as potential candidates for re-use, in this work we consider previous planning sessions as they are readily available. It is worth mentioning that the problem of searching a set of candidate beliefs can be computationally expensive, thus potentially sabotaging the efforts of \ibsp to relieve the computational load of BSP.
 In order to avert from directly dealing with the aforementioned "search problem" within belief space and maximize the chances of finding an adequate candidate for re-use, we introduce Assumptions~\ref{assumption:overlapHorizon}-\ref{assumption:actions} that are not an integral part of \ibsp. Following Assumption~\ref{assumption:overlapHorizon} we assure that the previous planning session has some overlapping horizon with the current planning session, hence increasing the chances of locating a "close enough" belief for re-use.

Using Assumption~\ref{assumption:actions} we can prune the full belief-tree from previous planning time $k$, denoted as $\mathcal{B}_{k|k}$, and consider only a subset of it while assuring overlapping of some candidate actions. 
More specifically, we prune $\mathcal{B}_{k|k}$ to consider the sub-tree $\mathcal{B}_{k+l|k} \subset \mathcal{B}_{k|k}$, which is rooted in $\overset{\sim}{b}[X_{k+l|k}]$ such that 
\begin{equation}
	\overset{\sim}{b}[X_{k+l|k}] = \argmin_{b[X_{k+l|k}] \in \mathcal{B}_{k|k}} \mathbb{D}(b[X_{k+l|k}] , b[X_{k+l|k+l}]),
\end{equation}
where $\mathbb{D}(.)$ is a metric ( or divergence) quantifying the difference between two beliefs (see Section~\ref{ssubsec:closeEnough}), $b[X_{k+l|k+l}]$ is the posterior from inference at time $k+l$, and $\overset{\sim}{b}[X_{k+l|k}]$ is one of the beliefs for lookahead step $l$ of planning at time $k$. This search for the closest belief is performed by Alg.~\ref{alg:BeliefDist} and discussed in Section~\ref{ssubsec:beliefDist}. 
By considering the sub-belief tree of the closest prediction from planning at time $k$ to the current posterior, we ensure minimal required update along the lookahead steps, i.e. minimizing the difference between the prediction (\ref{eq:belief_t_k})$_{(b)}$ and what eventually happened (\ref{eq:belief_t_k+l})$_{(b)}$. 

Without loss of generality, we now make use of Figure~\ref{fig:selectBranch} to illustrate the branch selection process, i.e. how we choose a candidate set of beliefs given an entire previous planning tree. Figure~\ref{fig:selectBranch} illustrates a belief tree of \fullbsp at planning time $t=1$ for a horizon of $L$ steps, with $n$ candidate actions and $j$ sampled measurements per step, resulting with $\left(n \cdot j \right)^1$ different beliefs for future time $t=2$ and $\left(n \cdot j \right)^{L-1}$ for future time $L$. Let us assume action $u_1^n$ has been determined as optimal at planning time $t=1$ and has been executed. After attaining new measurements for current time $t=2$ and calculating the posterior belief $b[X_{2|2}]$, we perform planning once more. Now, under \ibsp, instead of calculating everything from scratch we would like to re-use previous calculations; specifically, under Assumption~\ref{assumption:overlapHorizon} we consider the beliefs calculated at planning time $t=1$. Instead of considering the entire tree $\mathcal{B}_{1|1}$ for re-use, we look for some sub-tree $\mathcal{B}_{2|1} \subset \mathcal{B}_{1|1}$ rooted in $\overset{\sim}{b}[X_{2|1}]$ such that
\begin{equation}
	\overset{\sim}{b}[X_{2|1}] = \argmin_{b[X_{2|1}] \in \mathcal{B}_{1|1}} \mathbb{D}(b[X_{2|1}] , b[X_{2|2}]).
\end{equation}
We start by considering all beliefs $\{b^i[X_{2|1}]\}_{i=1}^{n\cdot j}$ meeting Assumption~\ref{assumption:actions}, i.e. all beliefs marked by the dark gray area in Figure~\ref{fig:selectBranch} which considered the same action sequence.
Now, from the remaining $j$ beliefs, using Alg.~\ref{alg:BeliefDist} (see Section~\ref{ssubsec:beliefDist}) we denote the closest belief to the posterior $b[X_{2|2}]$ as $\overset{\sim}{b}[X_{2|1}]$. 
Once we determined $\overset{\sim}{b}[X_{2|1}]$, we define the closest branch as consisting of all the beliefs rooted in $\overset{\sim}{b}[X_{2|1}]$, and denote it as $\mathcal{B}_{2|1}$, marked in Figure~\ref{fig:selectBranch} by the light gray area. In case there are no beliefs in the set  $\{b^i[X_{2|1}]\}_{i=1}^{n\cdot j}$ meeting Assumption~\ref{assumption:actions}, we will need to search through the entire set for the closest belief $\overset{\sim}{b}[X_{2|1}]$. In the following section we describe how the closest belief is located.

\subsubsection{Finding the Closest Belief}\label{ssubsec:beliefDist}

%\VI{[The transition from and connection to the previous section is still confusing. Suggest to clarify this. In particular relation to the sub-tree that you found in the previous section; For example, here, you aim to identify, for each posterior belief $b[X_{i|k+l}]$ (remind what $i$ refers to), the closest belief in that sub-tree]}

This section covers the problem of how to locate the closest belief given a set of candidate beliefs, as required when selecting the closest branch for re-use (Section~\ref{ssubsec:candidateSet}, Alg.~\ref{alg:ibsp} line~\ref{alg:ibsp:selectBranch}) or when incrementally updating the belief tree under \ibsp (Alg.~\ref{alg:updatePrevPlanning} line~\ref{alg:updatePrevPlanning:beliefDist}).
\begin{algorithm}
	%\scriptsize
	\caption{ClosestBelief}\label{alg:BeliefDist}
	\begin{flushleft}
		\textbf{Input:}  \\
		\hspace{30pt} $\mathcal{B}_{k+l|k}$ \Comment{set of candidate beliefs for re-use from planning at time $k$, see Section~\ref{subsec:bestBranch} }\\
		\hspace{30pt} $b[X_{i+1|k+l}]$ \Comment{The belief to check distance to, from planning at time $k+l$}\\
	\end{flushleft}
	\begin{algorithmic}[1]
		\State $\delta_{min} = 0$
		\For{$b[X_{i+1|k}] \in \mathcal{B}_{k+l|k}$}
			\State $\delta$ $\gets$ \Call{$\mathbb{D}$}{$b[X_{i+1|k}]$, $b[X_{i+1|k+l}]$} \Comment{probability metric/ divergence to determine belief distance}\label{alg:BeliefDist:metric}
			\If{$\delta \leq \delta_{min}$} \Comment{keeping track over the shortest distance}
				\State $\delta_{min}$ $\gets$ $\delta$
				\State $b^{'}[X_{i+1|k}]$ $\gets$ $b[X_{i+1|k}]$
			\EndIf
		\EndFor  
		\State \Return{$\delta_{min}$, $b^{'}[X_{i+1|k}]$}
	\end{algorithmic}
\end{algorithm} 
As part of our problem, we have a set of candidate beliefs for re-use, denoted as $\mathcal{B}_{k+l|k}$, and some posterior $b[X_{i|k+l}]$ we wish to be close to. Our goal is to find within $\mathcal{B}_{k+l|k}$ the closest belief to $b[X_{i|k+l}]$, where $i>k+l$ denote some lookahead step.  
Locating the closest belief requires quantifying the differences between two beliefs into a scalar distance. 
We denote the distance function, whether a metric or a divergence, by $\mathbb{D}(.)$. % (Alg.~\ref{alg:BeliefDist} line~\ref{alg:BeliefDist:metric}).
Let us consider some candidate belief $b[X_{i|k}] \in \mathcal{B}_{k+l|k}$, although referring to the same future time $i$ as $b[X_{i|k+l}]$, it is conditioned on different history, and therefore is potentially different. While Section~\ref{subsec:compPlanning} discussed the reasons for such difference between $b[X_{i|k}]$ and $b[X_{i|k+l}]$, here we quantify this difference using a belief distance. 
Projecting $b[X_{i|k}]$ into our belief distance space yields a point that suggests how different is $b[X_{i|k}]$ from $b[X_{i|k+l}]$.   
After projecting all candidate beliefs from $\mathcal{B}_{k+l|k}$ into the belief distance space in reference to $b[X_{i|k+l}]$, the problem of locating the closest belief to $b[X_{i|k+l}]$ is reduced to a problem of locating the nearest neighbor. 
 
While there are more efficient ways to determine the closest belief, we made use of a simple realization of \texttt{BeliefDist}(.) in  Alg.~\ref{alg:BeliefDist}.
%in order to better convey to the reader the part of the \texttt{BeliefDist}(.) function within \ibsp, we present a simple realization of \texttt{BeliefDist}(.) in  Alg.~\ref{alg:BeliefDist}. 
Any other, more efficient realization, that provides with the same end-result is acceptable, and would benefit the computational load reduction of \ibsp.
Alg.~\ref{alg:BeliefDist} determines the closest belief given a set of beliefs $\mathcal{B}_{k+l|k}$ and a target belief $b[X_{i|k+l}]$, by simply calculating the distance between the target belief to each belief in the set $\mathcal{B}_{k+l|k}$ using $\mathbb{D}(.)$, and picking the closest one. 
 The minimal distance associated with the closest belief is thus given by
 \begin{equation}
 	\delta_{min} = \min_{b[X_{i|k}] \in \mathcal{B}_{k+l|k}} \mathbb{D}(b[X_{i|k}],b[X_{i|k+l}]),
 \end{equation}
 %
%In the following we discuss the criteria for accepting $\delta_{min}$ as close enough for re-use. 
while the distinction whether $\delta_{min}$ is acceptable or not, happens outside of Alg.~\ref{alg:BeliefDist} (see Alg.~\ref{alg:ibsp} lines~\ref{alg:ibsp:distCond}-\ref{alg:ibsp:wfCond} and Alg.~\ref{alg:updatePrevPlanning} lines~\ref{alg:updatePrevPlanning:checkEps_c}-\ref{alg:updatePrevPlanning:checkEps_wf}) as discussed next.
%while the distinction of to which area does $\delta_{min}$ belongs to, happens outside of Alg.~\ref{alg:BeliefDist} (see Alg.~\ref{alg:ibsp} lines~\ref{alg:ibsp:distCond}-\ref{alg:ibsp:wfCond} and Alg.~\ref{alg:updatePrevPlanning} lines~\ref{alg:updatePrevPlanning:checkEps_c}-\ref{alg:updatePrevPlanning:checkEps_wf}).

\subsubsection{What is Close Enough}\label{ssubsec:closeEnough}
\begin{figure}[]
	\centering
	\includegraphics[trim={0 0 0 0},clip, width=0.30\columnwidth]{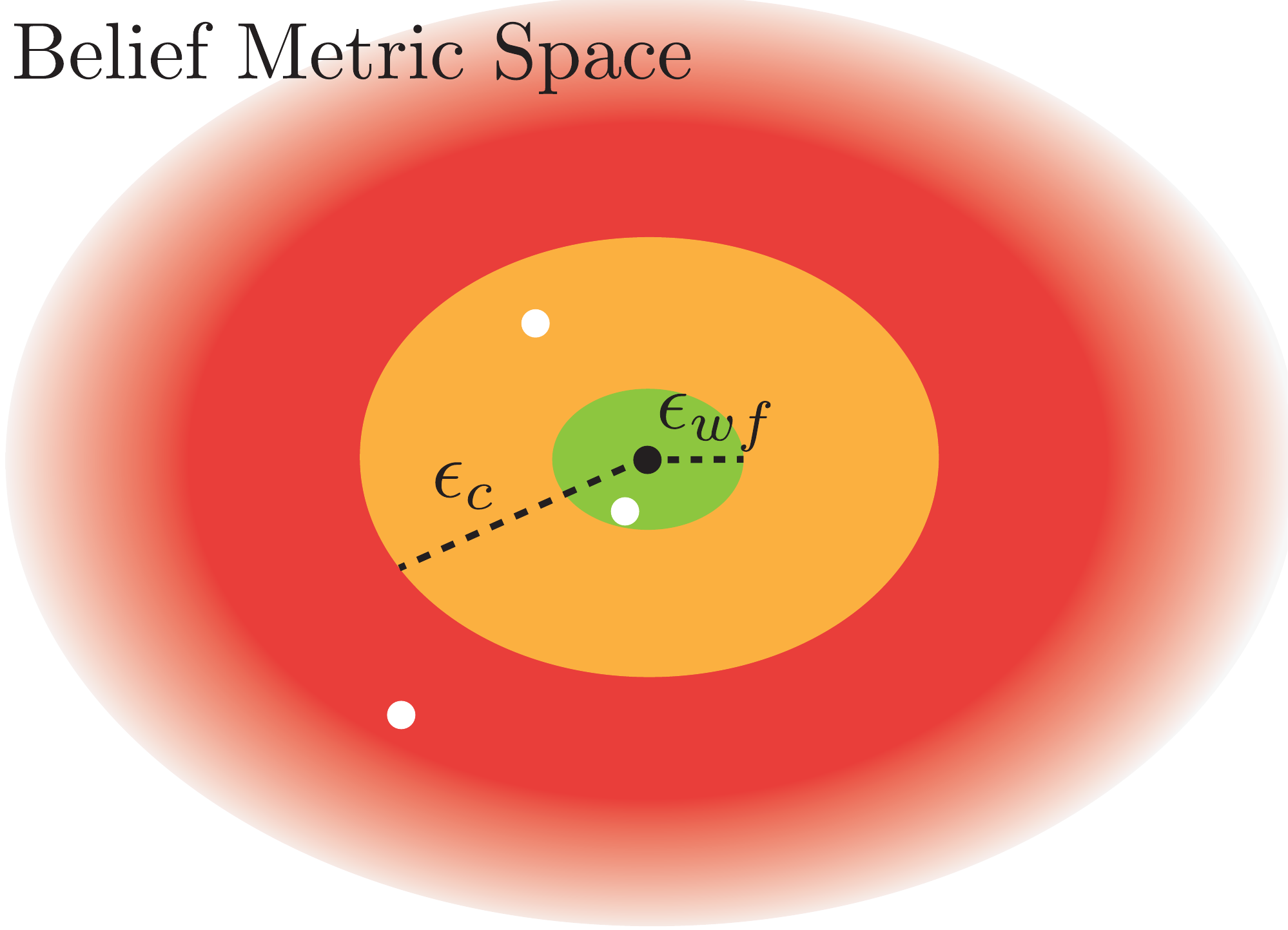}
	\caption{ Illustration of the relative belief distance space. Each point in this space represents some belief $b[X_{t|k}]$, where the black dot denotes $b[X_{t|k+l}]$ as the origin. All beliefs $b[X_{t|k}]$ close to the origin up to $\epsilon_{wf}$, i.e. in the green zone, are being re-used without any update. All beliefs $b[X_{t|k}]$ close to the origin up to $\epsilon_{c}$ but farther than $\epsilon_{wf}$, i.e. in the orange zone, are being re-used with some update.  All beliefs $b[X_{t|k}]$ that are more than $\epsilon_{c}$ away from the origin, i.e. in the red zone, are considered as not close enough to make a re-use worth while.} 
	\label{fig:distZones}
\end{figure}
Choosing a candidate belief with minimal distance is not enough, as this candidate belief could still be very different, and thus require a substantial computational effort in order to update. In order to deal with this issue, we need to set some criteria over the belief distance. 
Let us consider a belief metric space, in-which each point is a unique projection of a candidate belief $b[X_{i|k+l}]$ denoting the distance between the aforementioned candidate belief and $b[X_{i|k+l}]$.
Figure~\ref{fig:distZones} illustrates such space, where the black dot represents the homogeneous projection $\mathbb{D}(b[X_{i|k+l}],b[X_{i|k+l}])$, and the rest of the points denote $\mathbb{D}(b[X_{i|k}],b[X_{i|k+l}])$ e.g. the three white dots in Figure~\ref{fig:distZones}. 
  We divide distances around the homogeneous projection into three areas, 
  \begin{equation}\label{eq:distanceCases}
  	\begin{cases}
  		\mathbb{D}(b[X_{i|k}],b[X_{i|k+l}]) \ \leq \ \epsilon_{wf} & \text{close enough for re-use "as is", see Section~\ref{subsec:wf}}\\
  		\epsilon_{wf} \ < \ \mathbb{D}(b[X_{i|k}],b[X_{i|k+l}]) \ \leq \ \epsilon_{c} & \text{close enough for re-use} \\
  		\epsilon_{c} \ < \ \mathbb{D}(b[X_{i|k}],b[X_{i|k+l}]) & \text{too far off for re-use}\\
  	\end{cases}
  \end{equation}
  denoted respectively in green orange and red.
When \wf is not enabled, i.e. $\texttt{useWF} = \texttt{false}$, the belief metric space is divided into two areas (orange and red), separated by a single parameter $\epsilon_{c}$. The case where \wf is enabled, i.e. $\texttt{useWF} = \texttt{true}$, is covered in Section~\ref{subsec:wf}.
% choosing the bound values
At this point we leave the procedure of choosing $\epsilon_c$ and $\epsilon_{wf}$ for future work, and consider it as a heuristic. In the following we do however show analytically (see Section~\ref{ssubsec:wfBounds}) and empirically (see Section~\ref{ssubsec:wf:sensitivity}) the connection between $\epsilon_{wf}$ and the objective value.

Each belief metric (divergence) would result with a  possibly different projection onto metric space, hence with probably different values for $\epsilon_c$ and $\epsilon_{wf}$. 
% about our non-metric and others
As part of our work we considered several alternatives for belief distance, a DA based divergence and another based on Jeffreys divergence. 

\subsubsection*{The \DA distance}
Under $\mathbb{D}_{DA}$ we start by sorting all candidate beliefs according to data association (DA) differences, looking for the smallest available difference. 
For example, the DA differences between Eq.~(\ref{eq:belief_t_k}) and Eq.~(\ref{eq:belief_t_k+l}) are given by matching their DA data denoted by $\mathcal{M}$. Such matching would yield three possible differences: the DA that has been correctly predicted and need not be changed 
\begin{equation}\label{eq:DA:good}
	\mathcal{M}_{k+1:k+l|k} \cap \mathcal{M}_{k+1:k+l|k+l},
\end{equation}
the DA that has been wrongfully predicted and need to be removed
\begin{equation}\label{eq:DA:bad}
	\mathcal{M}_{k+1:k+l|k} \backslash \mathcal{M}_{k+1:k+l|k+l},
\end{equation}
and the DA that has not been predicted and need to be added 
\begin{equation}\label{eq:DA:add}
	\mathcal{M}_{k+1:k+l|k+l} \backslash \mathcal{M}_{k+1:k+l|k}.
\end{equation}
In case there is more than a single belief with minimal DA difference, we continue to sort the remaining beliefs according to the difference between values of corresponding predicted measurements and similarly look for the minimal difference. In case there is more than a single belief with minimal measurement value difference, we select arbitrarily out of the remaining beliefs, and consider the chosen belief as the closest one. A detailed explanation of the DA matching process can be found in \cite[Section 3.5]{Farhi19arxiv}. It is worth stressing that $\mathbb{D}_{DA}$ is just a divergence and not a metric, as it does not meet the symmetry and sub-additivity requirements.

\subsubsection*{The \JD distance}
The \JD distance is a variant of the Jeffreys divergence presented by \cite{jeffreys46rs} (see Appendix~\ref{app:Distance}). This is a symmetric divergence for general probabilities that also has a special form in the case of Gaussian beliefs (for full derivation see Appendix~\ref{app:Distance}).
For two Gaussian beliefs $b[X_{t|k+l}] \sim \mathcal{N}(\mu_p, \Sigma_p)$ and $b[X_{t|k}] \sim \mathcal{N}(\mu_q, \Sigma_q)$, the \JD distance between them is given by,   
 \begin{equation}
	\JD(b[X_{t|k+l}],b[X_{t|k}])= \frac{1}{4}\sqrt{ (\mu_p - \mu_q)^T\left[\Sigma_q^{-1} + \Sigma_p^{-1} \right](\mu_p - \mu_q) +  tr\left( \Sigma_q^{-1}\Sigma_p\right) +  tr\left( \Sigma_p^{-1}\Sigma_q\right) - d_p-d_q },
\end{equation}
 where $d_p$ and $d_q$ are the joint state dimension of $b[X_{t|k+l}]$ and $b[X_{t|k}]$, respectively.

\subsection{Incremental Update of Belief-Tree}\label{subsec:beliefUpdate}

%\VI{[The below paragraph is a bit vague and repeats some text. Suggest to clarify connection to previous sections and be more focused: we determined $\mathcal{B}_{k+l|k}$ in Section X, and have the ability to locate for each posterior belief 
%	$b[X_{i|k+l}]$ the closest belief in $\mathcal{B}_{k+l|k}$. Now we focus on ...\\
%Try to emphasize, as concisely as possible, what this section is about, i.e.~what does it add on top of previous sections (currently it feels more like an overview/summary, and quite long/messy)? ]}
In Section~\ref{ssubsec:candidateSet} we determined the candidate set $\mathcal{B}_{k+l|k}$, and using Section~\ref{ssubsec:beliefDist} have the ability to locate for each posterior belief $b[X_{i|k+l}]$ the closest belief in $\mathcal{B}_{k+l|k}$. Now, we can focus on one of our main contributions, incrementally creating a belief tree, through the re-use of previously calculated beliefs, while accounting for all information differences.    
%This section describes one of our main contributions, creating a belief tree through incrementally updating a previously calculated belief tree, while accounting for all information differences.\EF{Update according to final contribution phrasing}
%Once we determined $\overset{\sim}{b}[X_{k+l|k}]$ (Alg.~\ref{alg:ibsp}, line~\ref{alg:ibsp:selectBranch}), we consider all of its descendants as candidate beliefs for re-use from the precursory planning session and denote this set as $\mathcal{B}_{k+l|k}$. For example, in Figure~\ref{fig:selectBranch}, the set $\mathcal{B}_{2|1}$, which is denoted by a light gray area, consists of all beliefs rooted in $\overset{\sim}{b}[X_{2|1}]$.
% Once $\mathcal{B}_{k+l|k}$ has been determined, we can utilize it  to obtain all reward values up to time $k+L$ while adjusting these appropriately when required, as discussed next and summarized in Alg.~\ref{alg:updatePrevPlanning}.  
To this end we supply Alg.~\ref{alg:updatePrevPlanning}, tasked with creating the belief tree of planning time $k+l$ through selective re-use of beliefs from $\mathcal{B}_{k+l|k}$. The process starts with the posterior $b[X_{k+l|k+l}]$, and continues with every new belief $b^s[X_{i|k+l}]$ that is added to the new belief tree up to future time $k+L$, where $s$ accommodates all different sampled beliefs at future time $i$. For the reader's convenience all the notations of this section are summarized in Table~\ref{table:reUseBeliefs}.

\begin{algorithm}
	%\scriptsize
	\caption{IncUpdateBeliefTree}\label{alg:updatePrevPlanning}
	\begin{flushleft}
		\textbf{Input:}   \\
		\hspace{30pt} $\mathcal{B}_{k+l|k}$ \Comment{The selected branch, see Section~\ref{subsec:bestBranch} }\\
		\hspace{30pt} $b[X_{k+l|k+l}]$ \Comment{The posterior from precursory inference}\\
	\end{flushleft}
	\begin{algorithmic}[1]
		\For{ each $i \in [k+l,k+L-1]$}\Comment{each overlapping horizon step}\label{alg:updatePrevPlanning:lookaheadFor}
			\For{ each  $s \in [1,n_u(n_x\!\cdot\!n_z)^{i-k-l}]$} \Comment{each belief in the $i_{th}$ horizon step}\label{alg:updatePrevPlanning:beliefFor}
				\If{\texttt{useWF} $\cap$ \Call{isWildFire}{$b^s[X_{i|k+l}]$}}\label{alg:updatePrevPlanning:isWF}
					\State $r_0$ $\gets$ $(s-1)\cdot n_u\cdot(n_x \cdot n_z)$
					\State $\{b^r[X_{i+1|k+l}]\}_{r = r_0+1}^{r_0 + n_u\cdot(n_x \cdot n_z)}$ $\gets$ all first order children of $b^{s'}[X_{i|k}]$\Comment{see Section~\ref{subsec:wf}}\label{alg:updatePrevPlanning:copyWF} 
					\State mark all $\{b^r[X_{i+1|k+l}]\}_{r = r_0+1}^{r_0 + n_u\cdot(n_x \cdot n_z)}$  as \wf \label{alg:updatePrevPlanning:markWF}
				\Else
					\For{each candidate action $\alpha \in [1,n_u]$ }\label{alg:updatePrevPlanning:actionFor}
						\State $b^{s-}_{\alpha}[X_{i+1|k+l}]$ $\gets$ propagate $b^s[X_{i|k+l}]$ with candidate action $\alpha$\label{alg:updatePrevPlanning:propagate}
						\State \texttt{dist} , $b^{s'-}_{\alpha}[X_{i+1|k}] \gets$ \Call{ClosestBelief}{$\mathcal{B}_{k+l|k}$, $b^{s-}_{\alpha}[X_{i+1|k+l}]$} \label{alg:updatePrevPlanning:beliefDist}\Comment{see Section~\ref{ssubsec:beliefDist}}
						\If{\texttt{dist} $\leq \epsilon_{c}$}\Comment{re-use condition}\label{alg:updatePrevPlanning:checkEps_c}
							\If{\texttt{useWF} $\cap$ (\texttt{dist} $\leq \epsilon_{wf}$)}\Comment{\wf condition}\label{alg:updatePrevPlanning:checkEps_wf}
								\State $\{b^r_{\alpha}[X_{i+1|k+l}]\}_{r = 1}^{n_x \cdot n_z}$ $\gets$ all first order children of $b^{s'-}_{\alpha}[X_{i+1|k}]$\Comment{see Section~\ref{subsec:wf}}\label{alg:updatePrevPlanning:useAsWF}
								\State mark $\{b^r_{\alpha}[X_{i+1|k+l}]\}_{r = 1}^{n_x \cdot n_z}$ as \wf
								\State Continue with next candidate action (i.e. jump to line~\ref{alg:updatePrevPlanning:actionFor})
							\Else
								\State \texttt{samples} $\gets$ all samples taken from $b^{s'-}_{\alpha}[X_{i+1|k}]$\label{alg:updatePrevPlanning:gatherSamples}
								\State $\{$\texttt{repSamples}$\}_{1}^{n_x n_z}$, \texttt{data} $\gets$ \Call{IsRepSample}{\texttt{samples}, $b^{s-}_{\alpha}[X_{i+1|k+l}]$} \Comment{see Section~\ref{ssubsec:repSample}}\label{alg:updatePrevPlanning:repSample}

							\EndIf
						\Else \Comment{not computationally effective to re-use, resample all}
							\State $\{$\texttt{repSamples}$\}_{1}^{n_x n_z}$, \texttt{data} $\gets$ $(n_x \cdot n_z)$ fresh samples based on $b^{s-}_{\alpha}[X_{i+1|k+l}]$\Comment{see Alg.~\ref{alg:sampling_z}}\label{alg:updatePrevPlanning:badBeliefDist}
						\EndIf
						\State \texttt{data} $\gets$ \Call{UpdateBelief}{\texttt{dist}, $\{$\texttt{repSamples}$\}_{1}^{n_xn_z}$, \texttt{data}} \Comment{see Section~\ref{ssubsec:beliefUpdate}}\label{alg:updatePrevPlanning:updateBelief}
						\State \texttt{data} $\gets$ update reward(cost) values for action $\alpha$ \Comment{see Section~\ref{ssubsec:rewardUpdate}}\label{alg:updatePrevPlanning:updateReward}
					\EndFor
				\EndIf
			\EndFor
		\EndFor
		\State \Return{\texttt{data}}
	\end{algorithmic}
\end{algorithm}

\begin{figure}%{R}{0.5\textwidth}
	\centering
       \subfloat[]{\includegraphics[trim={0 0 -15 5},clip, width=0.34\textwidth]{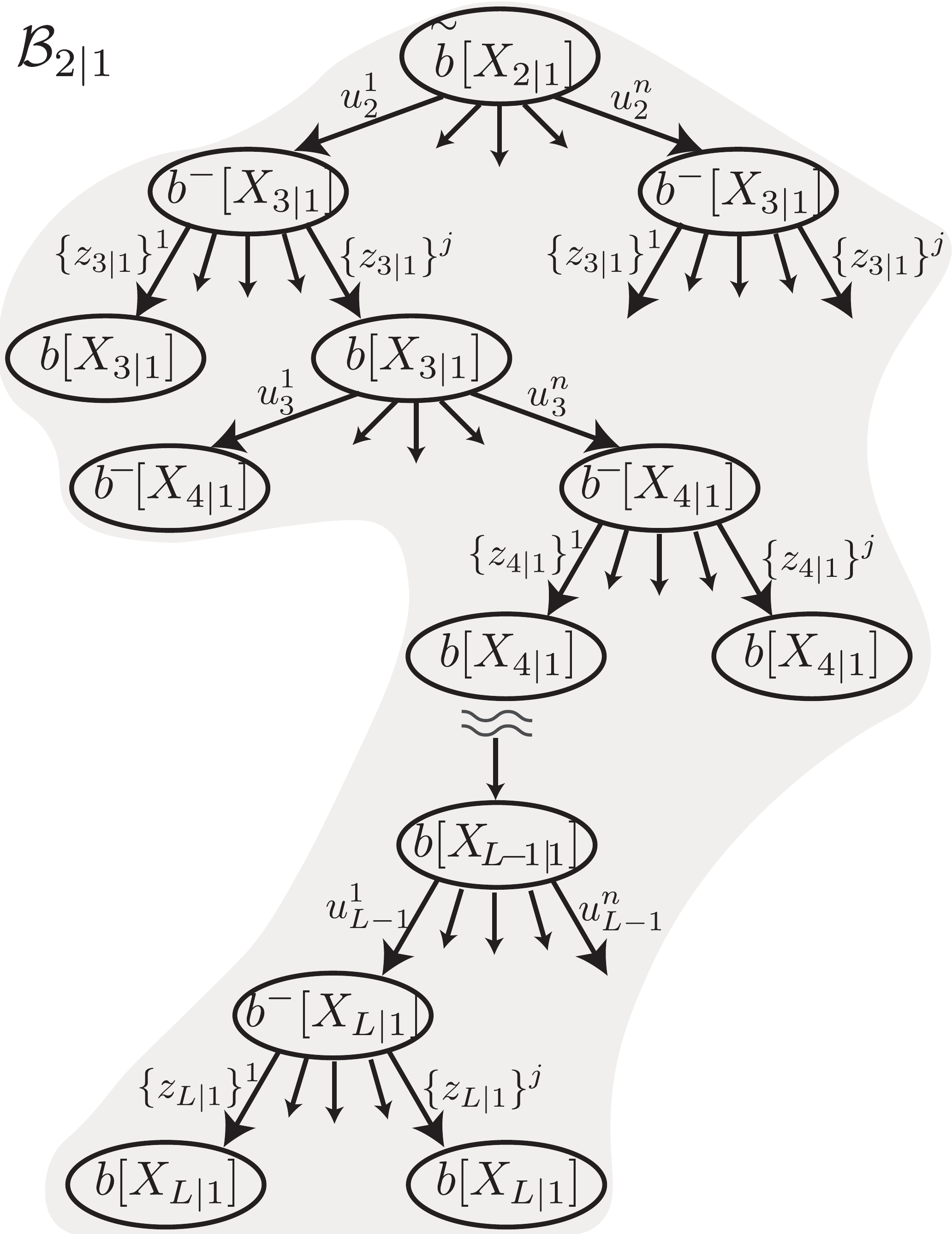}\label{fig:beliefUpdate:k}}
       \subfloat[]{\includegraphics[trim={-15 0 0 5},clip, width=0.34\textwidth]{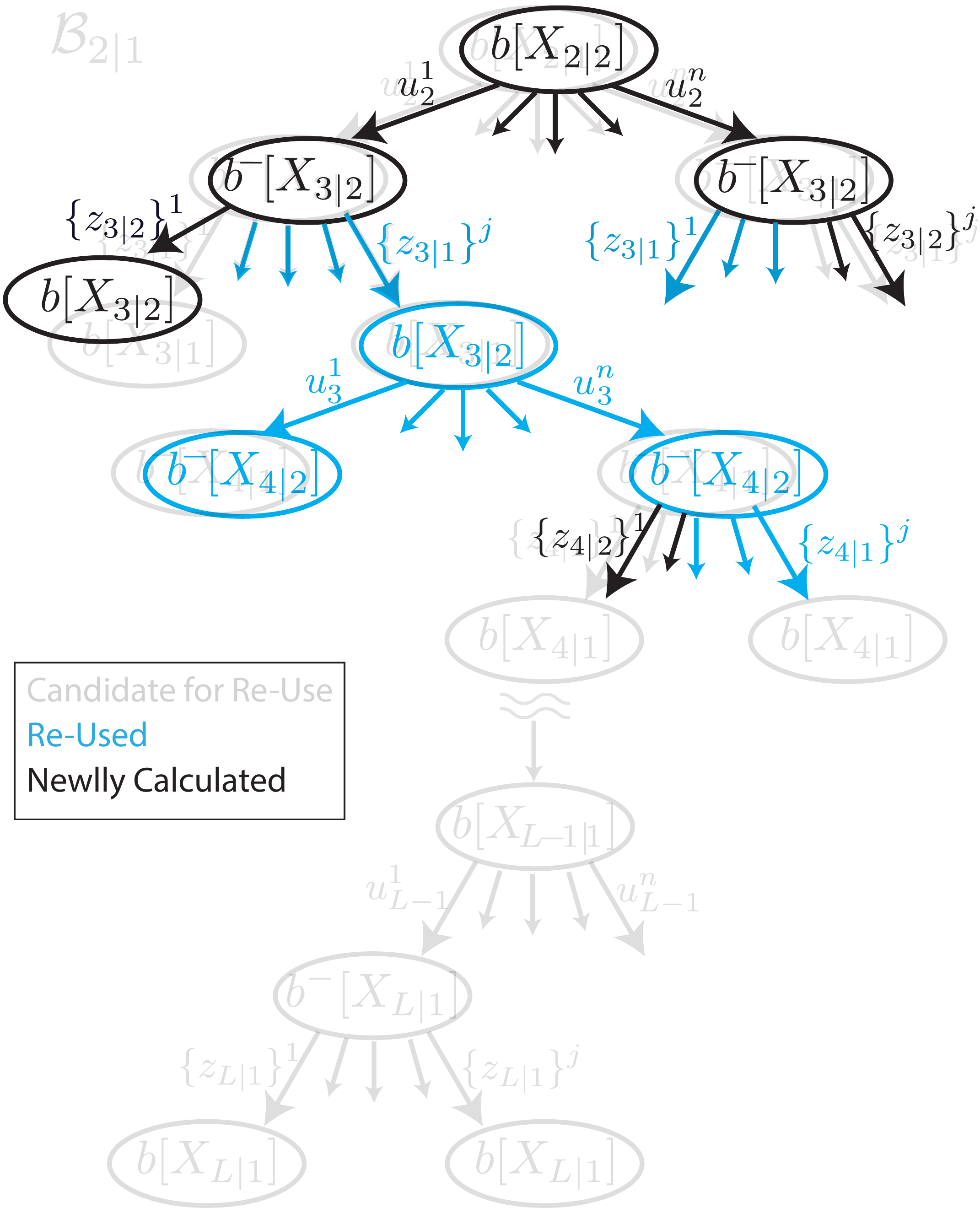}\label{fig:beliefUpdate:k+l}}
        \caption{ The belief update process of \ibsp presented in a belief tree, where each node represents a belief that branches either for one of $n$ candidate actions or $j$ sampled measurements. The selected branch for re-use from Figure~\ref{fig:selectBranch}, denoted by $\mathcal{B}_{2|1}$, is presented in (a) and as a water mark in (b). The succeeding \ibsp planning session at time $t=2$ is illustrated in (b), where the re-used sampled measurements and succeeding beliefs are marked in light blue.}
        \label{fig:beliefUpdate}
\end{figure}
\begin{table}%{R}{0.5\textwidth}
	\caption{Notations for Section~\ref{subsec:beliefUpdate}} % title of Table
	\centering % used for centering table
	\begin{tabular}{c c} % centered columns (4 columns)
		\hline\hline %inserts double horizontal lines
		\textbf{Variable} & \textbf{Description}  \\ [0.5ex] % inserts table heading
		\hline % inserts single horizontal line
		$ \Box_{t|k}$ & Of time $t$ while current time is $k$ \\[1ex]
		$b[X_{t|k}]$ & belief at time $t$ while current time is $k$ \\[1ex]
		$b^-[X_{t|k}]$ & belief at time $t-1$ propagated only with action $u_{t-1|k}$ \\[1ex]
		$\overset{\sim}{b}[X_{t|k}]$ & The root of the selected branch for re-use from planning time $k$ \\[1ex]
		$\mathcal{B}_{t|k}$ & The set of all beliefs from planning time $k$ rooted in $\overset{\sim}{b}[X_{t|k}]$\\[1ex]
		$b^s[X_{t|k+l}]$ & The $s_{th}$ sampled belief representing $b[X_{t|k+l}]$\\[1ex]
		$b^{s-}_{\alpha}[X_{t+1|k+l}]$ & The sampled belief $b^s[X_{t|k+l}]$ propagated with the $\alpha$ candidate action\\[1ex]
		$\{b^r_{\alpha}[X_{t|k+l}]\}_{r = 1}^{n}$ & A set of $n$ sampled beliefs that are first order children of $b^{s-}_{\alpha}[X_{t|k+l}]$ and are representing $b[X_{t|k+l}]$\\[1ex]
%		$b^{s'}[X_{t|k}]$ & The closest belief from $\mathcal{B}_{t|k}$ to $b^s[X_{t|k+l}]$\\[1ex]
		$b^{s'-}_{\alpha}[X_{t+i|k}]$ & A propagated belief from $\mathcal{B}_{t|k}$ closest to $b^{s-}_{\alpha}[X_{t+i|k+l}]$\\[1ex]
		\texttt{dist} & The distance between $b^{s'-}_{\alpha}[X_{t|k}]$ and $b^{s-}_{\alpha}[X_{t|k+l}]$ \\[1ex]
		$\{b^r[X_{t|k}]\}_{r=1}^n$ & A set of $n$ sampled beliefs representing $b[X_{t|k}]$\\[1ex]
		$n_u$ & number of candidate actions per step \\[1ex]
		$(n_x \cdot n_z)$ & number of samples for each candidate action \\[1ex]
		\texttt{data} & All available calculations from current and precursory planning session \\[1ex]
		$\epsilon_c$ & belief distance critical threshold, max distance for re-use computational advantage \\[1ex]
		$\epsilon_{wf}$ & \wf threshold, max distance to be considered as close-enough for re-use without any update \\[1ex]
		\texttt{useWF} & a binary flag determining whether or not the \wf condition is considered \\ [1ex]
		$\beta_{\sigma}$ & $\sigma$ acceptance range parameter, for considering measurements as representative \\ [1ex]
		$\JD(p,q)$  & The distance between distributions $p$ and $q$ according to the \JD distance \\ [1ex]
		$D_{DA}(p,q)$  & The divergence between distributions $p$ and $q$ according to the data association difference\\ [1ex]
		% [1ex] adds vertical space
		\hline\hline
	\end{tabular}
	\label{table:reUseBeliefs} 
\end{table}

We will now describe a single iteration of Alg.~\ref{alg:updatePrevPlanning}, in which we have reached the $s_{th}$ belief (line~\ref{alg:updatePrevPlanning:beliefFor}) at the $i_{th}$ lookahead step (line~\ref{alg:updatePrevPlanning:lookaheadFor}), and already handled all previous steps and beliefs.   
First, we check whether the belief $b^s[X_{i|k+l}]$ has been created under the \wf condition (line~\ref{alg:updatePrevPlanning:isWF}), i.e. directly taken from $\mathcal{B}_{k+l|k}$ without any update (see Section~\ref{subsec:wf}). In case it did, we continue to take its descendants  directly from the appropriate beliefs in $\mathcal{B}_{k+l|k}$ without any update (line~\ref{alg:updatePrevPlanning:copyWF}) and mark them as created under the \wf condition (line~\ref{alg:updatePrevPlanning:markWF}).  
For the case where the belief $b^s[X_{i|k+l}]$ has not been created under the \wf condition we propagate it with the $\alpha$ candidate action, where $\alpha \in [1,n_u]$ and $n_u$ denotes the number of candidate actions per time step (line~\ref{alg:updatePrevPlanning:propagate}); this propagation yeilds $b^{s-}_{\alpha}[X_{i+1|k+l}]$. 
 We then consider all propagated beliefs $b^-[X_{i+1|k}] \subset \mathcal{B}_{k+l|k}$, and search for the closest one to $b^{s-}_{\alpha}[X_{i+1|k+l}]$ in the sense of belief distance, as discussed in Section~\ref{ssubsec:beliefDist}. Once found, we denote the closest propagated belief as $b^{s'-}_{\alpha}[X_{i+1|k}]$ (line~\ref{alg:updatePrevPlanning:beliefDist}).
 
In case there is no such belief close enough to make the update worthwhile, i.e. \texttt{dist}$ > \epsilon_c$, for this candidate action we continue as if using \fullbsp (line~\ref{alg:updatePrevPlanning:badBeliefDist}). In case the distance of the closest belief meets the \wf condition $\epsilon_{wf}$, we consider all beliefs $b[X_{i+1|k}] \subset \mathcal{B}_{k+l|k}$ that are rooted in $b^{s'-}[X_{i+1|k}]$.
Otherwise we continue and check whether the samples generated using $b^{s'-}_{\alpha}[X_{i+1|k}]$ constitute an adequate representation for $\prob{z_{i+1|k+l}|H_{i|k+l},u_{i|k+l}^{\alpha}}$, and re-sample if needed (line~\ref{alg:updatePrevPlanning:repSample}, see Section~\ref{ssubsec:repSample}).
Once we obtain the updated set of samples $\{$\texttt{repSamples}$\}_{1}^{n_x\!\cdot n_z}$, whether all were freshly sampled, entirely re-used or somewhere in between, we can acquire the set of posterior beliefs for look ahead step $i+1$ $\{b_{\alpha}[X_{i+1|k+l}] \}_{1}^{n_x\!\cdot n_z}$ (line~\ref{alg:updatePrevPlanning:updateBelief}) through an update, as discussed subsequently (see Section~\ref{ssubsec:beliefUpdate}). Once we have all updated beliefs for future time $i+1$, we can update the reward values of each of which (see Section~\ref{ssubsec:rewardUpdate}).
 We repeat the entire process for the newly acquired beliefs $\{b_{\alpha}[X_{i+1|k+l}] \}_{1}^{n_x\!\cdot n_z}$, and so forth up to $k+L$.

 Without loss of generality in the supplied formulation of Alg.~\ref{alg:updatePrevPlanning}, the candidate beliefs are considered only to the extent of the planning horizon overlap, i.e. $k+l+1:k+L$, whereas beliefs for the rest of the horizon $k+L+1:k+L+l$, are obtained by performing \fullbsp (Alg.~\ref{alg:ibsp}, line~\ref{alg:ibsp:lastSteps}). 
% as the planning horizon of planning at time $k$ concludes at look ahead step $k+L$, with the completion of Alg.~\ref{alg:updatePrevPlanning}, we conclude the re-use of beliefs from previous planning session, thus obtaining the beliefs for look ahead steps $k+l+1:k+L$, required for reward function calculation.
%The beliefs for the rest of the horizon $k+L+1:k+L+l$, are obtained by performing \fullbsp (Alg.~\ref{alg:ibsp}, line~\ref{alg:ibsp:lastSteps}). 

Next, we provide a walk-through example for Alg.~\ref{alg:updatePrevPlanning} (Section~\ref{ssubsec:eg_updatePrevPlanning})elaborate on belief distance (Section~\ref{ssubsec:beliefDist}), and continue with covering key aspects required by Alg.~\ref{alg:updatePrevPlanning}: determining whether \texttt{samples} are representative or not (Section~\ref{ssubsec:repSample}), the process of belief update given the representative set of samples $\{$\texttt{repSamples}$\}_{1}^{n}$ (Section~\ref{ssubsec:beliefUpdate}) and the incremental calculation of the immediate reward values per sampled belief (Section~\ref{ssubsec:rewardUpdate}).

\subsubsection{Alg.~\ref{alg:updatePrevPlanning} Walk-through Example}\label{ssubsec:eg_updatePrevPlanning}
We will now demonstrate Alg.~\ref{alg:updatePrevPlanning} using Figure~\ref{fig:beliefUpdate}. Figure~\ref{fig:beliefUpdate:k} illustrates the selected branch for re-use from a precursory planning session at time $t = 1$, i.e. $\mathcal{B}_{2|1}$ (see Figure~\ref{fig:selectBranch}), we have $n$ candidate actions each step, and for each candidate action we have $j$ sampled measurements. 
Figure~\ref{fig:beliefUpdate:k+l} illustrates a part of the belief tree at time $t=2$, where the top of the tree is the posterior belief of current time $t=2$, i.e. $b[X_{2|2}]$, from which we start the algorithm. 
Since $b[X_{2|2}]$ is the posterior of current time $t=2$, it was not created under the \wf condition, so we jump directly to line~\ref{alg:updatePrevPlanning:actionFor}. We propagate $b[X_{2|2}]$ with each of the $n$ candidate actions starting with $u^1_2$, and obtain the left most belief in the second level of the tree, $b^-[X_{3|2}]$ (line~\ref{alg:updatePrevPlanning:propagate}). 
Using \texttt{BeliefDist}(.) we obtain the closest belief from $\mathcal{B}_{2|1}$ to $b^-[X_{3|2}]$ as well as their distance \texttt{dist}. For our example the closest belief turns out to be the one which has been also propagated by the same candidate action, i.e. the left most $b^-[X_{3|1}]$ in the second level of $\mathcal{B}_{2|1}$. Since the distance suggests re-use is worthwhile but does not meet the \wf condition, i.e. $\epsilon_{wf} <$ \texttt{dist} $\leq \epsilon_{c}$, we proceed to line~\ref{alg:updatePrevPlanning:gatherSamples} in Alg.~\ref{alg:updatePrevPlanning}. 

We denote the set (of sets) of all $j$ sampled measurements as \texttt{samples}, i.e. \texttt{samples}$ \gets \{\{z_{3|1}\}^1,\hdots,\{z_{3|1} \}^j\}$. Using \texttt{IsRepSample}(.) we obtain a representative set for the measurement likelihood $\prob{z_{3|2}|H_{2|2},u_2^1}$ (see Section~\ref{ssubsec:repSample}). As we can see in Figure~\ref{fig:beliefUpdate:k+l}, other than $\{z_{3|1}\}^1$, which has been re-sampled, all other samples are re-used (denoted by blue arrows). Once we have a representative set of measurements, which in our case all but one are re-used from planning time $t=1$, we can update the appropriate beliefs using \texttt{UpdateBelief}(.) (see Section~\ref{ssubsec:beliefUpdate}). The belief resulting from the newly sampled measurement $\{ z_{3|2}\}^1$ is calculated from scratch by adding the measurement to $b^-[X_{3|2}]$ and performing inference, while the re-used samples allow us to incrementally update the appropriate beliefs from $\mathcal{B}_{2|1}$, rather than calculate them from scratch (see Section~\ref{ssubsec:beliefUpdate}). 

We now have an updated set of beliefs for future time $t=3$, that considers the candidate action $u_2^1$. For each of the aforementioned beliefs we incrementally calculate the appropriate reward value (see Section~\ref{ssubsec:rewardUpdate}) thus completing the incremental update for candidate action $u_2^1$. We repeat the aforementioned for the rest of the candidate actions, thus completing the third level of the belief tree presented in Figure~\ref{fig:beliefUpdate:k+l}. In a similar manner we continue to incrementally calculate the deeper levels of the belief tree up to future time $t=L$, thus concluding Alg.~\ref{alg:updatePrevPlanning}.

\subsubsection{Representative Sample}\label{ssubsec:repSample}
This section covers the problem of obtaining a set of measurement samples that are representative of the measurement likelihood distribution we should have sampled from. 
The motivation for re-using previously sampled measurements lies within the desire to refrain from performing inference by re-using previously calculated beliefs.
As explained in Section~\ref{subsec:compPlanning}, assuming the differences between (\ref{eq:belief_t_k})$_{(b)}$ and (\ref{eq:belief_t_k+l})$_{(b)}$ have been resolved (the predicted factors and their counterparts that already have been obtained respectively), the difference between Eq.~(\ref{eq:belief_t_k}) to Eq.~(\ref{eq:belief_t_k+l}) is limited to the difference between (\ref{eq:belief_t_k})$_{(c)}$ and (\ref{eq:belief_t_k+l})$_{(c)}$. Assuming both use the same action sequence, the difference between (\ref{eq:belief_t_k})$_{(c)}$ and (\ref{eq:belief_t_k+l})$_{(c)}$ is limited to the predicted measurements being considered by each. 
\begin{algorithm}
	%\scriptsize
	\caption{IsRepSample}\label{alg:isRepSample}
	\begin{flushleft}
		\textbf{Input:} \\
		\hspace{30pt} \texttt{samples} \Comment{set of candidate sampled measurements for re-use from planning at time $k$, see Alg~\ref{alg:updatePrevPlanning} line~\ref{alg:updatePrevPlanning:gatherSamples} }\\
		\hspace{30pt} $b^{-}[X_{i|k+l}]$ \Comment{The belief from planning time $k+l$ the samples should be representing, i.e. sampled from}\\
	\end{flushleft}
	\begin{algorithmic}[1]
		\State Given $\beta_{\sigma}$ = 1.5 \Comment{User determined Heuristic, in direct proportion to acceptance}
		\State \texttt{stateSamples} $\gets$ the sampled states that created \texttt{samples} \label{alg:isRepSample:getStates} \Comment{see Alg.~\ref{alg:sampling_z}}
		\For{each \texttt{sample} $\in$ \texttt{stateSamples}}
			\If{\texttt{sample} $\subset$ $\pm \beta_{\sigma} \! \cdot \! \sigma$ of $b^{-}[X_{i|k+l}]$}\Comment{The sample falls within $\pm \beta_{\sigma} \! \cdot \! \sigma$ range, hence accepted}
				\State $\{$\texttt{repSamples}$\}_{1}^{n}$ $\gets$ all measurement samples $\in$ \texttt{samples} that were crated by \texttt{sample} \label{alg:isRepSample:resampleMeas} 
			\Else \Comment{The sample falls outside the $\pm \beta_{\sigma} \! \cdot \! \sigma$ range, hence rejected}
				\State $\{$\texttt{repSamples}$\}_{1}^{n}$ $\gets$ re-sample $n_z$ measurements using $b^{-}[X_{i|k+l}]$ \Comment{freshly sampled, see Alg.~\ref{alg:sampling_z}}
			\EndIf    
		\EndFor
		\State \Return{$\{$\texttt{repSamples}$\}_{1}^{n}$, $\{$q(.)$\}_{1}^{n}$} \Comment{$\{$q(.)$\}_{1}^{n}$ denote the distributions $\{$\texttt{repSamples}$\}_{1}^{n}$ were sampled from}
	\end{algorithmic}
\end{algorithm}

While the field of representative sampling is a rich research area on its own, in order to facilitate \ibsp, we chose a straightforward approach that can be easily substituted with a more sophisticated one in due time.
Under the sampling paradigm presented in Alg.~\ref{alg:sampling_z}, it is sufficient to determine the representativeness of a measurement sample based on the state sample $\chi$ which should be sampled from the propagated belief $b^-[X_{i|k+l}]$.

%In order to better understand our suggested method for determining the representative samples, let us first recall the sampling mechanism of predicted measurements as laid down by Alg.~\ref{alg:sampling_z}. Instead of sampling from the unknown measurement likelihood $\prob{z_{i|k}|H^-_{i|k}}$, we margin over all possible states and are left with Eq.~(\ref{eq:likelihoodInt}), consisting out of two terms we have access to, the measurement model $\prob{z_{i|k}|x_i,\mathcal{L}}$ and the marginal distribution over the state at time $i$, $\prob{x_i,\mathcal{L}|H^-{i|k}}$ obtained from $b^-[X_{i|k}]$. The sampling process comprises  two stages:  first we sample $n_x$ states for time $i$ using the propagated belief $b^-[X_{i|k}]$, each such state defines a measurement model distribution, from which we sample $n_z$ measurements, so at the end of this process we are left with ($n_x\cdot n_z$) sampled measurements. 

Considering the known (stochastic) measurement model, the space of measurement model distributions is uniquely defined by the set of state samples. So, in order to simplify the selection of representative measurement samples, we consider only the set of sampled states and assume that a set of sampled states that are representative of the propagated belief they should have been sampled from, yields a set of representative sampled measurements. 
Following the aforementioned, the problem of determining a set of representative measurement samples, turns into a problem of determining a set of state samples representative of some propagated belief. 

 Let us consider \texttt{samples} and $b^-[X_{i|k+l}]$ denoting respectively the candidate measurements for re-use and the propagated belief from planning time $k+l$. Due to the fact that \texttt{samples} were sampled from distributions different from $\prob{z_{i+1|k+l}|H^-_{i+1|k+l}}$, we need to assure they constitute an adequate representation of it.
In Alg.~\ref{alg:isRepSample} we consider the sampled states that led to the acquired sampled measurements (Alg.~\ref{alg:isRepSample} line~\ref{alg:isRepSample:getStates}), and denote it as \texttt{stateSamples}.
We consider each sampled state separately, and determine \texttt{sample} $\in$ \texttt{stateSamples} as representative if it falls within a predetermined $\sigma$ range of the distribution it should have been sampled from. 
It is worth stressing that in order to facilitate the use of importance sampling in solving the expected reward value (as discussed later in Section~\ref{subsec:incExp}) one should have access to the importance sampling distributions denoted by $\{$q(.)$\}_{1}^{n}$ in Alg.~\ref{alg:isRepSample}.  

\begin{figure}
	\centering
		\subfloat[]{\includegraphics[trim={0 0 245 0},clip, width=0.23\columnwidth]{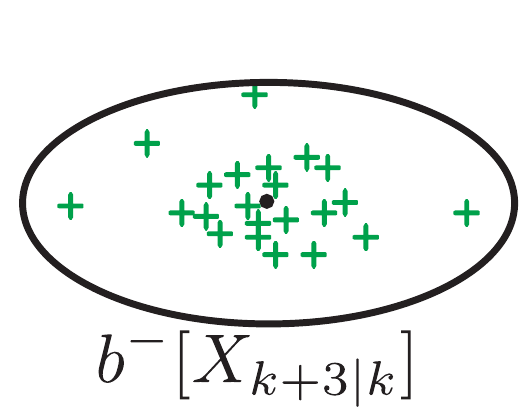}\label{fig:beliefDif:origin}}
		\subfloat[]{\includegraphics[trim={155 0 0 0},clip, width=0.38\columnwidth]{Figures/rest/beliefDiff.pdf}\label{fig:beliefDif:after}}
        \caption{Illustration for adequate and inadequate representation of a belief by samples. (a) Illustrates a belief (denoted by an ellipse) over the propagated joint state at future time $k+3$ calculated as part of planning at time $k$, and the twenty two samples (denoted by green "+" signs) taken from it. (b) Illustrates two instantiations of belief over the propagated joint state at future time $k+3$ calculated as part of planning at time $k+1$ (denoted in red and blue), overlapping the belief of precursory planning time and its samples. While the samples of (a) can be considered as adequate representation for the blue belief, they can also be considered as inadequate representation of the red belief.}
        \label{fig:beliefDif}
\end{figure}

We conclude this section with a toy example for the aforementioned method for determining a representative set of samples. Figure~\ref{fig:beliefDif:origin} illustrates a set of states $\chi$ (denoted by green "+" signs) used in future time $k+3$ under planning time $k$, where  Figure~\ref{fig:beliefDif:after} illustrates how well the same samples represents two instantiations of the same future time $k+3$ in succeeding planning at time $k+1$. By considering some $\pm \beta_{\sigma} \cdot \sigma$ range of each instantiation of $b^-[X_{k+3|k+1}]$ we can determine which of the available samples can be considered as representative. Following Alg.~\ref{alg:isRepSample}, for a value of $\beta_{\sigma} =1$, under the blue belief instantiation in Figure~\ref{fig:beliefDif:after}, all but the left most sample will be considered as representative of $b^{-}[X_{k+3|k+1}]$ since they are within the covariance ellipsoid, representing the $\pm$1$\sigma$ range. While under the red belief instantiation only the three samples within the red covariance ellipsoid will be considered as representative where the rest will be re-sampled from the nominal distribution.  

%\VI{[the above section (3.4.2) is very "wordy", better to make it more formal..]}
  
\subsubsection{Belief Update as Part of Immediate Reward  Calculations}\label{ssubsec:beliefUpdate}

Once we determined a set of $n$ samples we wish to use (Alg.~\ref{alg:updatePrevPlanning} line~\ref{alg:updatePrevPlanning:repSample}), whether newly sampled, re-used or a mixture of both, we can update the appropriate beliefs in order to form the set $\{b^r[X_{i+1|k+l}]\}_{r=1}^{n}$, required for calculating the reward  function values at the look ahead step $i+1$. In this section we go through the belief update process, which is case sensitive to whether a sample was newly sampled or re-used. We start with the standard belief update for newly sampled measurements; continue with recalling the difference between some belief $b[X_{i+1|k+l}]$, and its counterpart from planning time k, i.e.~$b[X_{i+1|k}]$; and conclude with belief update for a re-used measurement.
\subsubsection*{Newly Sampled Measurement}
For a newly sampled measurement $z_{i+1|k+l}$, we follow the standard belief update of incorporating the measurement factors to the propagated belief $b^-[X_{i+1|k+l}]$ as in
\begin{equation}\label{eq:stdBeliefUpdate}
	b[X_{i+1|k+l}] \propto b^-[X_{i+1|k+l}] \cdot \!\!\!\!\! \prod_{j \in \mathcal{M}_{i+1|k+l}} \!\!\!\!\!\! \prob{z^j_{i+1|k+l}|x_{i+1},l_{j} },
\end{equation}
and then performing inference; hence no re-use of calculations from planning time $k$.

\subsubsection*{Re-used Measurement}
As mentioned earlier, the motivation for re-using samples is to evert from the costly computation time of performing inference over a belief. Since we already performed inference over beliefs at planning time $k$, if we re-use the same samples, we can evert from performing standard belief update (\ref{eq:stdBeliefUpdate}), and utilize beliefs from planning time $k$. 
As discussed in Section~\ref{subsec:compPlanning}, the factors of two beliefs over the same future time but different planning sessions could be divided into three groups as illustrated in Figure~\ref{fig:horizonOverlap}: (i) representing shared history which is by definition identical between the two; (ii) representing potentially different factors since they are predicted for time $k$ and given for time $k+l$; (iii) represents future time for both, but each is conditioned over different history subject to (ii), so also potentially different.

Let us consider the measurement $z_{i+1|k} \subset$ $\{$repSamples$\}_{1}^{n}$, marked for re-use. The belief we are required to adjust is the one resulted from $z_{i+1|k}$ at planning time $k$, i.e.
\begin{equation}\label{eq:updateBelief:time_k}
	b[X_{i+1|k}] \propto \prob{X_{0:i+1}|H^-_{i+1|k},z_{i+1|k}}.
\end{equation}
Although $b[X_{i+1|k}]$ is given to us from precursory planning, it might require an update to match the new information received up to time $k+l$. 
In contrary to (\ref{eq:stdBeliefUpdate}), we update $b[X_{i+1|k}]$ using our previous work \cite{Farhi17icra,Farhi19arxiv}, which enables us to incrementally update the belief solution without performing inference once more. 
The process of incrementally updating a belief, as thoroughly described in \cite{Farhi19arxiv}, can be divided into two general steps: first updating the DA and then the measurement values. 
Let us consider the belief we wish to re-use from planning time $k$ (\ref{eq:updateBelief:time_k}).
 
In order to update the measurement factors of (\ref{eq:updateBelief:time_k}) to match (\ref{eq:stdBeliefUpdate}) we start with matching their data association (DA).
As described in \cite[Section~3.5.2]{Farhi19arxiv}, this DA matching will provide us with the indices of the factors that their DA should be updated, as well as factors that should be added or removed. The DA update process is being done over the graphical representation of the belief, i.e. the factor graph and bayes-tree (see \cite[Section~3.5.2]{Farhi19arxiv}). Once the DA update is complete we are left with updating the values of all the consistent DA factors. For the special case of Gaussian beliefs, \cite[Section~3.4]{Farhi19arxiv} provides few methods to efficiently update the aforementioned.
Once the update is complete we obtain $n$ beliefs representing $b[X_{i+1|k+l}]$, each corresponding to one of our $n$ samples $\{$\texttt{repSamples}$\}_{1}^{n}$. 

At this point it is worth reiterating the importance of $\epsilon_c$ (Section~\ref{ssubsec:closeEnough}), the computational effort of updating a candidate belief is with direct correlation to the distance between the beliefs.

\subsubsection{Re-using / Calculating Immediate Reward  values}\label{ssubsec:rewardUpdate}
As part of solving the planning problem (\ref{eq:optAction}) we need to get the objective value for various action sequences. The objective value for some action sequence (\ref{eq:objective}) is given by the sum of expected rewards along the planning horizon. The expected reward value is a weighted average of immediate rewards over future belief realizations (\ref{eq:ExApprx}). This section deals with the calculation of those immediate rewards. Since in the general case those immediate rewards are functions of belief and action, we need to perform inference over the beliefs before we can obtain the immediate reward values.  
Once we have a set of beliefs representing the possible futures of executing some action $u_i$ at future time $i+1$ (Alg.~\ref{alg:updatePrevPlanning} line~\ref{alg:updatePrevPlanning:updateBelief}), we can calculate the immediate rewards resulting from each such belief (Alg.~\ref{alg:updatePrevPlanning} line~\ref{alg:updatePrevPlanning:updateReward}).

Given a reward function for the $i+1$ lookahead step $r_{i+1}(b,u)$, a posterior  future belief $b^s[X_{i+1|k+l}]$, and the corresponding action $u_{i|k+l}$, the immediate reward $r^s_{i+1|k+l}$ is given by 
\begin{equation}\label{eq:reward:scratch}
	r^s_{i+1|k+l} = r_{i+1}\left(b^s[X_{i+1|k+l}], u_{i|k+l}\right).
\end{equation}
In this work, depending on the origin of $b^s[X_{i+1|k+l}]$, the immediate reward value is either calculated according to Eq.~(\ref{eq:reward:scratch}), i.e. from scratch, or being taken directly from a precursory planning session. 

Under \ibsp, any future belief is obtained by one of three ways: calculated from scratch using freshly sampled measurements (\ref{eq:stdBeliefUpdate}); through updating a previously calculated belief with the appropriate information; or by one of our main contributions in this work, completely re-using a previously calculated belief without any update (denoted as \wf, Section~\ref{subsec:wf}).
In this work, for the first two cases, the immediate reward value $r^s_{i+1|k+l}$ is obtained through simply solving Eq.~(\ref{eq:reward:scratch}), where for the third case the reward value $r^s_{i+1|k+l}$ is not calculated, but approximated by considering a previously calculated immediate reward value.

Between calculating the immediate reward directly from Eq.~(\ref{eq:reward:scratch}), and approximating it without any calculation based on \wf (see Section~\ref{subsec:wf})
%from Eq.~(\ref{eq:reward:wf}) 
there is the middle ground, not used in this work and left for future work, incrementally updating a previously calculated reward value.

\subsection{Incremental Expectation with Importance Sampling}\label{subsec:incExp}
This section describes one of our main contributions, incorporating Multiple Importance Sampling (MIS) into the objective estimator in order to account for selective re-use of previously calculated future beliefs. %\EF{Update according to final contribution phrasing}
For the reader's convenience all the notations of this section are summarized in Table~\ref{table:iexWithIS}.
Once we obtain immediate reward values for candidate actions along the planning horizon (Alg.~\ref{alg:ibsp}, lines~\ref{alg:ibsp:updatePlanning}-\ref{alg:ibsp:lastSteps}), we use them to estimate (\ref{eq:objective_kl}). 
However, because we are selectively re-using samples from precursory planning sessions, we estimate (\ref{eq:objective_kl}) using samples not necessarily taken from $\prob{z_{k+l+1:i|k+l}|H^{-}_{k+l|k+l},u_{k+l:i-1|k+l}}$, thus the formulation should be adjusted accordingly. 
\begin{figure}
	\centering
		\subfloat[]{\includegraphics[trim={0 0 0 0},clip, width=0.32\columnwidth]{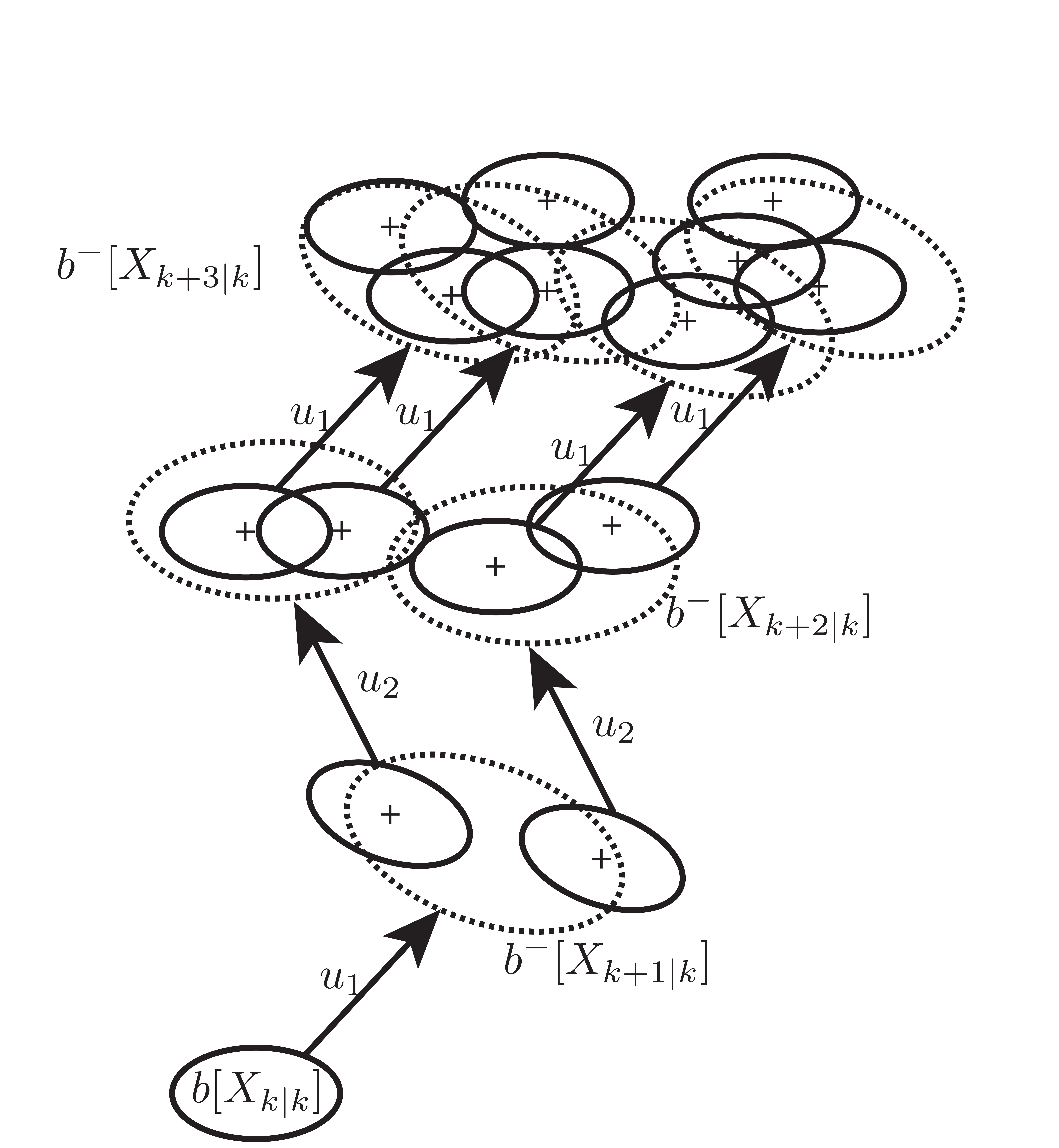}\label{fig:toyTimeK}}
		\subfloat[]{\includegraphics[trim={0 0 0 0},clip, width=0.32\columnwidth]{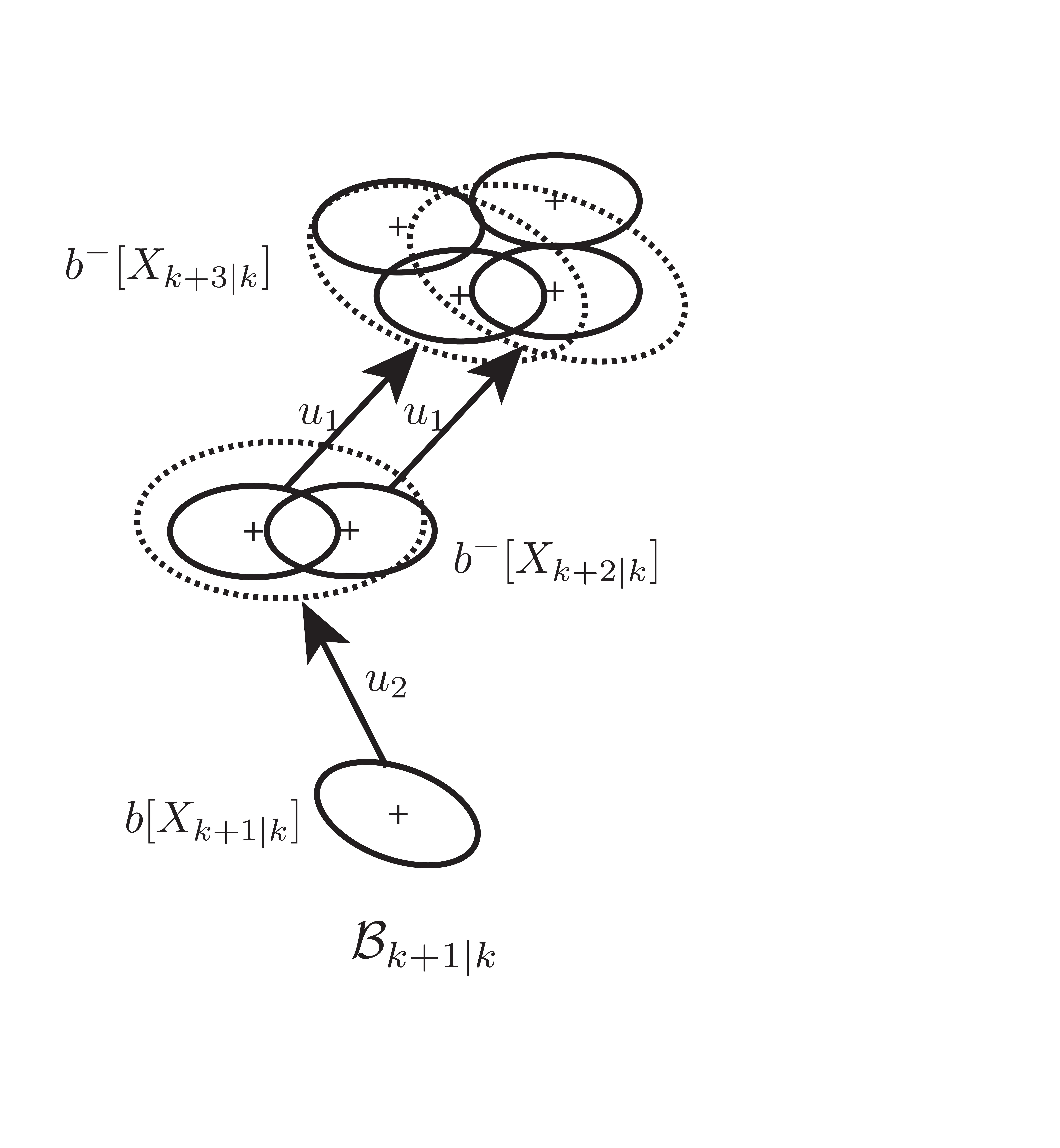}\label{fig:toyCandB}}
        \subfloat[]{\includegraphics[trim={0 0 0 0},clip, width=0.32\columnwidth]{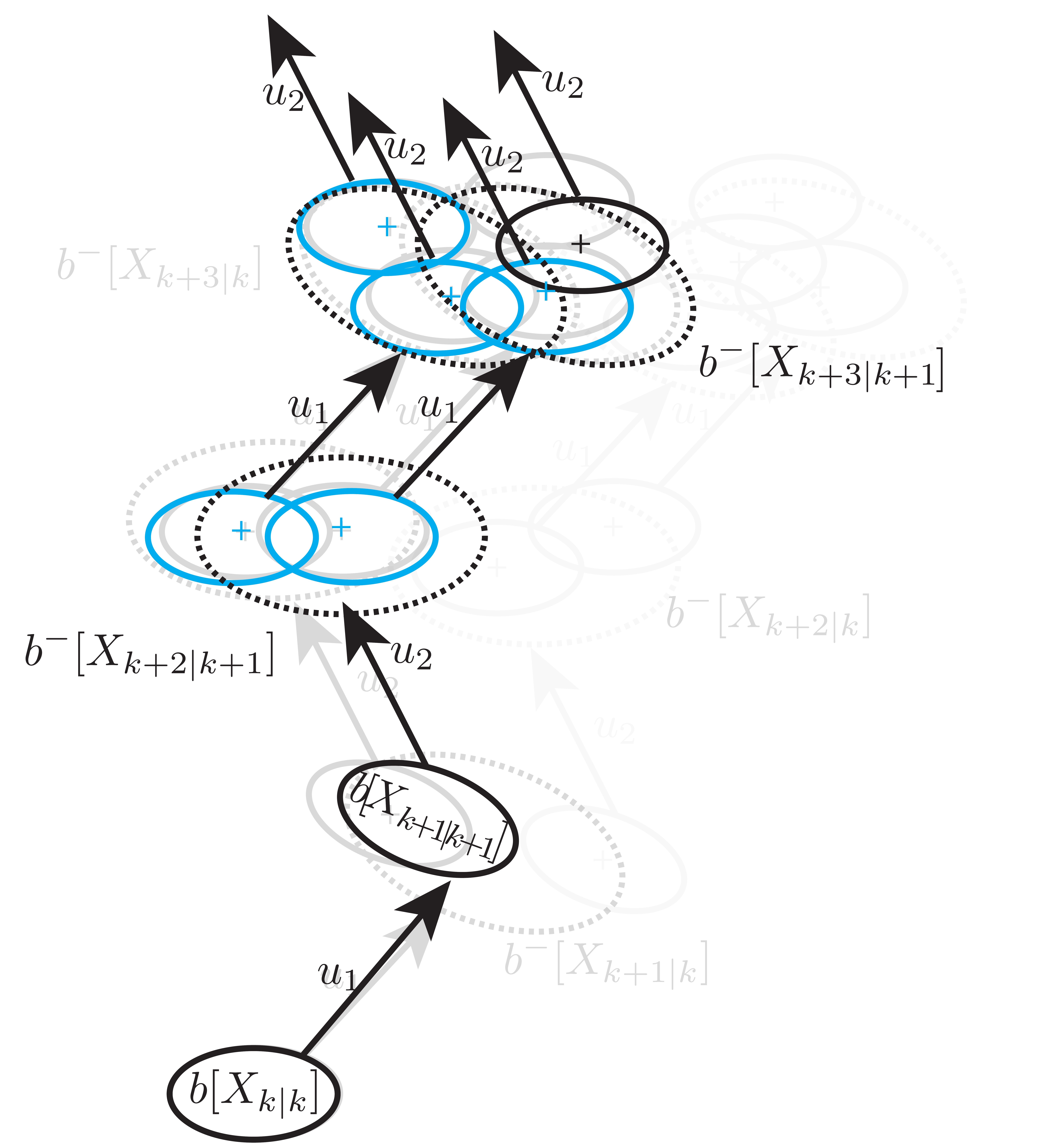}\label{fig:toyTimeK1}}
        \caption{Illustration of two consecutive planning sessions, each with horizon of three steps, and two sampled measurements per action. Beliefs are denoted by solid ellipses, propagated beliefs by dotted ellipses, and beliefs that have been re-used are denoted in blue. (a) Planning over the action sequence $u_1 \rightarrow u_2 \rightarrow u_1$ at time $k$ (b) The selected branch for re-use $\mathcal{B}_{k+1|k}$ from planning at time $k$ (c) The segment of planning at time $k+1$ for the action sequence $u_2 \rightarrow u_1 \rightarrow u_2$ that overlaps with (b).}
        \label{fig:examples}
\end{figure}
In the following, we recall the standard general formulation for the objective function, we consider an assumption simplifying the objective weighting scheme, we then relax this simplifying assumption and characterize our problem under the MIS problem and formulate it accordingly, and finally using a simple example we demonstrate objective calculation under \ibsp.  
 
First let us recall the standard formulation for Eq.~(\ref{eq:objective_kl}), for sampling $(n_x \cdot n_z)$ measurements per candidate step following Alg.~\ref{alg:sampling_z}, 
\begin{equation}\label{eq:stdObjWithSamples}
	J(u') = \sum_{i=k+l+1}^{k+l+L} \left[  \eta_{k+l+1} \!\!\!\!\!\!\!\!\sum_{\{z_{k+l+1|k+l}\}} \!\!\!\!\!\! \omega_{k+l+1}^{n} \left( \hdots \left(\eta_i \sum_{\{z_{i|k+l}\}}\omega_{i}^{n}\cdot r_i\left(b^{n}[X_{i|k+l}],u_{i-1|k+l} \right)  \right) \hdots \right)\right],
\end{equation}
where  $\omega_i^{n}$ denotes the weight of the $n_{th}$ measurement sample for future time $i$, $\eta_i$ denotes the normalizer of the weights at time $i$ such that $\eta_i^{-1} = \sum_{n=1}^{n_x\cdot n_z} \omega_i^{n}$, and $b^{n}[X_{i|k+l}]$ is the belief considering a specific set of samples up to future time $i$, i.e. 
\begin{equation}
	b^{n}[X_{i|k+l}] \doteq \prob{X_i|H_{k+l|k+l},u_{k+l:i-1|k+l},\{z_{k+l+1|k+l}\},\hdots,\{z_{i|k+l}\}}.
\end{equation}
%

% samples are taken not from nominal
In \ibsp we are potentially forcing samples from previous planning sessions; this type of problems that involve expressing one distribution using samples taken from another is referred to as importance sampling (see~Appendix~\ref{app:IS}). 
It is worth stressing that unlike \cite{Luo19ijrr} which uses importance sampling to sample sets of future states actions and measurements, we make use of importance sampling to incorporate re-used beliefs within the objective estimation.

Let us consider an assumption used in \citep{Farhi19icra} to simplify the weighting scheme required for \ibsp under importance sampling.
Assume we are at planning time $k+l$, considering planning session at time $k$ for re-use under \ibsp. At each lookahead step $i \in [k+l+1 : k+L]$ we conclude that all candidate measurements from planning at time $k$ form a representative set of $\prob{z_{i|k+l}|H^-_{i|k+l}}$, so we decide to re-use them all. Moreover, all of these samples from lookahead step $i$ at planning time $k$ were sampled from the same distribution denoted as $q_{i|k}(.)$, whether they were all freshly sampled at planning time $k$ or have been entirely re-used themselves from some past time in-which they were freshly sampled. Under this scenario, we have a set of representative measurements, all sampled from the same distribution that is not the nominal one, thus Eq.~(\ref{eq:stdObjWithSamples}) can be written as
\begin{equation}\label{eq:specialMIS_obj}
	J(u') \sim 
	\sum_{i=k+l+1}^{k+l+L} \left[ \frac{1}{n_i} \sum_{g=1}^{n_i} \omega_{i}(z_{k+l+1:i}^{g}) \cdot r_{i}\left( b^g[X_{i|k+l}],u'_{i-1|k+l} \right) \right],
\end{equation}
where $n_i$ denotes the number of samples used in the $i-k-l$ lookahead step, and the weights are given by a simple probability ratio, which under importance sampling is usually referred to as the likelihood ratio (see~Appendix~\ref{app:IS}),
\begin{equation}
	\omega_{i}(z_{k+l+1:i}^{g}) =  \frac{\prob{z^{g}_{k+l+1:i}|H_{k+l|k+l},u_{k+l:i-1|k+l} }}{q_{i}(z^{g}_{k+l+1:i})}, 
\end{equation}
where $\mathbb{P}(.)$ denotes the measurement likelihood we should have sampled the measurements from, and $q_i(.)$ denotes the probability distribution we actually sampled from. 
For example, assuming we re-used measurements from planning at time $k$, the likelihood ratio will be given by
\begin{equation}\label{eq:omegaExamp}
	\omega_{i}(z_{k+l+1:i}^{g}) =  \frac{\prob{z^{g}_{k+l+1:i}|H_{k+l|k+l},u_{k+l:i-1|k+l} }}{\prob{z^{g}_{k+l+1:i}|H_{k+l|k},u_{k+l:i-1|k} }}. 
\end{equation}
While this formulation under the simplified assumption allows one to easily re-use previous planning sessions, it is only under an "all or nothing" approach, which cripples the full potential of \ibsp for selective re-use of previous planning sessions. 
Our problem as part of \ibsp is more specific: at each look-ahead step we can potentially force samples from $M$ different measurement likelihood distributions (e.g. Section~\ref{ssubsec:ibspExample}), which none of them is necessarily  the nominal one. The possible number of distributions from which measurements are being sampled in every look-ahead step $i$ is bounded by
\begin{equation}\label{eq:pdfNum}
	1 \leq M \leq n_i. 
\end{equation}
The lower bound in (\ref{eq:pdfNum}) would occur either when there are no re-used samples, or when all samples are re-used and were originally sampled from the same distribution, e.g.~from a specific measurement likelihood in a previous planning session, as in (\ref{eq:omegaExamp}). 
%\VI{[can add e.g.: "(e.g.~from a specific measurement likelihood in a previous planning session, as in (36))"?]}. 
The upper bound in (\ref{eq:pdfNum}) would occur when $n_m = 1$ $\forall m$, i.e.~each sample has been obtained using a different distribution, e.g.~from $m$ different measurement likelihoods, potentially, from different planning sessions. 
%\VI{[can again clarify, e.g.: "(from different measurement likelihoods, potentially, from different planning sessions)"]}.

This problem falls within the multiple importance sampling problem (see~Appendix~\ref{app:IS}), and as such we can reformulate the estimator for (\ref{eq:objective_kl}) accordingly

\begin{table}[h]%{R}{0.5\textwidth}
	\caption{Notations for Section~\ref{subsec:incExp}} % title of Table
	\centering % used for centering table
	\begin{tabular}{c c} % centered columns (4 columns)
		\hline\hline %inserts double horizontal lines
		\textbf{Variable} & \textbf{Description}  \\ [0.5ex] % inserts table heading
		\hline % inserts single horizontal line
		$ \Box_{t|k}$ & Of time $t$ while current time is $k$ \\[1ex]
		$b[X_{t|k}]$ & belief at time $t$ while current time is $k$ \\[1ex]
		$b^n[X_{t|k+l}]$ & The $n_{th}$ sampled belief representing $b[X_{t|k+l}]$\\[1ex]
		$\omega_{i}^n$ & the weight corresponding to the $n_{th}$ measurement sample for lookahead step $i$ \\[1ex]
		$q_i(z^g_{t : i})$ & importance sampling distribution at lookahead step $i$, from which $z^g_{t:i}$ were sampled \\[1ex]
		$\prob{z^{\Box}_{t:i}|H,u}$ & the nominal distribution at lookahead steps $t:i$ \\[1ex]
		$n_i$ & the number of samples considered at lookahead step $i$ \\[1ex]
		$M_i$ & number of distributions at look ahead step i from which measurements are being sampled \\[1ex]
		$n_m$ & the number of measurements sampled from the $m_{th}$ distribution at look ahead step $i$ \\[1ex]
		$\omega_{i}(z^g_{t:i})$ & private case of $\omega_{i}(z^{m,g}_{t:i})$ where $m=1$ \\[1ex]
		$\omega_{i}(z^{m,g}_{t:i})$ & Balance Heuristic likelihood ratio at lookahead step $i$ corresponding to $z^{m,g}_{t:i}$ \\[1ex]
		$z_{t:i}^{m,g}$ & the $g_{th}$ set of future measurements at time instances $t:i$ sampled from the $m_{th}$ distribution \\[1ex]
		$b^{m,g}[X_{t|k+l}]$ & the sampled belief representing $b^{m,g}[X_{t|k+l}]$ which consider the measurements $z_{k+l+1:t}^{m,g}$\\ [1ex]
		$q_m()$ & the $m_{th}$ marginal importance sampling distribution at lookahead step $i$, $m \in [1,M_i]$ \\[1ex]
		$b^-[X_{t|k}]$ & belief at time $t-1$ propagated only with action $u_{t-1|k}$ \\[1ex]
		$\overset{\sim}{b}[X_{t|k}]$ & The root of the selected branch for re-use from planning time $k$ \\[1ex]
		$\mathcal{B}_{t|k}$ & The set of all beliefs from planning time $k$ rooted in $\overset{\sim}{b}[X_{t|k}]$\\[1ex]		
		$\{b[X_{t|k}]\}_1^j$ & $j$ sampled beliefs representing $b[X_{t|k}]$\\ [1ex]
		$\{r_i(b[X_{t|k}],u)\}_1^j$ & $j$ immediate rewards of lookahead step $i$\\ [1ex]
		$p_i()$ & the marginal nominal distribution at lookahead step $i$ \\ [1ex]
		$\tilde{p}_i()$ & the nominal distribution at lookahead step $i$ \\ [1ex]
		$\tilde{q}_m()$ & the $m_{th}$ importance sampling distribution at lookahead step $i$, $m \in [1,M_i]$ \\[1ex]
		% [1ex] adds vertical space
		\hline\hline
	\end{tabular}
	\label{table:iexWithIS} 
\end{table}
%%

%samples are taken from multiple different distributions
%comes the MIS formulation
%We are now in a position to formulate for the general case, the estimator for (\ref{eq:objective_kl}) based on the MIS problem is given by
%
\begin{equation}\label{eq:MIS_obj}
	J(u') \sim 
	\sum_{i=k+l+1}^{k+l+L} \left[  \sum_{m=1}^{M_i} \frac{1}{n_m} \sum_{g=1}^{n_m} \tilde{\omega}_{m}(z_{k+l+1:i}^{m,g}) \cdot \frac{\prob{z^{m,g}_{k+l+1:i}|H_{k+l|k+l},u_{k+l:i-1|k+l} }}{q_m(z_{k+l+1:i}^{m,g})}\cdot r_{i}\left( b^{m,g}[X_{i|k+l}],u'_{i-1|k+l} \right) \right],
\end{equation}
where $i$ denotes the look-ahead step, $M_i$ is the number of distributions from which measurements are being sampled, $n_m$ is the number of measurements sampled from the $m_{th}$ distribution, $q_m(.)$ is the $m_{th}$ distribution that samples were taken from, and $z^{m,g}_{k+l+1:i}$ are the $g_{th}$ set of future measurements at time instances $k+l+1:i$, sampled from the $m_{th}$ distribution. The $m_{th}$ weight is denoted by $\tilde{\omega}_{m}$ where $\sum \tilde{\omega}_{m} = 1$, and $\tilde{\omega}_{m}>0 \ \forall m$. This estimator (\ref{eq:MIS_obj}) is unbiased under the assumption that $q_m(.)>0$ whenever $\tilde{\omega}_{m}() \cdot \prob{z|H} \cdot r_i(.) \neq 0$. When using previous planning sessions as candidates for re-use under \ibsp, $q_m(.)$ corresponds to a measurement likelihood of those previous planning sessions (e.g. Section~\ref{ssubsec:ibspExample}).
%\VI{[clarify what $q_m$ corresponds in our context of iXBSP]}\EF{clear enough?}\VI{[Better to be more specific. Maybe show explicitly for a specific candidate planning session, i.e. $\prob{z^{m,g}_{k+l+1:i}|... }$]}\EF{We started with writing a small example for q(.) here, and ended up with the \ibsp walk-through example just for this purpose, where we explicitly write the distributions over a specific example, I'll add here a reference to the example}
%\VI{[Also, it could be that we have just one previous planning session but still have an MIS setting, no? (i.e.~measurements are forced from potentially several beliefs in the past belief tree)]} \EF{yes, potentially $M_i \leq n_i$ or in other words it is possible that $n_m = 1 \forall m$, I added to the aforementioned explanation regarding this issue} 

In this work we made use of the unbiased nearly optimal estimator for (\ref{eq:objective_kl}), based on the multiple importance sampling problem with the balance heuristic (see~Appendix~\ref{app:IS}) 
\begin{equation}\label{eq:MIMBH_obj}
	J(u') \sim 
	\sum_{i=k+l+1}^{k+l+L} \left[ \frac{1}{n_i} \sum_{m=1}^{M_i} \sum_{g=1}^{n_m} \omega_{i}(z_{k+l+1:i}^{m,g}) \cdot r_{i}\left( b^{m,g}[X_{i|k+l}],u'_{i-1|k+l} \right) \right],
\end{equation}
where $i$ denotes the look-ahead step, $n_i$ are the number of samples considered, $M_i$ is the number of distributions from which measurements are being sampled, $n_m$ is the number of measurements sampled from the $m_{th}$ distribution, and following the balance heuristic $\omega_{i}(z^{m,g})$ is the likelihood ratio of the $g_{th}$ sample from the $m_{th}$ distribution given by
\begin{equation}\label{eq:MIM_w}
	\omega_{i}(z_{k+l+1:i}^{m,g}) = \frac{\prob{z^{m,g}_{k+l+1:i}|H_{k+l|k+l},u_{k+l:i-1|k+l} }}{\Sigma_{\tilde{m} = 1}^{M_i} \frac{n_{\tilde{m}}}{n_i}q_{\tilde{m}}(z^{m,g}_{k+l+1:i})}, 
\end{equation}
where $z^{m,g}_{k+l+1:i}$ are the $g_{th}$ set of future measurements at time instances $[k+l+1:i]$, sampled from the $m_{th}$ distribution, and $q_{\tilde{m}}(.)$ is the $\tilde{m}_{th}$ importance sampling distribution.

The balance heuristic is considered as nearly optimal in the following sense:
\begin{lemma}\label{th:BHoptimality}
	Let $n_m \geq 1$ be positive integers for $m = 1,...,M_i$. Let $\tilde{\omega}_1,...,\tilde{\omega}_{M_i}$ be a partition of unity and let $\omega^{BH}$ be the balance heuristic. Let $\tilde{J}_{\tilde{\omega}_m}$ and $\tilde{J}_{\omega^{BH}}$ be the estimates of $J$ under $\tilde{\omega}_m$ and $\omega^{BH}$ respectively. Then
	\begin{equation}
		Var(\tilde{J}_{\tilde{\omega^{BH}}}) \leq Var(\tilde{J}_{\tilde{\omega}_m}) + \left( \frac{1}{\min_{m} n_m} - \frac{1}{\sum_m n_m}\right)J^2
	\end{equation}
\end{lemma}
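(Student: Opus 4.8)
\emph{Proof approach.} The plan is to follow the classical variance-decomposition argument behind Veach's combination theorem. It suffices to prove the inequality for a single estimand $J=\int f$, estimated by $\tilde J_{\omega}=\sum_{m=1}^{M_i}\frac{1}{n_m}\sum_{g=1}^{n_m}\omega_m(Z_{m,g})\,f(Z_{m,g})/q_m(Z_{m,g})$ with the $Z_{m,g}\sim q_m$ sampled independently; the bound for the full objective~(\ref{eq:MIMBH_obj}) then follows from its additive structure over look-ahead steps, with $f$ playing the role of $\prob{z|H,u}\,r_i(\cdot)$ at the relevant step. Since all samples are independent, the variance splits over the $M_i$ strata, and in stratum $m$ the per-sample term has variance $\int \omega_m^2 f^2/q_m - J_m^2$ with $J_m\doteq\int\omega_m f$. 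As $\sum_m\omega_m\equiv 1$ (partition of unity), $\sum_m J_m=J$. Hence $Var(\tilde J_\omega)=A(\omega)-B(\omega)$, where $A(\omega)\doteq\sum_m n_m^{-1}\int \omega_m^2 f^2/q_m$ and $B(\omega)\doteq\sum_m n_m^{-1}J_m^2$.

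The heart of the argument is two Cauchy--Schwarz estimates. For the second-moment part, for each $x$ with $f(x)\neq 0$ I would write $1=\big(\sum_m\omega_m(x)\big)^2\le\big(\sum_m \omega_m(x)^2/(n_m q_m(x))\big)\big(\sum_m n_m q_m(x)\big)$, that is, $\sum_m \omega_m(x)^2/(n_m q_m(x))\ge 1/\sum_m n_m q_m(x)$, with equality exactly for $\omega_m^{BH}(x)=n_m q_m(x)/\sum_k n_k q_k(x)$; multiplying by $f(x)^2\ge 0$ and integrating yields $A(\omega^{BH})\le A(\tilde\omega)$ for every partition of unity $\tilde\omega$. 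For the squared-mean part I would sandwich $B$: using $J_m\ge 0$ and $\sum_m J_m^2\le(\sum_m J_m)^2$ gives $B(\tilde\omega)\le J^2/\min_m n_m$, while a second Cauchy--Schwarz, valid because $\omega^{BH}$ is also a partition of unity, gives $J^2=\big(\sum_m J_m^{BH}\big)^2\le\big(\sum_m (J_m^{BH})^2/n_m\big)\big(\sum_m n_m\big)$, i.e.\ $B(\omega^{BH})\ge J^2/\sum_m n_m$.

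Combining, $Var(\tilde J_{\omega^{BH}})-Var(\tilde J_{\tilde\omega})=\big[A(\omega^{BH})-A(\tilde\omega)\big]+\big[B(\tilde\omega)-B(\omega^{BH})\big]\le 0+\big(\tfrac{1}{\min_m n_m}-\tfrac{1}{\sum_m n_m}\big)J^2$, which is the stated bound. I expect the main obstacle to be that the two pieces pull in opposite directions: $A$ is minimized by the balance heuristic whereas $B$ is not, so one cannot argue stratum by stratum; the resolution is precisely the two-sided bound on $B$, pinning $B(\tilde\omega)$ above by $J^2/\min_m n_m$ and $B(\omega^{BH})$ below by $J^2/\sum_m n_m$. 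A secondary point to handle carefully is the reduction from the path-valued, summed-over-$i$ estimator of~(\ref{eq:MIMBH_obj}) to the abstract single-integral form, together with the nonnegativity used in $\sum_m J_m^2\le(\sum_m J_m)^2$ (guaranteed if the immediate rewards $r_i$ are nonnegative, or otherwise requiring an appropriate sign/normalization convention).
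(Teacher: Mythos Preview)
The paper does not prove this lemma at all: its entire proof reads ``This is Theorem~1.\ of \cite{Veach95siggraph}.'' So there is no paper-specific argument to compare against; you have supplied what the paper omits.

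Your proposal is essentially Veach's original proof. The decomposition $Var(\tilde J_\omega)=A(\omega)-B(\omega)$ with $A(\omega)=\sum_m n_m^{-1}\int \omega_m^2 f^2/q_m$ and $B(\omega)=\sum_m n_m^{-1}J_m^2$, the pointwise Cauchy--Schwarz showing $A(\omega^{BH})\le A(\tilde\omega)$ with equality characterizing the balance heuristic, and the two one-sided bounds on $B$ are exactly the standard route. The combination $A(\omega^{BH})-A(\tilde\omega)\le 0$ together with $B(\tilde\omega)-B(\omega^{BH})\le J^2/\min_m n_m - J^2/\sum_m n_m$ gives the claim.

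Two minor remarks. First, the reduction you worry about (from the summed-over-$i$ estimator~(\ref{eq:MIMBH_obj}) to the abstract single-integral form) is not needed to prove the lemma as stated: the lemma is already phrased for a generic MIS estimate $\tilde J$ of a scalar $J$, not for the full planning objective. Second, you are right that the step $\sum_m J_m^2\le(\sum_m J_m)^2$ requires $J_m\ge 0$; this is the nonnegativity hypothesis implicit in Veach's setting (radiance integrands) and is genuinely needed---without it one can make $B(\tilde\omega)$ arbitrarily large while keeping $\sum_m J_m=J$. In the present paper's application this amounts to assuming the integrand $p(z\mid H,u)\,r_i(\cdot)$ has a definite sign, which the paper does not discuss but which your caveat correctly isolates.
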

\begin{proof}
	This is Theorem~1. of \cite{Veach95siggraph}.
\end{proof}
When all samples being considered to estimate (\ref{eq:objective_kl}) are sampled from their nominal distributions, (\ref{eq:MIMBH_obj}) is reduced back to Eq.~(\ref{eq:fullObj}), with all the weights degenerating to ones; for such a case, $M_i = 1$ $\forall i$, and $q_1(.) = \prob{z_{k+l+1:i}|H_{k+l|k+l},u_{k+l:i-1|k+l}}$, thus $\omega_i = 1$ $\forall i$.
When for each lookahead step, all samples share the same distribution that is not the nominal one, i.e. $M_i = 1 \ \forall i$, and $p(.) \neq q(.)$, thus Eq.~(\ref{eq:MIMBH_obj}) is reduced back to Eq.~(\ref{eq:specialMIS_obj}).

\subsubsection{\ibsp Walk-through Example}\label{ssubsec:ibspExample}
% example
%To better understand how the sampled distributions are different under \ibsp 
To better understand the  objective value calculation under \ibsp, let us perform \ibsp over a simple example. Assume we have access to all calculations from planning time $k$, in-which we performed \fullbsp (or \ibsp) for a horizon of three steps, and with $n_x=2$ and $n_z=1$. Figure~\ref{fig:toyTimeK} illustrates a specific action sequence, $u_1 \rightarrow u_2 \rightarrow u_1$, considered as part of planning at time k. Let us assume that the optimal action decided upon as part of planning at time $k$, and was later executed was $u_1$. We are currently at time $k+1$, after performing inference using the measurements we received as a result of executing $u_1$. We perform planning using \ibsp with the same horizon length and number of samples per action, for several action sequences, one of which is the action sequence $u_2 \rightarrow u_1 \rightarrow u_2$, as illustrated in Figure~\ref{fig:toyTimeK1}. 

Following Alg.~\ref{alg:ibsp} line~\ref{alg:ibsp:selectBranch}, out of the two available beliefs from planning time $k$ shown in Figure~\ref{fig:toyTimeK}, $\{b[X_{k+1|k}]\}_1^2$, the left one is determined as closer to $b[X_{k+1|k+1}]$, so we consider all its descendants as the set $\mathcal{B}_{k+1|k}$, as illustrated in Figure~\ref{fig:toyCandB}, and denote the distance between $b[X_{k+1|k}]$ and $b[X_{k+1|k+1}]$ as \texttt{Dist}. For the sake of this example let us say \texttt{Dist} is determined as close enough for re-use; we can therefore continue with re-using the beliefs in the set $\mathcal{B}_{k+1|k}$ (Alg.~\ref{alg:ibsp} line~\ref{alg:ibsp:updatePlanning}). 
First we check which of the two available sampled measurements from planning time $k$ constitutes an  adequate representation for $\prob{z_{k+2}|H_{k+1|k+1},u_2}$. One way to do so, is following Alg.~\ref{alg:isRepSample} and checking whether the two available state samples from planning time $k$ constitute an adequate representation for $b^-[X_{k+2|k+1}]$; since they are, we consider all measurements associated to them as a representative set of $\prob{z_{k+2}|H_{k+1|k+1},u_2}$.
Our representative set of measurement samples for look ahead step $k+2$ now holds two re-used measurements, so we update their corresponding beliefs $\{b[X_{k+2|k}]\}_1^2$ into $\{b[X_{k+2|k+1}]\}_1^2$ (Alg.~\ref{alg:updatePrevPlanning} line~\ref{alg:updatePrevPlanning:updateBelief}), the updated beliefs are denoted in blue in Figure~\ref{fig:toyTimeK1}. After updating the beliefs we can calculate/ update the immediate rewards(costs) associated with them, see Section~\ref{ssubsec:rewardUpdate}, once obtained we can proceed to the next future time step. 

For the next look ahead step, we propagate $\{b[X_{k+2|k+1}]\}_1^2$ with action $u_1$ to obtain $\{b^-[X_{k+3|k+1}]\}_1^2$ (Alg.~\ref{alg:updatePrevPlanning} line~\ref{alg:updatePrevPlanning:propagate}), and check whether the four available measurement samples from planning time $k$ constitute an adequate representation for $\prob{z_{k+3}|H_{k+2|k+1},u_1}$ (Alg.~\ref{alg:updatePrevPlanning} line~\ref{alg:updatePrevPlanning:repSample}); following Alg.~\ref{alg:isRepSample} we find only three of them are, so we mark the associated beliefs for re-use, and sample the forth measurement from the original distribution $\prob{z_{k+3}|H_{k+2|k+1},u_1}$. We then update the beliefs we marked for re-use, $\{b[X_{k+3|k}]\}_1^3$ into $\{b[X_{k+3|k+1}]\}_1^3$ (denoted by the blue colored beliefs at $k+3|k+1$ in Figure~\ref{fig:toyTimeK1}), and $b^-[X_{k+3|k+1}]$ into $b^4[X_{k+3|k+1}]$ (denoted by the black colored belief at $k+3|k+1$ in Figure~\ref{fig:toyTimeK1}) using the newly sampled measurement (Alg.~\ref{alg:updatePrevPlanning} line~\ref{alg:updatePrevPlanning:updateBelief}). After obtaining the beliefs for look ahead step $k+3$, whether through updating a re-used belief or calculation from scratch, we calculate/ update the immediate rewards(costs) of each.
Since we do not have candidate beliefs to be re-used for the next time step, the last step of the horizon $k+4|k+1$ is calculated using \fullbsp (Alg.~\ref{alg:ibsp} line~\ref{alg:ibsp:lastSteps}). 

At this point we have all the immediate rewards for each of the predicted beliefs along the action sequence $u_2 \rightarrow u_1 \rightarrow u_2$, so we can calculate the expected reward value for this action sequence for planning at time $k+1$.
For the look ahead step $k+2$ of planning session at time $k+1$, i.e. $k+2|k+1$, we have two reward values, $\{r_{k+2|k+1}(b[X_{k+2|k+1}],u_2)\}_1^2$, each calculated for a different belief $b[X_{k+2|k+1}]$ considering a different sample $z_{k+2|k}$. Calculating the expected reward value for future time step $k+2|k+1$ would mean in this case, using measurements sampled from $\prob{z_{k+2}|H_{k+1|k},u_2}$ rather then from $\prob{z_{k+2}|H_{k+1|k+1},u_2}$. %This problem, of performing estimation using forced samples \VI{is called importance sampling [repeating]}. Since for a single time step we might have samples from multiple different distributions, e.g. look ahead step $k+3|k+1$ in Figure~\ref{fig:toyTimeK1}, \VI{our problem falls within the special case of Multiple Importance Sampling (MIS) (see Appendix~\ref{app:IS})[repeating]}.
The use of Multiple Importance Sampling (MIS) enables us to calculate expectation while sampling from a mixture of probabilities, where the balance heuristic is used to calculate the weight functions.
Using the formulation of MIS along with the balance heuristic presented in Eq.~(\ref{eq:MIMBH_obj}), we can write down the estimation for the expected reward value of look ahead step $k+2|k+1$,
\begin{equation}\label{eq:toy1}
	\Expec \left[ r_{k+2|k+1}(.)\right] \sim \frac{1}{2}\frac{p_1(z^{1,1}_{k+2|k})}{\frac{2}{2}q_1(z^{1,1}_{k+2|k})}\cdot r^1_{k+2|k+1}(.)  + \frac{1}{2}\frac{p_1(z^{2,1}_{k+2|k})}{\frac{2}{2}q_1(z^{2,1}_{k+2|k})}\cdot r^2_{k+2|k+1}(.),
\end{equation}
where $p_1(.) \doteq \prob{z_{k+2}|H_{k+1|k+1},u_2}$ and $q_1(.) \doteq \prob{z_{k+2}|H_{k+1|k},u_2}$. 
In the same manner, following (\ref{eq:MIMBH_obj}), we can also write down the estimation for the expected reward value at look ahead step $k+3$ of planning at time $k+1$, i.e. $k+3|k+1$,
\begin{align}
	\nonumber
	\Expec \left[ r_{k+3|k+1}(.)\right] \sim 
	&\frac{1}{4}\frac{\tilde{p}_2(z^{1,1}_{k+2:k+3|k})}{\frac{3}{4}\tilde{q}_2(z^{1,1}_{k+2:k+3|k})+\frac{1}{4}\tilde{p}_2(z^{1,1}_{k+2:k+3|k})} r^1_{k+3|k+1}(.) \\ 
	\nonumber
	+ &\frac{1}{4}\frac{\tilde{p}_2(z^{2,1}_{k+2:k+3|k})}{\frac{3}{4}\tilde{q}_2(z^{2,1}_{k+2:k+3|k})+\frac{1}{4}\tilde{p}_2(z^{2,1}_{k+2:k+3|k})} r^2_{k+3|k+1}(.) \\
	\nonumber
	+ &\frac{1}{4}\frac{\tilde{p}_2(z^{3,1}_{k+2:k+3|k})}{\frac{3}{4}\tilde{q}_2(z^{3,1}_{k+2:k+3|k}) \!\!+ \!\!\frac{1}{4}\tilde{p}_2(z^{3,1}_{k+2:k+3|k})} r^3_{k+3|k+1}(.) \\
	\label{eq:toy2}
	  + &\frac{1}{4}\frac{\tilde{p}_2(z^{4,1}_{k+2:k+3|k+1})}{\frac{3}{4}\tilde{q}_2(z^{4,1}_{k+2:k+3|k+1}) + \frac{1}{4}\tilde{p}_2(z^{4,1}_{k+2:+3|k+1})} r^4_{k+3|k+1}(.),
\end{align}
where $\tilde{p}_2(.) \doteq \prob{z_{k+2:k+3}|H_{k+1|k+1},u_2,u_1}$ and $\tilde{q}_2(.) \doteq \prob{z_{k+2:k+3}|H_{k+1|k},u_2,u_1}$. When considering (\ref{eq:measLikelihood}), we can re-write the measurement likelihood from (\ref{eq:toy2}) into a product of measurement likelihoods per look ahead step, e.g. $\tilde{p}_2(z^1_{k+2:k+3|k}) = p_1(z^1_{k+2|k})p_2(z^1_{k+3|k})$, 
\begin{align}
	\nonumber
	\Expec \left[ r_{k+3|k+1}(.)\right] \sim 
	&\frac{1}{4}\frac{p_1(z^{1,1}_{k+2|k})p_2(z^{1,1}_{k+3|k})}{\frac{3}{4}q_1(z^{1,1}_{k+2|k})q_2(z^{1,1}_{k+3|k})+\frac{1}{4}p_1(z^{1,1}_{k+2|k})p_2(z^{1,1}_{k+3|k})} r^1_{k+3|k+1}(.)  \\
	\nonumber
	+  &\frac{1}{4}\frac{p_1(z^{1,1}_{k+2|k})p_2(z^{2,1}_{k+3|k})}{\frac{3}{4}q_1(z^{1,1}_{k+2|k})q_2(z^{2,1}_{k+3|k})+\frac{1}{4}p_1(z^{1,1}_{k+2|k})p_2(z^{2,1}_{k+3|k})} r^2_{k+3|k+1}(.) \\
	\nonumber
	 + &\frac{1}{4}\frac{p_1(z^{2,1}_{k+2|k})p_2(z^{3,1}_{k+3|k})}{\frac{3}{4}q_1(z^{2,1}_{k+2|k})q_2(z^{3,1}_{k+3|k})+\frac{1}{4}p_1(z^{2,1}_{k+2|k})p_2(z^{3,1}_{k+3|k})} r^3_{k+3|k+1}(.) \\
	 \label{eq:toy2+}
	  + &\frac{1}{4}\frac{p_1(z^{2,1}_{k+2|k})p_2(z^{4,1}_{k+3|k})}{\frac{3}{4}q_1(z^{2,1}_{k+2|k})q_2(z^{4,1}_{k+3|k})+\frac{1}{4}p_1(z^{2,1}_{k+2|k})p_2(z^{4,1}_{k+3|k})} r^4_{k+3|k+1}(.),
\end{align}
where $p_1(.)$ need not be calculated at look ahead step $k+3$, since it is already given from (\ref{eq:toy1}).

\subsection{The WildFire condition}\label{subsec:wf}
% intro
One of our main contributions in this work is the \wf condition, which allows one to sacrifice statistical accuracy in favor of reduction in computation time by re-using an entire subtree from a previous planning session without any update. 
% The intuition behind WF
As explained in the opening of Section~\ref{sec:approach}, at each lookahead step instead of calculating the predicted beliefs from scratch, we locate and update some previously calculated beliefs which are close enough. This belief update is more efficient, in terms of computation time, than performing inference from scratch over future measurements, hence the computation time advantage of \ibsp over \fullbsp. The \wf condition allows to refrain even from updating the belief, thus sacrificing statistical accuracy in favor of further reduction in computation time of the planning process under \ibsp. 
%Keeping in mind that re-use under the \wf condition sacrifice statistical accuracy in favor of reduction in computation time, as lengthly explained subsequently.

% purpose
The \wf condition is a distance condition between two beliefs, used to check if one is close to the other up to some predetermined value $\epsilon_{wf}$. When the condition is met, the entire subtree rooted in the considered belief is re-used "as is" without any additional update.
Under our problem of \ibsp, we consider previously calculated beliefs from planning at time $k$ for re-use at time $k+l$ by checking the information gap between the two planning times for each planning horizon $i \in [k+l, k+L]$.

This section covers the \wf condition, starting with the intuition behind it and its working principle (Section~\ref{ssubsec:wf:intuition}), how it is integrated within \ibsp (Section~\ref{ssubsec:wf:inIXBSP}), and concluding with formulating bounds for the objective value error under the use of \wf (Section~\ref{ssubsec:wfBounds}).
For the reader's convenience all the notations of this section are summarized in Table~\ref{table:wf}.

\subsubsection{Intuition and Working Principle}\label{ssubsec:wf:intuition}

Let us assume we found a belief from a precursory planning session $b[X_{i+1|k}] \in \mathcal{B}_{k+l|k}$ that is identical to the belief we would like to calculate $b^s[X_{i+1|k+l}]$;  these beliefs would, of course,  yield zero distance, 
\begin{equation}
	\mathbb{D}(b[X_{i+1|k}],b^s[X_{i+1|k+l}]) = 0.
\end{equation}
For this case, instead of solving Eq.~(\ref{eq:reward:scratch}) in order to obtain $r^s_{i+1|k+l}$, we can simply use the previously calculated immediate reward associated with $b[X_{i+1|k}]$
\begin{equation}
	r^s_{i+1|k+l} = r_{i+1}\left(b^s[X_{i+1|k+l}], u_{i|k+l}\right) \equiv r_{i+1}\left(b[X_{i+1|k}], u_{i|k}\right),
\end{equation}
not only we re-use this immediate reward but we can simply re-use the entire sub-tree rooted in $b[X_{i+1|k}]$ as is. 
Under the \wf condition we consider an approximation to the immediate reward value $r^s_{i+1|k+l}$, by using the immediate reward value of a previously calculated belief $b[X_{i+1|k}]$ which is $\epsilon_{wf}$ close to $b^s[X_{i+1|k+l}]$ in the $\mathbb{D}(.)$ sense, i.e.
\begin{equation}\label{eq:reward:wf}
	r^s_{i+1|k+l} = r_{i+1}\left(b^s[X_{i+1|k+l}], u_{i|k+l}\right) \approxeq  r_{i+1}\left(b[X_{i+1|k}], u_{i|k}\right),
\end{equation}
where
\begin{equation}
	\mathbb{D}(b[X_{i+1|k}],b^s[X_{i+1|k+l}]) \leq \epsilon_{wf}.
\end{equation}
As such the immediate reward approximation error under the use of \wf is given by
\begin{equation}
	err_r = \mid r_{i+1}\left(b^s[X_{i+1|k+l}], u_{i|k+l}\right) - r_{i+1}\left(b[X_{i+1|k}], u_{i|k}\right) \mid
\end{equation}
and as claimed in Theorem~\ref{th:rewardBound}, it can be bounded by our choice of $\epsilon_{wf}$. 

\begin{theorem}[Bounded reward difference]\label{th:rewardBound}
	Let $r(b,u)$ be \aH continuous with $\lambda_{\alpha}$ and $\alpha \in (0,1]$. Let $b$ and $b'$ denote two beliefs. Then the difference between $r(b,u)$ and $r(b',u)$ is bounded by
	\begin{equation}
		\mid r(b,u) - r(b',u) \mid \leq \left(4 \cdot ln 2\right)^{\frac{\alpha}{2}} \cdot \lambda_{\alpha} \cdot  \JD^{\alpha}(b,b'),
	\end{equation}
	where 
	\begin{equation}
		\JD(b,b') = \sqrt{\frac{1}{2}\mathbb{D}_{KL}(b||b') + \frac{1}{2}\mathbb{D}_{KL}(b'||b)},
	\end{equation}
	and $\mathbb{D}_{KL}(.)$ is the KL divergence.
\end{theorem}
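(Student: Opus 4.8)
The strategy is to chain together two inequalities: first, the $\alpha$-Hölder continuity of $r$ bounds $|r(b,u)-r(b',u)|$ by $\lambda_\alpha$ times an $\alpha$-power of the distance between $b$ and $b'$ in whatever metric Hölder continuity is measured in (here total variation); second, Pinsker-type inequalities relate total variation distance to the KL divergences, hence to $\JD$. So the plan is: (i) write $|r(b,u)-r(b',u)| \le \lambda_\alpha \, \|b-b'\|_{TV}^{\alpha}$, where $\|\cdot\|_{TV}$ is the total variation distance; (ii) bound $\|b-b'\|_{TV}$ in terms of $\mathbb{D}_{KL}(b\|b')$ and also in terms of $\mathbb{D}_{KL}(b'\|b)$ via Pinsker's inequality $\|b-b'\|_{TV} \le \sqrt{\tfrac12 \mathbb{D}_{KL}(b\|b')}$; (iii) average the two bounds (or take the one that is available) so as to produce the symmetrized quantity $\tfrac12\mathbb{D}_{KL}(b\|b') + \tfrac12\mathbb{D}_{KL}(b'\|b) = \JD^2(b,b')$; and (iv) collect constants so that the $\alpha$-power lands on $\JD$ and the numerical prefactor comes out as $(4\ln 2)^{\alpha/2}$.

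In more detail, after step (i) I would use both directions of Pinsker: $\|b-b'\|_{TV}^2 \le \tfrac12 \mathbb{D}_{KL}(b\|b')$ and $\|b-b'\|_{TV}^2 \le \tfrac12 \mathbb{D}_{KL}(b'\|b)$. Adding these gives $2\|b-b'\|_{TV}^2 \le \tfrac12\big(\mathbb{D}_{KL}(b\|b') + \mathbb{D}_{KL}(b'\|b)\big)$, i.e. $\|b-b'\|_{TV}^2 \le \tfrac14\big(\mathbb{D}_{KL}(b\|b')+\mathbb{D}_{KL}(b'\|b)\big) = \tfrac12 \JD^2(b,b')$, so $\|b-b'\|_{TV} \le \tfrac{1}{\sqrt 2}\,\JD(b,b')$. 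Raising to the power $\alpha$ and multiplying by $\lambda_\alpha$ yields $|r(b,u)-r(b',u)| \le \lambda_\alpha\, 2^{-\alpha/2}\, \JD^{\alpha}(b,b')$. Since the claimed constant is $(4\ln 2)^{\alpha/2} = 2^{\alpha/2}(\ln 2)^{\alpha/2}$, which exceeds $2^{-\alpha/2}$ for all $\alpha \in (0,1]$, the stated bound follows a fortiori; alternatively, the factor $\ln 2$ and the power $4$ suggest the authors measure Hölder continuity in Hellinger distance or use the sharper bound $\mathbb{D}_{KL} \ge 2\ln 2 \cdot \|b-b'\|_{TV}^2$ is \emph{not} what is wanted, so more likely they invoke $\|b-b'\|_{TV} \le \sqrt{(\ln 2)/2}\,\sqrt{\mathbb{D}_{KL}}$-style constants coming from a base-2 logarithm convention in the KL divergence; I would adjust the Pinsker constant to match whichever logarithm base the paper uses and verify the arithmetic $\big(4\ln 2\big)^{\alpha/2}$ drops out.

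The main obstacle, and the place where care is needed, is precisely the bookkeeping of constants and the choice of underlying metric: the statement of Hölder continuity in the theorem hypothesis does not specify \emph{which} distance on belief space $r(\cdot,u)$ is Hölder with respect to, and the target bound is phrased purely in terms of $\JD$. I would need to either (a) take the hypothesis to mean Hölder with respect to total variation and then track the Pinsker constants under the paper's logarithm convention, or (b) if the paper intends Hölder with respect to $\JD$ directly, the proof collapses to a one-line application of the definition — but then the constant $(4\ln 2)^{\alpha/2}$ would have to be absorbed into $\lambda_\alpha$, which seems not to be the intent. The honest plan is therefore: fix the convention, state the Pinsker inequality in that convention as a cited lemma, apply it in both directions, symmetrize, take the $\alpha$-th power (using $t \mapsto t^\alpha$ subadditivity only if I need to split a sum, which here I do not since I symmetrize before exponentiating), and finish. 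Everything after the convention is fixed is routine; the only genuine subtlety is making the numerical prefactor come out exactly as written rather than merely bounding it.
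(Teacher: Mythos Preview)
Your plan is essentially the paper's: $\alpha$-H\"older followed by Pinsker. The execution differs in two small but instructive ways, which together resolve your confusion about the constant $(4\ln 2)^{\alpha/2}$.

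First, the paper takes the H\"older hypothesis with respect to the $L_1$ norm $\|b-b'\|_1$, not total variation, and invokes the Pinsker-type bound in the form of Cover's Lemma~11.6.1, namely $\|P-Q\|_1^2 \le 2\ln 2 \cdot \mathbb{D}_{KL}(P\|Q)$. This gives $|r(b,u)-r(b',u)| \le \lambda_\alpha\,[2\ln 2\cdot \mathbb{D}_{KL}(b\|b')]^{\alpha/2}$, which is then rewritten as $(4\ln 2)^{\alpha/2}\lambda_\alpha\,[\tfrac12\mathbb{D}_{KL}(b\|b')]^{\alpha/2}$.

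Second, and this is the step you did not guess, the paper does \emph{not} apply Pinsker in the reverse direction. It simply observes that $\tfrac12\mathbb{D}_{KL}(b'\|b)\ge 0$ and adds it inside the bracket, loosening the bound to $(4\ln 2)^{\alpha/2}\lambda_\alpha\,[\tfrac12\mathbb{D}_{KL}(b\|b')+\tfrac12\mathbb{D}_{KL}(b'\|b)]^{\alpha/2}=(4\ln 2)^{\alpha/2}\lambda_\alpha\,\JD^\alpha(b,b')$.

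So the $(4\ln 2)^{\alpha/2}$ is purely $2\ln 2$ (the Pinsker constant in Cover's $L_1$ formulation) times the $2$ pulled out to match the $\tfrac12$ in the definition of $\JD$; there is no base-2 logarithm or Hellinger metric involved. Your two-sided-Pinsker-then-average route is perfectly valid and actually produces a sharper intermediate bound, but the paper's one-sided-plus-slack argument makes the stated constant drop out without any further comparison.
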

\begin{proof}
	See Appendix C.
\end{proof}
Based on Theorem~\ref{th:rewardBound}, for \aH continuous reward function with parameters $\{ \lambda_{\alpha}, \alpha \}$, and the $\JD$ distance, we can bound the immediate reward value error caused due to the use of \wf  
\begin{equation}
	err_r \leq \left(4 \cdot ln 2\right)^{\frac{\alpha}{2}} \cdot \lambda_\alpha \cdot \epsilon_{wf}^{\alpha}.
\end{equation}
%
%
% Wildfire principle
By selecting to re-use a belief with information gap no larger than $\epsilon_{wf}$, we sacrifice some of the statistical accuracy of our estimate in favor of a substantial save in computation time. As explained in Section~\ref{subsec:compPlanning} and illustrated by Figure~\ref{fig:horizonOverlap}, an information gap might occur due to inaccurate predictions for lookahead steps $k+1:k+l$, unrepresentative measurements for lookahead steps $k+l+1:k+L$, or some combination of them. Beliefs from planning at time $k$ that are being re-used under the \wf condition, are not being updated to match the information of planning at time $k+l$, or to have representative future measurements for lookahead steps $k+l+1:k+L$. Instead, they are considered as close enough,  thus taken "as is" and the entire process of belief update (Section~\ref{ssubsec:beliefUpdate}) is everted.
The \wf condition is passed from one belief to all of its descendants along the horizon, e.g.~if $b[X_{i|k}]$ has been flagged as meeting the \wf condition all beliefs originated in $b[X_{i|k}]$ would also be flagged as meeting the \wf condition without calculating any distances, i.e. the entire subtree rooted at $b[X_{i|k}]$  is taken "as is", without any re-calculations, as discussed below.

It is worth stressing that the \wf condition is a non-integral part of \ibsp, and as such, it is up to the user to decide wether to sacrifice statistical accuracy in favor of computation time or not. Moreover it is up to the user to decide how much sacrifice he or she are willing to make by adjusting the \wf condition accordingly $ \epsilon_{wf} \in [0,\epsilon_{c}]$.
Choosing a \wf threshold value of $\epsilon_{wf}=0$, does not yield any immediate reward error, but can still save computation time in instances where an identical belief is available from a previous planning session, e.g. BSP under MPC framework with no new available observations from the environment. 
%\VI{can still save computation time in instances where replanning has no advantage, e.g. invoking replanning when nothing changed. [unclear to me]}

 %%
\begin{table}[h]%{R}{0.5\textwidth}
	\caption{Notations for Section~\ref{subsec:wf}} % title of Table
	\centering % used for centering table
	\begin{tabular}{c c} % centered columns (4 columns)
		\hline\hline %inserts double horizontal lines
		\textbf{Variable} & \textbf{Description}  \\ [0.5ex] % inserts table heading
		\hline % inserts single horizontal line
		$ \Box_{t|k}$ & Of time $t$ while current time is $k$ \\[1ex]
		$b[X_{t|k}]$ & belief at time $t$ while current time is $k$ \\[1ex]
		$b^s[X_{t|k}]$ & The $s_{th}$ sampled belief representing $b[X_{t|k}]$\\[1ex]
		$\overset{\sim}{b}[X_{t|k}]$ & The root of the selected branch for re-use from planning time $k$ \\[1ex]
		$\mathcal{B}_{t|k}$ & The set of all beliefs from planning time $k$ rooted in $\overset{\sim}{b}[X_{t|k}]$\\[1ex]	
		$b^{s-}_{\alpha}[X_{t+1|k+l}]$ & The sampled belief $b^s[X_{t|k+l}]$ propagated with the $\alpha$ candidate action\\[1ex]
		$b^{s'-}_{\alpha}[X_{t+i|k}]$ & A propagated belief from $\mathcal{B}_{t|k}$ closest to $b^{s-}_{\alpha}[X_{t+i|k+l}]$\\[1ex]
		$\epsilon_c$ & belief distance critical threshold, max distance for re-use computational advantage \\[1ex]
		$\epsilon_{wf}$ & \wf threshold, max distance to be considered as close-enough for re-use without any update \\[1ex]
		$\mathbb{D}(.)$ & belief divergence / metric \\[1ex]
		$\mathbb{D}^2(.)$ & squared $\mathbb{D}(.)$ \\[1ex]
		$\JD(p,q)$  & The distance between distributions $p$ and $q$ according to the \JD distance \\ [1ex]
		$\lambda_{\alpha}$ & the reward function \aH constant  \\ [1ex]
		$\alpha$ & the reward function \aH exponent  \\ [1ex]
		$r^s_{t|k}$ & the immediate reward at lookahead step $t$, related to $b^s[X_{t|k}]$ \\ [1ex]
		\texttt{Dist} & The distance between $\overset{\sim}{b}[X_{t|k}]$ and the corresponding posterior $b[X_{t|t}]$ \\[1ex]
		\texttt{dist} & The distance between $b^{s'-}_{\alpha}[X_{t|k}]$ and $b^{s-}_{\alpha}[X_{t|k+l}]$ \\[1ex]
		$\Delta$ &  equals $\mathbb{D}^2(b^+_1,b^+_2) - \mathbb{D}^2(b_1,b_2)$, where $b_{ip}$ denotes $b_i$ propagated with motion and measurements \\ [1ex] 
		% [1ex] adds vertical space
		\hline\hline
	\end{tabular}
	\label{table:wf} 
\end{table}
\subsubsection{Using \wf within \ibsp}\label{ssubsec:wf:inIXBSP}
We will now meticulously demonstrate how the \wf condition is integrated within the \ibsp paradigm. There are two different places within \ibsp in-which the \wf condition is used, the first is just after selecting the closest branch for re-use (Alg.~\ref{alg:ibsp} line~\ref{alg:ibsp:wfCond}), and the second is part of re-using existing beliefs (Alg.~\ref{alg:updatePrevPlanning} line~\ref{alg:updatePrevPlanning:checkEps_wf}).  

Let us assume we have just located the closest branch for re-use from planning at time $k$ (Alg.~\ref{alg:ibsp} line~\ref{alg:ibsp:selectBranch}), where \texttt{Dist} is the distance between our last posterior $b[X_{k+l|k+l}]$ and its counterpart from planning time $k$ $\tilde{b}[X_{k+l|k}]$, which is also the root of the selected branch. The value of \texttt{Dist} represents the information gap between the current posterior belief $b[X_{k+l|k+l}]$, and the appropriate closest prediction to it from a precursory planning session,  $\tilde{b}[X_{k+l|k}]$. In this specific case, the information gap represents how well were the predictions at planning time $k$ for lookahead steps $k+1:k+l$, the closer they were to what actually happened, the smaller the information gap as well as \texttt{Dist} value.
 If the information gap is not too big, i.e. $\texttt{Dist} \leq \epsilon_c$ (Alg.~\ref{alg:ibsp} line~\ref{alg:ibsp:distCond}), we say that the selected candidate branch is re-use worthy; if the information gap also meets the \wf condition, i.e. $\texttt{Dist} \leq \epsilon_{wf}$ (Alg.~\ref{alg:ibsp} line~\ref{alg:ibsp:wfCond}), we say that the information gap is negligible, thus the entire selected branch can be re-used "as is". In case of the latter, we re-use the entire closest branch $\mathcal{B}_{k+l|k}$ rooted at $\tilde{b}[X_{k+l|k}]$, and continue to complete the rest of the lookahead steps with \fullbsp. Because the beliefs were re-used without any update, there is no need to re-calculate the appropriate immediate reward values, available from planning at time $k$. 
 
 Let us now assume that the information gap was not too big, but also did not meet the \wf condition, i.e. $\epsilon_{wf} < \texttt{Dist} \leq \epsilon_c$, that scenario takes us to the second use of the \wf condition in \ibsp, as part of re-using existing beliefs Alg.~\ref{alg:updatePrevPlanning}.
 Because the \wf condition is passed from one belief to its descendants, we always start by checking if a candidate belief has inherited a \wf flag from its ancestor (Alg.~\ref{alg:updatePrevPlanning} line~\ref{alg:updatePrevPlanning:copyWF}). In case it did, we automatically consider it as meeting the \wf condition, and flag its immediate children as such as well. In case a belief is not already flagged as meeting the \wf condition we are required to check it. 
 We consider the $s_{th}$ belief at lookahead step $t=i$, i.e. $b^s[X_{i|k+l}]$, and propagate it with action $\alpha$ to obtain $b^{s-}_{\alpha}[X_{i+1|k+l}]$ (Alg.~\ref{alg:updatePrevPlanning} line~\ref{alg:updatePrevPlanning:propagate}). We locate the closest propagated belief to $b^{s-}_{\alpha}[X_{i+1|k+l}]$, denote it as $b^{s'-}_{\alpha}[X_{i+1|k}]$ and the distance between them as \texttt{dist}.
In this case, the information gap represented by \texttt{dist}, consists of the gap represented by \texttt{Dist} as well as the possibly different predicted measurements for lookahead steps $(k+l+1:i)$, e.g. area~(iii) in Figure~\ref{fig:horizonOverlap}. Because the propagated belief is used to generate predicted measurements (see Alg.~\ref{alg:sampling_z}), a small enough value of \texttt{dist} would improve the chances to obtain a representative set of samples (as discussed in Section~\ref{ssubsec:repSample}).
 
 If the information gap is not too big, i.e. $\texttt{dist} \leq \epsilon_c$ (Alg.~\ref{alg:updatePrevPlanning} line~\ref{alg:updatePrevPlanning:checkEps_c}), we consider the previously sampled measurements associated to $b^{s'-}_{\alpha}[X_{i+1|k}]$ as candidates for a representative set of measurements (Alg.~\ref{alg:updatePrevPlanning} lines~\ref{alg:updatePrevPlanning:gatherSamples}-\ref{alg:updatePrevPlanning:repSample}). If the information gap also meets the \wf condition, i.e. $\texttt{dist} \leq \epsilon_{wf}$ , we consider $b^{s-}_{\alpha}[X_{i+1|k+l}]$ and $b^{s'-}_{\alpha}[X_{i+1|k}]$ as close enough such that all sampled measurements associated with the latter are representative of the former (Alg.~\ref{alg:updatePrevPlanning} line~\ref{alg:updatePrevPlanning:useAsWF}). Consequently, we consider all beliefs descendant of $b^{s'-}_{\alpha}[X_{i+1|k}]$ as meeting the \wf condition, and as such they are re-used "as is" without any update.

\subsubsection{Objective Value Bounds under \wf}\label{ssubsec:wfBounds} % Choosing the wildfire Criteria / Assuring a Reward value}
Under the use of \wf, \ibsp is not necessarily an exact solution of the BSP problem, but a possible approximation. As such, we would like to get a bound over the resulting objective value. In the following we show that under the assumption of \aH rewards and the use of \JD distance (Appendix~\ref{app:Distance}) the immediate reward value under \wf is bounded (Theorem~\ref{th:rewardBound}) by a random variable. We continue with showing that the corresponding objective value is also bounded (Theorem~\ref{th:JacobBound}) by a (different) random variable, where if the distribution of this random variable is explicitly given, a corresponding bound can be formulated (Theorem~\ref{th:JacobBoundProb}). We conclude with showing (Corollary~\ref{cor:linModelBounds}) that for the case of linear Gaussian models, one can explicitly calculate the moments of the aforementioned random variables. 
For the reader's convenience Figure~\ref{fig:wf:proofScheme} illustrates the workflow of the supplied proofs as well as the dependency of each segment over the two involved assumptions, while bolding the final results - Theorem~\ref{th:JacobBound}, Theorem~\ref{th:JacobBoundProb}, and Corollary~\ref{cor:linModelBounds}. 
It is worth reiterating that the purpose of the supplied bounds is to reflect the direct correlation between $\epsilon_{wf}$ and the objective value. More work is required in order to make the bounds convenient enough for online usage, e.g. dynamically updating the $\epsilon_{wf}$ value - we leave this for future work.
\begin{figure}[]
	\centering
	\includegraphics[trim={0 0 0 0},clip, width=0.90\columnwidth]{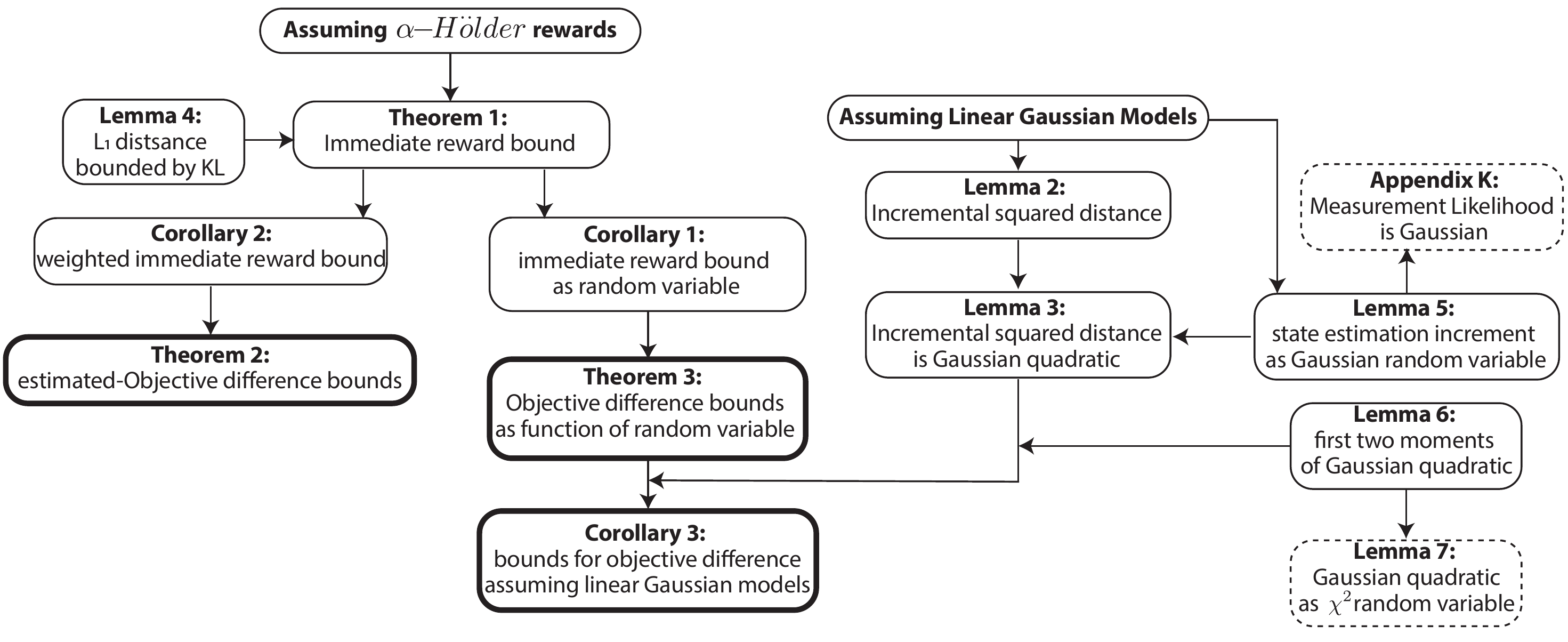}
	\caption{ Illustration of the layout for finding the objective value bounds under \wf. The bolded rectangles denote three variations for bounds over the objective value under \wf. Theorem~\ref{th:JacobBound} provides the objective bound when calculated using samples rather than expectation, whereas Theorem~\ref{th:JacobBoundProb} provides the bound when explicitly solving the expectation (assuming the probability is known). Corollary~\ref{cor:linModelBounds} demonstrate how to explicitly calculate the bound provided in Theorem~\ref{th:JacobBoundProb} assuming linear Gaussian models. The dotted rectangles denote additional non-integral insights, deduced along the way.} 
	\label{fig:wf:proofScheme}
\end{figure}

Under the assumption of general \aH rewards $r(b,u)$, we can get a bound over the difference between two immediate reward functions of the same action and different beliefs, as stated in  Theorem~\ref{th:rewardBound}.

\begin{corollary}[of Theorem~\ref{th:rewardBound}]\label{cor:rvRewardBound}
	Let $r(b,u)$ be \aH continuous with $\lambda_{\alpha}$ and $\alpha \in (0,1]$. Let $b$ and $b'$ denote two future beliefs. Then the bounded difference between $r(b,u)$ and $r(b',u)$ is a random variable.
\end{corollary}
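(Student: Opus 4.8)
The plan is to lean on Theorem~\ref{th:rewardBound} directly. That theorem already supplies the deterministic inequality
\[
\mid r(b,u) - r(b',u)\mid \ \le\ (4\ln 2)^{\alpha/2}\,\lambda_\alpha\,\JD^\alpha(b,b'),
\]
so the entire content of the corollary reduces to the observation that, when $b$ and $b'$ are \emph{future} beliefs, the right-hand side is a genuine random variable, and to the (immediate) remark that $r(b,u)$ and $r(b',u)$ are then themselves random variables.

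First I would make precise what ``future belief'' means probabilistically. A future belief $b[X_{i|k+l}]$ is, by construction (Alg.~\ref{alg:sampling_z} together with the belief-update rule~(\ref{eq:stdBeliefUpdate})), a deterministic function of the sampled future states and measurements $\{\chi_s, z_{s|k+l}\}_{s=k+l+1}^{i}$, which are drawn from the measurement-likelihood densities $\prob{z_{s|k+l}\mid H^-_{s|k+l}}$. Fixing the underlying probability space $(\Omega,\mathcal{F},\mathbb{P})$ on which these samples live, the map $\omega\mapsto b[X_{i|k+l}](\omega)$ is a measurable map into the space $\mathcal{P}$ of beliefs (for Gaussian beliefs $\mathcal{P}$ is parametrized by the mean--covariance pair $(\mu,\Sigma)$ with its Borel $\sigma$-algebra; in general one equips $\mathcal{P}$ with the Borel $\sigma$-algebra of the weak topology). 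Hence each of $b=b[X_{i|k+l}]$ and its lookahead-step counterpart $b'$ (e.g.\ the re-used belief $b[X_{i|k}]$ regarded under the current randomness) is a $\mathcal{P}$-valued random element.

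Next I would establish measurability of the distance. By its definition in Appendix~\ref{app:Distance}, $\JD(b,b') = \sqrt{\tfrac12 \mathbb{D}_{KL}(b\|b') + \tfrac12 \mathbb{D}_{KL}(b'\|b)}$; in the Gaussian case it has the closed form in $(\mu_p,\Sigma_p,\mu_q,\Sigma_q)$ displayed in Section~\ref{ssubsec:closeEnough}, which is continuous on the cone of valid covariances, and in general $\mathbb{D}_{KL}$ is jointly measurable (indeed jointly lower semicontinuous) on $\mathcal{P}\times\mathcal{P}$. Therefore $(b,b')\mapsto\JD(b,b')$ is a measurable map into $[0,\infty)$, so $\omega\mapsto\JD(b(\omega),b'(\omega))$ is a nonnegative real random variable; composing with the continuous map $t\mapsto (4\ln 2)^{\alpha/2}\lambda_\alpha t^{\alpha}$ keeps it a random variable. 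Finally, since $r(\cdot,u)$ is \aH (hence continuous) and $b,b'$ are $\mathcal{P}$-valued random elements, $r(b,u)$, $r(b',u)$ and their difference are real random variables, and by Theorem~\ref{th:rewardBound} this difference is almost surely dominated by the random variable $(4\ln 2)^{\alpha/2}\lambda_\alpha\,\JD^\alpha(b,b')$, which is the claim.

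The only mildly delicate point is setting up the measurable structure cleanly for \emph{general} (non-Gaussian) beliefs --- choosing the $\sigma$-algebra on $\mathcal{P}$ and invoking the correct measurability statement for the KL divergence (lower semicontinuity with respect to weak convergence, hence Borel measurability). In the Gaussian regime that the paper actually exercises, this evaporates: everything reduces to continuity of an explicit formula in $(\mu,\Sigma)$, and the future belief's parameters are measurable functions of the sampled measurements, so there is nothing substantive left to verify.
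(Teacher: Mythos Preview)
Your argument is correct and follows essentially the same route as the paper: invoke Theorem~\ref{th:rewardBound} for the bound, then observe that future beliefs are functions of the (random) future measurements, so $\JD^{\alpha}(b,b')$ is a random variable. The paper's proof is a two-line sketch that omits the measurability considerations you spell out; your added care about the $\sigma$-algebra on belief space and the joint measurability of $\mathbb{D}_{KL}$ is more rigorous but not required by the paper's level of detail.
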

\begin{proof}
	Using Theorem~\ref{th:rewardBound}, the bound is given by 
	\begin{equation}
		\mid r(b,u) - r(b',u) \mid \leq \left(4 \cdot ln 2\right)^{\frac{\alpha}{2}} \cdot \lambda_{\alpha} \cdot  \JD^{\alpha}(b,b'),
	\end{equation}
	The \JD distance is a function of future beliefs $b$ and $b'$.
	The future beliefs $b$ and $b'$ are a function of future measurements. Future measurements are a random variable distributed according to the appropriate measurement likelihood $\prob{z|H^-}$. As a function of random variables, \JD is also a random variable.   
\end{proof}

\begin{corollary}[of Theorem~\ref{th:rewardBound}]\label{cor:weightedBound}
	Let $r(b,u)$ be \aH continuous with $\lambda_{\alpha}$ and $\alpha \in (0,1]$. Let $b$ and $b'$ denote two beliefs. Let $\omega_i$ denote a positive weight, such that $0 \leq \omega_i \leq 1 \ , \ i \in \{1,2\}$. Then the weighted difference between $r(b,u)$ and $r(b',u)$ is given by
	\begin{equation}
		(\omega_1-\omega_2) r(b') - \omega_1 \lambda_{\alpha}\left(4 \cdot ln 2\right)^{\frac{\alpha}{2}} \JD^{\alpha}(b,b') \leq  \omega_1 r(b) - \omega_2r(b') \leq \omega_1 \lambda_{\alpha}\left(4 \cdot ln 2\right)^{\frac{\alpha}{2}} \JD^{\alpha}(b,b')  + (\omega_1-\omega_2) r(b')
 	\end{equation}	
\end{corollary}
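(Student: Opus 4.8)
The plan is to derive this bound directly from Theorem~\ref{th:rewardBound} by elementary algebra, with no further probabilistic input. For brevity write $C \doteq \lambda_{\alpha}\left(4\cdot \ln 2\right)^{\alpha/2}\,\JD^{\alpha}(b,b')$ for the quantity appearing on the right-hand side of Theorem~\ref{th:rewardBound}, so that the theorem supplies the two-sided inequality $-C \le r(b,u)-r(b',u)\le C$. The key observation is the decomposition
\begin{equation*}
\omega_1 r(b,u) - \omega_2 r(b',u) = \omega_1\bigl(r(b,u)-r(b',u)\bigr) + (\omega_1-\omega_2)\,r(b',u),
\end{equation*}
which isolates a term that Theorem~\ref{th:rewardBound} controls from the residual term $(\omega_1-\omega_2)r(b',u)$ that is simply carried along.

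The steps I would carry out are: (i) invoke Theorem~\ref{th:rewardBound} to obtain $\mid r(b,u)-r(b',u)\mid\le C$, equivalently $-C \le r(b,u)-r(b',u) \le C$; (ii) multiply this chain of inequalities by $\omega_1$, using $\omega_1\ge 0$ so that the inequality directions are preserved, giving $-\omega_1 C \le \omega_1\bigl(r(b,u)-r(b',u)\bigr) \le \omega_1 C$; (iii) add $(\omega_1-\omega_2)\,r(b',u)$ to all three members of this chain; and (iv) rewrite the middle member using the decomposition above. This produces
\begin{equation*}
(\omega_1-\omega_2)\,r(b',u) - \omega_1 C \;\le\; \omega_1 r(b,u) - \omega_2 r(b',u) \;\le\; (\omega_1-\omega_2)\,r(b',u) + \omega_1 C,
\end{equation*}
which, after substituting back the definition of $C$, is exactly the asserted statement (modulo the harmless reordering of the two summands in the upper bound as written in the corollary).

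There is essentially no genuine obstacle here: the single point that must be checked is that $\omega_1$ is non-negative, which is guaranteed by the hypothesis $0\le\omega_1\le 1$, so that scaling the two-sided inequality by $\omega_1$ does not reverse it. It is worth remarking that neither the upper bound $\omega_1\le 1$ nor any constraint on $\omega_2$ is actually used in this particular two-sided estimate; these hypotheses are retained only because in the intended application (bounding the multiple-importance-sampling objective estimator) the $\omega_i$ are normalized balance-heuristic weights. The \aH hypothesis on $r$ and the structure of the \JD distance enter the argument solely through Theorem~\ref{th:rewardBound}, which is used here as a black box.
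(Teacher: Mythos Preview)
Your proof is correct and follows essentially the same route as the paper's own argument: invoke Theorem~\ref{th:rewardBound} to get the two-sided bound on $r(b)-r(b')$, scale by the non-negative weight $\omega_1$, and then add $(\omega_1-\omega_2)r(b')$ to all three members (the paper writes this last step as inserting $\pm\omega_2 r(b')$ into the middle term, which is the same decomposition you wrote out explicitly). Your remark that only $\omega_1\ge 0$ is actually used is a valid observation not made in the paper.
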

\begin{proof}
	see Appendix~\ref{app:weightedBound}.
\end{proof}
As the objective function is defined by the sum of expected rewards along the planning horizon, we are now in a position to provide a bound over the difference between two objective values considering the same action sequence yet different beliefs. 
\begin{theorem}\label{th:JacobBound}
	Let r(b,u) be \aH continuous with $\lambda_{\alpha}$ and $\alpha \in (0,1]$. Let $J_{k+l|k+l}$ and $J_{k+l|k}$ be objective values of the same time step $k+l$, calculated based on information up to time $k+l$ and $k$ respectively. Let $L$ be a planning horizon such that $L \geq l+1$. Let $n_i$ be the number of samples used to estimate the expected reward at lookahead step $i$.  Let $\omega_i^j$ be non-negative weights such that $0 \leq \omega_i^j \leq 1$ and $\sum_{j=1}^{n_i} \omega_i^j = 1$. Then the difference $(J_{k+l|k+l} -J_{k+l|k})$ is bounded by
	\begin{equation}
		\sum_{i = k+l+1}^{k+L} \sum_{j = 1}^{n_i} \omega_{i|k+l}^{j} \left[r^j_{i|k} - \mathcal{D}_{i}^{j} \right] - J_{k+l|k} \leq J_{k+l|k+l} - J_{k+l|k} \leq \sum_{i = k+l+1}^{k+L} \sum_{j = 1}^{n_i} \omega_{i|k+l}^{j} \left[ r^j_{i|k} + \mathcal{D}_{i}^{j}\right] - J_{k+l|k}.
	\end{equation}
	where
	\begin{equation}
		\mathcal{D}_{i}^{j} = \lambda_{\alpha}\left(4 \cdot ln 2\right)^{\frac{\alpha}{2}} \JD^{\alpha}(b^j[X_{i|k+l}],b^j[X_{i|k}]).
	\end{equation}
\end{theorem}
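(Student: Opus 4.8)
The plan is to combine the sample-based (balance-heuristic) form of the objective estimator with the pointwise reward bound of Theorem~\ref{th:rewardBound}, applied node by node along the overlapping portion of the horizon.

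First I would write $J_{k+l|k+l}$ in the estimator form of Eq.~(\ref{eq:MIMBH_obj}): a double sum over the overlapping lookahead steps $i\in[k+l+1,k+L]$ (non-empty since $L\ge l+1$) and over the $n_i$ sampled beliefs $b^j[X_{i|k+l}]$ at step $i$, with non-negative weights $\omega^j_{i|k+l}$ satisfying $\sum_{j=1}^{n_i}\omega^j_{i|k+l}=1$. The structural point to record is that, by the re-use construction of Alg.~\ref{alg:updatePrevPlanning}, each sampled belief $b^j[X_{i|k+l}]$ is tied to a uniquely determined counterpart $b^j[X_{i|k}]$ inherited from the planning session at time $k$ along the \emph{same} candidate action sub-sequence; hence $r^j_{i|k}=r_i(b^j[X_{i|k}],u_{i-1})$ and $r_i(b^j[X_{i|k+l}],u_{i-1})$ are evaluated at the same control, so Theorem~\ref{th:rewardBound} applies to the pair $(b^j[X_{i|k+l}],b^j[X_{i|k}])$. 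Note that $J_{k+l|k}$ itself need not be re-expanded: it is subtracted as a constant from every member of the final inequality.

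Next I would invoke Theorem~\ref{th:rewardBound} termwise. For an \aH reward with constants $\{\lambda_{\alpha},\alpha\}$,
\begin{equation}
\bigl|\,r_i(b^j[X_{i|k+l}],u_{i-1})-r^j_{i|k}\,\bigr|\;\le\;\left(4\cdot\ln 2\right)^{\frac{\alpha}{2}}\lambda_{\alpha}\,\JD^{\alpha}\!\bigl(b^j[X_{i|k+l}],b^j[X_{i|k}]\bigr)\;=\;\mathcal{D}_{i}^{j},
\end{equation}
which is equivalent to the two-sided bound $r^j_{i|k}-\mathcal{D}_{i}^{j}\le r_i(b^j[X_{i|k+l}],u_{i-1})\le r^j_{i|k}+\mathcal{D}_{i}^{j}$ for every pair $(i,j)$; for the sub-trees flagged by \wf the two beliefs coincide and this holds trivially (indeed $\mathcal{D}_{i}^{j}$ may be replaced by the true, smaller distance), so no case split is needed. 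Multiplying the two-sided inequality by the non-negative weight $\omega^j_{i|k+l}$ preserves it, and summing over $j=1,\dots,n_i$ and $i=k+l+1,\dots,k+L$ collapses the middle term to $J_{k+l|k+l}$, yielding
\begin{equation}
\sum_{i=k+l+1}^{k+L}\sum_{j=1}^{n_i}\omega^j_{i|k+l}\!\left[r^j_{i|k}-\mathcal{D}_{i}^{j}\right]\;\le\;J_{k+l|k+l}\;\le\;\sum_{i=k+l+1}^{k+L}\sum_{j=1}^{n_i}\omega^j_{i|k+l}\!\left[r^j_{i|k}+\mathcal{D}_{i}^{j}\right].
\end{equation}
Subtracting $J_{k+l|k}$ from all three members gives exactly the claimed bound.

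The termwise algebra and the weighted summation are routine. The two points needing care are both in the setup: (i) justifying that $J_{k+l|k+l}$ admits the MIS estimator form with non-negative, per-step-normalized weights, which is precisely the balance-heuristic construction of Section~\ref{subsec:incExp}; and (ii) pinning down the bijection between the $j$-indexed sampled beliefs of the two planning sessions along a shared action sub-sequence, which is what makes $\mathcal{D}_{i}^{j}$ and the pairing in Theorem~\ref{th:rewardBound} well defined and which rests on Assumptions~\ref{assumption:overlapHorizon}--\ref{assumption:actions}. I expect (ii) --- the sample-correspondence bookkeeping across sessions --- to be the main obstacle to state cleanly; everything downstream of it is mechanical.
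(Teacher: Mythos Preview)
Your proposal is correct and follows essentially the same route as the paper: apply the pointwise reward bound of Theorem~\ref{th:rewardBound} to each paired sample $(b^j[X_{i|k+l}],b^j[X_{i|k}])$, multiply by the non-negative weights $\omega^j_{i|k+l}$, and sum over $i,j$. The only cosmetic difference is that the paper expands both $J_{k+l|k+l}$ and $J_{k+l|k}$ in sample form and routes the termwise bound through Corollary~\ref{cor:weightedBound} (a weighted version of Theorem~\ref{th:rewardBound}), whereas you keep $J_{k+l|k}$ as an un-expanded constant and subtract it at the end---a slight streamlining that avoids Corollary~\ref{cor:weightedBound} altogether.
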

\begin{proof}
	see Appendix~\ref{app:JacobBound}.	
\end{proof}
In order to calculate the bound, we need to sample a set of future measurements, and any specific realization of such a set may result with a different bound value altogether.
In case the probability function of the bound in Corollary~\ref{cor:rvRewardBound} can be obtained, there is no call for using samples as in Theorem~\ref{th:JacobBound}, and the bound over the objective value can be analytically calculated as suggested by Theorem~\ref{th:JacobBoundProb}. In other words, while Theorem~\ref{th:JacobBound} offers a sample based estimation for the objective error bound, Theorem~\ref{th:JacobBoundProb} offers the un-approximated formulation. For $n_i \rightarrow \infty$, both theorems will provide exactly the same bound. 
%\VI{[Better to clarify relation between Thm2 and Thm3, currently a bit vague]}
%
\begin{theorem}\label{th:JacobBoundProb}
	Let r(b,u) be \aH continuous with $\lambda_{\alpha}$ and $\alpha \in (0,1]$. Let $J_{k+l|k+l}$ and $J_{k+l|k}$ be objective values of the same time step $k+l$, calculated based on information up to time $k+l$ and $k$ respectively. Let $L$ be a planning horizon such that $L \geq l+1$. Then the difference $(J_{k+l|k+l} -J_{k+l|k})$ is bounded by
	\begin{equation}
		\phi - \psi \leq J_{k+l|k+l} - J_{k+l|k} \leq \phi + \psi,
	\end{equation}
	where
	\begin{eqnarray}
		\phi &\triangleq& \sum_{i = k+l+1}^{k+L} \Expec_{z \sim p_k} (\omega-1) r_{i|k} 
		\\
		\psi &\triangleq& \lambda_{\alpha}\left(4 \cdot ln 2\right)^{\frac{\alpha}{2}} \left[ (L-l)\epsilon_{wf}^{\alpha} + \sum_{i = k+l+1}^{k+L} \left( \sum_{j=k+l+1}^{i} \Expec_{z \sim p_{k+l}} \Delta_j \right)^{\frac{\alpha}{2}} \right],
	\end{eqnarray}
    and
    \begin{eqnarray}
    	\JD(b[X_{i|k+l}],b[X_{i|k}]) &=& \sqrt{ \JD^2(b[X_{i-1|k+l}],b[X_{i-1|k}]) + \Delta_{i}},
    	\\
    	\epsilon_{wf} &=& \JD(b[X_{k+l|k+l}],b[X_{k+l|k}]),	
    	\\
    		\omega &=& \frac{\prob{z_{k+l+1:k+L}|H_{k+l|k+l},u_{k+l:k+L-1}}}{\prob{z_{k+l+1:k+L}|H_{k+l|k},u_{k+l:k+L-1}}} \triangleq \frac{p_{k+l}}{p_{k}}.
    \end{eqnarray}
\end{theorem}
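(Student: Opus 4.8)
The plan is to derive Theorem~\ref{th:JacobBoundProb} from Theorem~\ref{th:JacobBound} by letting the per-step sample counts grow, $n_i\to\infty$, and then simplifying the resulting expected bound with a change of measure plus two elementary inequalities. Recall that under \wf the estimate of $J_{k+l|k+l}$ re-uses the beliefs of planning time $k$, so the only active importance distribution is $q_1=p_k$ ($M_i=1$ in Eq.~(\ref{eq:MIMBH_obj})), and the balance-heuristic weights $\omega_{i|k+l}^{j}$ reweight samples drawn from $p_k$ into an estimator of an expectation under $p_{k+l}$. By consistency of these estimators one has $\sum_{j}\omega_{i|k+l}^{j}r^{j}_{i|k}\to\Expec_{z\sim p_{k+l}}[r_{i|k}]$, $\sum_{j}\omega_{i|k+l}^{j}\mathcal{D}_{i}^{j}\to\Expec_{z\sim p_{k+l}}[\mathcal{D}_{i}]$, and the nominal planning-$k$ estimator gives $J_{k+l|k}\to\sum_{i=k+l+1}^{k+L}\Expec_{z\sim p_{k}}[r_{i|k}]$. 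Passing to the limit in the two inequalities of Theorem~\ref{th:JacobBound} yields
\[ \left|\,(J_{k+l|k+l}-J_{k+l|k})-\sum_{i=k+l+1}^{k+L}\Bigl(\Expec_{z\sim p_{k+l}}[r_{i|k}]-\Expec_{z\sim p_{k}}[r_{i|k}]\Bigr)\right|\ \le\ \sum_{i=k+l+1}^{k+L}\Expec_{z\sim p_{k+l}}[\mathcal{D}_{i}]. \]

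Next I would identify the centering term with $\phi$. Since the measurement models factorize along the horizon and $r_{i|k}$ depends only on $z_{k+l+1:i}$, a change of measure (with the tail factors of the likelihood ratio integrating to $1$) gives $\Expec_{z\sim p_{k+l}}[r_{i|k}]-\Expec_{z\sim p_{k}}[r_{i|k}]=\Expec_{z\sim p_{k}}[(\omega-1)\,r_{i|k}]$ with $\omega=p_{k+l}/p_{k}$; summing over $i=k+l+1,\dots,k+L$ gives exactly $\phi$.

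Then I would bound $\sum_{i}\Expec_{z\sim p_{k+l}}[\mathcal{D}_{i}]$ by $\psi$. Writing $\mathcal{D}_{i}=\lambda_{\alpha}(4\ln 2)^{\alpha/2}\,\JD^{\alpha}(b[X_{i|k+l}],b[X_{i|k}])$ and unrolling the additive recursion $\JD^{2}(b[X_{i|k+l}],b[X_{i|k}])=\JD^{2}(b[X_{i-1|k+l}],b[X_{i-1|k}])+\Delta_{i}$ down to its base value $\epsilon_{wf}^{2}=\JD^{2}(b[X_{k+l|k+l}],b[X_{k+l|k}])$ gives $\JD^{\alpha}(b[X_{i|k+l}],b[X_{i|k}])=\bigl(\epsilon_{wf}^{2}+\sum_{j=k+l+1}^{i}\Delta_{j}\bigr)^{\alpha/2}$. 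For $\alpha\in(0,1]$ the map $t\mapsto t^{\alpha/2}$ is subadditive, so this is at most $\epsilon_{wf}^{\alpha}+\bigl(\sum_{j=k+l+1}^{i}\Delta_{j}\bigr)^{\alpha/2}$; taking $\Expec_{z\sim p_{k+l}}$ and applying Jensen's inequality (concavity of $t\mapsto t^{\alpha/2}$) to pull the expectation inside yields $\Expec_{z\sim p_{k+l}}[\JD^{\alpha}]\le\epsilon_{wf}^{\alpha}+\bigl(\sum_{j=k+l+1}^{i}\Expec_{z\sim p_{k+l}}\Delta_{j}\bigr)^{\alpha/2}$. Multiplying by $\lambda_{\alpha}(4\ln 2)^{\alpha/2}$ and summing the $L-l$ terms $i=k+l+1,\dots,k+L$ produces precisely $\psi$ — the $\epsilon_{wf}^{\alpha}$ contribution, appearing once per $i$, is what creates the $(L-l)\epsilon_{wf}^{\alpha}$ factor. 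Combining this with the centering identity gives $\phi-\psi\le J_{k+l|k+l}-J_{k+l|k}\le\phi+\psi$, which is the claim.

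I expect the main obstacle to be the limiting step: making rigorous that the balance-heuristic MIS estimators and the nominal planning-$k$ estimator converge to the stated expectations strongly enough to pass to the limit inside the inequalities of Theorem~\ref{th:JacobBound}, and in particular that $\mathcal{D}_{i}^{j}$ — itself a random function of the sampled future measurements — is handled consistently under the same limit (an alternative, avoiding limits entirely, is the direct route: insert the middle term $\Expec_{z\sim p_{k+l}}[r_{i|k}]$, bound $|\Expec_{z\sim p_{k+l}}[r_{i|k+l}-r_{i|k}]|$ by $\Expec_{z\sim p_{k+l}}[\mathcal{D}_{i}]$ via Theorem~\ref{th:rewardBound}, and proceed as above). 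A secondary delicate point is justifying the additive decomposition of the squared \JD distance along the horizon, i.e.\ that $\Delta_{i}$ is well defined as the per-step increment $\mathbb{D}^{2}(b_{1}^{+},b_{2}^{+})-\mathbb{D}^{2}(b_{1},b_{2})$; everything else (change of measure, subadditivity of $t^{\alpha/2}$, Jensen) is routine.
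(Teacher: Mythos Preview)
Your proposal is correct, and both routes you sketch reach the same intermediate inequality
\[
\Bigl|\,(J_{k+l|k+l}-J_{k+l|k})-\sum_{i}\bigl(\Expec_{z\sim p_{k+l}}[r_{i|k}]-\Expec_{z\sim p_{k}}[r_{i|k}]\bigr)\Bigr|\ \le\ \sum_{i}\Expec_{z\sim p_{k+l}}[\mathcal{D}_{i}],
\]
after which the subadditivity/Jensen step is identical to the paper's. The paper, however, does not pass through Theorem~\ref{th:JacobBound} and a limit; it works directly with expectations from the outset. It rewrites $\Expec_{z\sim p_{k+l}}[r_{i|k+l}]=\Expec_{z\sim p_{k}}[\omega\,r_{i|k+l}]$, applies Corollary~\ref{cor:weightedBound} pointwise to $\omega\,r_{i|k+l}-r_{i|k}$, takes the $p_k$-expectation, and then converts $\Expec_{z\sim p_{k}}[\omega\,\mathcal{D}_{i}]$ back to $\Expec_{z\sim p_{k+l}}[\mathcal{D}_{i}]$. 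This is essentially your parenthetical ``alternative'' route (insert the middle term and bound $|r_{i|k+l}-r_{i|k}|$ by $\mathcal{D}_i$), just packaged through Corollary~\ref{cor:weightedBound} rather than Theorem~\ref{th:rewardBound} directly. The trade-off is as you anticipated: the direct route avoids any convergence or integrability justification entirely, while your limiting route makes explicit the paper's informal remark that Theorem~\ref{th:JacobBoundProb} is the $n_i\to\infty$ version of Theorem~\ref{th:JacobBound}, at the cost of having to argue that limit carefully.
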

\begin{proof}
	see Appendix~\ref{app:JacobBoundProb}.	
\end{proof}
We can see that in the bound suggested by Theorem~\ref{th:JacobBoundProb}, even for $\epsilon_{wf}=0$ we are still left with a stochastic expression, depending on the measurement likelihood probabilities, whereas for identical measurement likelihoods, i.e. $\omega =1$, as well as identical beliefs, we expect zero difference between the objective values. 

\subsubsection*{Assuming Linear Gaussian Models}
As discussed earlier $\Delta$ is a function of the future measurements and as such it is a random variable. In the following we explicitly calculate the first moment of $\Delta$ required for calculating the bound of Theorem~\ref{th:JacobBoundProb}, under the simplifying assumption of linear Gaussian models 
	\begin{equation}\label{eq:wf:linMotionModel}
		x' = \mathcal{F}x + \mathcal{J}u + w \quad , \quad  w\sim \mathcal{N}(0,\Sigma_w),
	\end{equation}
	\begin{equation}\label{eq:wf:linMeasurementModel}
		z = \mathcal{H}x + v\quad , \quad  v\sim \mathcal{N}(0,\Sigma_v),
	\end{equation}
	where $\mathcal{F}$ and $\mathcal{J}$ are the motion model jacobian in respect to the state and action appropriately, $\mathcal{H}$ is the measurement jacobian, and $w$ and $v$ are zero mean additive gaussian noises.

%Under the simplifying assumption of linear Gaussian models, we will next show that $\Delta$ is a function of the future measurements and as such it is a random variable, we will conclude with explicitly calculating the first moment of $\Delta$ required for calculating the bound of Theorem~\ref{th:JacobBoundProb}. \VI{[better to present the linear motion and observation models explicitly (as equations)]}
 
\begin{lemma}[Incremental \JD distance]\label{lm:incDist}
	Let $b_1$ and $b_2$ be two Gaussian beliefs  $\mathcal{N}(\mu_1 , \Sigma_1)$ and $\mathcal{N}(\mu_{2} , \Sigma_2)$, respectively with state dimension $d$, and their two (differently) propagated counterparts $b_{1p}$ and $b_{2p}$ with $\mathcal{N}(\mu_{1p} , \Sigma_{1p})$  $\mathcal{N}(\mu_{2p} , \Sigma_{2p})$ and with state dimension $d_{p}$. There exist $\zeta_i $ and $A_i$ such that the propagated mean and covariance are given by
	\begin{equation}\label{eq:appB:def}
		\mu_{ip} = \mu_i + \zeta_i \quad , \quad 
		\Sigma_{ip} = \left( \Sigma_i^{-1} + A_i^T A_i \right)^{-1} \quad , \quad 
		i \in [1,2].
	\end{equation}
	Then the squared \JD distance between the propagated beliefs can be written as 
	\begin{equation}
	\JD^2 \left( b_{1p},b_{2p}\right) = \JD^2 \left( b_1,b_2\right) + \Delta,
\end{equation}
where 
\begin{multline}\label{eq:deltaLemma}
	\Delta = \frac{1}{4}(\mu_2 - \mu_1)^T \left[ A_2^T A_2 + A_1^T A_1 \right](\mu_2 - \mu_1) + 
	\frac{1}{2}(\mu_2 - \mu_1)^T \left[ {\Sigma_{1p}}^{-1} + {\Sigma_{2p}}^{-1} \right](\zeta_2 - \zeta_1) \\
	+ \frac{1}{4}(\zeta_2 - \zeta_1)^T \left[ {\Sigma_{1p}}^{-1} + {\Sigma_{2p}}^{-1} \right](\zeta_2 - \zeta_1)
	+ \frac{1}{4}tr\left(  A_2^T A_2 \Sigma_{1p} - \Sigma_2^{-1} \Sigma_1 A_1^T ( I + A_1 \Sigma_1 A_1^T)^{-1} A_1 \Sigma_1 \right) \\ 
	+ \frac{1}{4}tr\left(  A_1^T A_1 \Sigma_{2p} - \Sigma_1^{-1} \Sigma_2 A_2^T ( I + A_2 \Sigma_2 A_2^T)^{-1} A_2 \Sigma_2 \right) - \frac{1}{2}\left( d_{p} - d\right).
\end{multline}
\end{lemma}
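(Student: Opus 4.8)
The plan is to start from the closed-form Gaussian expression for $\JD$ established in Appendix~\ref{app:Distance}, namely $\JD^2(p,q)=\tfrac14\big[(\mu_p-\mu_q)^T(\Sigma_p^{-1}+\Sigma_q^{-1})(\mu_p-\mu_q)+\mathrm{tr}(\Sigma_q^{-1}\Sigma_p)+\mathrm{tr}(\Sigma_p^{-1}\Sigma_q)-d_p-d_q\big]$, and to expand it term by term. I would write $\JD^2(b_{1p},b_{2p})$ in this form, insert $\mu_{ip}=\mu_i+\zeta_i$ and $\Sigma_{ip}^{-1}=\Sigma_i^{-1}+A_i^TA_i$, and use the elementary identity $\Sigma_{1p}^{-1}+\Sigma_{2p}^{-1}=(\Sigma_1^{-1}+\Sigma_2^{-1})+(A_1^TA_1+A_2^TA_2)$. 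The goal is then to isolate inside the resulting expression the pieces that reassemble $\JD^2(b_1,b_2)$ and to show that the remainder is exactly $\Delta$ of \eqref{eq:deltaLemma}.

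For the quadratic part, write $\delta\mu \equiv \mu_2-\mu_1$ and $\delta\zeta \equiv \zeta_2-\zeta_1$, so the mean difference of the propagated beliefs is $\delta\mu+\delta\zeta$. Expanding $(\delta\mu+\delta\zeta)^T(\Sigma_{1p}^{-1}+\Sigma_{2p}^{-1})(\delta\mu+\delta\zeta)$ and splitting $\Sigma_{1p}^{-1}+\Sigma_{2p}^{-1}$ as above gives, after multiplying by $\tfrac14$, the term $\tfrac14\,\delta\mu^T(\Sigma_1^{-1}+\Sigma_2^{-1})\delta\mu$ — exactly the mean term of $\JD^2(b_1,b_2)$ — plus the first three summands of $\Delta$: $\tfrac14\,\delta\mu^T(A_1^TA_1+A_2^TA_2)\delta\mu$, $\tfrac12\,\delta\mu^T(\Sigma_{1p}^{-1}+\Sigma_{2p}^{-1})\delta\zeta$ (the factor $\tfrac12$ arising because the two cross terms coincide), and $\tfrac14\,\delta\zeta^T(\Sigma_{1p}^{-1}+\Sigma_{2p}^{-1})\delta\zeta$. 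This part is purely algebraic.

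The substantive step is the trace part $\mathrm{tr}(\Sigma_{1p}^{-1}\Sigma_{2p})+\mathrm{tr}(\Sigma_{2p}^{-1}\Sigma_{1p})$. I would split $\mathrm{tr}(\Sigma_{1p}^{-1}\Sigma_{2p})=\mathrm{tr}(\Sigma_1^{-1}\Sigma_{2p})+\mathrm{tr}(A_1^TA_1\Sigma_{2p})$ and then apply the matrix inversion lemma, $\Sigma_{2p}=(\Sigma_2^{-1}+A_2^TA_2)^{-1}=\Sigma_2-\Sigma_2A_2^T(I+A_2\Sigma_2A_2^T)^{-1}A_2\Sigma_2$, inside the first summand. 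This produces $\mathrm{tr}(\Sigma_1^{-1}\Sigma_2)$ — one of the trace terms of $\JD^2(b_1,b_2)$ — together with $-\mathrm{tr}(\Sigma_1^{-1}\Sigma_2A_2^T(I+A_2\Sigma_2A_2^T)^{-1}A_2\Sigma_2)$, which combines with $\mathrm{tr}(A_1^TA_1\Sigma_{2p})$ into the fifth summand of $\Delta$ after the $\tfrac14$ factor. The symmetric treatment of $\mathrm{tr}(\Sigma_{2p}^{-1}\Sigma_{1p})$ yields $\mathrm{tr}(\Sigma_2^{-1}\Sigma_1)$ plus the fourth summand of $\Delta$. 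Finally, $-\tfrac14\cdot 2d_p$ in $\JD^2(b_{1p},b_{2p})$ against $-\tfrac14\cdot 2d$ in $\JD^2(b_1,b_2)$ leaves $-\tfrac12(d_p-d)$, the last summand of $\Delta$; collecting everything gives $\JD^2(b_{1p},b_{2p})=\JD^2(b_1,b_2)+\Delta$.

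I expect the main obstacle to be dimensional bookkeeping rather than the algebra. The pre-propagation belief $b_i$ has dimension $d$ whereas $b_{ip}$ has dimension $d_p>d$ once the new state variables are appended, so the identity $\Sigma_{ip}=(\Sigma_i^{-1}+A_i^TA_i)^{-1}$ must be read with $\Sigma_i^{-1}$ embedded into the enlarged space (zero-padded on the newly introduced block, the required positive definiteness there coming from the motion contribution inside $A_i^TA_i$). One then has to verify that the matrix inversion lemma is applied consistently in this enlarged space, and that the bare traces $\mathrm{tr}(\Sigma_1^{-1}\Sigma_2)$, $\mathrm{tr}(\Sigma_2^{-1}\Sigma_1)$ and the dimension $d$ that reappear are genuinely those of the original $d$-dimensional beliefs — this is precisely what forces the leftover dimension term to be $-\tfrac12(d_p-d)$ and nothing else. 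With that accounting pinned down, the remaining manipulations are standard trace and quadratic-form identities.
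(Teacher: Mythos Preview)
Your proposal is correct and follows essentially the same route as the paper's proof: start from the closed-form Gaussian $\JD^2$, expand the Mahalanobis term via $\mu_{ip}=\mu_i+\zeta_i$ and $\Sigma_{ip}^{-1}=\Sigma_i^{-1}+A_i^TA_i$, and handle the trace terms with the Woodbury identity applied to $\Sigma_{ip}$. Your remark on zero-padding to reconcile the $d$- and $d_p$-dimensional spaces is exactly the bookkeeping the paper alludes to when it speaks of the ``un-zero-padded $\Sigma_i$''.
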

\begin{proof}
	see Appendix~\ref{app:iDistance}.	
\end{proof}

Next we make the observation that Eq.~(\ref{eq:deltaLemma}) is a quadratic form of a Multivariate Gaussian vector,
\begin{lemma}[Incremental \JD distance as Gaussian Quadratic]\label{lm:deltaAsGQ}
		Let $b_1$ and $b_2$ be two Gaussian beliefs  $\mathcal{N}(\mu_1 , \Sigma_1)$ and $\mathcal{N}(\mu_{2} , \Sigma_2)$, respectively with state dimension $d$, and their two propagated counterparts $b_{1p}$ and $b_{2p}$ with $\mathcal{N}(\mu_{1p} , \Sigma_{1p})$  $\mathcal{N}(\mu_{2p} , \Sigma_{2p})$ and with state dimension $d_{p}$. 
		There exist $\zeta_i $ and $A_i$ such that the propagated mean and covariance are given by, 
		\begin{equation}
			\mu_{ip} = \mu_i + \zeta_i \quad , \quad 
			\Sigma_{ip} = \left( \Sigma_i^{-1} + A_i^T A_i \right)^{-1} \quad , \quad 
			i \in [1,2].
		\end{equation}
		Then the incremental \JD distance $\Delta \triangleq \JD^2 \left( b_{1p},b_{2p}\right) - \JD^2 \left( b_1,b_2\right)$ is a quadratic form of a gaussian variable.
\end{lemma}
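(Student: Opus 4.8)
The plan is to start from the closed form of $\Delta$ already obtained in Lemma~\ref{lm:incDist}, namely Eq.~(\ref{eq:deltaLemma}), and to show that under the linear Gaussian models (\ref{eq:wf:linMotionModel})--(\ref{eq:wf:linMeasurementModel}) every term there is either a deterministic constant or a quadratic/linear form in a single Gaussian vector, so that the sum is an inhomogeneous quadratic form of a Gaussian variable. First I would isolate the deterministic part: in a linear Gaussian system the information/covariance matrices never depend on the realized measurement values, so $\Sigma_1,\Sigma_2,\Sigma_{1p},\Sigma_{2p}$ and the factor matrices $A_1,A_2$ of (\ref{eq:appB:def}) are functions only of the fixed Jacobians $\mathcal{F},\mathcal{J},\mathcal{H}$, the noise covariances $\Sigma_w,\Sigma_v$, the data association, and the action sequence. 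Consequently the two trace terms in (\ref{eq:deltaLemma}) and the term $-\tfrac{1}{2}(d_{p}-d)$ are deterministic; I collect them into a single constant.

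Next I would reduce the remaining (stochastic) terms of (\ref{eq:deltaLemma}), which involve only $\mu_2-\mu_1$ and $\zeta_2-\zeta_1$, to a form in $v\triangleq\mu_2-\mu_1$. By (\ref{eq:appB:def}), $\zeta_i=\mu_{ip}-\mu_i$, and for a linear Gaussian motion model the propagated mean is an affine function of $\mu_i$ whose coefficients are fixed by $\mathcal{F},\mathcal{J}$ and the propagating action; hence $\zeta_i=M_i\mu_i+n_i$ with deterministic $M_i,n_i$, so $\zeta_2-\zeta_1$ is affine in $v$ (in the case relevant to \ibsp, where both beliefs are propagated with the same motion, $M_1=M_2$ and the constant cancels, so $\zeta_2-\zeta_1$ is linear in $v$). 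Substituting this into the three surviving terms of (\ref{eq:deltaLemma}), each becomes a sum of a quadratic form in $v$, a linear form in $v$, and a constant; collecting everything,
\begin{equation}
	\Delta = v^{T} Q v + \ell^{T} v + c,
\end{equation}
where $Q$ is symmetric after the usual symmetrisation $Q\leftarrow\tfrac{1}{2}(Q+Q^{T})$, and $\ell,c$ are deterministic.

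It then remains to argue that $v=\mu_2-\mu_1$ is Gaussian. Each belief mean is the MAP (equivalently conditional-mean) estimate of the joint state given its conditioning measurements, and in a linear Gaussian model this estimator is an affine function of those measurements — the standard Kalman smoother / weighted least-squares fact. The future measurements over which we average are themselves jointly Gaussian, being generated through (\ref{eq:wf:linMeasurementModel}) from a Gaussian prior state. Hence $\mu_1$ and $\mu_2$ are affine images of one jointly Gaussian vector, so $v$ is Gaussian, and $\Delta=v^{T}Qv+\ell^{T}v+c$ is an inhomogeneous quadratic form of a Gaussian variable; when $\ell$ lies in the range of $Q$ one may complete the square, $\Delta=\tilde v^{T}Q\tilde v+\mathrm{const}$ with $\tilde v\triangleq v+\tfrac{1}{2}Q^{+}\ell$ still Gaussian, and in general the leftover linear piece lies in $\ker Q$ and is itself an affine (hence Gaussian) term. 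In all cases $\Delta$ therefore has a generalized $\chi^2$ distribution, which is exactly the structure exploited in Corollary~\ref{cor:linModelBounds}.

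I expect the main obstacle to be the bookkeeping around the mixed middle term $\tfrac{1}{2}(\mu_2-\mu_1)^{T}[\Sigma_{1p}^{-1}+\Sigma_{2p}^{-1}](\zeta_2-\zeta_1)$: one has to verify that its dependence on the measurement realizations is carried entirely by $v$, which hinges on $\Sigma_{1p},\Sigma_{2p}$ and the maps $M_i$ being deterministic — the observation made in the first step. A secondary subtlety is the precise sense of ``quadratic form of a Gaussian'', which becomes strictly homogeneous only after completing the square; this is harmless for the moment computations needed downstream, so I would state the lemma in the inhomogeneous form and remark on the homogenisation rather than grinding through the algebra of $Q,\ell,c$.
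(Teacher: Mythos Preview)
Your decomposition has the randomness in the wrong place. In the lemma's setup, $b_1,b_2$ are \emph{given} Gaussian beliefs, so $\mu_1,\mu_2$ (and hence your $v=\mu_2-\mu_1$) are deterministic constants; the stochasticity enters only through the propagation step, which here is not a bare motion predict but a motion-plus-measurement update (note the covariance update $\Sigma_{ip}=(\Sigma_i^{-1}+A_i^TA_i)^{-1}$ in (\ref{eq:appB:def}) is an information \emph{increase}). Consequently $\zeta_i=\mu_{ip}-\mu_i$ depends on the realized future measurement $z$ and is \emph{not} of the form $M_i\mu_i+n_i$ with deterministic $n_i$; your reduction of $\zeta_2-\zeta_1$ to an affine function of $v$ therefore fails, and with it the claim that $\Delta$ is quadratic in $v$.

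The paper's proof takes $\mathcal{S}\triangleq\zeta_2-\zeta_1$ as the Gaussian variable: from (\ref{eq:deltaLemma}) one reads off directly that $\Delta=\mathcal{S}^TC\mathcal{S}+c^T\mathcal{S}+y$ with $C=\tfrac14(\Sigma_{1p}^{-1}+\Sigma_{2p}^{-1})$, $c=\tfrac12(\Sigma_{1p}^{-1}+\Sigma_{2p}^{-1})(\mu_2-\mu_1)$, and $y$ collecting the (now genuinely deterministic) trace, dimension, and $(\mu_2-\mu_1)$-quadratic terms. That $\mathcal{S}$ is Gaussian is the content of Lemma~\ref{lm:zetaDist}, which shows each $\zeta_i$ is an affine function of the Gaussian measurement $z$, so their difference is Gaussian. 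Your intuition about covariances and $A_i$ being deterministic in a linear Gaussian model is correct and is exactly what makes $C,c,y$ constants; what you need to change is which vector carries the randomness.
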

\begin{proof}
	see Appendix~\ref{app:deltaAsGQ}.
\end{proof} 

We are now in position to explicitly calculate the bounds of Theorem~\ref{th:JacobBoundProb} under the assumption of linear Gaussian models,
\begin{corollary}[of Theorem~\ref{th:JacobBoundProb}]\label{cor:linModelBounds}
	Let r(b,u) be \aH continuous with $\lambda_{\alpha}$ and $\alpha \in (0,1]$. Let $J_{k+l|k+l}$ and $J_{k+l|k}$ be objective values of the same time step $k+l$, calculated based on information up to time $k+l$ and $k$ respectively. Let $L$ be a planning horizon such that $L \geq l+1$. Let the motion and measurement models be linear with additive Gaussian noise (\ref{eq:wf:linMotionModel})-(\ref{eq:wf:linMeasurementModel}). Then the bound of $(J_{k+l|k+l} -J_{k+l|k})$ can be explicitly calculated.
	\end{corollary}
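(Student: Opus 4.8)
The plan is to show that both constituents of the bound in Theorem~\ref{th:JacobBoundProb}, the drift term $\phi$ and the radius $\psi$, reduce under the linear Gaussian models (\ref{eq:wf:linMotionModel})--(\ref{eq:wf:linMeasurementModel}) to closed-form expressions. Since $\epsilon_{wf}$ is a user-supplied threshold, the only quantities that are not manifestly explicit are $\Expec_{z \sim p_{k+l}} \Delta_j$ appearing inside $\psi$ and $\Expec_{z \sim p_k}(\omega - 1)\, r_{i|k}$ appearing inside $\phi$. I would handle these two in turn and then simply reassemble $\phi \pm \psi$.

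For $\psi$: first invoke Lemma~\ref{lm:incDist} to write each incremental term $\Delta_j$ explicitly in the form of Eq.~(\ref{eq:deltaLemma}), where the matrices $A_i$ and the vectors $\zeta_i$ are read off directly from the linear models --- $A_i$ assembles the (noise-whitened) measurement Jacobian $\mathcal{H}$ together with the motion Jacobian contribution, and $\zeta_i$ assembles the mean shift $\mathcal{J}u$ together with the innovation-driven mean update. Then by Lemma~\ref{lm:deltaAsGQ}, $\Delta_j$ is a quadratic form $y^\top M y + 2 a^\top y + c$ in the Gaussian vector $y$ built from $(\mu_2-\mu_1)$ and $(\zeta_2-\zeta_1)$; under linear Gaussian models $y$ is an affine image of the future measurement realization, hence $y \sim \mathcal{N}(m,P)$ with $m$ and $P$ obtained by propagating the relevant belief covariances through $\mathcal{F},\mathcal{J},\mathcal{H},\Sigma_w,\Sigma_v$. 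Applying the standard moment identity $\Expec\big[y^\top M y + 2 a^\top y + c\big] = m^\top M m + \operatorname{tr}(MP) + 2 a^\top m + c$ yields $\Expec_{z \sim p_{k+l}}\Delta_j$ in closed form; summing over $j$, raising to the power $\alpha/2$, summing over $i$, and multiplying by $\lambda_\alpha (4\ln 2)^{\alpha/2}$ gives $\psi$ explicitly.

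For $\phi = \sum_i \Expec_{z \sim p_k}(\omega-1)\, r_{i|k}$: under linear Gaussian models both $p_{k+l}$ and $p_{k}$ are multivariate Gaussian densities in $z_{k+l+1:k+L}$ whose means and covariances are explicit functions of the belief covariances and the measurement Jacobians, so $\omega = p_{k+l}/p_{k}$ is an explicit Gaussian-ratio function of $z$. A change of measure gives $\Expec_{z \sim p_k}[\omega\, r_{i|k}] = \Expec_{z \sim p_{k+l}}[r_{i|k}]$ and $\Expec_{z \sim p_k}[\omega]=1$, so $\phi = \sum_i \big(\Expec_{z \sim p_{k+l}} r_{i|k} - \Expec_{z \sim p_{k}} r_{i|k}\big)$. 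For the information-theoretic rewards that are standard in active SLAM --- functions of the posterior covariance only --- $r_{i|k}$ is deterministic under linear Gaussian models (the covariance recursion does not see the measurement values), so both expectations collapse and $\phi = 0$; more generally $\phi$ is a difference of two Gaussian integrals of the reward, which is closed-form whenever the reward is polynomial or log-determinant in the belief parameters. Combining the two parts, the bound $\phi - \psi \le J_{k+l|k+l} - J_{k+l|k} \le \phi + \psi$ is explicitly computable, which is the claim.

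The main obstacle is the bookkeeping that actually makes Lemmas~\ref{lm:incDist}--\ref{lm:deltaAsGQ} applicable here: the two beliefs $b[X_{i|k+l}]$ and $b[X_{i|k}]$ live on joint state spaces of \emph{different} dimension, because new landmarks enter along the horizon, so before the quadratic-form identity can be used one must first align them by the same augmentation/marginalization that produces the $\tfrac{1}{2}(d_p-d)$ terms in Eq.~(\ref{eq:deltaLemma}), and then correctly identify the mean $m$ and covariance $P$ of the Gaussian vector $y$ under the importance measure $p_{k+l}$ rather than under the nominal propagated belief. A secondary subtlety is that ``explicitly calculated'' for $\phi$ genuinely hinges on the reward being integrable in closed form against a Gaussian; for a generic \aH reward this is not automatic, so the corollary is best read as covering the standard covariance-based (or otherwise Gaussian-integrable) reward families, and I would state that restriction explicitly in the proof.
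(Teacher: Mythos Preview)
Your route for $\psi$ is the paper's route: cast each $\Delta_j$ as a quadratic form in a Gaussian via Lemma~\ref{lm:deltaAsGQ}, then take the first moment by the standard identity. The paper packages that identity as a separate Lemma~\ref{lm:momentsGQ} (Mathai's Theorem~3.2b.3), giving $\Expec\Delta_j = \operatorname{tr}(\Sigma_{\mathcal S_j}^{1/2} C_j \Sigma_{\mathcal S_j}^{1/2}) + \mu_{\mathcal S_j}^\top C_j \mu_{\mathcal S_j} + c_j^\top \mu_{\mathcal S_j} + y_j$, which is exactly your moment formula. One small correction: in Lemma~\ref{lm:deltaAsGQ} the Gaussian vector is $\mathcal S = \zeta_2-\zeta_1$ \emph{alone}; the difference $(\mu_2-\mu_1)$ enters only through the deterministic coefficients $c_j$ and $y_j$, not as part of the random argument. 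Your phrasing ``$y$ built from $(\mu_2-\mu_1)$ and $(\zeta_2-\zeta_1)$'' blurs that split, though it does not break the computation.

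On $\phi$ you diverge from the paper, and in fact do more work than it does. The paper dispatches $\phi$ in a single sentence: the rewards $r_{i|k}$ are ``given from the previously calculated planning session, and just need to be re-weighted using $(\omega-1)$''. In other words, it treats $\phi$ as already available (sample-based, from planning at time $k$) rather than as something to integrate analytically. Your change-of-measure step $\Expec_{p_k}[\omega\, r_{i|k}]=\Expec_{p_{k+l}}[r_{i|k}]$ and the observation that covariance-only rewards make $\phi=0$ under linear Gaussian models are correct and sharper than what the paper offers; your caveat that a generic \aH reward need not be Gaussian-integrable is a fair reading of what ``explicitly calculated'' can honestly mean, and the paper simply does not confront it. The dimension-alignment concern you flag is real but is already absorbed into the $\tfrac12(d_p-d)$ term of Lemma~\ref{lm:incDist}; neither you nor the paper need say more.
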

\begin{proof}
	see Appendix~\ref{app:linModelJacobBaound}.
\end{proof}

\subsection{Incremental \mlbsp}\label{subsec:iML}
Seeing that our novel approach changes the solution paradigm for the original, un-approximated, problem (\fullbsp), we claim it could be utilized to also reduce computation time of existing approximations of \fullbsp.
To support our claim, in this section we present the implementation of \ibsp principles over a commonly used approximation, \mlbsp, and denote it as \imlbsp. 

Under the ML assumption we consider just the most likely measurements, rather than sampling multiple measurements; hence, there is a single importance sampling distribution at each planning step i.e. $M_i = 1$ $\forall i$, because a single measurement is considered for each action at each time step. Considering the aforementioned, for the case of \imlbsp, Eq.~(\ref{eq:MIMBH_obj}) is reduced to  
\begin{equation}\label{eq:MIMBH_obj_iML}
	J^{iML}(u') \approx 
	\sum_{i=k+l+1}^{k+l+L} \left[ w_{i} \cdot r_{i}\left( b[X_{i|k+l}],u'_{i-1|k+l} \right) \right],
\end{equation}
and Eq.~(\ref{eq:MIM_w}), representing the weight at time $i$ is reduced to 
\begin{equation}\label{eq:MIM_w_iML}
	w_i = \frac{\prob{z_{k+l+1:i}|H_{k+l|k+l},u_{k+l:i-1|k+l} }}{q(z_{k+l+1:i})},
\end{equation}
where $q(.)$ is the importance sampling distribution related to the ML measurement set $z_{k+l+1:i}$, and $\prob{z|H,u}$ is the nominal distribution from which the ML measurement set $z_{k+l+1:i}$ should have been taken from. 
When we take the ML measurements from the nominal distribution, i.e. as in \mlbsp, $\omega_i=1 \ , \forall i$, and Eq.~(\ref{eq:MIMBH_obj_iML}) is identical to Eq.~(\ref{eq:objectiveML}). 
More specifically without loosing generality, when considering the planning tree from planning at time $k$ as candidate for re-use, the possible values of the importance sampling distribution are  
  \begin{equation}\label{eq:distanceCases_iML}
  	q(z_{k+l+1:i}) = \begin{cases}
  		\prob{z_{k+l+1:i}|H_{k+l|k},u_{k+l:i-1|k}} &  \mathbb{D}(b^-[X_{i|k}],b^-[X_{i|k+l}]) \ \leq \ \epsilon_{c} \\
  		\prob{z_{k+l+1:i}|H_{k+l|k+l},u_{k+l:i-1|k+l}} & \epsilon_{c} \ < \ \mathbb{D}(b^-[X_{i|k}],b^-[X_{i|k+l}])\\
  	\end{cases},
  \end{equation}
were $b[X_{i|k}]$ is the belief from planning at time $k$ considered for re-use, $\prob{z_{k+l+1:i}|H_{k+l|k},u_{k+l:i-1|k}}$ is the probability used to sample future measurements considered in $b[X_{i|k}]$, $\prob{z_{k+l+1:i}|H_{k+l|k+l},u_{k+l:i-1|k+l}}$ is the nominal probability used to sample future measurements considered in $b[X_{i|k+l}]$, $\epsilon_c$ is the re-use threshold (see Section~\ref{ssubsec:closeEnough}), and $\mathbb{D}(.)$ denote some belief distance.

\emph{Remark:} By considering an \imlbsp session as a single rollout, it can be extended to rollout-based planners with a belief dependent reward in a straightforward manner; we leave this for future work.

\vspace{-5pt}
\section{Results}
\label{sec:results}
In order to examine the effect of calculation re-use under the \ibsp paradigm, we compare the runtime of \ibsp and \fullbsp using Active full SLAM as a test-bed under Model Predictive Control (MPC) framework. 
To better understand the differences between \fullbsp and \ibsp, let us consider them inside a plan-act-infer system. Figure~\ref{fig:HighLevel_Algo:std} illustrates the high level algorithm for plan-act-infer using \fullbsp, marking the section of the algorithm which is being timed for comparison. Figure~\ref{fig:HighLevel_Algo:iBSP} illustrates the high level algorithm for plan-act-infer using \ibsp, marking the section of the algorithm which is timed for comparison. As can easily be seen in Figure~\ref{fig:HighLevel_Algo} all differences between \fullbsp and \ibsp are confined within the planning block, hence the computation time of the planning process is adequate for fair comparison.
It is important to mention that no offline calculations whatsoever, are involved in any of the comparisons. For simplicity all results consider known motion and observation models with zero mean Gaussian noise as well as motion primitives. Both \fullbsp and \ibsp consider a planning horizon of 3 steps, 3 candidate actions (forward, left and right), with all the possible permutations between them - hence 27 candidate action sequences, $n_x = 5$ $n_z = 1$, 6 DOF robot pose, 3 DOF landmarks and a joint state comprised of both robot poses and landmarks.

In the following we provide a statistical comparison between \mlbsp and \fullbsp (Section~\ref{subsec:results:ML_Ex}); statistical comparison between \fullbsp and \ibsp (Section~\ref{subsec:results:pre_ibsp}) under a simplifying assumption that all previously sampled measurements can be re-used in current planning time; a statistical comparison between \fullbsp and \ibsp with selective re-sampling (Section~\ref{subsec:results:ibsp}); statistical analysis of the \wf addition to \ibsp (Section~\ref{subsec:results:wf}); a statistical comparison between \imlbsp and \mlbsp (Section~\ref{subsec:results:iml}); and real-world experiments comparing \imlbsp and \mlbsp (Section~\ref{subsec:results:real_iml}).

%In this section we provide a statistical comparison between \fullbsp and \ibsp, using Active full SLAM as a test-bed under Model Predictive Control (MPC) framework. For simplicity we consider the following: (a) Motion and observation models with additive zero-mean Gaussian noise. (b) motion primitives.
%We also provide a statistical comparison between \mlbsp and \imlbsp, using the same framework and under the same assumptions. It is important to mention that no offline calculations whatsoever, are involved in any of the comparisons.
%Code implemented in MATALB, and executed on a MacBookPro 2017, with 2.9GHz Intel Core i7 processor and 16GB of RAM. 
%In the sequel we present a statistical comparison between \fullbsp and \ibsp, followed by a comparison between \mlbsp and \imlbsp. \EF{NEED 2 UPDATE} \EF{mention Figure~\ref{fig:HighLevel_Algo}}

%%
%\vspace{-20pt}
\begin{figure}[h!]
	\centering
        \subfloat[]{\includegraphics[trim={0 0 275 10},clip, width=0.25\columnwidth]{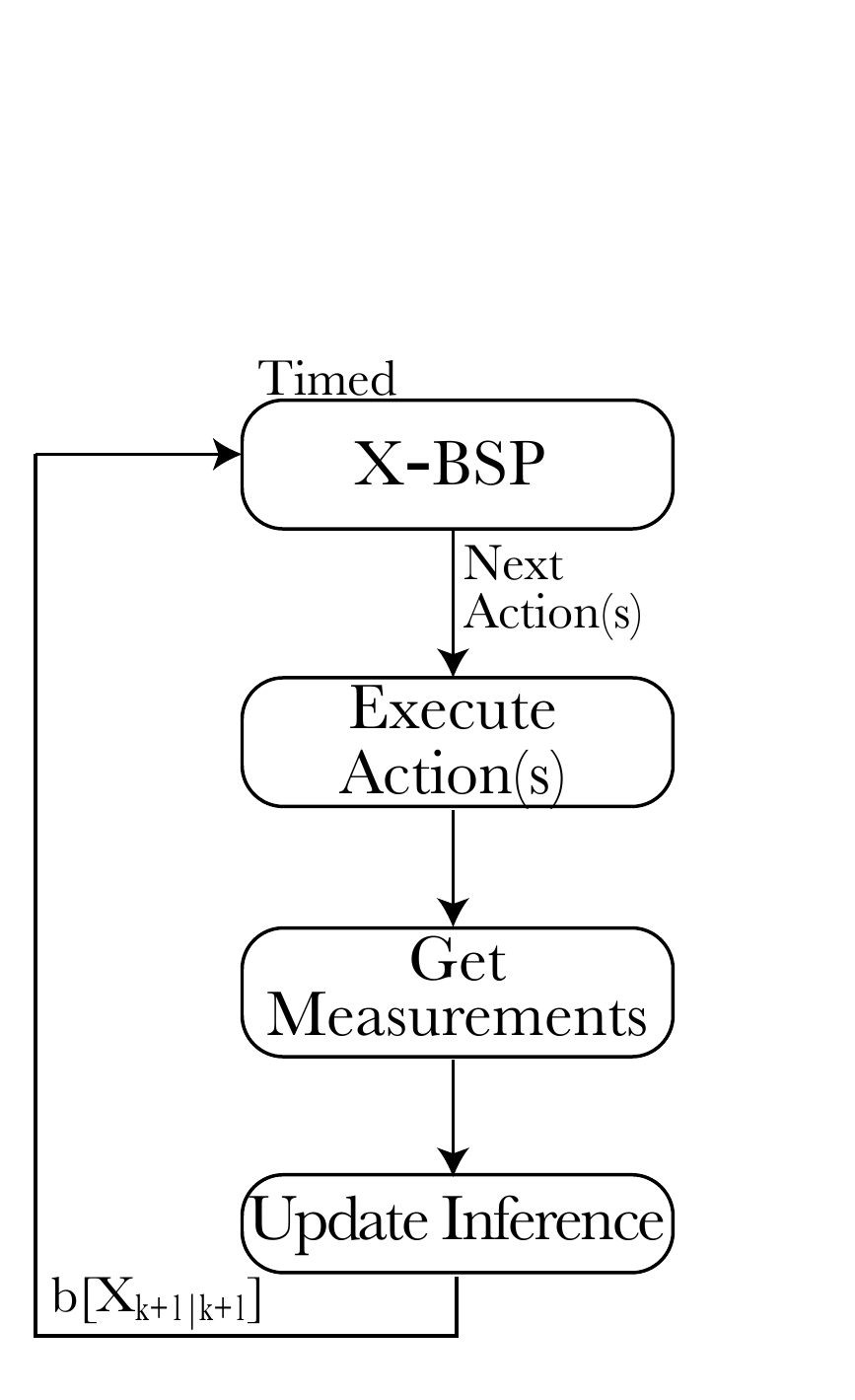}\label{fig:HighLevel_Algo:std}}
        \subfloat[]{\includegraphics[trim={250 0 0 8},clip, width=0.25\columnwidth]{Figures/rest/psudoCode.pdf}\label{fig:HighLevel_Algo:iBSP}}
		\vspace{-5pt}
        \caption{High level algorithm for \ibsp presented in a block diagram: (a)  presents a standard plan-act-infer framework with Bayesian inference and \fullbsp; (b) presents our novel approach for incremental expectation BSP - \ibsp. Instead of calculating planning from scratch we propose to utilize the precursory planning session.}
        \label{fig:HighLevel_Algo}
\end{figure}

\subsection{The ML Assumption}\label{subsec:results:ML_Ex}
In this section we provide a glimpse behind the curtains of \fullbsp and \mlbsp. We show the results of a single planning session, for which expectation and ML produced different optimal actions. Code implemented in MATALB using iSAM2 efficient methodologies, and executed on a MacBookPro 2017, with 2.9GHz Intel Core i7 processor and 16GB of RAM.
Consider a robot with initial estimated location and covariance, given two candidate actions. The world consists out of two types of landmarks, the first with high covariance and the second with low. 
Figures~\ref{fig:MLvsXbsp:Left}-\ref{fig:MLvsXbsp:Forward} present the spatial cost values which are the result of choosing "left" or "forward" actions accordingly, and where warm colors denote higher cost values. Each pixel in Figures~\ref{fig:MLvsXbsp:Left}-\ref{fig:MLvsXbsp:Forward} denotes a possible ground truth location of the robot, where the colored area represents the $1\sigma$ range of the prior covariance and the most likely state is denoted by a black square. While ML considers only the cost value resulting from the most likely state, expectation considers multiple samples from different spatial locations. As a result expectation favored the "left" action while ML favored "forward". 
For $20k$ inference rollouts, each with a different ground truth location, choosing left is statistically favorable in the sense of minimizing cost (uncertainty), $74\%$ of the times.

 \begin{figure*}[!h]
 	\centering
 	\subfloat[]{\includegraphics[trim={0 0 0 0},clip, width=0.3\columnwidth]{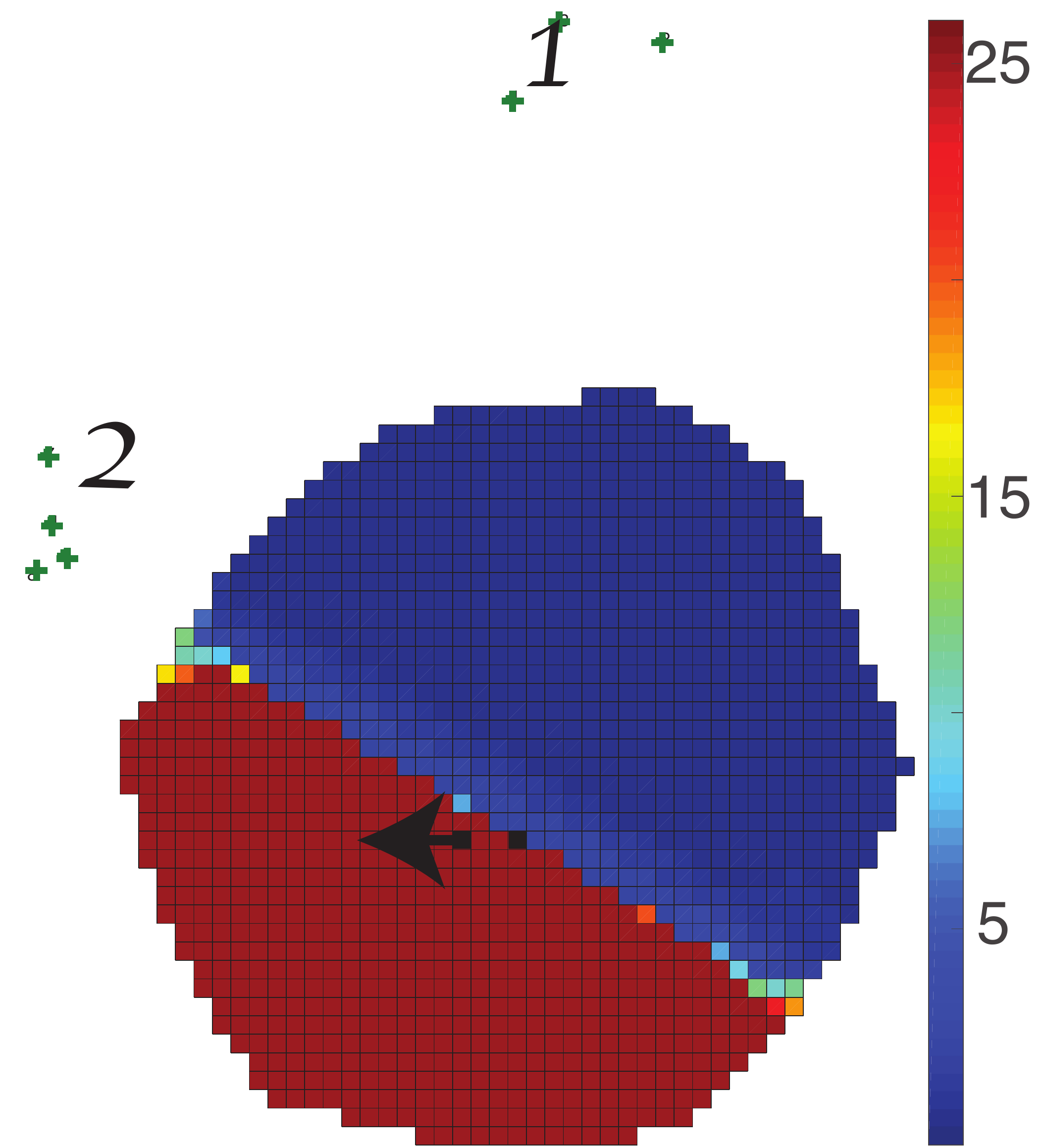}\label{fig:MLvsXbsp:Left}}
 	\subfloat[]{\includegraphics[trim={0 0 0 0},clip, width=0.27\columnwidth]{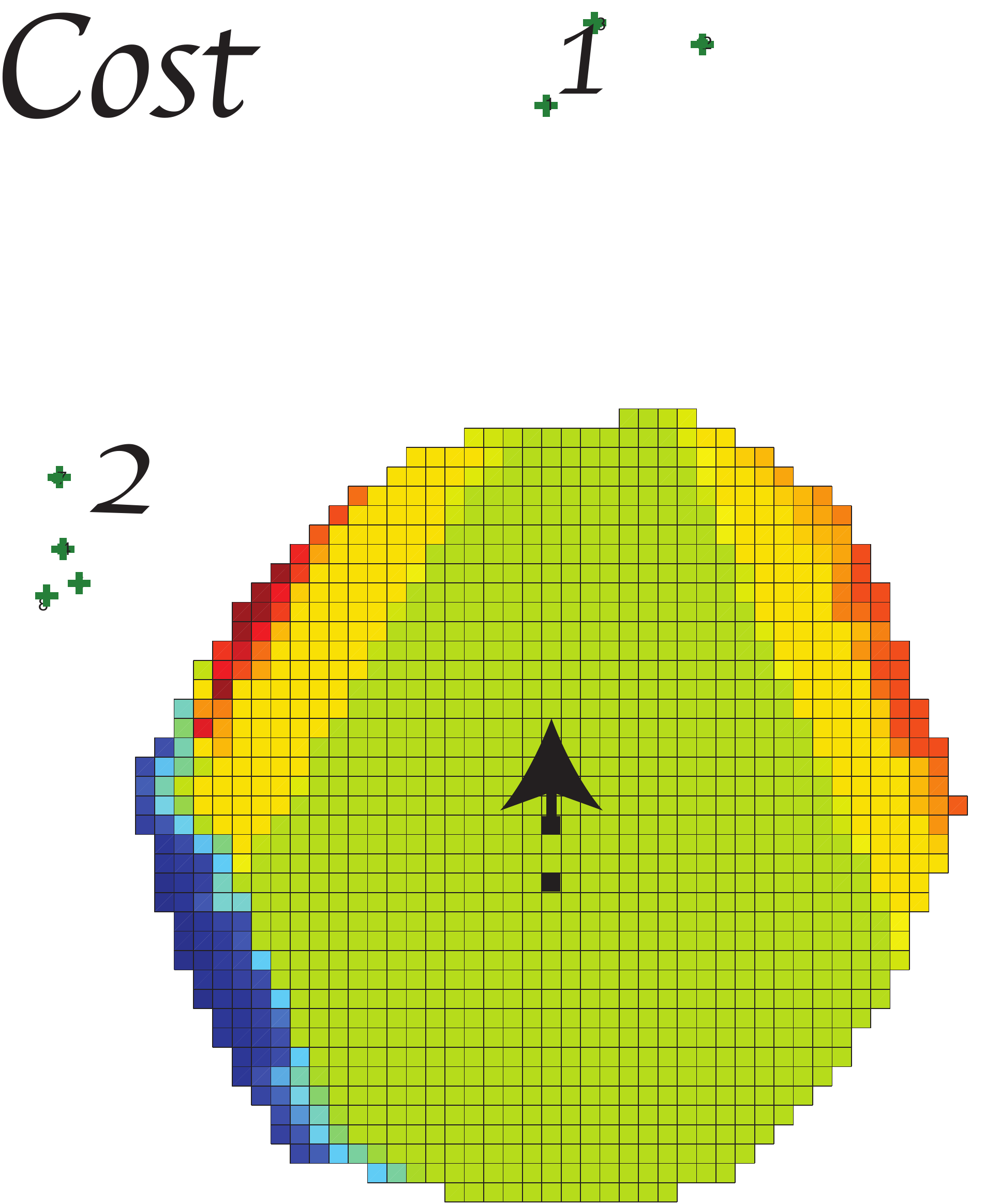}\label{fig:MLvsXbsp:Forward}}
 	\vspace{-10pt}
 	\caption{Spatial sensitivity to the ground truth location in respect to the objective value when considering "left" and "forward" actions accordingly. While \fullbsp considers the weighted average of different possible measurements, denoted by colored area, \mlbsp considers only the most likely measurement, resulted from the black square.}
 	\label{fig:MLvsXbsp}
 \end{figure*}
%\EF{Shortly discuss the reason for the difference between the two, and in what config there will be no difference (multiple actions vs few) and which will be computationally favorable then (future work)}
Under continuous state space, the possibility of the robot location to match exactly the ML location is zero, and as can be seen from the cost values in Figure~\ref{fig:MLvsXbsp:Left} a small shift in robot location could have drastic consequences over the cost value. 
Due to the fact that \fullbsp takes different possible spatial locations into consideration, it provides a weighted estimate of the cost value that might be obtained, while \mlbsp consider a specific instance involving the ML location.
After understanding the advantage \fullbsp holds over \mlbsp one can ponder wether there might be some configuration allowing \mlbsp to match or at least get closer to the estimation performance of \fullbsp. For example under the scenario presented above, adding more candidate actions to \mlbsp should improve the robustness of the estimated cost value. Of course this would result with a heavier computational load and some work is needed in order to determine how much of \mlbsp computational advantage is required to be sacrificed in favor of accuracy, we leave this for future work.
  
\subsection{Re-Using the Precursory Planning Session}\label{subsec:results:pre_ibsp}
%\# what do we test here, what is the scenario
%\EF{explain the config of \ibsp - assumes all measurements are representative and re-uses all (no re-sample), use divergence $\mathbb{D}_{DA}$, no \wf, shortly explain the config of \mlbsp and \fullbsp}.

In this section we examine \ibsp with the $\mathbb{D}_{DA}$ divergence, without \wf and under the simplifying assumption that previously sampled measurements always constitute an adequate representation of the measurement likelihood, i.e. once the closest belief is selected using $\mathbb{D}_{DA}$, all associated previously sampled measurements are re-used (see parameters in Table~\ref{table:param_pre_ibsp}). 
\begin{table}%{R}{0.5\textwidth}
	\caption{Parameters for Section~\ref{subsec:results:pre_ibsp} following Alg.~\ref{alg:ibsp}} % title of Table
\begin{center} \label{table:param_pre_ibsp}
  \begin{tabular}{ l | c }
    \hline \hline
    Prior belief  standard deviation & $\begin{bmatrix}
    	1^o \cdot I_{3 x 3} & 0 \\
    	0 & 5_{[m]}\cdot I_{3 x 3}
    \end{bmatrix}$ \\ \hline
    Motion Model standard deviation & $\begin{bmatrix}
    	0.5^o \cdot I_{3 x 3} & 0 \\
    	0 & 0.5_{[m]}\cdot I_{3 x 3}
    \end{bmatrix}$  \\ \hline
    Observation Model standard deviation & $\begin{bmatrix}
    	3_{[px]}  & 0 \\
    	0 & 3_{[px]}
    \end{bmatrix}$  \\ \hline
    Camera Aperture & $90^o$  \\ \hline
    Camera acceptable Sensing Range & between $2_{[m]}$ and $40_{[m]}$ \\ \hline
    \texttt{useWF} & \texttt{false} \\ \hline
    $\epsilon_c$ & 250 \\ \hline
    $\beta_{\sigma}$ & $\infty$ \\ \hline
    $n_u$ & 3 \\ \hline
    $n_x$ & 5 \\ \hline
    $n_z$ & 1 \\ \hline
    action primitives & left, right and forward with $1_{[m]}$ translation and  $\pm 90^o$ rotations\\ \hline
    $\mathbb{D}$ & $\mathbb{D}_{DA}$ \\ \hline
    \hline
  \end{tabular}
\end{center}
\end{table}

Moreover we continue the point made in Section~\ref{subsec:results:ML_Ex} and run \mlbsp alongside for comparison. 

To that end we compare \mlbsp, \fullbsp and \ibsp in the sense of planning-session computation time and the posterior estimation error upon reaching the goal.
For comparison we perform 100 rollouts (entire mission run), each with a different sampled ground-truth for the prior state. For each rollout, we time the planning sessions of all three methods. Code implemented in MATALB using iSAM2 efficient methodologies, and executed on a MacBookPro 2017, with 2.9GHz Intel Core i7 processor and 16GB of RAM. 
\begin{figure}[!h]
 	\centering
 	\subfloat[]{\includegraphics[trim={0 10 0 0},clip, width=0.33\columnwidth]{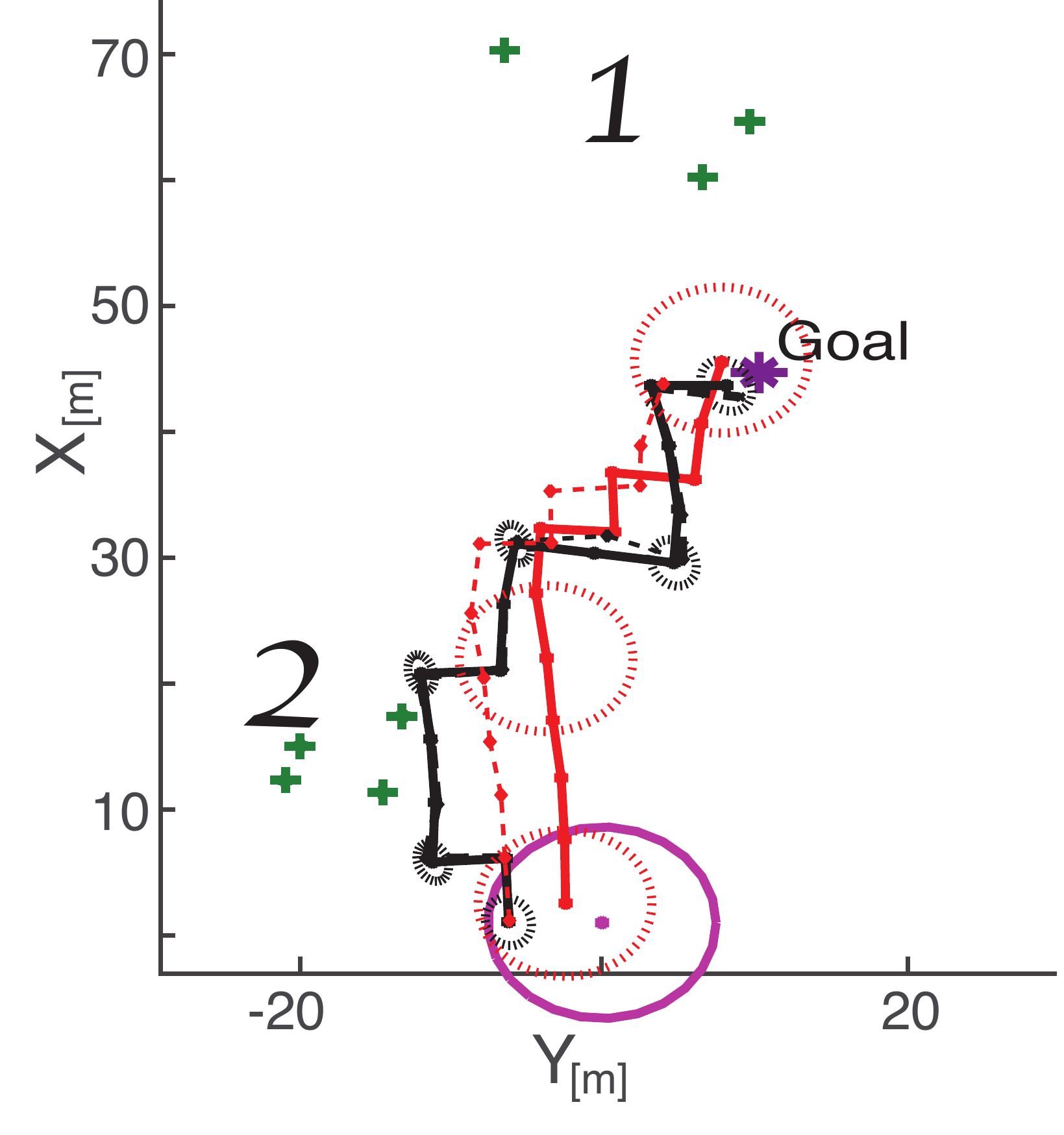}\label{fig:first:Map}}
 	\subfloat[]{\includegraphics[trim={0 -25 0 0},clip, width=0.33\columnwidth]{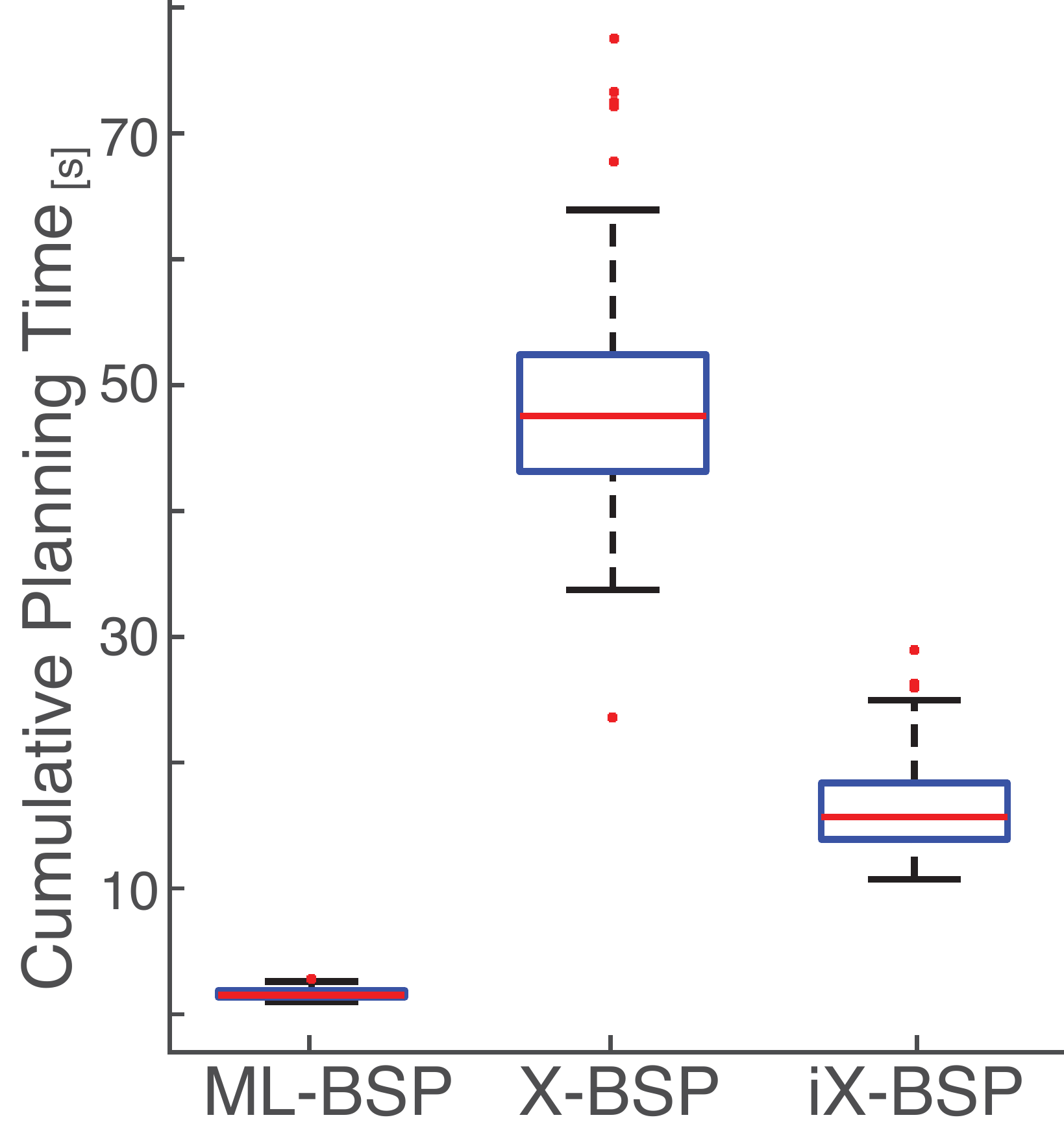}\label{fig:first:TotTime}}
 	\subfloat[]{\includegraphics[trim={0 -25 0 0},clip, width=0.33\columnwidth]{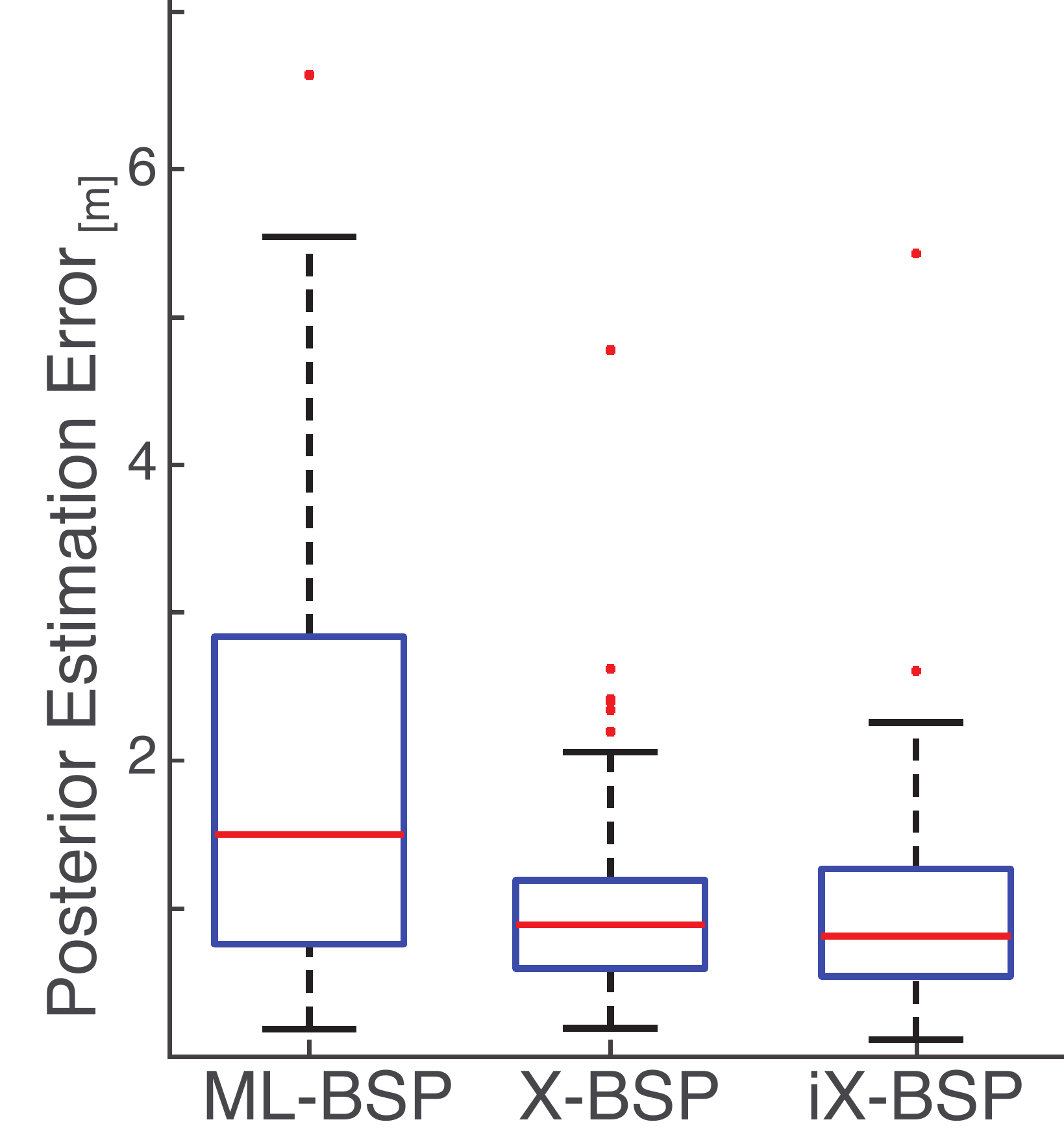}\label{fig:first:gtErr}} \vspace{-10pt}
 	\caption{(a) Testing scenario, landmarks denoted by green "+", prior state and uncertainty in solid purple, \mlbsp denoted by red, \fullbsp and \ibsp denoted by black. (b) and (c) are Box plots of $100$ rollouts for planning session timing (b) and posterior estimation error (c) upon reaching the goal.}
 	\label{fig:first}
 \end{figure}
Figure~\ref{fig:first:Map} presents the scenario on which all rollouts were performed. Considering the same world and same landmarks as in Section~\ref{subsec:results:ML_Ex}. A robot equipped with a stereo camera, is required to reach the goal whilst not crossing a covariance threshold, i.e. cost consisting of distance to goal and a covariance penalty above a certain value. Figure~\ref{fig:first:Map} shows one of the $100$ rollouts that were calculated, in which the estimated trajectory by each method is denoted by a solid line, the ground truth by a dashed line and the posterior covariance by a dashed ellipse.
In Figure~\ref{fig:first:Map} both \fullbsp and \ibsp, in black, chose the same optimal actions along the mission, while \mlbsp, in red, chose differently. We can also see the effect of this difference over each method's covariance, \fullbsp and \ibsp action choice led to a smaller covariance along the entire path.

Figure~\ref{fig:first:TotTime} presents the statistical data of the planning session running time. Since in this example we follow an MPC framework, the last step of each horizon is required to be calculated from scratch. Since doing so is identical to the course of action in \fullbsp, we present the computation time of the entire horizon, excluding the last horizon step.
As expected, for average timing as well as for each separate rollout, both \mlbsp and \ibsp timings are lower than that of \fullbsp. By re-using previous planning session, instead of calculating it from scratch we save valuable computation time, theoretically without effecting the planning solution. We examine the effect on the planning solution in Figure~\ref{fig:first:gtErr}, by comparing the posterior estimation error upon reaching the goal. As expected, the statistical results of $100$ rollouts presented in Figure~\ref{fig:first:gtErr}, shows that \fullbsp is statistically superior to \mlbsp: in $63\%$ of the rollouts it has a smaller estimation error while in $10\%$ they are equal. Importantly, we can also see that \ibsp is statistically similar to \fullbsp, with $41\%$ of the rollouts with smaller estimation error and $15\%$ equal. We note that relaxing the simplifying assumption that all samples are adequately representative, would result with an even better match between \fullbsp and \ibsp.   

\subsection{\ibsp}\label{subsec:results:ibsp}
%\EF{relax the simplifying assumption, use $\mathbb{D}_{PQ}$ metric, no \wf}
In this section we examine \ibsp with the \JD distance, without \wf and without the simplifying assumptions used in Section~\ref{subsec:results:pre_ibsp} (see parameters in Table~\ref{table:param_ibsp}).
\begin{table}%{R}{0.5\textwidth}
	\caption{Parameters for Section~\ref{subsec:results:ibsp} following Alg.~\ref{alg:ibsp}} % title of Table
\begin{center} \label{table:param_ibsp}
  \begin{tabular}{ l | c }
    \hline \hline
    Prior belief  standard deviation & $\begin{bmatrix}
    	1^o \cdot I_{3 x 3} & 0 \\
    	0 & 5_{[m]}\cdot I_{3 x 3}
    \end{bmatrix}$ \\ \hline
    Motion Model standard deviation & $\begin{bmatrix}
    	0.5^o \cdot I_{3 x 3} & 0 \\
    	0 & 0.5_{[m]}\cdot I_{3 x 3}
    \end{bmatrix}$  \\ \hline
    Observation Model standard deviation & $\begin{bmatrix}
    	3_{[px]}  & 0 \\
    	0 & 3_{[px]}
    \end{bmatrix}$  \\ \hline
    Camera Aperture & $90^o$  \\ \hline
    Camera acceptable Sensing Range & between $2_{[m]}$ and $40_{[m]}$ \\ \hline
    \texttt{useWF} & \texttt{false} \\ \hline
    $\epsilon_c$ & 250 \\ \hline
    $\beta_{\sigma}$ & 1.5 \\ \hline
    $n_u$ & 3 \\ \hline
    $n_x$ & 5 \\ \hline
    $n_z$ & 1 \\ \hline
    action primitives & left, right and forward with $1_{[m]}$ translation and  $\pm 90^o$ rotations\\ \hline
    $\mathbb{D}$ & \JD \\ \hline
    \hline
  \end{tabular}
\end{center}
\end{table}

We compare \ibsp and \fullbsp in the sense of planning-session computation time, the posterior estimation error upon reaching the goal, and the covariance norm upon reaching the goal. %Unlike Section~\ref{subsec:results:pre_ibsp}, every objective value is estimated using multiple different distributions.
Code implemented in MATLAB using iSAM2 efficient methodologies and executed on a Linux machine, with Xeon E3-1241v3 3.5GHz processor with 64GB of memory.
For comparison we perform 20 rollouts (entire mission run), each with a different sampled ground-truth for the prior state, on 10 different, randomly generated, maps presented in Figure~\ref{fig:ibsp:maps}. Each map, contains two goals and between 2 to 150 landmarks. The goals and landmarks location as well as the number of landmarks are all randomly generated for each map. For each of the 200 rollouts, we clock the planning session computation time of both methods for comparison.
Across all randomized maps, the robot, equipped with a stereo camera, has the same mission -  reaching each one of the goals whilst maximizing information gain and minimizing distance to goal using the reward function 
\begin{equation}\label{eq:simulationReward}
	r_{i}() = \alpha \cdot \frac{1}{2}ln \left[ (2\pi e)^n \cdot det(\Lambda_{i})\right] + (1- \alpha) \cdot (D2G_{i-1} - D2G_{i}),
\end{equation}
where $\alpha \in [0,1]$ is a weighting parameter, $n$ represents the dimension of the robot's pose, $\Lambda_{i}$ represents the focused information matrix at time $i$, $D2G_i$ represents distance-to-goal at time $i$. % Both \fullbsp and \ibsp consider a planning horizon of 3 steps, 3 candidate actions (forward, left and right), with all the possible permutations between them - hence 27 candidate action sequences, $n_x = 5$ $n_z = 1$, 6 DOF robot pose, 3 DOF landmarks and a joint state comprised of both robot pose and landmarks.
\begin{figure}[h]
	\centering
		\subfloat[]{\includegraphics[trim={0 0 0 0},clip, width=0.95\columnwidth]{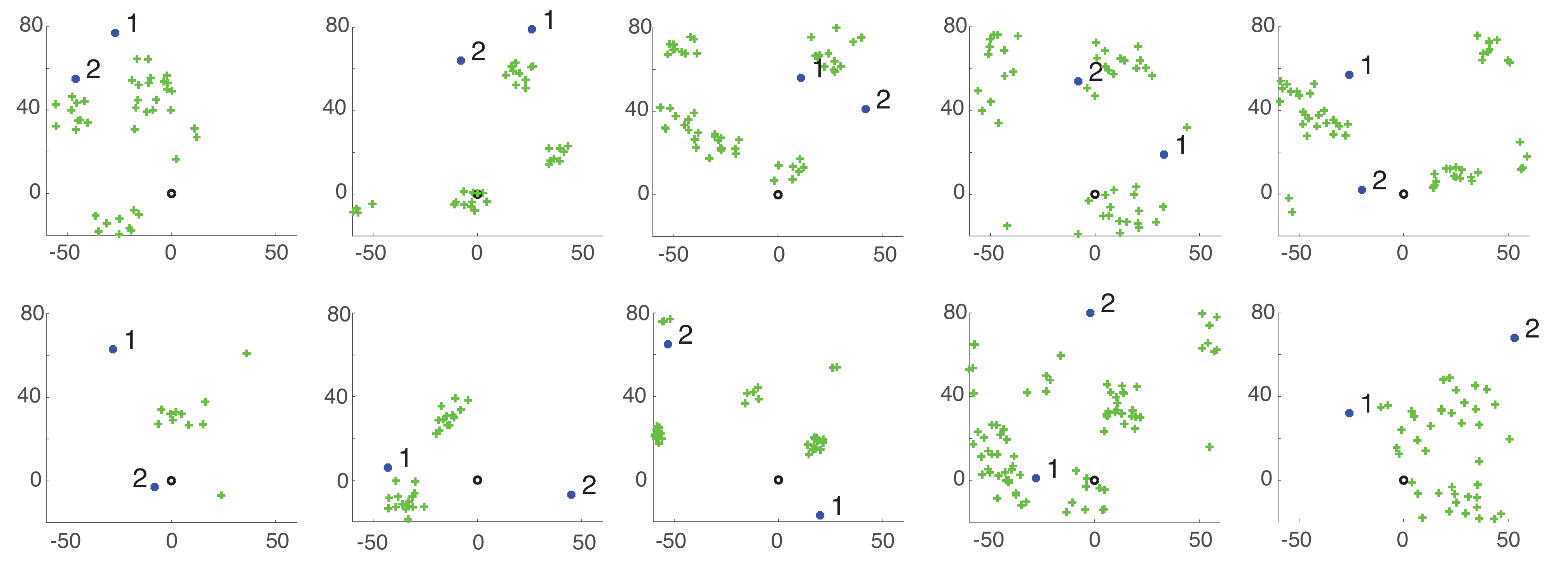}\label{fig:ibsp:maps}}\\
        \subfloat[]{\includegraphics[trim={0 0 0 0},clip, width=0.3\columnwidth]{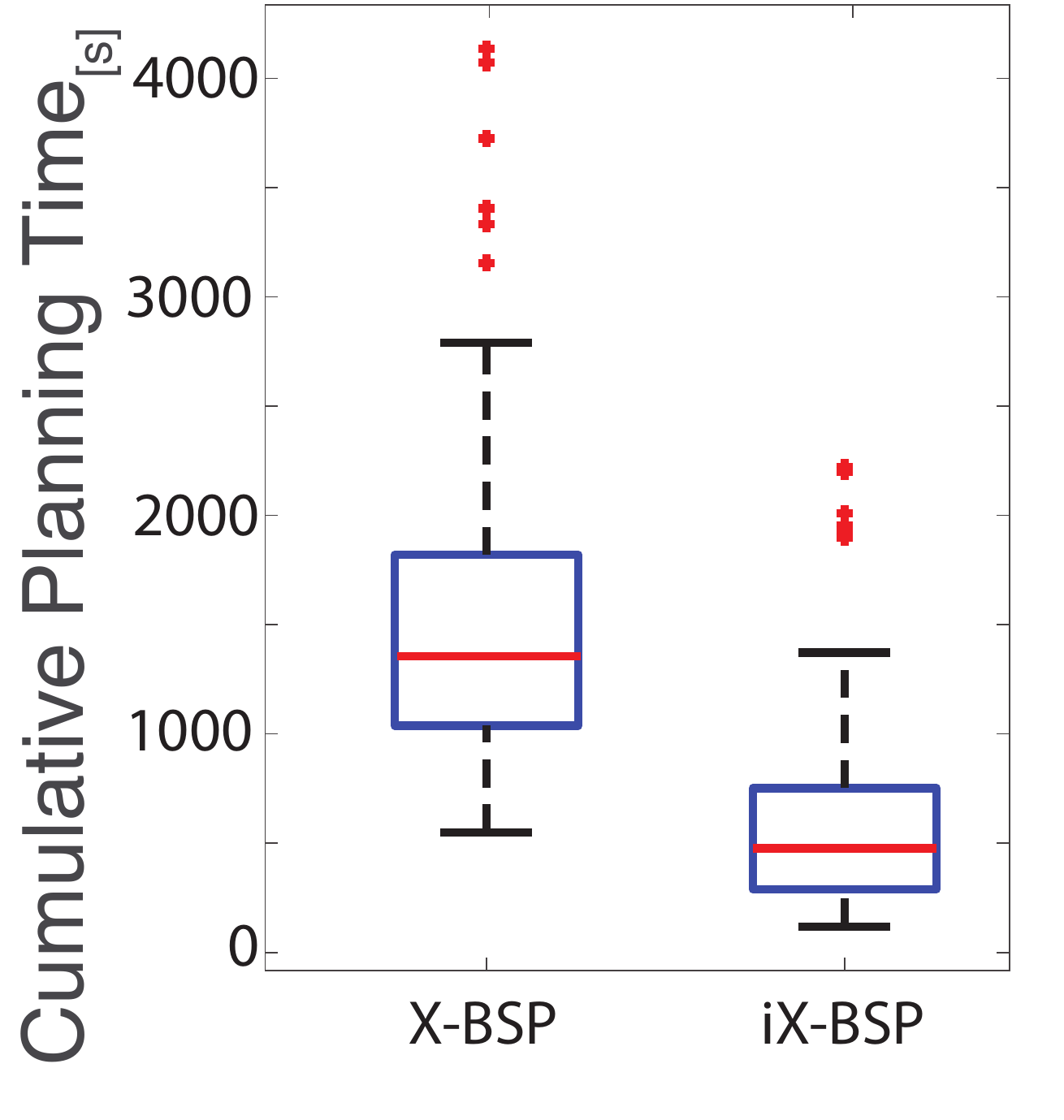}\label{fig:ibsp:time}}
        \subfloat[]{\includegraphics[trim={0 0 0 0},clip, width=0.3\columnwidth]{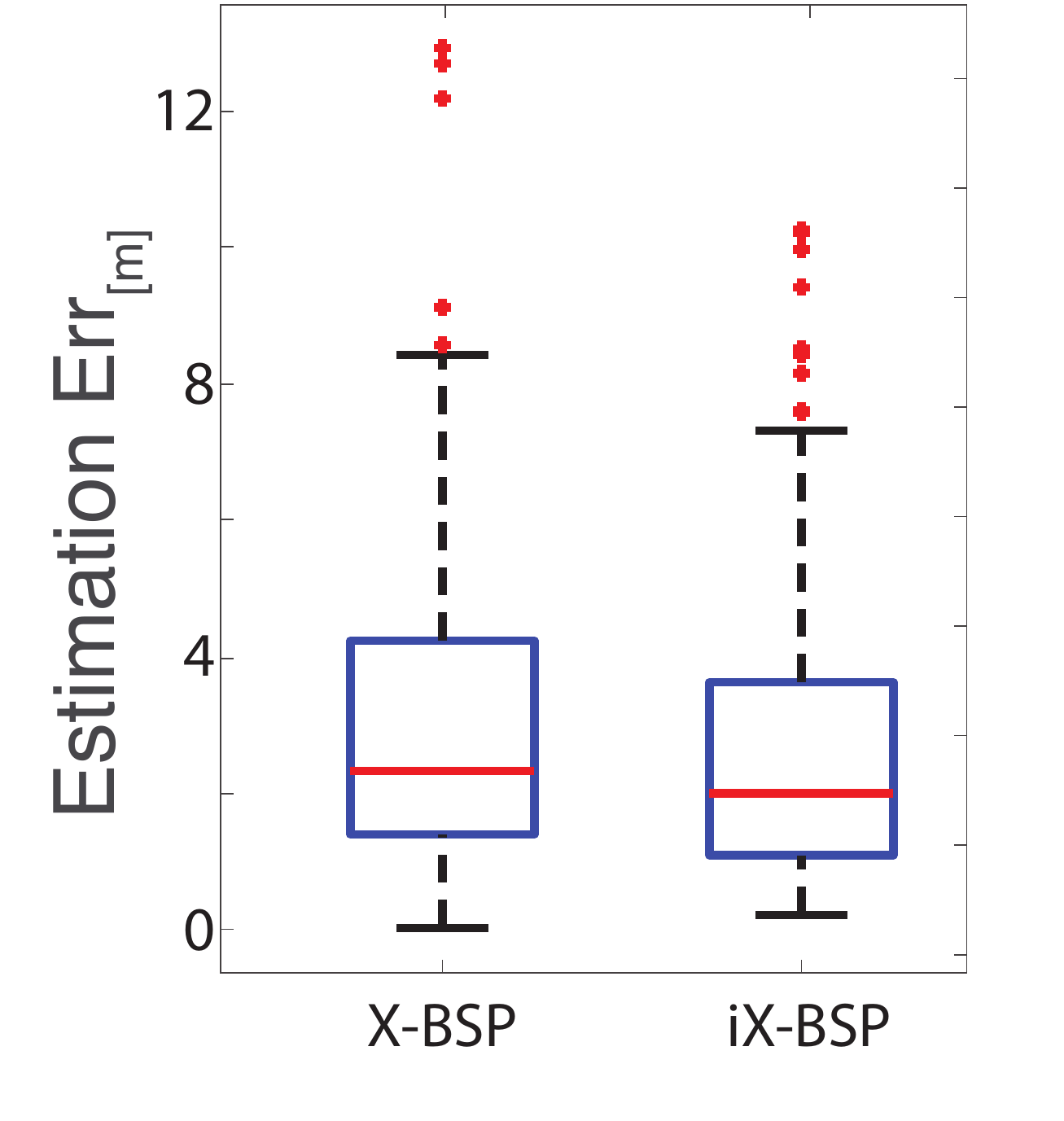}\label{fig:ibsp:acc}}
        \subfloat[]{\includegraphics[trim={0 2 0 0},clip, width=0.3\columnwidth]{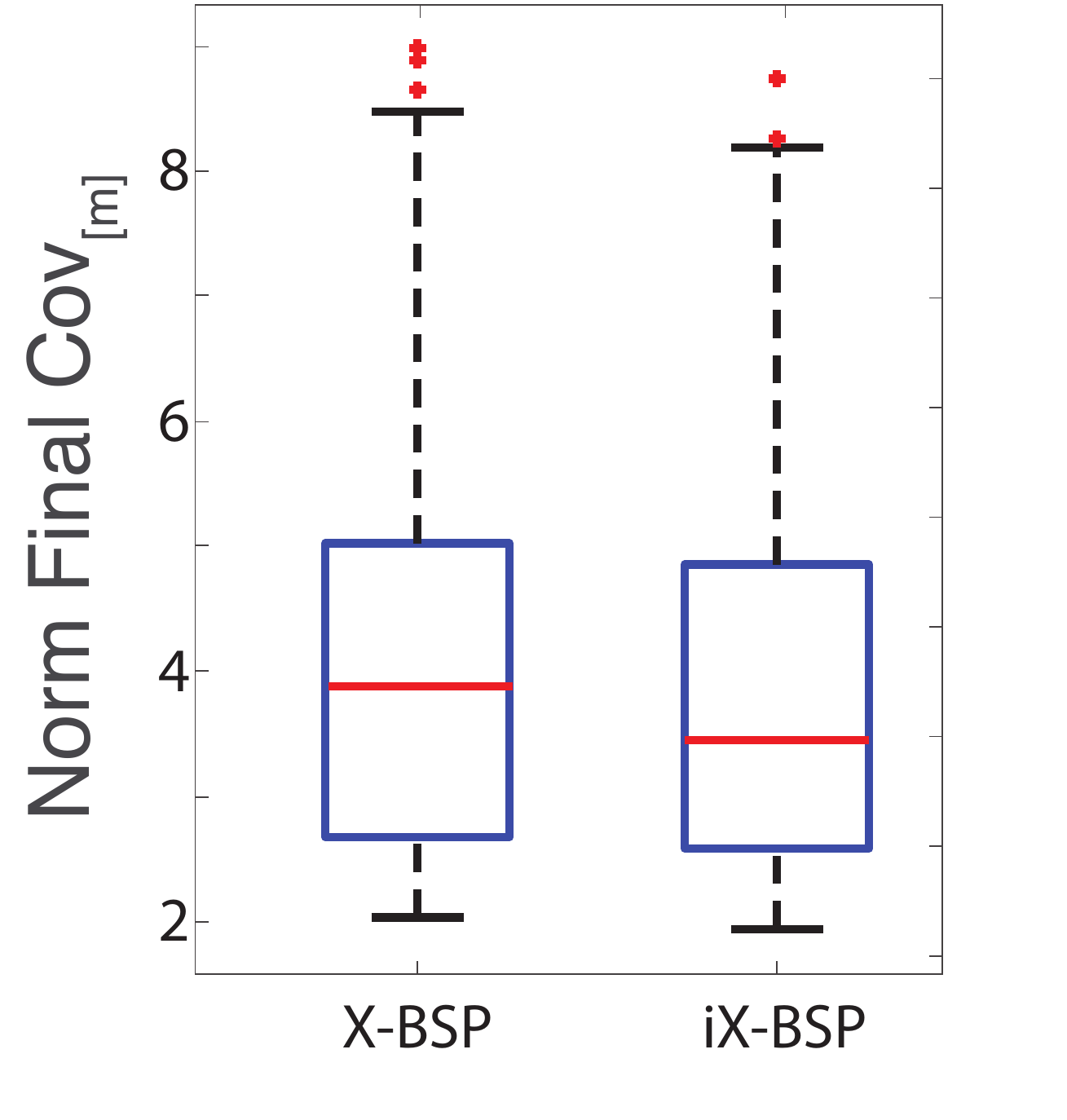}\label{fig:ibsp:cov}}
        \caption{Statistical comparison between \fullbsp and \ibsp:(a) 10 randomly generated maps, used for the statistical comparison. Each with 2 numbered goals denoted by blue dots, and between 2 to 150 landmarks denoted by green crosses. (b) (c) and (d) Box plots of 20 rollouts for planning session timing (b), posterior estimation error upon reaching the goal (c) and the covariance norm upon reaching the goal.}
        \label{fig:ibsp}
\end{figure}

Figure~\ref{fig:ibsp:time} presents a box-plot for the timing data of all 200 rollouts, each with 6 outliers, where the computation time advantage in favor of \ibsp is easily noticed. The significant reduction in computation time is originated in the fact that \ibsp performs inference update in a more efficient way, computation wise, compared to \fullbsp. By forcing previously sampled measurements as part of the objective estimation, \ibsp is able to utilize previously solved beliefs from a precursory planning session, and efficiently update them, instead of performing inference from scratch as done in \fullbsp.

Since we claim to provide a more efficient paradigm to the general problem of \fullbsp, we also verify how \ibsp favors in estimation results. Figure~\ref{fig:ibsp:acc} presents a box plot of the estimation error upon reaching the goal for each of the methods. The estimation error was calculated using the normalized distance between the last pose estimation and the last pose ground truth value, i.e.
\begin{equation}\label{eq:accCalc}
	Estimation \ err = \parallel \overset{\wedge}{x}_{final} - x^{gt}_{final} \parallel.
\end{equation}
As can be seen in Figure~\ref{fig:ibsp:acc} both methods average around an estimation error of $2_{[m]}$, while \fullbsp is with 5 outliers and \ibsp is with 9 outliers. In $49.5\%$ of the rollouts, \ibsp provided with a better estimation error than \fullbsp. The large variance that can be seen in Figure~\ref{fig:ibsp:acc} is probably the result of using a small number of samples for estimating the objective. Nonetheless the empiric estimation variance of both methods can be considered as statistically identical for all practical purposes. Of course a more rigorous examination in required, by analytically comparing the estimation variance, we leave this for future work. We push further and compare the covariance norm of the final pose. As can be seen in Figure~\ref{fig:ibsp:cov}, they average around $3.4_{[m]}$ for \ibsp and $3.8_{[m]}$ for \fullbsp, with only 2 and 3 outliers respectively, and can be considered as statistically identical for all practical purposes. As suggested in Section~\ref{subsec:results:pre_ibsp}, relaxing the simplifying assumption of adequately representative samples, in fact resulted with a better match between \fullbsp and \ibsp as can be seen from comparing Figure~\ref{fig:first:gtErr} to Figure~\ref{fig:ibsp:acc}.
   
\subsection{\ibsp with the \wf Condition}\label{subsec:results:wf}
In this section we examine the effects of \wf over the \ibsp paradigm. 
We compare \ibsp with and without \wf (see Section~\ref{ssubsec:wf:performance}) in the sense of planning-session computation time, the posterior estimation error upon reaching the goal and the covariance norm upon reaching the goal. We do so on the exact scenario used in Section~\ref{subsec:results:ibsp}, with the single exception of using \wf. We also preform a sensitivity analysis for the \wf threshold value $\epsilon_{wf}$ (see Section~\ref{ssubsec:wf:sensitivity}), in order to check its effect over the objective value.
\begin{table}%{R}{0.5\textwidth}
	\caption{Parameters for Section~\ref{ssubsec:wf:performance} following Alg.~\ref{alg:ibsp}} % title of Table
\begin{center} \label{table:param_wf}
  \begin{tabular}{ l | c }
    \hline \hline
    Prior belief  standard deviation & $\begin{bmatrix}
    	1^o \cdot I_{3 x 3} & 0 \\
    	0 & 5_{[m]}\cdot I_{3 x 3}
    \end{bmatrix}$ \\ \hline
    Motion Model standard deviation & $\begin{bmatrix}
    	0.5^o \cdot I_{3 x 3} & 0 \\
    	0 & 0.5_{[m]}\cdot I_{3 x 3}
    \end{bmatrix}$  \\ \hline
    Observation Model standard deviation & $\begin{bmatrix}
    	3_{[px]}  & 0 \\
    	0 & 3_{[px]}
    \end{bmatrix}$  \\ \hline
    Camera Aperture & $90^o$  \\ \hline
    Camera acceptable Sensing Range & between $2_{[m]}$ and $40_{[m]}$ \\ \hline
    \texttt{useWF} & \texttt{true} \\ \hline
    $\epsilon_c$ & 250 \\ \hline
    $\epsilon_{wf}$ & 2 \\ \hline
    $\beta_{\sigma}$ & 1.5 \\ \hline
    $n_u$ & 3 \\ \hline
    $n_x$ & 5 \\ \hline
    $n_z$ & 1 \\ \hline
    action primitives & left, right and forward with $1_{[m]}$ translation and  $\pm 90^o$ rotations\\ \hline
    $\mathbb{D}$ & \JD \\ \hline
    \hline
  \end{tabular}
\end{center}
\end{table} 
\subsubsection{\wf effect on performance}\label{ssubsec:wf:performance}
We compare \ibsp with and without the use of \wf under the same scenario as in Section~\ref{subsec:results:ibsp} (see parameters in Table~\ref{table:param_wf}). We use the same code and the same Linux machine to perform 20 rollouts, each with a different sampled ground-truth for the prior state, on the same 10 maps presented in Figure~\ref{fig:ibsp:maps}, and with the same reward function (\ref{eq:simulationReward}).
Figure~\ref{fig:ibspwf:time} presents a box-plot of the accumulated planning time of all 200 rollouts, where the computation time advantage in favor of \wf usage is easily noticed, on average \wf saved $90\%$ off \ibsp accumulated planning time. 
While \ibsp favors in computation time over \fullbsp due to a more efficient belief update, the significant reduction in computation time when using \wf is originated in the fact that for "close enough" beliefs we refrain from updating the belief altogether. 
As previously mentioned, the use of \wf with a non-zero threshold is an approximation of \ibsp and essentially of \fullbsp. As such, we would expect that using \wf will affect the estimation accuracy and covariance, but as seen in Figures~\ref{fig:ibspwf:acc}-\ref{fig:ibspwf:cov}, there is no significant toll on estimation. The reason for this supposedly "free lunch" hides behind the choice of the \wf threshold, as the objective value error due to the use of \wf is directly related to the choice of \wf threshold (as seen in Section~\ref{subsec:wf}). For a small enough \wf threshold the same action is chosen, and so the impact over the estimation is practically unnoticeable.     
It is worth stressing out that this will not always be the case, as \wf is in fact an approximation, and one should treat it as such when choosing to invoke it. Moreover \wf can be seen as breaking the MPC framework, whenever the newly obtained information of time $k$ is "close enough" to the most relevant prediction of time $k$ we use the prediction rather than updating it with the new information.
We continue with empirically testing the impact of the \wf threshold over the objective value in Section~\ref{ssubsec:wf:sensitivity}. 
\begin{figure}[H]
	\centering
		\subfloat[]{\includegraphics[trim={0 0 0 0},clip, width=0.3\columnwidth]{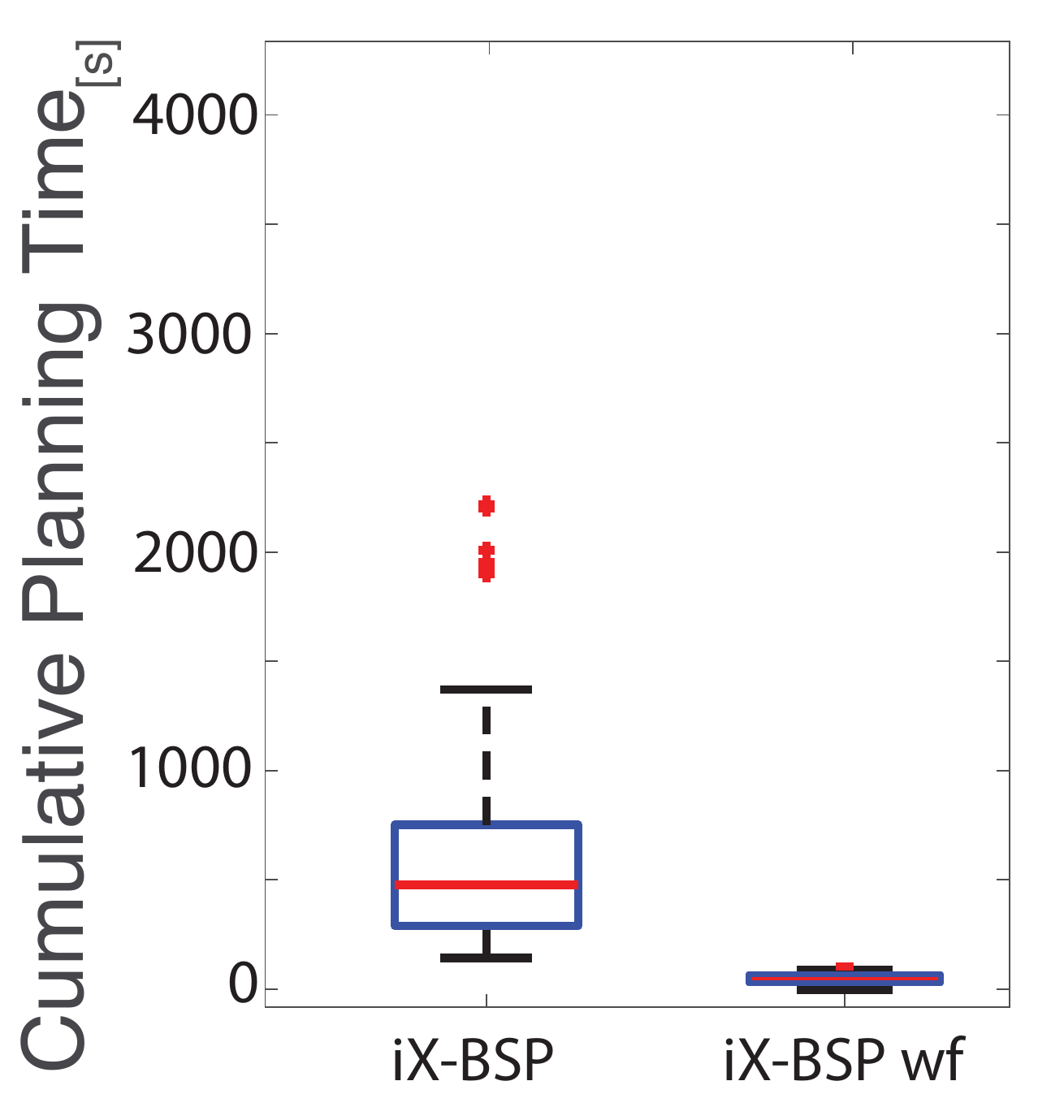}\label{fig:ibspwf:time}}
%        \subfloat[]{\includegraphics[trim={0 0 0 0},clip, width=0.25\columnwidth]{Figures/new_sim/totTimeBox_wf_zoom.pdf}\label{fig:ibspwf:timezoom}}
        \subfloat[]{\includegraphics[trim={0 0 0 0},clip, width=0.3\columnwidth]{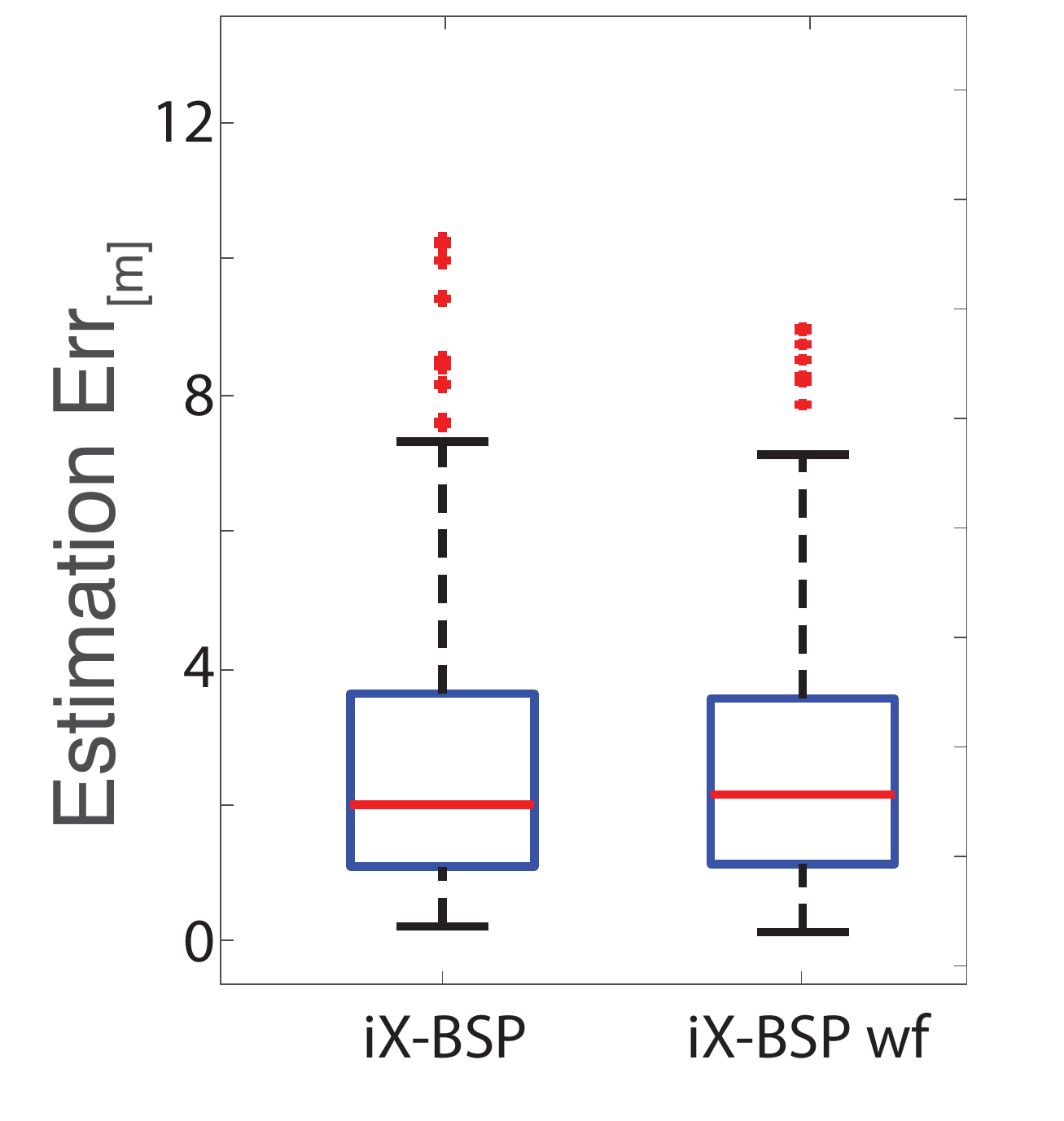}\label{fig:ibspwf:acc}}
        \subfloat[]{\includegraphics[trim={0 2 0 0},clip, width=0.3\columnwidth]{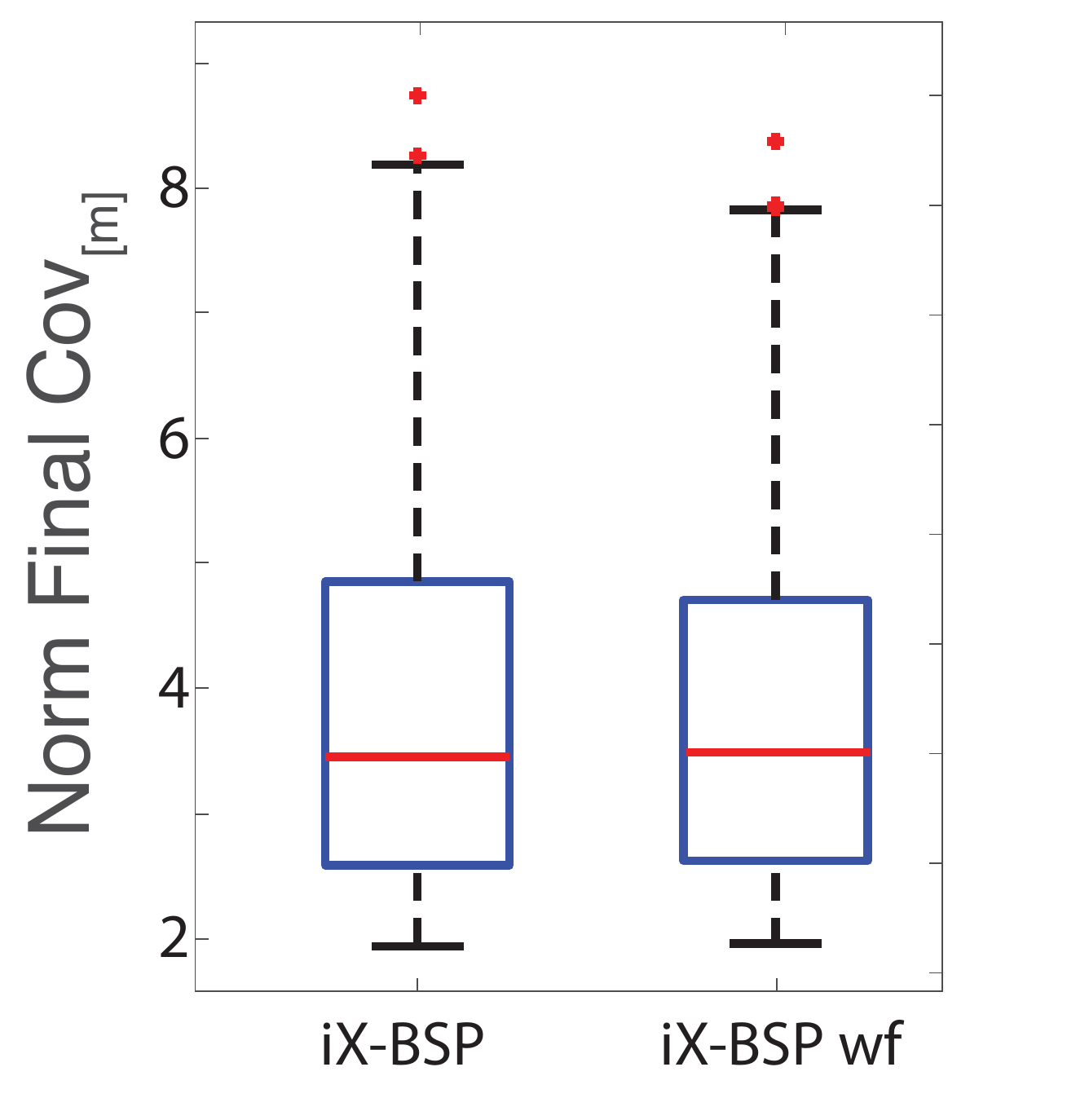}\label{fig:ibspwf:cov}}
        \caption{Statistical comparison between \ibsp with and without \wf over the same scenario as in Figure~\ref{fig:ibsp} of 10 randomly generated maps each with 2 goals and between 2 to 150 landmarks. (a) (b) and (c) Box plots of 200 rollouts for planning session timing (a), posterior estimation error upon reaching the goal (b) and the covariance norm upon reaching the goal (c).}
        \label{fig:ibspwf}
\end{figure}

\subsubsection{\wf threshold - Sensitivity Analysis}\label{ssubsec:wf:sensitivity}

Complementary to the bounds provided in Section~\ref{ssubsec:wfBounds}, we provide an empiric analysis for the impact the \wf threshold holds over the objective value, for the non-Liphschitz reward function (\ref{eq:simulationReward}). Code implemented in MATLAB using iSAM2 efficient methodologies and executed on the same Linux machine. The relevant parameters are summarized in Table~\ref{table:wfSensitivity}.
  \begin{table}%{R}{0.5\textwidth}
	\caption{Parameters for Section~\ref{ssubsec:wf:sensitivity} following \fullbsp} % title of Table
\begin{center} \label{table:wfSensitivity}
  \begin{tabular}{ l | c }
    \hline \hline
    Prior belief  standard deviation & $\begin{bmatrix}
    	1^o \cdot I_{3 x 3} & 0 \\
    	0 & 5_{[m]}\cdot I_{3 x 3}
    \end{bmatrix}$ \\ \hline
    Motion Model standard deviation & $\begin{bmatrix}
    	0.5^o \cdot I_{3 x 3} & 0 \\
    	0 & 0.5_{[m]}\cdot I_{3 x 3}
    \end{bmatrix}$  \\ \hline
    Observation Model standard deviation & $\begin{bmatrix}
    	3_{[px]}  & 0 \\
    	0 & 3_{[px]}
    \end{bmatrix}$  \\ \hline
    Camera Aperture & $90^o$  \\ \hline
    Camera acceptable Sensing Range & between $2_{[m]}$ and $40_{[m]}$ \\ \hline
    $n_u$ & 3 \\ \hline
    $n_x$ & 5 \\ \hline
    $n_z$ & 1 \\ \hline
    action primitives & left, right and forward with $1_{[m]}$ translation and  $\pm 90^o$ rotations\\ \hline \hline
  \end{tabular}
\end{center}
\end{table} 
\begin{figure}[H]
	\centering
		%\subfloat[]{\includegraphics[trim={0 0 0 0},clip, width=0.25\columnwidth]{}\label{fig:wf_bounds:map}}
		\subfloat[]{\includegraphics[trim={0 0 0 0},clip, width=0.6\columnwidth]{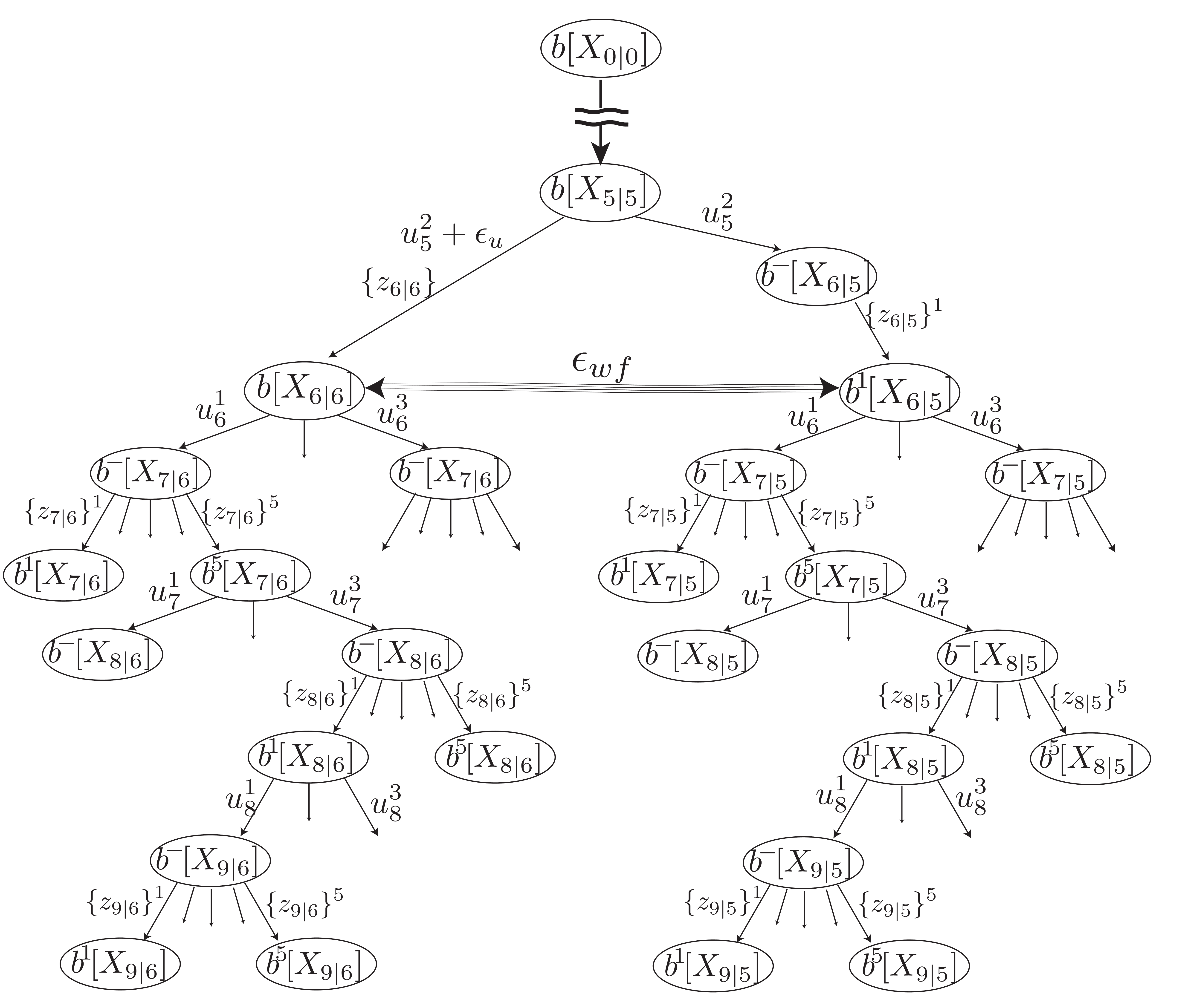}\label{fig:wf_bounds:trees}}
		\subfloat[]{\includegraphics[trim={0 0 0 0},clip, width=0.4\columnwidth]{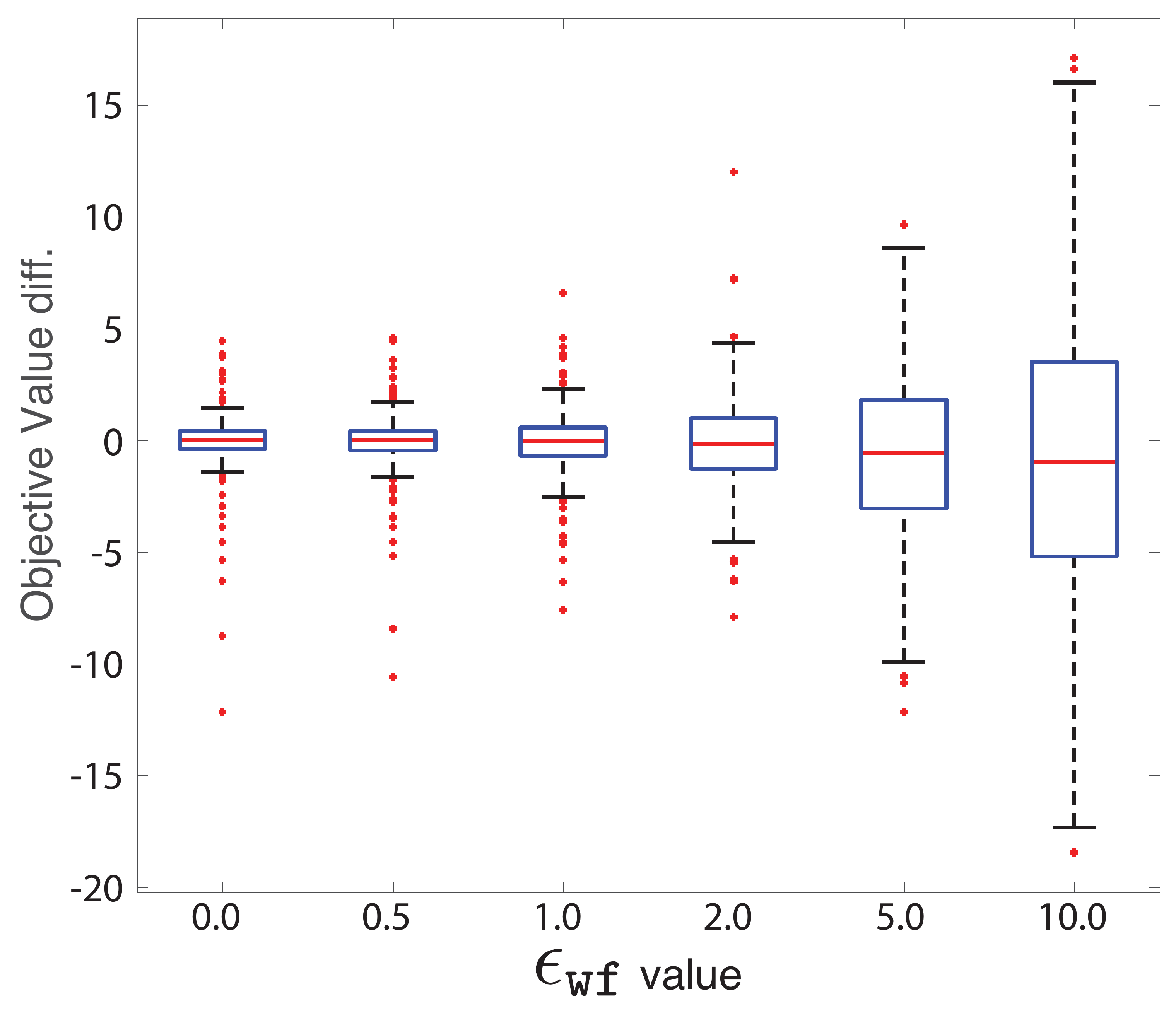}\label{fig:wf_bounds:res}}
        \caption{Statistical evaluation of wildfire threshold effect over Objective value. (a) The belief trees used for the statistical evaluation: the shared history up to time $t=5$; the planning tree at time $t=5$ on the right; the planning tree at time $t=6$ on the left. (b) Box plot of the objective value difference between corresponding action sequences as a function of the forced distance between $b[X_{6|6}]$ and $b[X^5_{6|5}]$.}
        \label{fig:wf_bounds}
\end{figure}

In order to perform such analysis, we would like to compare the objective values resulting from planning over two posterior beliefs, while the \JD distance between them equals different \wf threshold values. 
The scenario used for this analysis is illustrated in Figure~\ref{fig:wf_bounds:trees}. For six different values of $\epsilon_wf=\{0, 0.5, 1, 2, 5, 10\}$ and each of the 10 maps in Figure~\ref{fig:ibsp:maps} we perform 20 repetitions of the following.
Starting from the same prior belief $b[X_{0|0}]$, and the same ground-truth, we advance the robot 5 steps forward and obtain the belief $b[X_{5|5}]$. 
Using $b[X_{5|5}]$ we obtain two different beliefs, the first by propagating $b[X_{5|5}]$ with the primitive action $u^2$ and the resulting \emph{sampled} measurements $\{ z_{6|5}\}^1$ - denoted as $b^1[X_{6|5}]$, and the second by calculating the posterior $b[X_{6|6}]$ resulting from advancing the robot using the same primitive action $u^2$ along with some additive noise $\epsilon_u$ and obtaining the corresponding measurements $\{z_{6|6}\}$. 
We now have two beliefs, $b^1[X_{6|5}]$ and $b[X_{6|6}]$, both representing the belief at time $t=6$ but with partially different history, the same we would get from performing \ibsp under MPC framework. 
The additive noise $\epsilon_u$ is chosen such that it would cause the \JD distance between $b^1[X_{6|5}]$ and $b[X_{6|6}]$ to equal the desired \wf threshold value - $\epsilon_{wf}$. 
We then perform \fullbsp over both $b^1[X_{6|5}]$ and $b[X_{6|6}]$, with the same parameters, and obtain corresponding objective values for each of the candidate action sequences.
As both planning trees share the same action sequences we can compare the corresponding objective values per candidate action sequence. We denote the difference between the aforementioned as the Objective Value difference, which form the vertical axes in Figure~\ref{fig:wf_bounds:res}.

Figure~\ref{fig:wf_bounds:res} presents the results of the described analysis, the objective value difference as a function of the \JD distance between $b^1[X_{6|5}]$ and $b[X_{6|6}]$ denoted as $\epsilon_{wf}$. 
We can clearly see the variance of the different objective value differences increasing with the \wf threshold values, thus reflecting a connection between the objective value difference and the \wf threshold value over non-Liphschitz reward. 
Moreover, for the results in Figure~\ref{fig:wf_bounds:res}, the relation between the objective value difference variance and the \wf threshold values appears to be somewhat linear, but more work is needed before anything can be deduced.

\subsection{\imlbsp}\label{subsec:results:iml}
In this section we compare \mlbsp and \imlbsp (see parameters in Table~\ref{table:iml}) in the sense of planning-session computation time, the posterior estimation error upon reaching the goal, and the covariance norm upon reaching the goal. For comparison we performed 1000 rollouts (entire mission run), each with a different sampled ground-truth for the prior state. Figure~\ref{fig:iml:map} presents the scenario on which all rollouts were performed, along with the estimation results of an arbitrary rollout. 
\begin{table}[h!]%{R}{0.5\textwidth}
	\caption{Parameters for Section~\ref{subsec:results:iml} following \imlbsp} % title of Table
\begin{center} \label{table:iml}
  \begin{tabular}{ l | c }
    \hline \hline
    Prior belief  standard deviation & $\begin{bmatrix}
    	1^o \cdot I_{3 x 3} & 0 \\
    	0 & 5_{[m]}\cdot I_{3 x 3}
    \end{bmatrix}$ \\ \hline
    Motion Model standard deviation & $\begin{bmatrix}
    	0.5^o \cdot I_{3 x 3} & 0 \\
    	0 & 0.5_{[m]}\cdot I_{3 x 3}
    \end{bmatrix}$  \\ \hline
    Observation Model standard deviation & $\begin{bmatrix}
    	3_{[px]}  & 0 \\
    	0 & 3_{[px]}
    \end{bmatrix}$  \\ \hline
    Camera Aperture & $90^o$  \\ \hline
    Camera acceptable Sensing Range & between $2_{[m]}$ and $40_{[m]}$ \\ \hline
    \texttt{useWF} & \texttt{false} \\ \hline
    $\epsilon_c$ & 250 \\ \hline
    $\beta_{\sigma}$ & 1.5 \\ \hline
    $n_u$ & 3 \\ \hline
    $n_x$ & 1 \\ \hline
    $n_z$ & 1 \\ \hline
    action primitives & left, right and forward with $1_{[m]}$ translation and  $\pm 90^o$ rotations\\ \hline
    $\mathbb{D}$ & \JD \\ \hline
    \hline
  \end{tabular}
\end{center}
\end{table} 

A robot equipped with a stereo camera, is tasked with reaching two goals, numbered and denoted by blue dots in Figure~\ref{fig:iml:map}, in a world with 45 randomly placed landmarks, denoted by green crosses in Figure~\ref{fig:iml:map}, while considering the reward function (\ref{eq:simulationReward}). Same as in Section~\ref{subsec:results:ibsp}, both \mlbsp and \imlbsp consider 6 DOF robot pose, 3 DOF landmarks, a joint state comprised of both robot pose and landmarks and three candidate actions for each step (left, right, forward), hence for the considered horizon of 3 look ahead steps, there are 27 candidate action sequences for each planning session. Under the ML assumption, both methods, \mlbsp and \imlbsp, consider a single measurement per action per look ahead step. While for \mlbsp, this action is always the most likely measurement, hence zero innovation, for \imlbsp this measurement is usually not the most likely measurement, hence we retain some innovation along the look ahead steps. 

Similarly to \fullbsp and \ibsp, when considering a plan-act-infer system, all differences between \mlbsp and \imlbsp are confined within the planning block, hence the computation time of the planning process is adequate for fair comparison.

Figure~\ref{fig:iml:time} presents a box-plot for the timing data of all 1000 rollouts, with 3 and 2 outliers for \mlbsp and \imlbsp respectively, where the computation time advantage is in favor of \imlbsp by a factor of 5. The significant reduction in computation time is originated in the fact that \imlbsp performs inference update in a more efficient way, computation wise, compared to \mlbsp. By considering previously calculated beliefs, and utilizing one of which for efficient inference update instead of performing inference from scratch as done in \mlbsp.

Since we claim to provide a more efficient paradigm than \mlbsp, we also verify how \imlbsp favors in estimation results. Figure~\ref{fig:ibsp:acc} presents a box plot of the estimation error upon reaching the goal for each of the methods. The estimation error was calculated using (\ref{eq:accCalc}).

As can be seen in Figure~\ref{fig:iml:acc} both methods average around an estimation error of $3.5_{[m]}$, while only \mlbsp is with a single outlier. In $51.0\%$ of the rollouts, \imlbsp provided with a better estimation error than \mlbsp. The large estimation variance that can be seen in Figure~\ref{fig:iml:acc} is similar to the one obtained in Figure~\ref{fig:ibsp:acc}, so it provides some assurance regarding the assumption it is the result of using a small number of samples for estimating the objective. Nonetheless, also here, the empiric estimation variance of both methods can be considered as statistically identical for all practical purposes. Of course a more rigorous examination in required, by analytically comparing the estimation variance, we leave this for future work as well. We push further and compare the covariance norm of the final pose. As can be seen in Figure~\ref{fig:iml:cov}, they average around $5.0_{[m]}$, without any outliers, and can be clearly considered as statistically identical for all practical purposes.

\begin{figure}[h!]
	\centering
		\subfloat[]{\includegraphics[trim={0 0 0 0},clip, width=0.25\columnwidth]{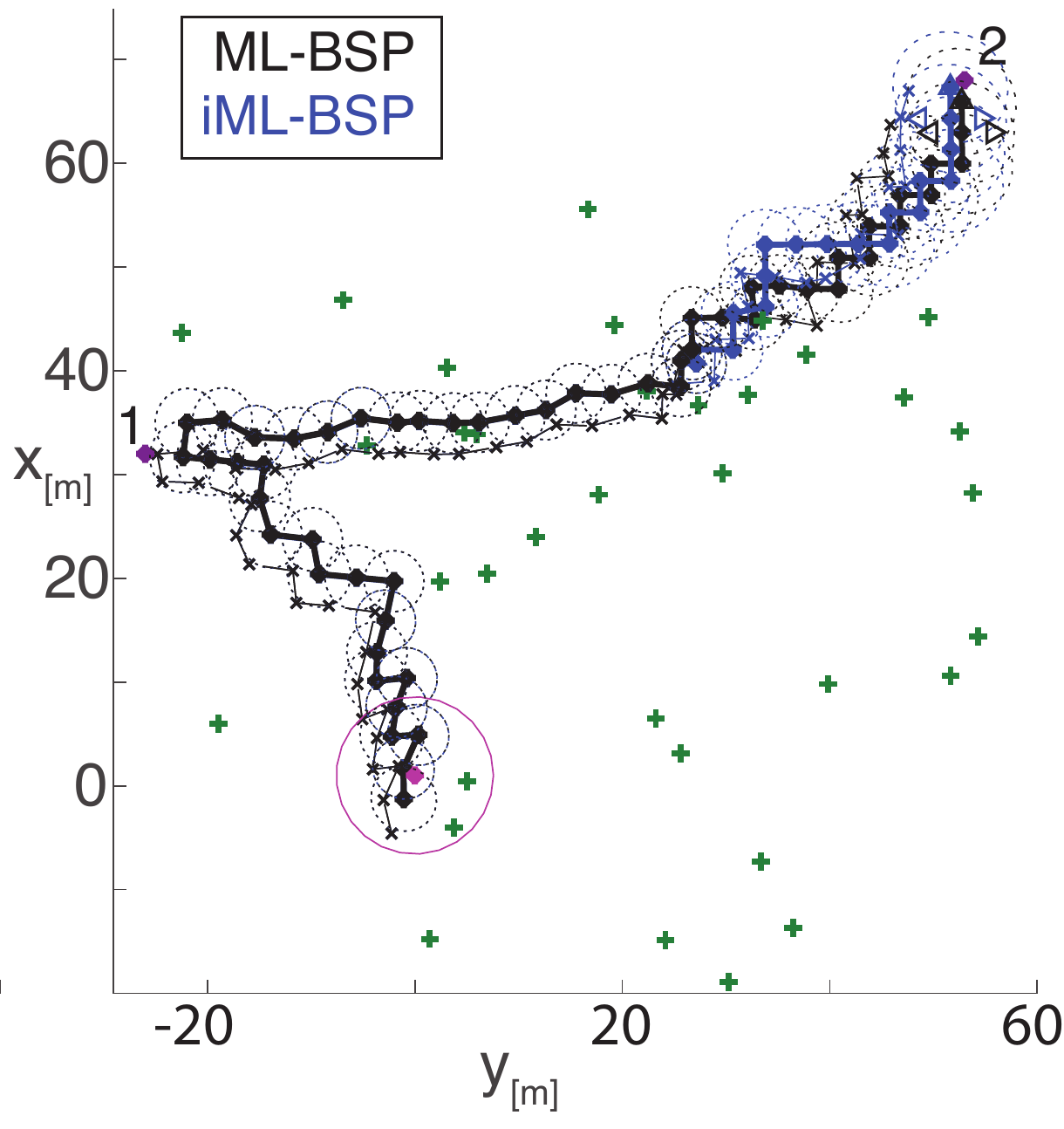}\label{fig:iml:map}}
        \subfloat[]{\includegraphics[trim={0 0 0 0},clip, width=0.25\columnwidth]{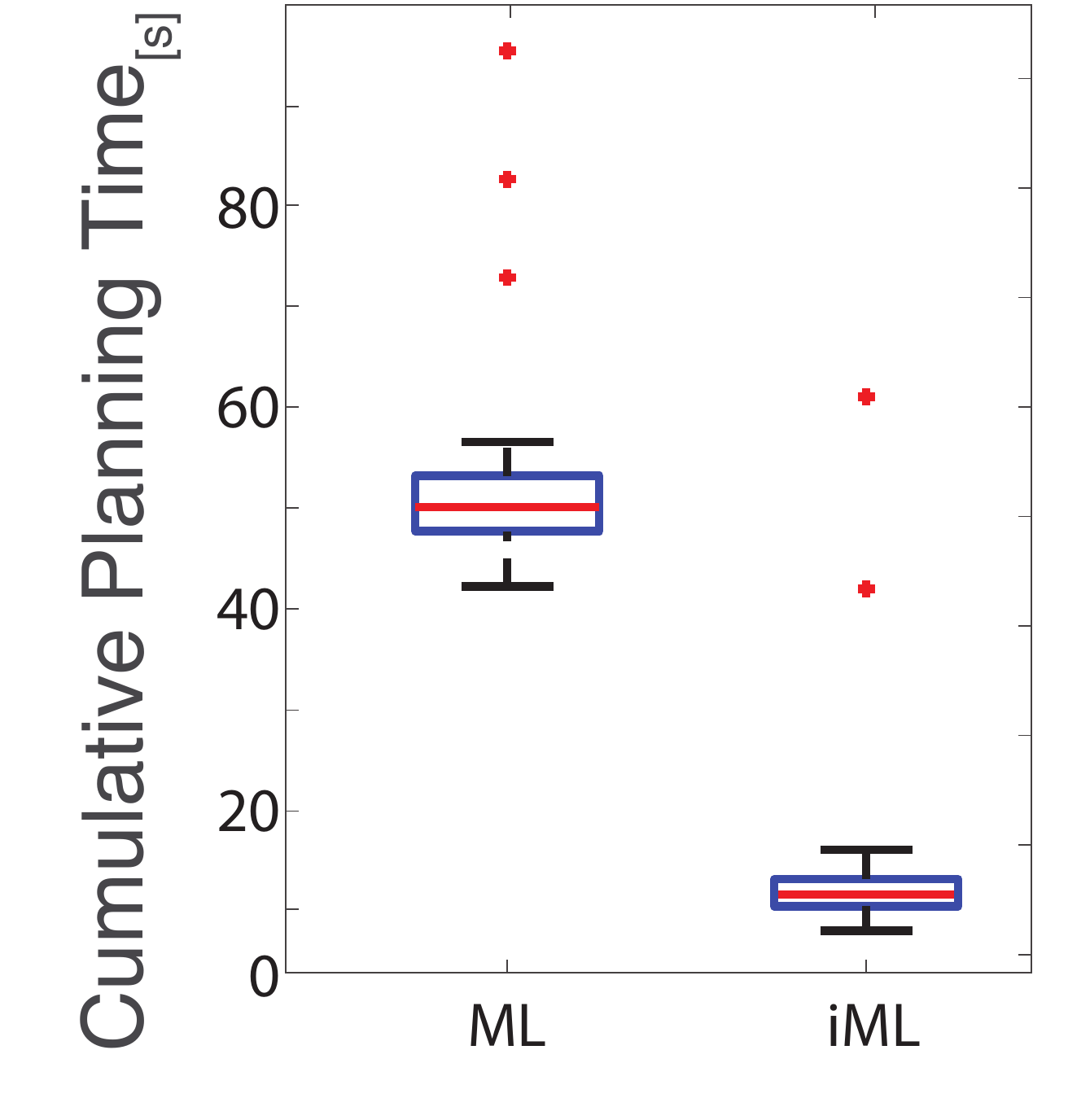}\label{fig:iml:time}}
        \subfloat[]{\includegraphics[trim={0 0 0 0},clip, width=0.25\columnwidth]{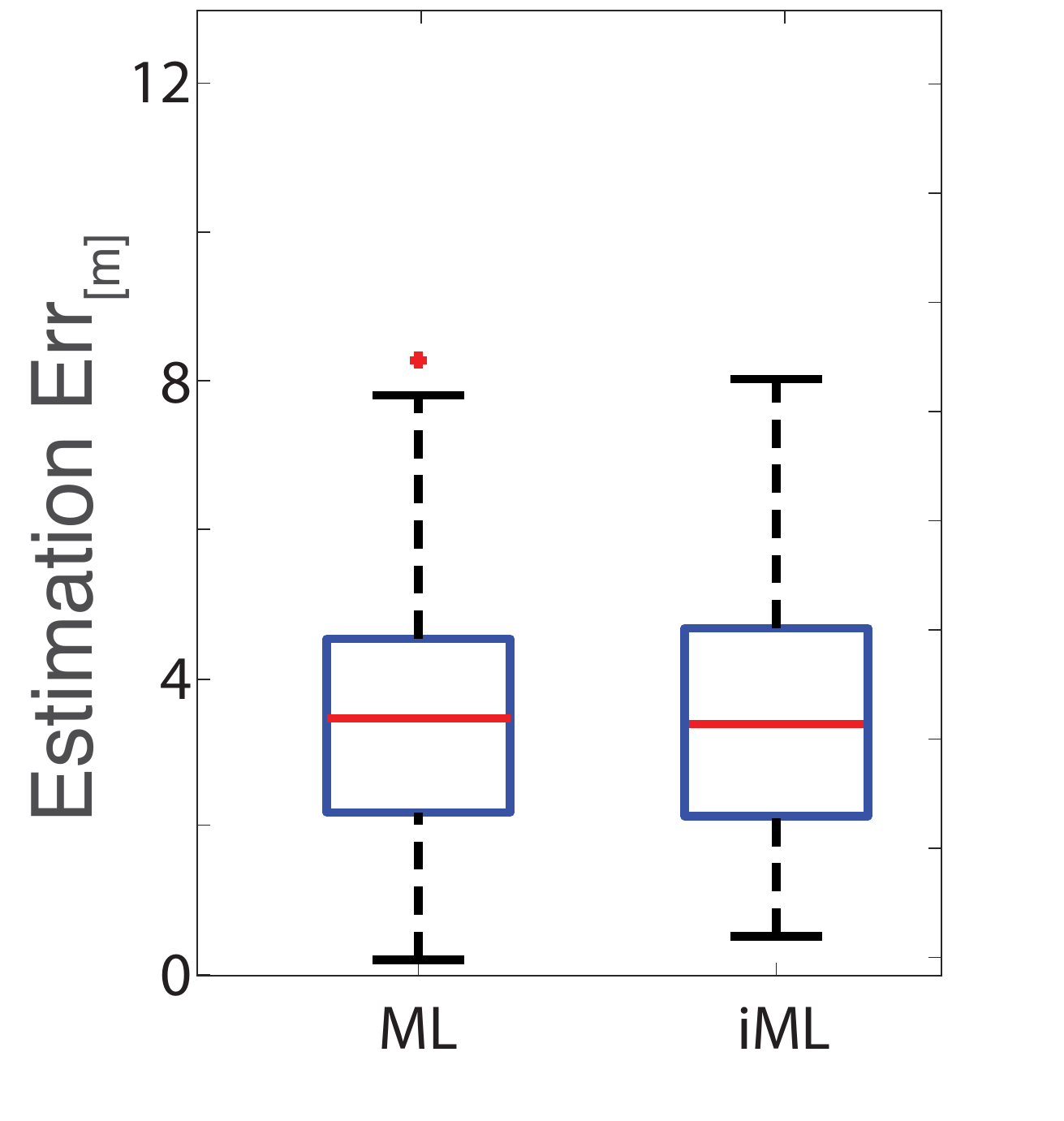}\label{fig:iml:acc}}
        \subfloat[]{\includegraphics[trim={0 0 0 0},clip, width=0.25\columnwidth]{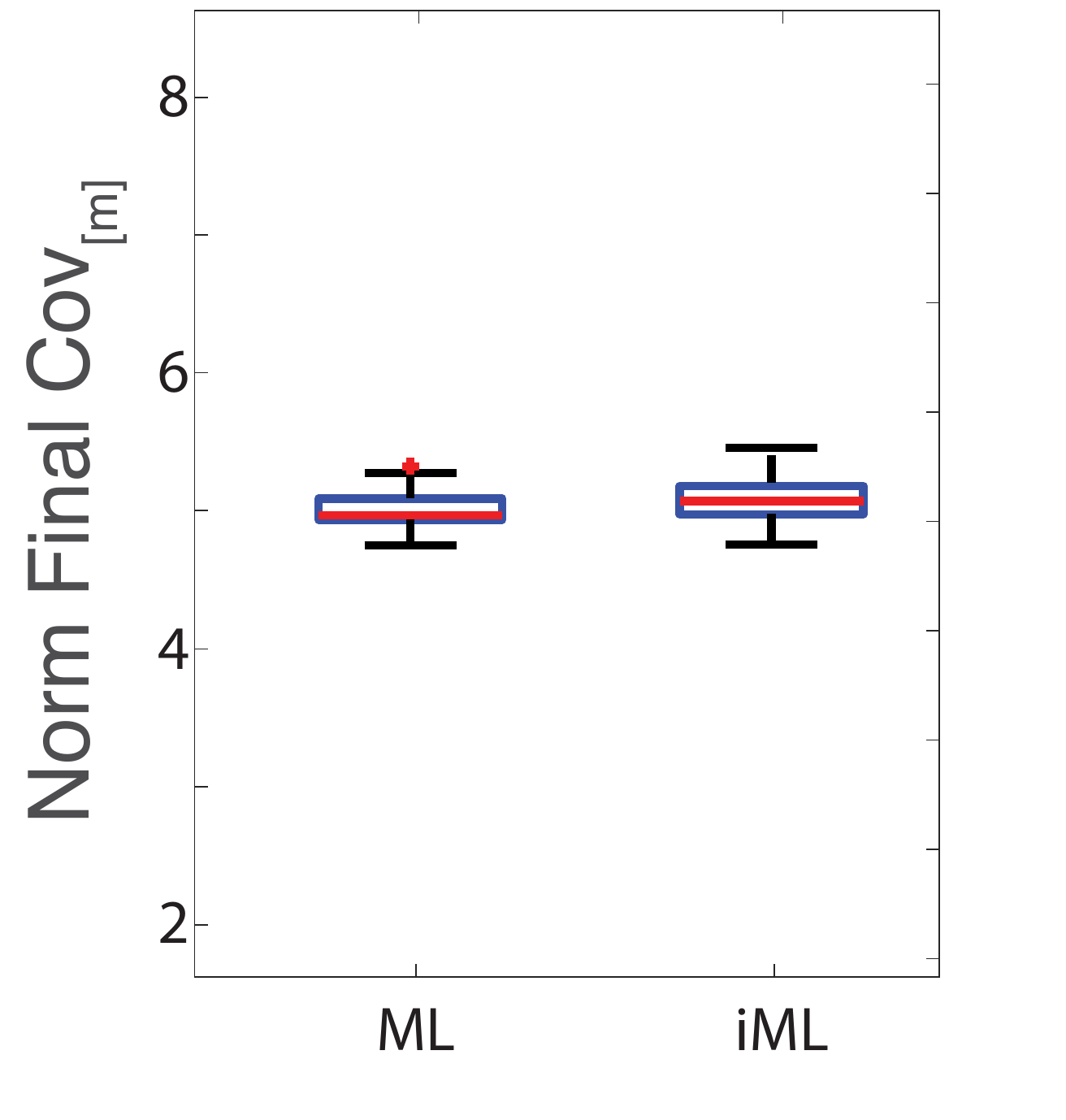}\label{fig:iml:cov}}
        \caption{Statistical comparison between \mlbsp and \imlbsp:(a) The scenario used for the statistical comparison, with 2 numbered goals denoted by blue dots, and 45 landmarks denoted by green crosses. Showcasing the estimation results for one of the 1000 performed rollouts. (b) (c) and (d) Box plots of 1000 rollouts for planning session timing (b), posterior estimation error upon reaching the goal (c) and the covariance norm upon reaching the goal.}
        \label{fig:iml}
\end{figure}

\subsection{\imlbsp in a Real world Experiment}\label{subsec:results:real_iml}
In this section we compare \mlbsp and \imlbsp in a real-world setting. In the following we describe the scenario on which we ran these experiments (Section~\ref{ssubsec:live:scenario}), as well as the results of the two live experiments (Section~\ref{ssubsec:live:results}). All relevant parameters used for these experiments are summarized in Table~\ref{table:live_param}.
\begin{table}%{R}{0.5\textwidth}
	\caption{Parameters for Section~\ref{subsec:results:real_iml} following \imlbsp} % title of Table
\begin{center} \label{table:live_param}
  \begin{tabular}{ l | c }
    \hline \hline
    Prior belief  standard deviation & $\begin{bmatrix}
    	1^o \cdot I_{3 x 3} & 0 \\
    	0 & 5_{[m]}\cdot I_{3 x 3}
    \end{bmatrix}$ \\ \hline
    Motion Model standard deviation & $\begin{bmatrix}
    	0.5^o \cdot I_{3 x 3} & 0 \\
    	0 & 0.5_{[m]}\cdot I_{3 x 3}
    \end{bmatrix}$  \\ \hline
    Observation Model standard deviation & $\begin{bmatrix}
    	3_{[px]}  & 0 \\
    	0 & 3_{[px]}
    \end{bmatrix}$  \\ \hline
    Camera Aperture & $90^o$  \\ \hline
    Camera acceptable Sensing Range & between $2_{[m]}$ and $40_{[m]}$ \\ \hline
    \texttt{useWF} & \texttt{false} \\ \hline
    $\epsilon_c$ & 250 \\ \hline
    $\beta_{\sigma}$ & 1.5 \\ \hline
    planning horizon & 4 \\ \hline
    $n_u$ & 3 \\ \hline
    $n_x$ & 1 \\ \hline
    $n_z$ & 1 \\ \hline
    action primitives & left, right and forward with $1_{[m]}$ translation and  $\pm 45^o$ rotations\\ \hline
    $\mathbb{D}$ & \JD \\ \hline
    ORBSLAM2 & default parameters \\ \hline
    \hline
  \end{tabular}
\end{center}
\end{table} 

\subsubsection{The Scenario}\label{ssubsec:live:scenario}
For these experiment we used the Pioneer 3AT robot, equipped with a ZED stereo camera, Hokuyo UTM-30LX Lidar, and a Linux machine with octa-core i7-6820HQ 2.7GHz processor and 32GB of memory (see Figure~\ref{fig:live:robot}). 
\begin{figure}[h]
	\centering
		\includegraphics[trim={0 0 0 0},clip, width=0.6\columnwidth]{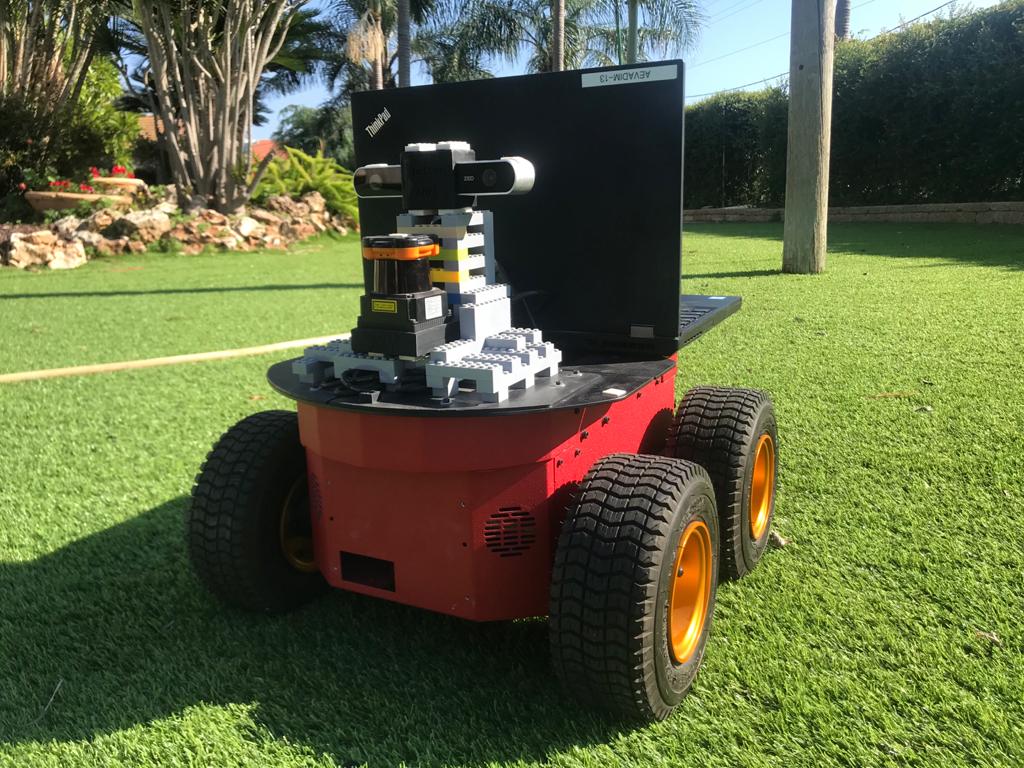}
        \caption{Pioneer 3AT robot used for the live experiments. The robot is equipped with a ZED stereo camera, Hokuyo UTM-30LX Lidar, and a Linux machine with octa-core i7-6820HQ 2.7GHz processor and 32GB of memory.}
        \label{fig:live:robot}
\end{figure}

The robot received a number of goals, his mission - reaching each one of the goals while maximizing information gain and minimizing distance to goal using the reward function (\ref{eq:simulationReward}). The robot state was comprised of both its own poses and environment landmarks. The robot did not possess any prior information over the environment, nor did it make use of any offline calculations. The robot did receive a prior over its initial pose (as stated in Table~\ref{table:live_param}). The environment used for these experiments was the surrounding garden and path-way of a cottage house, laced with both still obstacles (e.g. cars, trees, rocks, children toys, garden furniture) and dynamic obstacles (e.g. people, children, dogs). 

The robot uses a plan-act-infer architecture, similar to the one presented in Figure~\ref{fig:HighLevel_Algo}. The planner is the exact same MATLAB code used for the aforementioned simulations where the rest of the code is C++ based. 
The Hokuyo UTM-30LX Lidar was used for collision avoidance and odometry while the ZED camera was used strictly for its rectified stereo images output.  
We also made use of ORBSLAM2 as the vision pipeline, feeding the rectified stereo images to ORBSLAM2 and extracting the appropriate factors created by ORBSLAM2 as output. %We then converted this G2O graph to an iSAM2 compatible factor graph and used it accordingly.

Future landmark observations are generated by considering only landmarks projected within the camera field of view using posterior estimates for landmark positions and camera pose. As in this work the planning phase considers only the already-mapped landmarks, without reasoning about expected new landmarks, each new landmark observation in inference would essentially mean facing a factor that can not be re-used.

As this is not a simulated environment, where the uncertainties can be replicated, in order to provide a fair comparison between \mlbsp and \imlbsp each planning session the robot performs both \mlbsp and \imlbsp sequentially, using the same posterior information. The planning duration is timed for comparison, the optimal action given by \imlbsp is being used as the next action, and the optimal action given by \mlbsp is matched against former for comparison.

\subsubsection{Real World Results}\label{ssubsec:live:results}
In this section we cover the results of live experiments done over two different sets of goals following the scenario described above. Differently than the simulation related results presented so far, we compare the computation time of the entire planning horizon and do not omit the last horizon step. 
Figure~\ref{fig:live:tiny:map} presents the estimated route of the first experiment, coursing through 4 goals along 35 meters for under both \mlbsp and \imlbsp. As the two methods chose the same optimal action sequences throughout the mission, they have an identical estimation (up to some machine noise). Figure~\ref{fig:live:tiny:time:tot} presents a bar plot of the cumulative planning time for the entire mission for both methods, where each bar is divided into the contribution of the first three horizon steps (denoted in gray) and the contribution of the last horizon step (denoted in black). 
This division is meant to help the reader compare the live experiment to the simulation results which omitted the shared last horizon step computation time, while still assessing the overall reduction in computation time. 
%The reason for this division is the fact that for \imlbsp like in \mlbsp the last horizon step is always calculated from scratch, so the planning time related to the last horizon step is always identical between the two. 
The percentage in Figure~\ref{fig:live:tiny:time:tot} represents the relative contribution of the two segments to each of the cumulative planning times. 
Although the last horizon remains unchanged, we can clearly see the computation time reduction in \imlbsp when compared to \mlbsp from forming $59\%$ of the computation time to only $23\%$.
Figure~\ref{fig:live:tiny:time:perStep} presents the planning time per planning session. As expected, in the first planning session considering a new goal \imlbsp performs a regular \mlbsp planning hence both computation times are identical and no factors are re-used. 

Figure~\ref{fig:live:tiny:fac} suggests some insight on the timing result of \imlbsp by comparing the number of factors involved in the computation of each method. 
Figure~\ref{fig:live:tiny:fac:count} the sum of added factors per planning session. In blue the number of factors added at time $k+1|k+1$ as part of standard Bayesian inference update. In red the portion of aforementioned factors that are already part of the state prior to the inference update, and as in this work we do not make use of any mechanism to predict new states it also represents the upper bound for the number of factors we can hope to re-use.
In orange the number of factors that were originally calculated in a previous planning session and where reused.
The difference between the orange and blue lines represents the number of factors needed to be added to the re-used planning tree in order to match the posterior at $k+1$ and the black line in Figure~\ref{fig:live:tiny:fac:rem} represents the number of factors needed to be removed from the re-used planning tree. 
While the orange line is quite close to the red upper bound, there are still a lot of factors needed to be removed in order to match the posterior at $k+1$, which contributes to DA update computation time in \imlbsp. 
Using a more sophisticated prediction mechanism for future factors might reduce this overhead in removed factors and save more valuable computation time without introducing an approximation, we leave this for future work.
\begin{figure}[H]
	\centering
		\includegraphics[trim={0 0 0 0},clip, width=0.95\columnwidth]{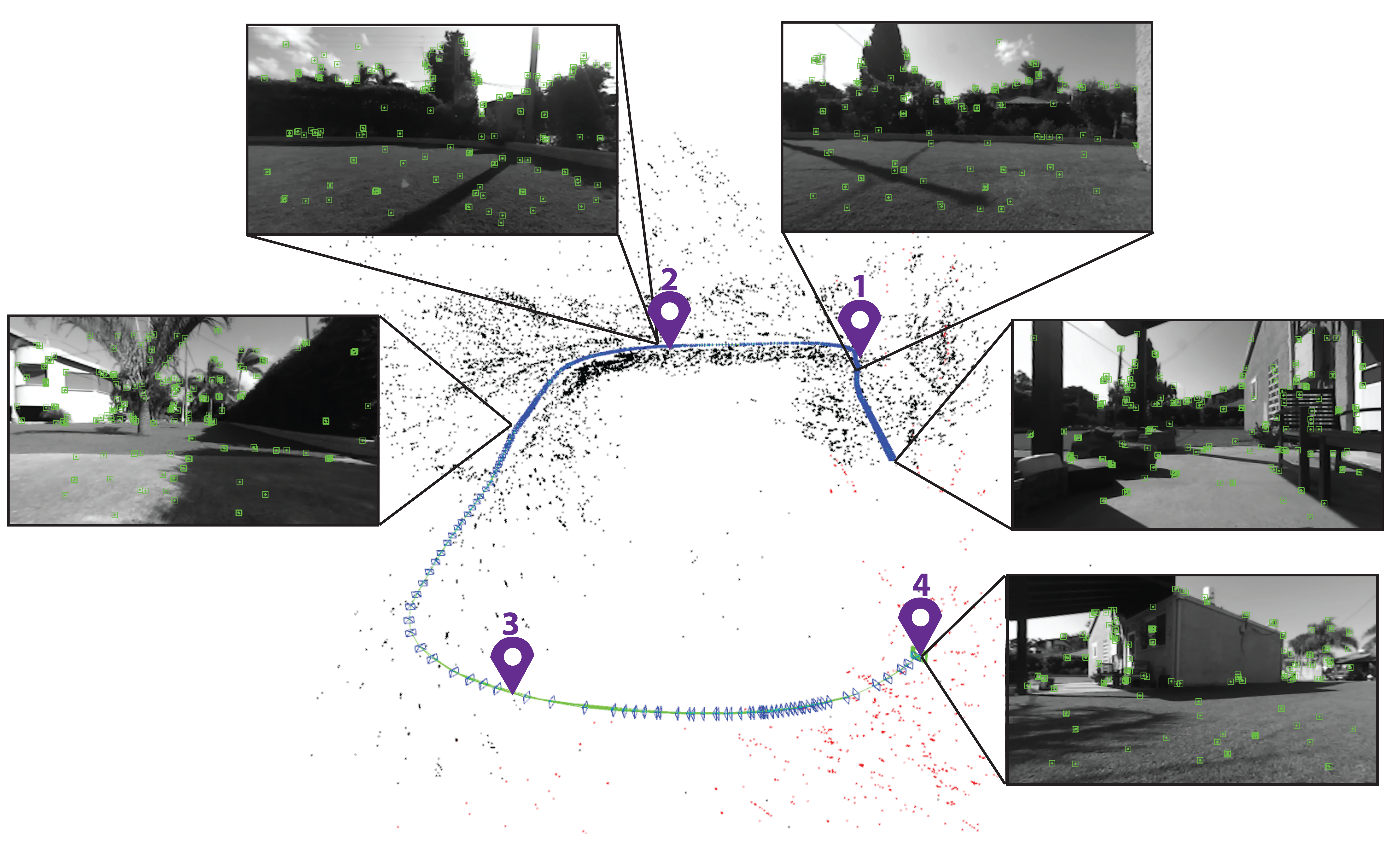}
        \caption{The first live experiment stretching across a 35 meter course, where the Pioneer robot was given 4 goals (numbered and denoted in purple) to reach. The state estimation of the robot in the form of keyframes (denoted in blue frames), robot trajectory (denoted in green) landmarks (denoted in black and red), and few snapshots along the rout containing the selected features. }
        \label{fig:live:tiny:map}
\end{figure}
\begin{figure}[H]
	\centering
		\subfloat[]{\includegraphics[trim={0 0 0 0},clip, width=0.42\columnwidth]{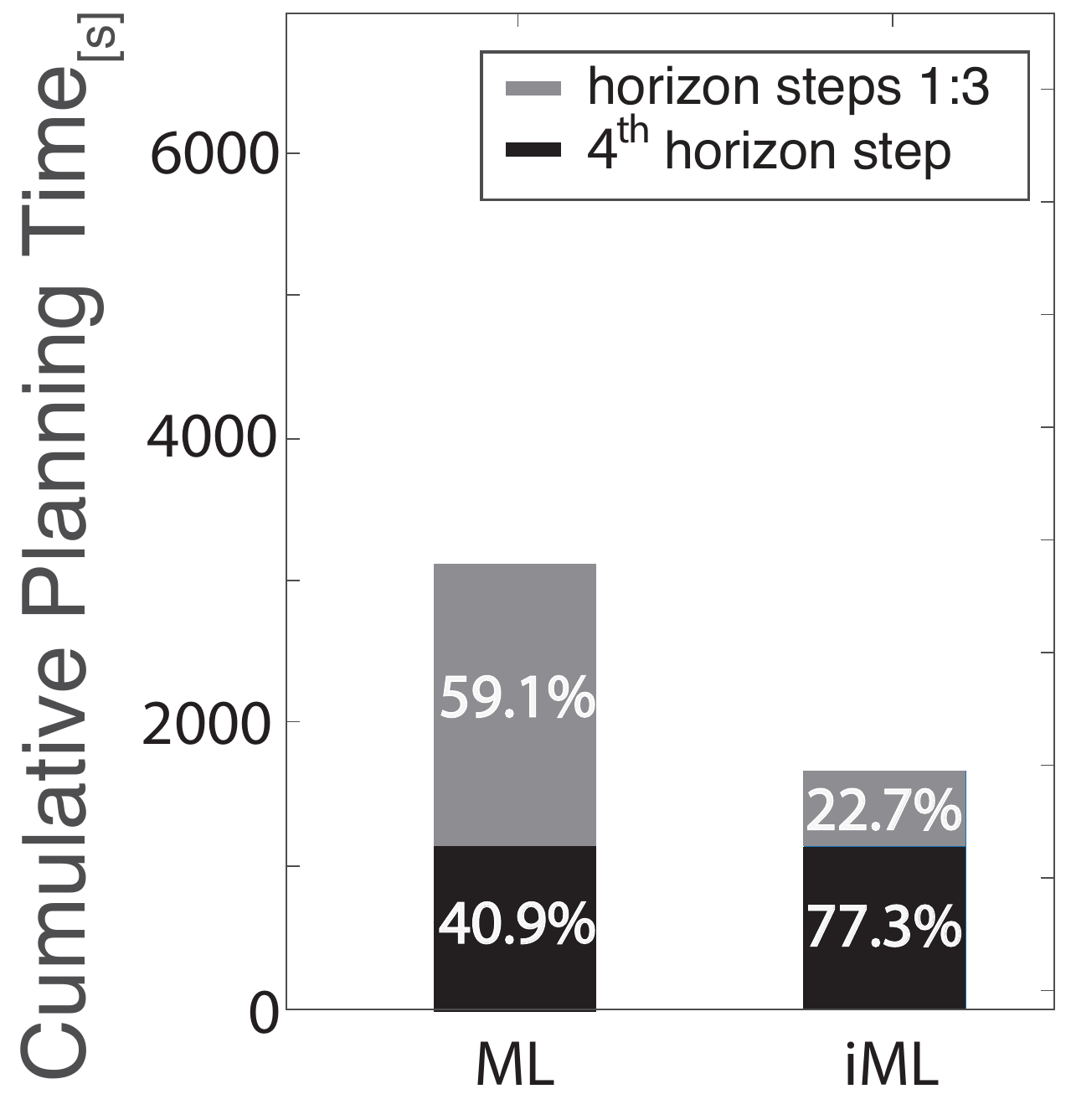}\label{fig:live:tiny:time:tot}}
		\subfloat[]{\includegraphics[trim={0 0 0 0},clip, width=0.58\columnwidth]{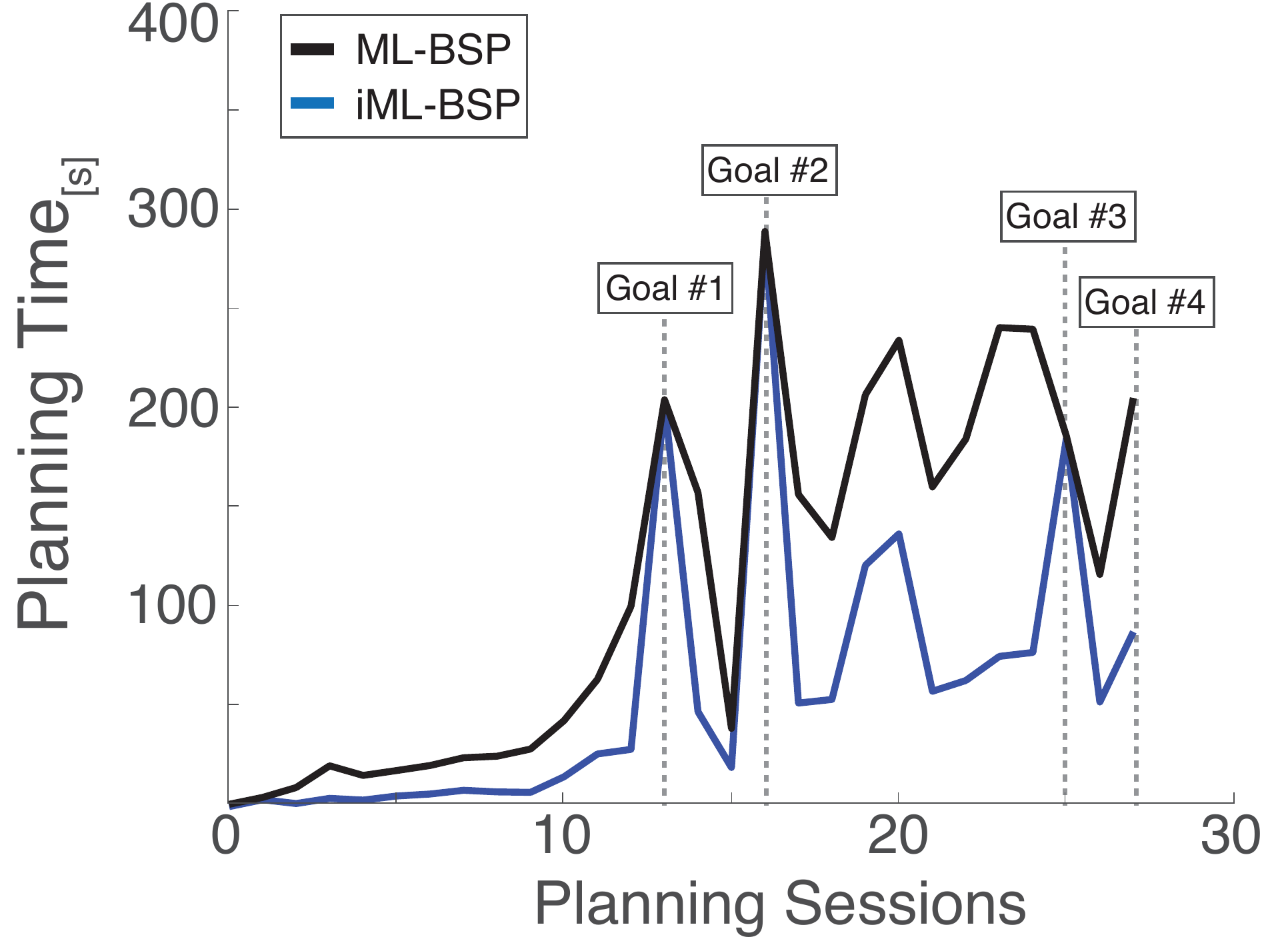}\label{fig:live:tiny:time:perStep}}
        \caption{Planning time results of the first live experiment using the Pioneer robot (a) The cumulative planning time of both \mlbsp and \imlbsp, divided into relative contributions of the first three planning horizon steps (denoted in gray) and the last planning horizon step (denoted in black). (b) Per planning session comparison of the computation time. The goals are marked over the planning session performed after reaching them.}
        \label{fig:live:tiny:time}
\end{figure}
\begin{figure}[H]
	\centering
		\subfloat[]{\includegraphics[trim={0 0 0 0},clip, width=0.5\columnwidth]{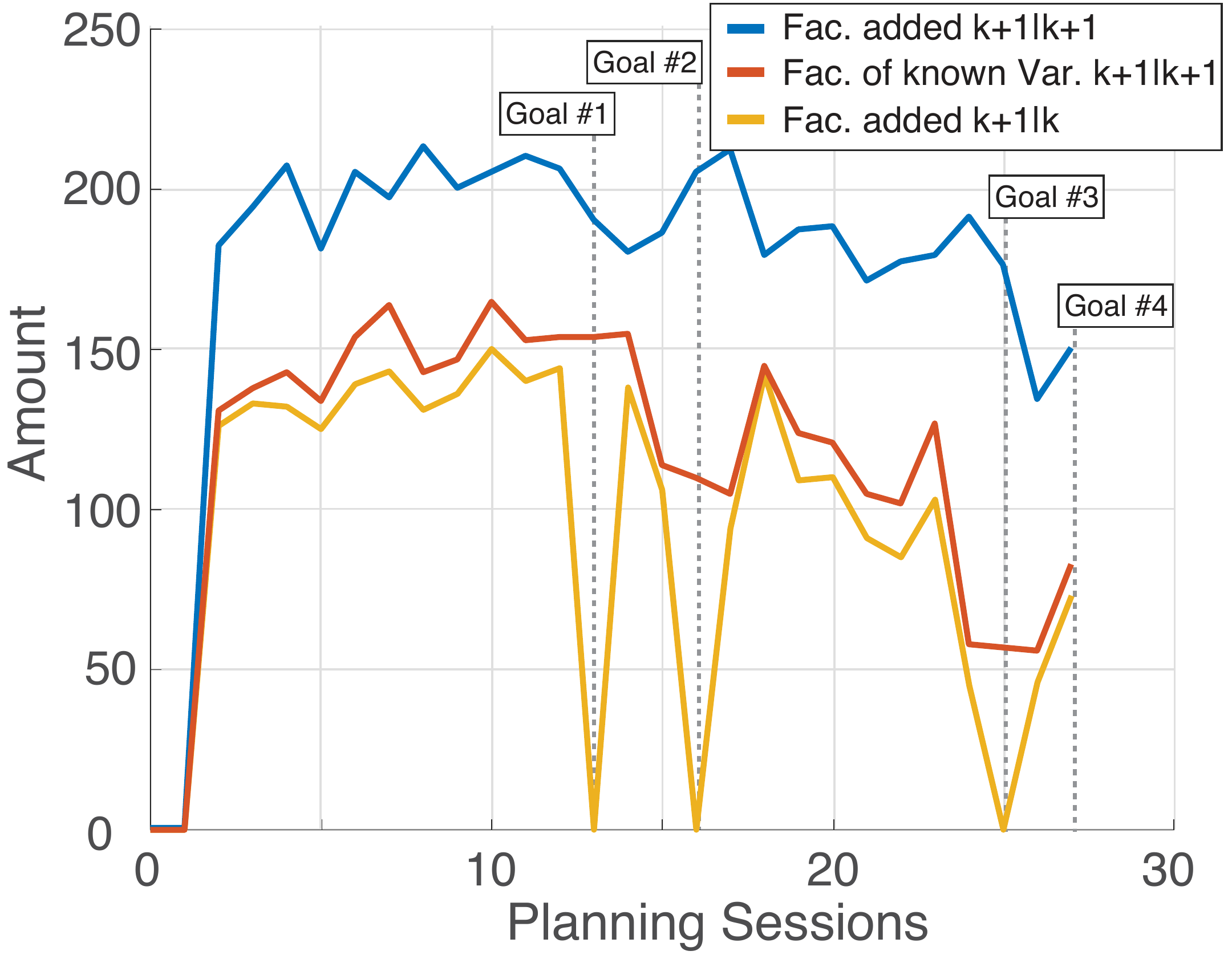}\label{fig:live:tiny:fac:count}}
		\subfloat[]{\includegraphics[trim={0 0 0 0},clip, width=0.5\columnwidth]{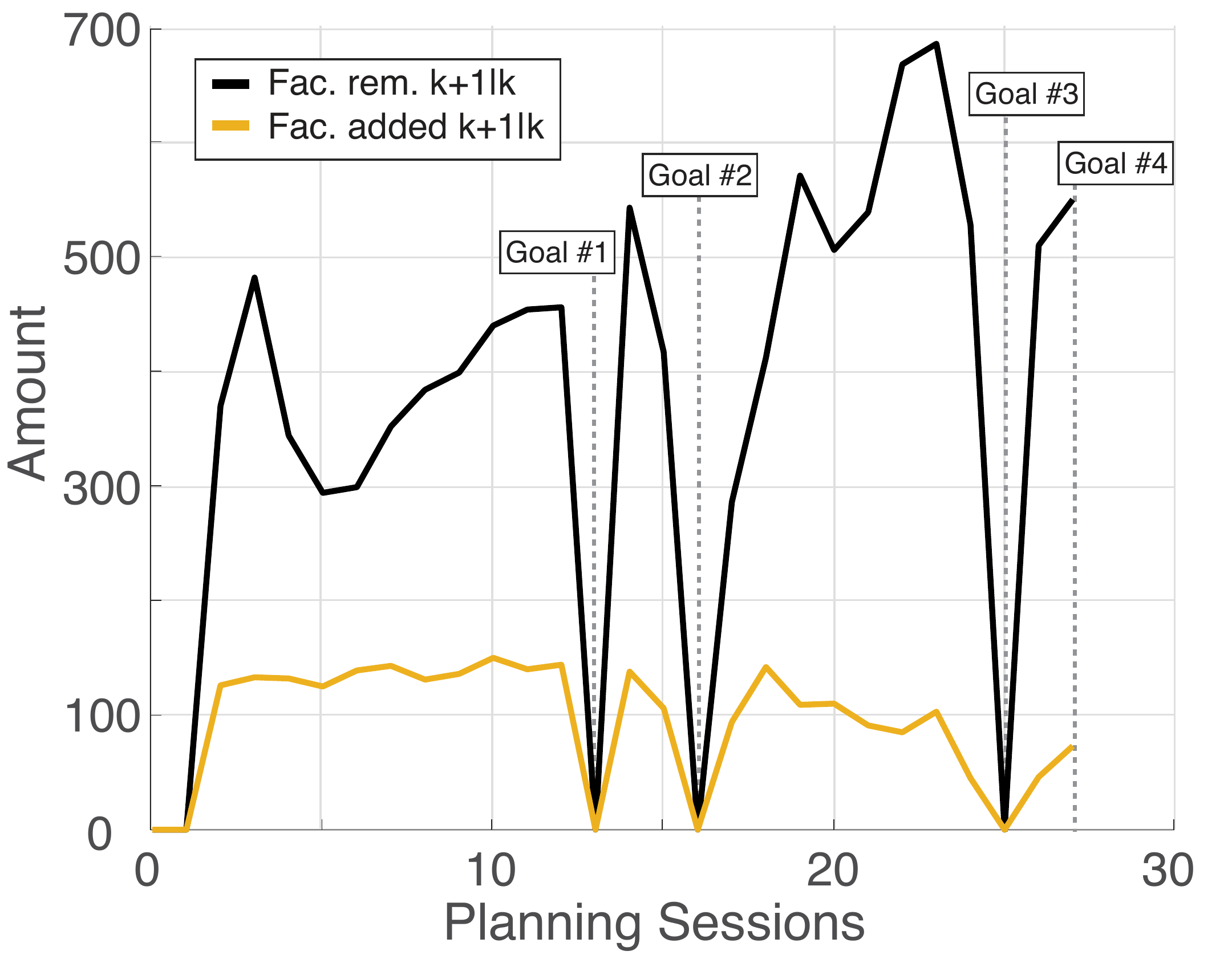}\label{fig:live:tiny:fac:rem}}
        \caption{Number of factors involved in each planning session of the first live experiment. (a) The number of factors added in the last inference update at $k+1|k+1$, denoted in blue. The portion of aforementioned factors which relate to existing states, denoted in red. The number of factors re-used from previously calculated planning tree, denoted in orange. (b) The number of factors re-used from previously calculated planning tree, denoted in orange. The number of factors removed from the previously calculated planning tree, denoted in black.}
        \label{fig:live:tiny:fac}
\end{figure}

Figure~\ref{fig:live:large:map} presents the estimated route of the second experiment, coursing through 3 goals along 148 meters under both \mlbsp and \imlbsp. In this experiment, like in the former, both \mlbsp and \imlbsp chose the same optimal action sequence at each planning session. Similar to Figure~\ref{fig:live:tiny:time}, Figure~\ref{fig:live:large:time} presents the timing results for the second experiment. Same as before we can see that while the computation time related to the last horizon step is identical between \mlbsp and \imlbsp, there is a considerable time reduction in the computation time related to the first three horizon steps, from constituting $54\%$ of the cumulative planning time in \mlbsp to just $25\%$ in \imlbsp.

In a similar manner Figure~\ref{fig:live:large:fac}, like Figure~\ref{fig:live:tiny:fac}, presents the sum of factors related to the second experiment. As before, we can see that the orange line is quite close to the red line (i.e. upper bound for factor re-use), but there is a considerable number of factor to remove each planning step (black line in Figure~\ref{fig:live:large:fac:rem}). As the second experiment provides us with the same insights over the comparison between \mlbsp and \imlbsp it essentially validates the results of the first experiment as well as the insights derived from it. 

\begin{figure}[H]
	\centering
		\includegraphics[trim={0 0 0 0},clip, width=0.95\columnwidth]{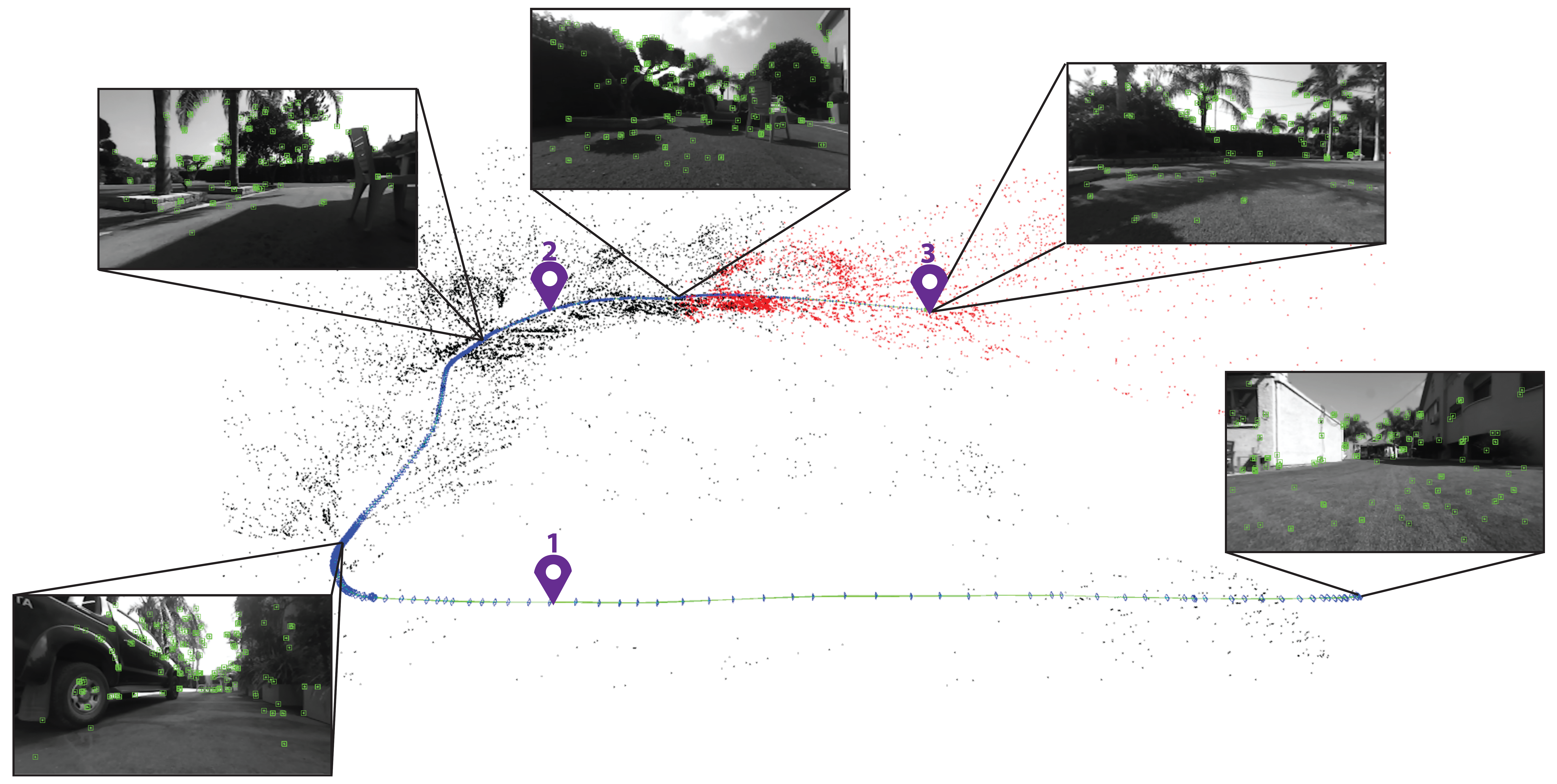}
        \caption{The second live experiment stretching across a 148 meter course, where the Pioneer robot was given 3 goals (numbered and denoted in purple) to reach. The state estimation of the robot in the form of keyframes (denoted in blue frames), robot trajectory (denoted in green) landmarks (denoted in black and red), and few snapshots along the rout containing the selected features. }
        \label{fig:live:large:map}
\end{figure}
\begin{figure}[H]
	\centering
		\subfloat[]{\includegraphics[trim={0 0 0 0},clip, width=0.42\columnwidth]{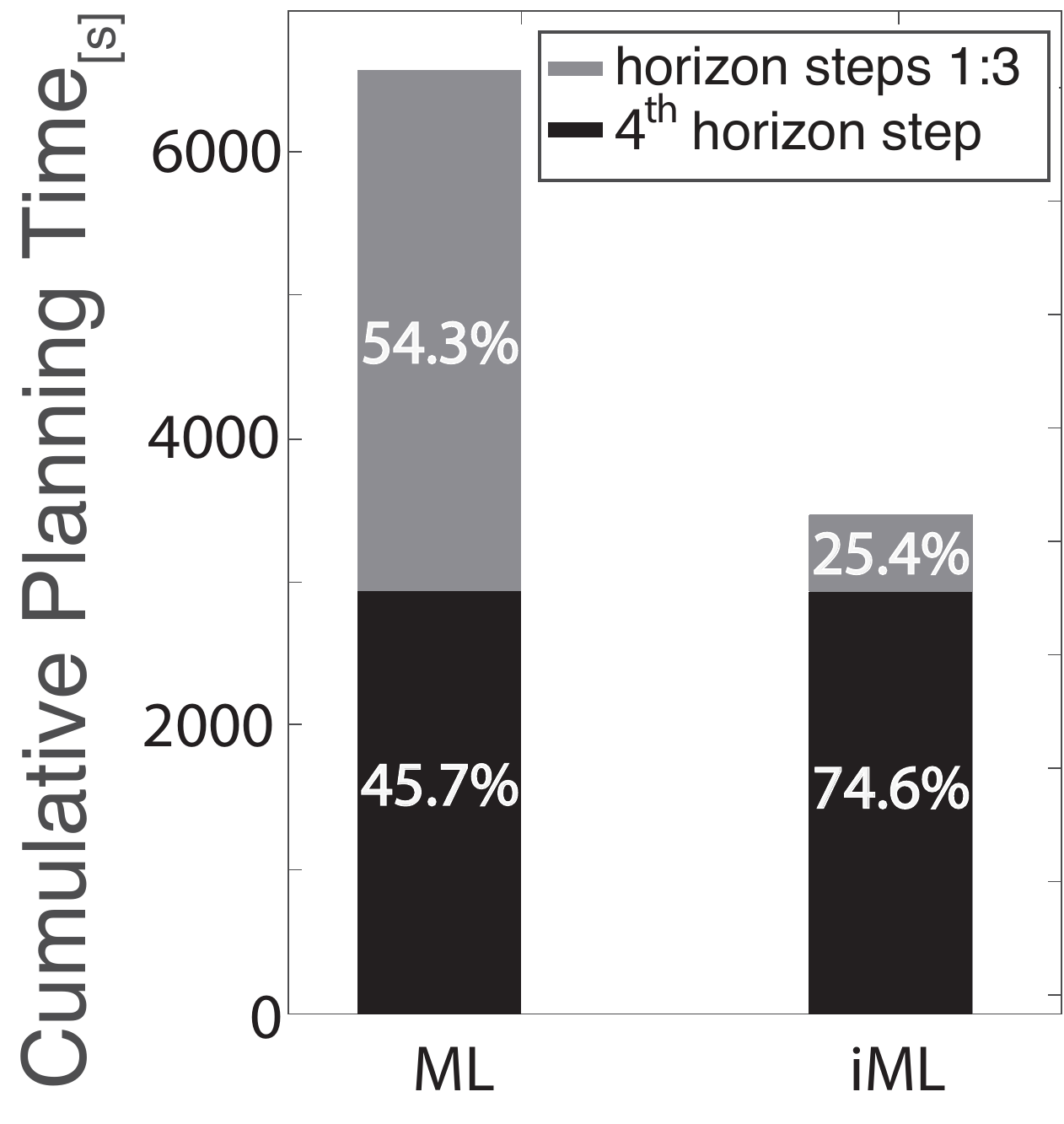}\label{fig:live:large:time:tot}}
		\subfloat[]{\includegraphics[trim={0 0 0 0},clip, width=0.58\columnwidth]{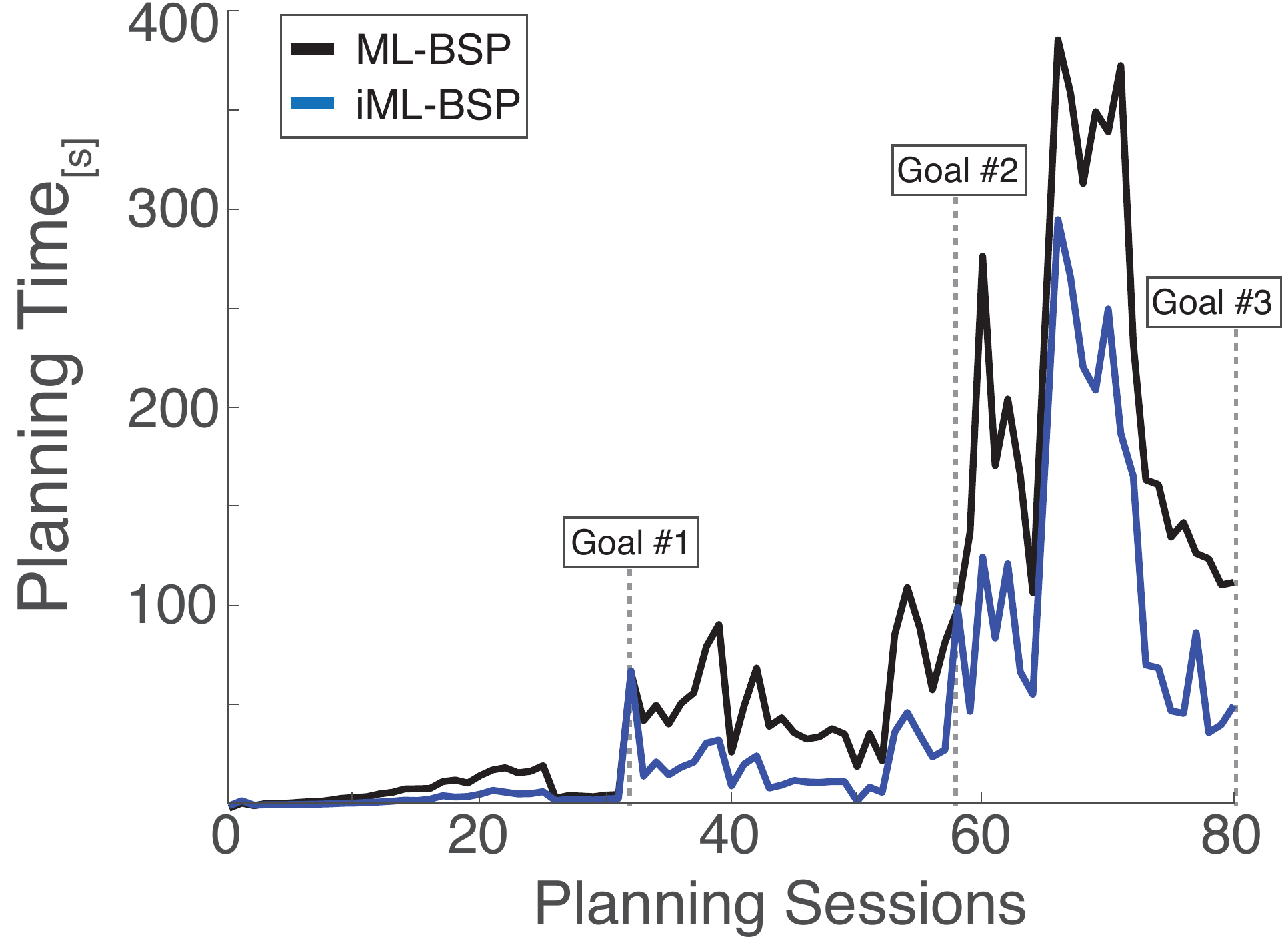}\label{fig:live:large:time:perStep}}
        \caption{Planning time results of the second live experiment using the Pioneer robot (a) The cumulative planning time of both \mlbsp and \imlbsp, divided into relative contributions of the first three planning horizon steps (denoted in gray) and the last planning horizon step (denoted in black). (b) Per planning session comparison of the computation time. The goals are marked over the planning session performed after reaching them.}
        \label{fig:live:large:time}
\end{figure}
\begin{figure}[H]
	\centering
		\subfloat[]{\includegraphics[trim={0 0 0 0},clip, width=0.5\columnwidth]{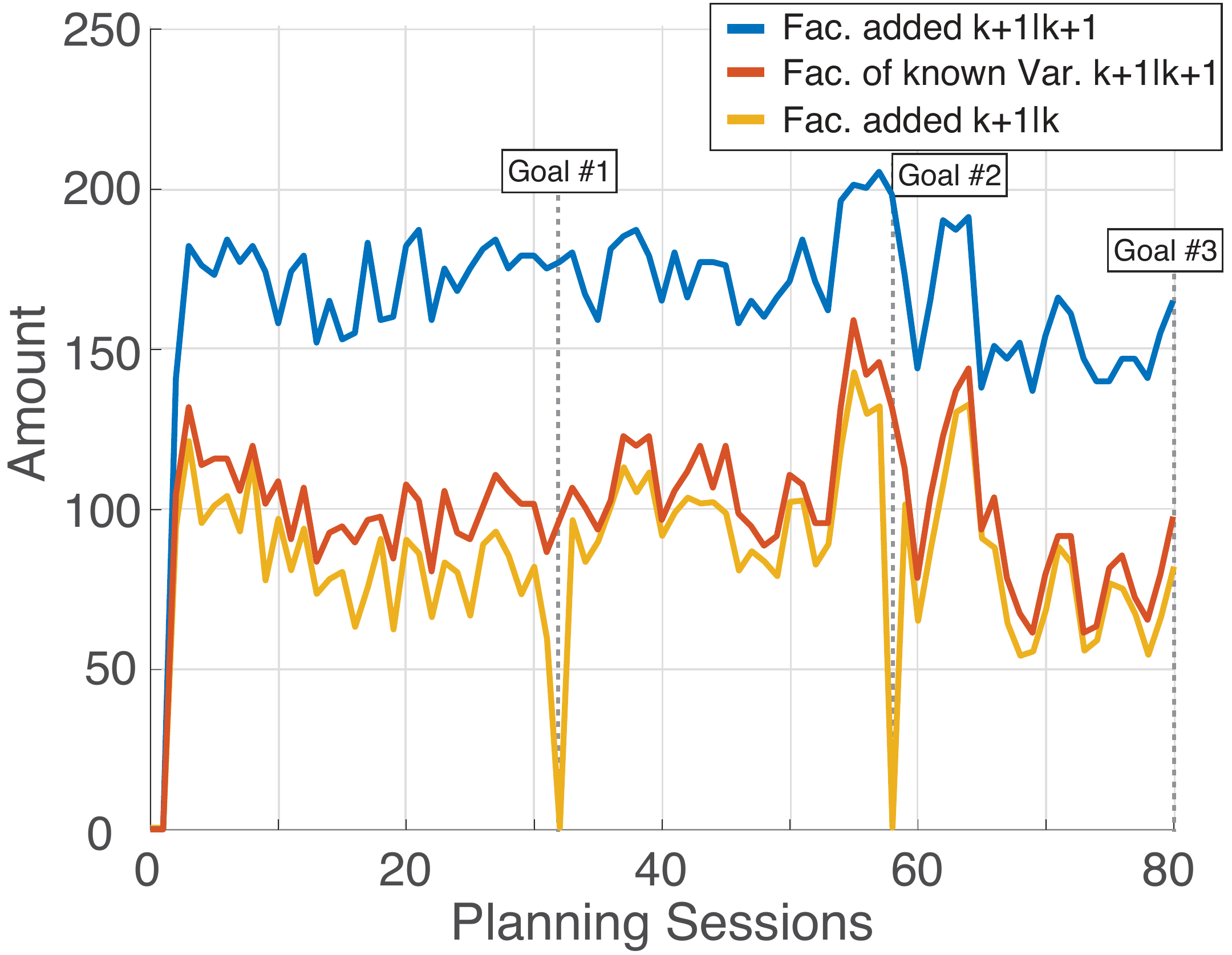}\label{fig:live:large:fac:count}}
		\subfloat[]{\includegraphics[trim={0 0 0 0},clip, width=0.5\columnwidth]{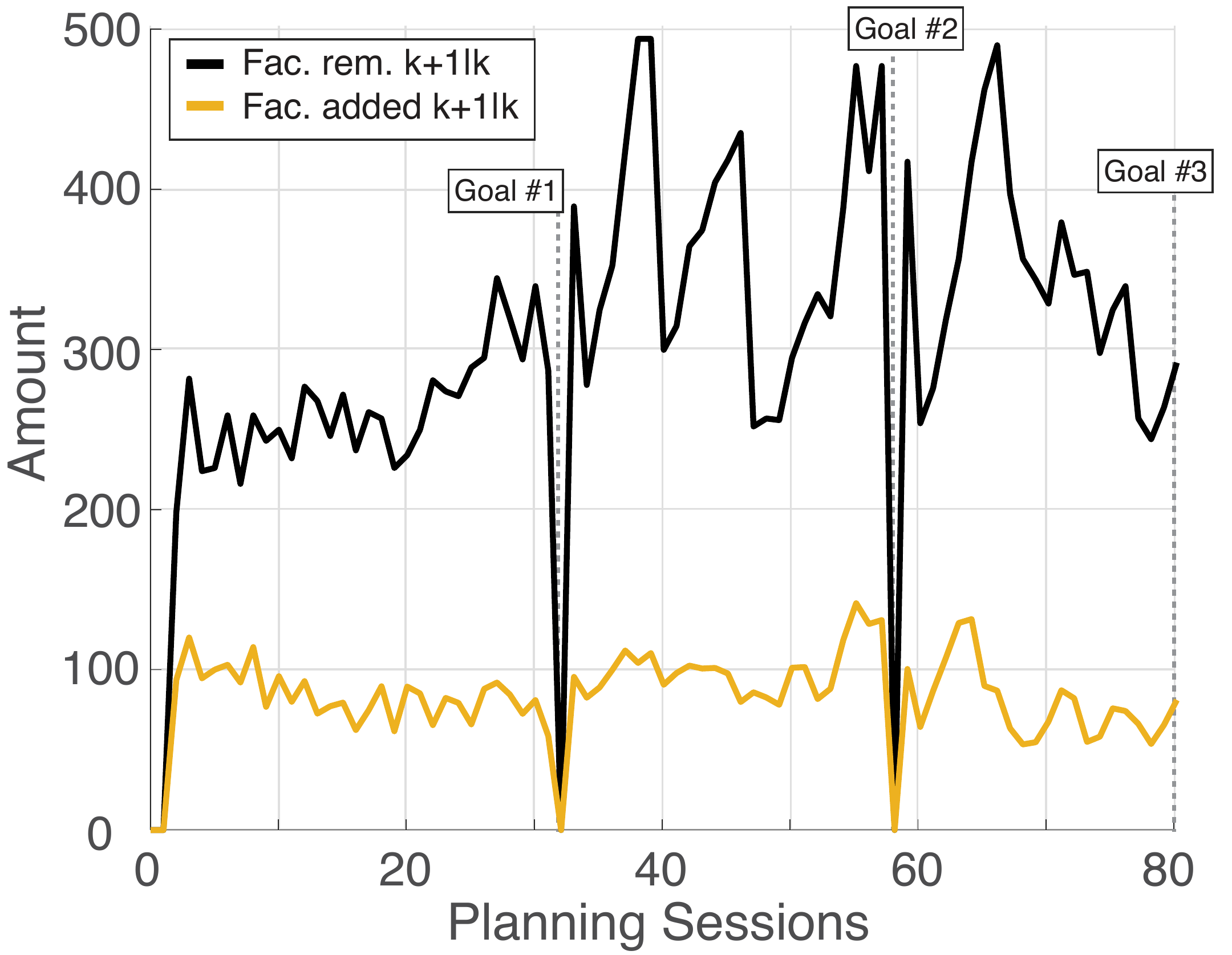}\label{fig:live:large:fac:rem}}
        \caption{Number of factors involved in each planning session of the second live experiment. (a) The number of factors added in the last inference update at $k+1|k+1$, denoted in blue. The portion of aforementioned factors which relate to existing states, denoted in red. The number of factors re-used from previously calculated planning tree, denoted in orange. (b) The number of factors re-used from previously calculated planning tree, denoted in orange. The number of factors removed from the previously calculated planning tree, denoted in black.}
        \label{fig:live:large:fac}
\end{figure}
%%

%\section{Acknowledgments}
%This work was partially supported by the Technion Autonomous Systems Program (TASP).

\vspace{-5pt}
% ======================
\section{Conclusions}
\label{sec:conclusions}
% what its all about
State of the art approaches under \fullbsp paradigm (BSP with expectation) calculate each planning session from scratch. In this paper we suggest to re-formulate the general problem of BSP using expectation, \fullbsp. We present \ibsp, incrementally calculating the expectation by utilizing previous planning sessions in order to solve the current planning session with a reduced computational effort and without affecting the solution accuracy.

% how the magic happens ibsp
By selectively re-using sampled measurements from previous planning sessions, we are able to avert from standard Bayesian inference as part of reward(cost) values thus reducing the computational effort. 
As the expectation in \ibsp is potentially considered over a set of samples taken from multiple different measurement likelihood distributions, we reformulate \fullbsp as a Multiple Importance Sampling (MIS) problem, thus statistically maintaining the solution accuracy. 
% ibsp and xbsp
Considering their stochastic nature, we evaluate \ibsp against \fullbsp in simulation considering active-SLAM as application, comparing both cumulative planning computation time and estimation accuracy upon reaching the goal. By considering different sampled ground-truth prior states we are able to show that \ibsp is statistically equal to \fullbsp whilst providing shorter computation time.
%By considering different sampled ground-truth prior states we are able to obtain statistical comparison between \ibsp and \fullbsp, showing reduction in the cumulative computation time in favor of \ibsp without affecting the estimation mean or covariance. 

% wf
In addition to providing with the full formulation of \ibsp, we introduce a non-integral approximation denoted as \wf, enabling one to trade accuracy for computation time by putting a threshold defining beliefs as "close enough" to be considered as identical. We also analyze, analytically and empirically, the effect \wf holds over the resulting objective value, as well as demonstrate using \wf in \ibsp.
%iml
Because \ibsp changes the solution approach of the original, un-approximated, problem (\fullbsp), we believe it can be utilized to also reduce computation time of existing approximations of \fullbsp. To support this claim we push further and show how it can be utilized to benefit an existing common approximation of \fullbsp. We present \imlbsp which provides reduced computation time compared to \mlbsp, while obtaining the same estimation accuracy. We demonstrate the performance advantage \imlbsp holds over \mlbsp in both simulation and real-world experiments considering vision-based active-SLAM in previously unknown uncertain environment and high dimensional state space.

In contrast to common research directions dealing with approximations of the \fullbsp problem, this paper tackles the un-approximated formulation of \fullbsp and suggests to improve it by considering calculation re-use across planning sessions, thus enabling to reduce \fullbsp computation time without affecting accuracy. As \ibsp is equivalent to \fullbsp, we claim the existing approximations of \fullbsp could benefit from the \ibsp formulation, as demonstrated by considering the ML approximation \mlbsp and the resulting \imlbsp.
%This paper suggests a novel general concept for re-using calculations across planning sessions, thus enabling to reduce BSP computation time without affecting accuracy. 
% future work
While demonstrated here only on data from a precursory planning session, using the same formulation it can be easily shown that \ibsp can selectively re-use any supplied data, from a set of offline calculated beliefs to planning sessions of other agents, all while maintaining the same estimation accuracy as well as the computational advantage.
The performance of \ibsp can be further improved without introducing any approximations, e.g. by using a more sophisticated prediction mechanism to reduce the number of removed factors, or by using a mechanism to predict factors related to new states.

\vspace{-10pt}
\bibliographystyle{IEEEtran}
\bibliography{paper.bib}

\begin{thebibliography}{42}
\providecommand{\natexlab}[1]{#1}
\providecommand{\url}[1]{\texttt{#1}}
\expandafter\ifx\csname urlstyle\endcsname\relax
  \providecommand{\doi}[1]{doi: #1}\else
  \providecommand{\doi}{doi: \begingroup \urlstyle{rm}\Url}\fi

\bibitem[Agha-Mohammadi et~al.(2014)Agha-Mohammadi, Chakravorty, and
  Amato]{AghaMohammadi14ijrr}
A.-A. Agha-Mohammadi, S.~Chakravorty, and N.~M. Amato.
\newblock {FIRM}: Sampling-based feedback motion planning under motion
  uncertainty and imperfect measurements.
\newblock \emph{Intl. J. of Robotics Research}, 33\penalty0 (2):\penalty0
  268--304, 2014.

\bibitem[Bry and Roy(2011)]{Bry11icra}
A.~Bry and N.~Roy.
\newblock Rapidly-exploring random belief trees for motion planning under
  uncertainty.
\newblock In \emph{IEEE Intl. Conf. on Robotics and Automation (ICRA)}, pages
  723--730, 2011.

\bibitem[Chaves and Eustice(2016)]{Chaves16iros}
S.~M. Chaves and R.~M. Eustice.
\newblock Efficient planning with the {Bayes} tree for active {SLAM}.
\newblock In \emph{Intelligent Robots and Systems (IROS), 2016 IEEE/RSJ
  International Conference on}, pages 4664--4671. IEEE, 2016.

\bibitem[Coulom(2006)]{Coulom06iccg}
R.~Coulom.
\newblock Efficient selectivity and backup operators in monte-carlo tree
  search.
\newblock In \emph{International conference on computers and games}, pages
  72--83. Springer, 2006.

\bibitem[Cover and Thomas(2006)]{cover06book}
T.~M. Cover and J.~A. Thomas.
\newblock \emph{Elements of information theory}.
\newblock John Wiley \& Sons, 2006.

\bibitem[Endres and Schindelin(2003)]{Endres03tit}
D.~M. Endres and J.~E. Schindelin.
\newblock A new metric for probability distributions.
\newblock \emph{IEEE Transactions on Information theory}, 2003.

\bibitem[Farhi and Indelman(2017)]{Farhi17icra}
E.~I. Farhi and V.~Indelman.
\newblock Towards efficient inference update through planning via jip - joint
  inference and belief space planning.
\newblock In \emph{IEEE Intl. Conf. on Robotics and Automation (ICRA)}, 2017.

\bibitem[Farhi and Indelman(2019{\natexlab{a}})]{Farhi19arxiv}
E.~I. Farhi and V.~Indelman.
\newblock Bayesian incremental inference update by re-using calculations from
  belief space planning: A new paradigm.
\newblock \emph{arXiv preprint arXiv:1908.02002}, 2019{\natexlab{a}}.

\bibitem[Farhi and Indelman(2019{\natexlab{b}})]{Farhi19icra}
E.~I. Farhi and V.~Indelman.
\newblock ix-bsp: Belief space planning through incremental expectation.
\newblock In \emph{IEEE Intl. Conf. on Robotics and Automation (ICRA)}, May
  2019{\natexlab{b}}.

\bibitem[Garg et~al.(2019)Garg, Hsu, and Lee]{Garg19rss}
N.~P. Garg, D.~Hsu, and W.~S. Lee.
\newblock Despot-$\alpha$: Online pomdp planning with large state and
  observation spaces.
\newblock In \emph{Robotics: Science and Systems (RSS)}, 2019.

\bibitem[Gilbarg and Trudinger(2015)]{Gilbarg15book}
D.~Gilbarg and N.~S. Trudinger.
\newblock \emph{Elliptic partial differential equations of second order}.
\newblock springer, 2015.

\bibitem[Glynn and Iglehart(1989)]{Glynn89ms}
P.~W. Glynn and D.~L. Iglehart.
\newblock Importance sampling for stochastic simulations.
\newblock \emph{Management Science}, 35\penalty0 (11):\penalty0 1367--1392,
  1989.

\bibitem[Hollinger and Sukhatme(2014)]{Hollinger14ijrr}
G.~A. Hollinger and G.~S. Sukhatme.
\newblock Sampling-based robotic information gathering algorithms.
\newblock \emph{Intl. J. of Robotics Research}, pages 1271--1287, 2014.

\bibitem[Indelman et~al.(2015)Indelman, Carlone, and Dellaert]{Indelman15ijrr}
V.~Indelman, L.~Carlone, and F.~Dellaert.
\newblock Planning in the continuous domain: a generalized belief space
  approach for autonomous navigation in unknown environments.
\newblock \emph{Intl. J. of Robotics Research}, 34\penalty0 (7):\penalty0
  849--882, 2015.

\bibitem[Jeffreys(1946)]{jeffreys46rs}
H.~Jeffreys.
\newblock An invariant form for the prior probability in estimation problems.
\newblock \emph{Proceedings of the Royal Society of London. Series A.
  Mathematical and Physical Sciences}, 186\penalty0 (1007):\penalty0 453--461,
  1946.

\bibitem[Kaess et~al.(2010)Kaess, Ila, Roberts, and Dellaert]{Kaess10wafr}
M.~Kaess, V.~Ila, R.~Roberts, and F.~Dellaert.
\newblock The {B}ayes tree: An algorithmic foundation for probabilistic robot
  mapping.
\newblock In \emph{Intl. Workshop on the Algorithmic Foundations of Robotics},
  Dec 2010.

\bibitem[Karaman and Frazzoli(2011)]{Karaman11ijrr}
S.~Karaman and E.~Frazzoli.
\newblock Sampling-based algorithms for optimal motion planning.
\newblock \emph{Intl. J. of Robotics Research}, 30\penalty0 (7):\penalty0
  846--894, 2011.

\bibitem[Karaman et~al.(2011)Karaman, Walter, Perez, Frazzoli, and
  Teller]{Karaman11icra}
S.~Karaman, M.~R. Walter, A.~Perez, E.~Frazzoli, and S.~Teller.
\newblock Anytime motion planning using the rrt*.
\newblock In \emph{IEEE Intl. Conf. on Robotics and Automation (ICRA)}. IEEE,
  2011.

\bibitem[Kavraki et~al.(1996)Kavraki, Svestka, Latombe, and
  Overmars]{Kavraki96tra}
L.~Kavraki, P.~Svestka, J.-C. Latombe, and M.~Overmars.
\newblock Probabilistic roadmaps for path planning in high-dimensional
  configuration spaces.
\newblock \emph{{IEEE} Trans. Robot. Automat.}, 12\penalty0 (4):\penalty0
  566--580, 1996.

\bibitem[Kim and Eustice(2014)]{Kim14ijrr}
A.~Kim and R.~M. Eustice.
\newblock Active visual {SLAM} for robotic area coverage: Theory and
  experiment.
\newblock \emph{Intl. J. of Robotics Research}, 34\penalty0 (4-5):\penalty0
  457--475, 2014.

\bibitem[Kopitkov and Indelman(2017)]{Kopitkov17ijrr}
D.~Kopitkov and V.~Indelman.
\newblock No belief propagation required: Belief space planning in
  high-dimensional state spaces via factor graphs, matrix determinant lemma and
  re-use of calculation.
\newblock \emph{Intl. J. of Robotics Research}, 36\penalty0 (10):\penalty0
  1088--1130, August 2017.

\bibitem[Kopitkov and Indelman(2019)]{Kopitkov19ijrr}
D.~Kopitkov and V.~Indelman.
\newblock General purpose incremental covariance update and efficient belief
  space planning via factor-graph propagation action tree.
\newblock \emph{Intl. J. of Robotics Research}, 2019.

\bibitem[Kurniawati and Yadav(2016)]{Kurniawati16chapter}
H.~Kurniawati and V.~Yadav.
\newblock An online pomdp solver for uncertainty planning in dynamic
  environment.
\newblock In \emph{Robotics Research}, pages 611--629. Springer, 2016.

\bibitem[Kurniawati et~al.(2008)Kurniawati, Hsu, and Lee]{Kurniawati08rss}
H.~Kurniawati, D.~Hsu, and W.~S. Lee.
\newblock {SARSOP}: Efficient point-based {POMDP} planning by approximating
  optimally reachable belief spaces., 2008.

\bibitem[LaValle and Kuffner(2001)]{Lavalle01ijrr}
S.~M. LaValle and J.~J. Kuffner.
\newblock Randomized kinodynamic planning.
\newblock \emph{Intl. J. of Robotics Research}, 20\penalty0 (5):\penalty0
  378--400, 2001.

\bibitem[Luo et~al.(2016)Luo, Bai, Hsu, and Lee]{Luo16ijrr}
Y.~Luo, H.~Bai, D.~Hsu, and W.~S. Lee.
\newblock Importance sampling for online planning under uncertainty.
\newblock \emph{IJRR}, page 0278364918780322, 2016.

\bibitem[Luo et~al.(2019)Luo, Bai, Hsu, and Lee]{Luo19ijrr}
Y.~Luo, H.~Bai, D.~Hsu, and W.~S. Lee.
\newblock Importance sampling for online planning under uncertainty.
\newblock \emph{Intl. J. of Robotics Research}, 38\penalty0 (2-3):\penalty0
  162--181, 2019.

\bibitem[Mathai and Provost(1992)]{Mathai92quadratic}
A.~M. Mathai and S.~B. Provost.
\newblock \emph{Quadratic forms in random variables: theory and applications}.
\newblock Dekker, 1992.

\bibitem[Papadimitriou and Tsitsiklis(1987)]{Papadimitriou87math}
C.~Papadimitriou and J.~Tsitsiklis.
\newblock The complexity of markov decision processes.
\newblock \emph{Mathematics of operations research}, 12\penalty0 (3):\penalty0
  441--450, 1987.

\bibitem[Pineau et~al.(2006)Pineau, Gordon, and Thrun]{Pineau06jair}
J.~Pineau, G.~J. Gordon, and S.~Thrun.
\newblock Anytime point-based approximations for large {POMDP}s.
\newblock \emph{J. of Artificial Intelligence Research}, 27:\penalty0 335--380,
  2006.

\bibitem[Platt et~al.(2010)Platt, Tedrake, Kaelbling, and
  Lozano-P\'erez]{Platt10rss}
R.~Platt, R.~Tedrake, L.~Kaelbling, and T.~Lozano-P\'erez.
\newblock Belief space planning assuming maximum likelihood observations.
\newblock In \emph{Robotics: Science and Systems (RSS)}, pages 587--593,
  Zaragoza, Spain, 2010.

\bibitem[Platt et~al.(2011)Platt, Kaelbling, Lozano-Perez, and
  Tedrake]{Platt11isrr}
R.~Platt, L.~Kaelbling, T.~Lozano-Perez, and R.~Tedrake.
\newblock Efficient planning in non-gaussian belief spaces and its application
  to robot grasping.
\newblock In \emph{Proc. of the Intl. Symp. of Robotics Research (ISRR)}, 2011.

\bibitem[Prentice and Roy(2009)]{Prentice09ijrr}
S.~Prentice and N.~Roy.
\newblock The belief roadmap: Efficient planning in belief space by factoring
  the covariance.
\newblock \emph{Intl. J. of Robotics Research}, 28\penalty0 (11-12):\penalty0
  1448--1465, 2009.

\bibitem[Richter et~al.(2015)Richter, Vega-Brown, and Roy]{Richter15isrr}
C.~Richter, W.~Vega-Brown, and N.~Roy.
\newblock Bayesian learning for safe high-speed navigation in unknown
  environments.
\newblock In \emph{Proc. of the Intl. Symp. of Robotics Research (ISRR)}, 2015.

\bibitem[Silver and Veness(2010)]{Silver10nips}
D.~Silver and J.~Veness.
\newblock Monte-carlo planning in large pomdps.
\newblock In \emph{Advances in Neural Information Processing Systems (NIPS)},
  pages 2164--2172, 2010.

\bibitem[Stachniss et~al.(2004)Stachniss, Haehnel, and
  Burgard]{Stachniss04iros}
C.~Stachniss, D.~Haehnel, and W.~Burgard.
\newblock Exploration with active loop-closing for {FastSLAM}.
\newblock In \emph{IEEE/RSJ Intl. Conf. on Intelligent Robots and Systems
  (IROS)}, 2004.

\bibitem[Toussaint(2009)]{Toussaint09icml}
M.~Toussaint.
\newblock Robot trajectory optimization using approximate inference.
\newblock In \emph{Intl. Conf. on Machine Learning (ICML)}, pages 1049--1056.
  ACM, 2009.

\bibitem[Valencia et~al.(2013)Valencia, Morta, Andrade-Cetto, and
  Porta]{Valencia13tro}
R.~Valencia, M.~Morta, J.~Andrade-Cetto, and J.~Porta.
\newblock Planning reliable paths with pose {SLAM}.
\newblock \emph{{IEEE} Trans. Robotics}, pages 1050--1059, 2013.

\bibitem[Van Den~Berg et~al.(2012)Van Den~Berg, Patil, and
  Alterovitz]{VanDenBerg12ijrr}
J.~Van Den~Berg, S.~Patil, and R.~Alterovitz.
\newblock Motion planning under uncertainty using iterative local optimization
  in belief space.
\newblock \emph{Intl. J. of Robotics Research}, 31\penalty0 (11):\penalty0
  1263--1278, 2012.

\bibitem[Veach and Guibas(1995)]{Veach95siggraph}
E.~Veach and L.~J. Guibas.
\newblock Optimally combining sampling techniques for monte carlo rendering.
\newblock In \emph{Proceedings of the 22nd annual conference on Computer
  graphics and interactive techniques}, pages 419--428. ACM, 1995.

\bibitem[Woodbury(1950)]{Woodbury1950mr}
M.~A. Woodbury.
\newblock Inverting modified matrices.
\newblock \emph{Memorandum report}, 42\penalty0 (106):\penalty0 336, 1950.

\bibitem[Ye et~al.(2017)Ye, Somani, Hsu, and Lee]{Ye17jair}
N.~Ye, A.~Somani, D.~Hsu, and W.~S. Lee.
\newblock Despot: Online pomdp planning with regularization.
\newblock \emph{JAIR}, 58:\penalty0 231--266, 2017.

\end{thebibliography}
\appendix
\section{Multiple Importance Sampling}
\label{app:IS}
Let us assume we wish to express expectation over some function $f(x)$ with respect to distribution $p(x)$, by sampling $x$ from a different distribution $q(x)$,
\begin{equation}\label{eq:basicIM}
	\mathbb{E}_p f(x) = \int f(x)\cdot p(x) dx = \int \frac{f(x)\cdot p(x)}{q(x)}q(x) dx = \mathbb{E}_q \left( \frac{f(x)\cdot p(x)}{q(x)} \right).
\end{equation}
Eq.~(\ref{eq:basicIM}) presents the basic importance sampling problem, where $\mathbb{E}_q$ denotes expectation for $x \sim q(x)$. The probability ratio between the nominal distribution $p$ and the importance sampling distribution $q$ is usually referred to as the likelihood ratio. Our problem is more complex, since our samples are potentially taken from $M$ different distributions while $M \in [1,(n_x \!\cdot\! n_z)^L]$, i.e. a multiple importance sampling problem 
\begin{equation}\label{eq:MIM}
	\mathbb{E}_p f(x) = \tilde{\mu}(x) \sim \sum^{M}_{m=1} \frac{1}{n_m} \sum^{n_m}_{i=1} w_m(x_{im})\frac{f(x_{im}) p(x_{im})}{q_m(x_{im})},
\end{equation}
where $w_m(.)$ are weight functions satisfying $\Sigma^{M}_{m=1}w_m(x)=1 $, $n_m$ denotes the number of samples from the $M_{th}$ distribution. For ${q_m(x)>0}$ whenever $w_m(x)p(x)f(x)\neq0$, Eq.~(\ref{eq:MIM}) forms an unbiased estimator
\begin{equation}\label{eq:unbiasedMIM}
	\mathbb{E} \left[\tilde{\mu}(x) \right]  = \sum^{M}_{m=1} \mathbb{E}_{q_m} \left[\frac{1}{n_m} \sum^{n_m}_{i=1} w_m(x_{im})\frac{f(x_{im}) p(x_{im})}{q_m(x_{im})} \right] = \tilde{\mu}(x).
\end{equation}
Although there are numerous options for weight functions satisfying $\Sigma^{M}_{m=1}w_m(x)=1 $, we chose to consider the Balance Heuristic \cite{Veach95siggraph}, considered to be nearly optimal in the sense of estimation variance \citep[Theorem~1]{Veach95siggraph},
\begin{equation}\label{eq:balanceHeuristic}
	w_m(x) = w^{BH}_m(x) = \frac{n_m q_m(x)}{\Sigma^{M}_{s=1} n_s q_s(x)}. 
\end{equation}
Using (\ref{eq:balanceHeuristic}) in (\ref{eq:MIM}) produces the multiple importance sampling with the balance heuristic
\begin{equation}\label{eq:MIMBH}
	\mathbb{E}_p f(x) \sim \frac{1}{n} \sum^{M}_{m=1} \sum^{n_m}_{i=1} \frac{ p(x_{im})}{\Sigma^{M}_{s=1} \frac{n_s}{n} q_s(x_{im})}f(x_{im}).
\end{equation}
\section{The \JD Distance}
\label{app:Distance}
% Intro
In this work we are required to make use of a probability density function (pdf) distance. After some consideration we chose to use \JD, which is a variant of the Jeffreys divergence $\mathbb{D}_J$ first suggested in \cite{jeffreys46rs},
\begin{equation}\label{eq:metricDef}
	\JD(P,Q) = \sqrt{\frac{1}{2}\mathbb{D}_J } = \sqrt{\frac{1}{2}\mathbb{D}_{KL}(P||Q) + \frac{1}{2}\mathbb{D}_{KL}(Q||P)},
\end{equation}
where $P$ and $Q$ are probability density functions and $\mathbb{D}_{KL}(P||Q)$ is the Kullback-Leibler(KL) divergence.

% on KLD
The Kullback-Leibler(KL) divergence, sometime referred to as relative entropy, measures how well some distribution $Q$ approximates distribution $P$, or in other words how much information will be lost if one considers distribution $Q$ instead of $P$. The KL divergence is not a metric (asymetric) and is given by
\begin{equation}\label{eq:kldDef}
	\mathbb{D}_{KL}(P||Q) = \int P \cdot log\frac{P}{Q} = \mathbb{E}_{P} \left[ log P - log Q\right].
\end{equation}
From a view point of Bayesian Inference, as explained in \cite{Endres03tit}, the $\mathbb{D}_{KL}(P||Q)$ metric can be interpreted as twice the expected information gain when deciding between $P$ and $Q$ given a uniform prior over them.

For the special case of Gaussian distributions, we can express $\mathbb{D}_{KL}(P||Q)$ and consequently $\JD(P,Q)$ in terms of means and covariances. Let us consider two multivariate Gaussian distributions $P\sim\mathcal{N}(\mu_p,\Sigma_p)$ and $Q\sim\mathcal{N}(\mu_q,\Sigma_q)$ in $\mathbb{R}^d$.  
\begin{align}
	\nonumber
	\mathbb{D}_{KL}(P||Q) &= \mathbb{E}_{P} \left[ log P - log Q\right] \\
	\nonumber
	&= \frac{1}{2}\mathbb{E}_{P} \left[ -log \vert \Sigma_p \vert -  (x - \mu_p)^T \Sigma_p^{-1} (x - \mu_p) + log \vert \Sigma_q \vert +  (x - \mu_q)^T \Sigma_q^{-1} (x - \mu_q) \right]\\
	\nonumber
	&= \frac{1}{2}log\frac{\vert \Sigma_q \vert}{\vert \Sigma_p \vert} + \frac{1}{2}\mathbb{E}_{P} \left[ -  (x - \mu_p)^T \Sigma_p^{-1} (x - \mu_p) +  (x - \mu_q)^T \Sigma_q^{-1} (x - \mu_q) \right]\\
	\nonumber
	&= \frac{1}{2}log\frac{\vert \Sigma_q \vert}{\vert \Sigma_p \vert} + \frac{1}{2}\mathbb{E}_{P} \left[ - tr( \Sigma_p^{-1}(x - \mu_p)^T(x - \mu_p)) +  tr( \Sigma_q^{-1}(x - \mu_q)^T(x - \mu_q)) \right]\\
	\nonumber
	&= \frac{1}{2}log\frac{\vert \Sigma_q \vert}{\vert \Sigma_p \vert} - \frac{1}{2}tr( \Sigma_p^{-1}\Sigma_p) + \frac{1}{2}\mathbb{E}_{P} \left[ tr\left( \Sigma_q^{-1}(xx^T-2x\mu_q^T+\mu_q\mu_q^T)\right) \right]\\
	\nonumber
	&= \frac{1}{2}log\frac{\vert \Sigma_q \vert}{\vert \Sigma_p \vert} - \frac{1}{2}d_p + \frac{1}{2} tr\left( \Sigma_q^{-1}(\Sigma_p + \mu_p\mu_p^T - 2\mu_p\mu_q^T + \mu_q\mu_q^T)\right)\\
	\nonumber
	&= \frac{1}{2} \left[ log\frac{\vert \Sigma_q \vert}{\vert \Sigma_p \vert} - d_p +  tr\left( \Sigma_q^{-1}\Sigma_p\right) + tr\left( \Sigma_q^{-1}(\mu_p - \mu_q)^T(\mu_p - \mu_q) \right)  \right]\\
	\label{eq:gaussKLD}
	&= \frac{1}{2} \left[ log\frac{\vert \Sigma_q \vert}{\vert \Sigma_p \vert} - d_p +  tr\left( \Sigma_q^{-1}\Sigma_p\right) + (\mu_p - \mu_q)^T\Sigma_q^{-1}(\mu_p - \mu_q) \right].
\end{align}
Substituting Eq.~(\ref{eq:gaussKLD}) in Eq.~(\ref{eq:metricDef}) we get the \JD representation for the multivariate Gaussian case, 
 \begin{multline*}
	\JD(P,Q)= \\ \sqrt{\frac{1}{4} \left[ log\frac{\vert \Sigma_q \vert}{\vert \Sigma_p \vert} - d_p-d_q +  tr\left( \Sigma_q^{-1}\Sigma_p\right) + (\mu_p - \mu_q)^T\left[\Sigma_q^{-1} + \Sigma_p^{-1} \right](\mu_p - \mu_q) + log\frac{\vert \Sigma_p \vert}{\vert \Sigma_q \vert} +  tr\left( \Sigma_p^{-1}\Sigma_q\right)  \right]}
\end{multline*}
 \begin{equation}\label{eq:gaussPQ}
	= \frac{1}{2}\sqrt{ (\mu_p - \mu_q)^T\left[\Sigma_q^{-1} + \Sigma_p^{-1} \right](\mu_p - \mu_q) +  tr\left( \Sigma_q^{-1}\Sigma_p\right) +  tr\left( \Sigma_p^{-1}\Sigma_q\right) - d_p-d_q }.
\end{equation}

\section{Proof of Theorem \ref{th:rewardBound}.}
\label{app:rewardBound}
\begin{lemma}\label{lm:L1toKL} 
For any two distributions $P$ and $Q$, and $\alpha \in (0,1]$ the $L_1^{\alpha}$ distance is bounded by the $KL$ divergence in the following manner
	\begin{equation}
		 \| P- Q \|_1^{\alpha} \leq \left[ 2 \cdot ln2 \cdot \mathbb{D}_{KL}(P\|Q) \right]^{\frac{\alpha}{2}}
	\end{equation}
\end{lemma}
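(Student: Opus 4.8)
The plan is to prove the stated bound $\| P - Q \|_1^{\alpha} \leq \left[ 2 \ln 2 \cdot \mathbb{D}_{KL}(P\|Q) \right]^{\alpha/2}$ by first establishing the $\alpha = 1$ case, which is precisely the (logarithm-base-2 flavoured) Pinsker inequality, and then raising both sides to the power $\alpha$. The key observation is that $\alpha \in (0,1]$ and the map $t \mapsto t^{\alpha}$ is monotone increasing on $[0,\infty)$, so any inequality $\| P - Q\|_1 \leq C$ with $C \geq 0$ immediately yields $\| P - Q\|_1^{\alpha} \leq C^{\alpha}$. Hence the whole content of the lemma reduces to the case $\alpha = 1$.

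First I would recall the standard Pinsker inequality in its natural-logarithm form: $\mathbb{D}_{KL}^{\mathrm{nat}}(P\|Q) \geq \tfrac{1}{2}\|P-Q\|_1^2$, where $\mathbb{D}_{KL}^{\mathrm{nat}}$ uses $\ln$. One proves this by the usual two-step argument — first for Bernoulli distributions via a one-variable calculus estimate of $f(p) = p\ln\frac{p}{q} + (1-p)\ln\frac{1-p}{1-q} - 2(p-q)^2$ (showing $f \geq 0$ by checking $f(q)=0$, $f'(q)=0$, $f'' \geq 0$), and then lifting to general $P,Q$ by the data-processing inequality applied to the partition into $\{x : P(x) \geq Q(x)\}$ and its complement, since $\|P-Q\|_1$ equals twice the total variation which is attained on that two-cell partition. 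I would then note that the paper's $\mathbb{D}_{KL}$, as written in~(\ref{eq:kldDef}), appears to use the same $\log$ convention throughout; if it is base-$2$ logarithm then $\mathbb{D}_{KL} = \mathbb{D}_{KL}^{\mathrm{nat}}/\ln 2$, so Pinsker becomes $\|P-Q\|_1^2 \leq 2\ln 2 \cdot \mathbb{D}_{KL}(P\|Q)$, which is exactly the squared form of the claimed bound at $\alpha = 1$.

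Taking square roots gives $\|P-Q\|_1 \leq \left[2\ln 2 \cdot \mathbb{D}_{KL}(P\|Q)\right]^{1/2}$, and then applying $(\cdot)^{\alpha}$ to both nonnegative sides yields
\begin{equation}
\|P-Q\|_1^{\alpha} \leq \left[2\ln 2 \cdot \mathbb{D}_{KL}(P\|Q)\right]^{\alpha/2},
\end{equation}
which is the assertion. The only genuine subtlety — and the step I expect to need the most care — is bookkeeping the logarithm base so that the constant $2\ln 2$ comes out correctly: the factor $\ln 2$ is there precisely to convert between the base-$2$ KL divergence used elsewhere in the paper (cf.\ the information-gain interpretation and the reward bound in Theorem~\ref{th:rewardBound}, where a matching $(4\ln 2)^{\alpha/2}$ appears) and the natural-log form in which Pinsker is cleanest to prove. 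Everything else is either a citation of Pinsker's inequality or the trivial monotonicity of $t \mapsto t^{\alpha}$.
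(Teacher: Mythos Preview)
Your proposal is correct and matches the paper's approach exactly: the paper simply cites Pinsker's inequality in its base-$2$ form (Lemma~11.6.1 of Cover \& Thomas), $\frac{1}{2\ln 2}\|P-Q\|_1^2 \le \mathbb{D}_{KL}(P\|Q)$, multiplies by $2\ln 2$, and raises both sides to the power $\alpha/2$. Your additional sketch of how Pinsker itself is proved and your careful bookkeeping of the logarithm base are more detail than the paper provides, but the underlying argument is the same.
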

\begin{proof}
	Following Lemma 11.6.1 in \cite{cover06book},
	\begin{equation}
		\frac{1}{2 \cdot lan2} \| P- Q \|_1^2 \leq \mathbb{D}_{KL}(P\|Q), 
	\end{equation}
	multiplying both sides by the positive constant $2 \cdot ln2$ and raising to the $\frac{\alpha}{2}$ power we get
	\begin{equation}
		 \| P- Q \|_1^{\alpha} \leq \left[ 2 \cdot ln2 \cdot \mathbb{D}_{KL}(P\|Q) \right]^{\frac{\alpha}{2}}.
	\end{equation}
\end{proof}

\begin{forceTheorem}{\ref{th:rewardBound}}[Bounded reward difference]
	Let $r(b,u)$ be \aH continuous with $\lambda_{\alpha}$ and $\alpha \in (0,1]$. Let $b$ and $b'$ denote two beliefs. Then the difference between $r(b,u)$ and $r(b',u)$ is bounded by
	\begin{equation}
		\mid r(b,u) - r(b',u) \mid \leq \left(4 \cdot ln 2\right)^{\frac{\alpha}{2}} \cdot \lambda_{\alpha} \cdot  \JD^{\alpha}(b,b').
	\end{equation}
	where 
	\begin{equation}
		\JD(b,b') = \sqrt{\frac{1}{2}\mathbb{D}_{KL}(b||b') + \frac{1}{2}\mathbb{D}_{KL}(b'||b)},
	\end{equation}
	and $\mathbb{D}_{KL}(.)$ is the KL divergence.
\end{forceTheorem}
\begin{proof}
	The reward function $r(b,u)$ is \aH continuous with $\lambda_{\alpha}$ and $\alpha$ so following Eq.~(4.4) in \cite{Gilbarg15book},  
	\begin{equation}
		\mid r(b,u) - r(b',u) \mid \leq \lambda_{\alpha} \cdot \| b - b' \|_1^{\alpha},
	\end{equation} 
	using Lemma~\ref{lm:L1toKL} we can rewrite the bound as
	\begin{equation}
		\mid r(b,u) - r(b',u) \mid \leq \left(4 \cdot ln 2\right)^{\frac{\alpha}{2}} \cdot \lambda_{\alpha} \cdot \left(\frac{1}{2}\mathbb{D}_{KL}(b\|b')\right)^{\frac{\alpha}{2}},
	\end{equation}
	adding a non-negative scalar to the right-most expression yields 
%	\EF{give motivation for adding this}
	%
	\begin{equation}
		\mid r(b,u) - r(b',u) \mid \leq \left(4 \cdot ln 2\right)^{\frac{\alpha}{2}} \cdot \lambda_{\alpha} \cdot  \left(\frac{1}{2}\mathbb{D}_{KL}(b\|b') + \frac{1}{2}\mathbb{D}_{KL}(b'\|b)\right)^{\frac{\alpha}{2}},
	\end{equation}
	and finally using Eq.~(\ref{eq:metricDef}) we get
	\begin{equation}
		\mid r(b,u) - r(b',u) \mid \leq \left(4 \cdot ln 2\right)^{\frac{\alpha}{2}} \cdot \lambda_{\alpha} \cdot  \JD^{\alpha}(b,b').
	\end{equation}
\end{proof}

\section{Proof of Corollary \ref{cor:weightedBound}.}
\label{app:weightedBound}
\begin{forceCorollary}{\ref{cor:weightedBound}}[of Theorem~\ref{th:rewardBound}]
	Let $r(b,u)$ be \aH continuous with $\lambda_{\alpha}$ and $\alpha \in (0,1]$. Let $b$ and $b'$ denote two beliefs. Let $\omega_i$ denote a positive weight, such that $0 \leq \omega_i \leq 1 \ , \ i \in \{1,2\}$. Then the weighted difference between $r(b,u)$ and $r(b',u)$ is given by
	\begin{equation}
		(\omega_1-\omega_2) r(b') - \omega_1 \lambda_{\alpha}\left(4 \cdot ln 2\right)^{\frac{\alpha}{2}} \JD^{\alpha}(b,b') \leq  \omega_1 r(b) - \omega_2r(b') \leq \omega_1 \lambda_{\alpha}\left(4 \cdot ln 2\right)^{\frac{\alpha}{2}} \JD^{\alpha}(b,b')  + (\omega_1-\omega_2) r(b')
 	\end{equation}	
\end{forceCorollary}
\begin{proof}
	Following Theorem~\ref{th:rewardBound}, 
	\begin{equation}
		\mid r(b,u) - r(b',u) \mid \leq \lambda_{\alpha}\left(4 \cdot ln 2\right)^{\frac{\alpha}{2}} \JD^{\alpha}(b,b')
	\end{equation}
	\begin{equation}
		- \lambda_{\alpha}\left(4 \cdot ln 2\right)^{\frac{\alpha}{2}} \JD^{\alpha}(b,b') \leq  r(b) - r(b') \leq \lambda_{\alpha}\left(4 \cdot ln 2\right)^{\frac{\alpha}{2}} \JD^{\alpha}(b,b')
	\end{equation}
	\begin{equation}
		- \omega_1 \lambda_{\alpha}\left(4 \cdot ln 2\right)^{\frac{\alpha}{2}} \JD^{\alpha}(b,b') \leq  \omega_1 r(b) - \omega_1 r(b') \leq \omega_1 \lambda_{\alpha}\left(4 \cdot ln 2\right)^{\frac{\alpha}{2}} \JD^{\alpha}(b,b')
	\end{equation}
	\begin{equation}
		- \omega_1 \lambda_{\alpha}\left(4 \cdot ln 2\right)^{\frac{\alpha}{2}} \JD^{\alpha}(b,b') \leq  \omega_1 r(b) - \omega_2r(b') + (\omega_2-\omega_1)r(b') \leq \omega_1 \lambda_{\alpha}\left(4 \cdot ln 2\right)^{\frac{\alpha}{2}} \JD^{\alpha}(b,b')
	\end{equation}
	\begin{equation}
		(\omega_1-\omega_2) r(b') - \omega_1 \lambda_{\alpha}\left(4 \cdot ln 2\right)^{\frac{\alpha}{2}} \JD^{\alpha}(b,b') \leq  \omega_1 r(b) - \omega_2r(b') \leq \omega_1 \lambda_{\alpha}\left(4 \cdot ln 2\right)^{\frac{\alpha}{2}} \JD^{\alpha}(b,b')  + (\omega_1-\omega_2) r(b')
	\end{equation}
\end{proof}

\section{Proof of Theorem \ref{th:JacobBound}.}
\label{app:JacobBound}
\begin{forceTheorem}{\ref{th:JacobBound}}
Let r(b,u) be \aH continuous with $\lambda_{\alpha}$ and $\alpha \in (0,1]$. Let $J_{k+l|k+l}$ and $J_{k+l|k}$ be objective values of the same time step $k+l$, calculated based on information up to time $k+l$ and $k$ respectively. Let $L$ be a planning horizon such that $L \geq k+l+1$. Let $n_i$ be the number of samples used to estimate the expected reward at lookahead step $i$.  Let $\omega_i^j$ be non-negative weights such that $0 \leq \omega_i^j \leq 1$ and $\sum_{j=1}^{n_i} \omega_i^j = 1$. Then the difference $(J_{k+l|k+l} -J_{k+l|k})$ is bounded by
	\begin{equation}
		\sum_{i = k+l+1}^{L} \sum_{j = 1}^{n_i} \omega_{i|k+l}^{j} \left[r^j_{i|k} - \mathcal{D}_{i}^{j} \right] - J_{k+l|k} \leq J_{k+l|k+l} - J_{k+l|k} \leq \sum_{i = k+l+1}^{L} \sum_{j = 1}^{n_i} \omega_{i|k+l}^{j} \left[ r^j_{i|k} + \mathcal{D}_{i}^{j}\right] - J_{k+l|k}
	\end{equation}
	where
	\begin{equation}
		\mathcal{D}_{i}^{j} = \lambda_{\alpha}\left(4 \cdot ln 2\right)^{\frac{\alpha}{2}} \JD^{\alpha}(b^j[X_{i|k+l}],b^j[X_{i|k}]). 
	\end{equation}
\end{forceTheorem}
\begin{proof}
By definition,
	\begin{equation}
		J_{k+l|k+l} - J_{k+l|k} = \sum_{i = k+l+1}^{L} \left[ \Expec{r_i(b[X_{i|k+l}],u)} - \Expec{r_i(b[X_{i|k}],u)} \right],
	\end{equation}
	assuming the measurement likelihood is not explicitly available, we approximate the expectation using samples, 
	\begin{equation}
		J_{k+l|k+l} - J_{k+l|k} \approx \sum_{i = k+l+1}^{L} \left[ \sum_{j = 1}^{n_i} \omega_{i|k+l}^{j} r^j_i(b^j[X_{i|k+l}],u) - \sum_{j = 1}^{n_i} \omega_{i|k}^{j} r^j_i(b^j[X_{i|k}],u) \right]
	\end{equation}
	\begin{equation}
		= \sum_{i = k+l+1}^{L} \sum_{j = 1}^{n_i}\left[  \omega_{i|k+l}^{j} r^j_{i|k+l} - \omega_{i|k}^{j} r^j_{i|k} \right].
	\end{equation}
	Using Corollary~\ref{cor:weightedBound}, for specific $i,j$ we can write
	\begin{equation}\label{eq:th:specific_ij}
		(\omega_{i|k+l}^{j} - \omega_{i|k}^{j})r^j_{i|k} -\omega_{i|k+l}^{j}\mathcal{D}_{i}^{j} \leq \omega_{i|k+l}^{j} r^j_{i|k+l} - \omega_{i|k}^{j} r^j_{i|k} \leq \omega_{i|k+l}^{j}\mathcal{D}_{i}^{j} + (\omega_{i|k+l}^{j} - \omega_{i|k}^{j})r^j_{i|k}
	\end{equation}
	where
	\begin{equation}
		\mathcal{D}_{i}^{j} = \lambda_{\alpha}\left(4 \cdot ln 2\right)^{\frac{\alpha}{2}} \JD^{\alpha}(b^j[X_{i|k+l}],b^j[X_{i|k}]). 
	\end{equation}
	So following Eq.~(\ref{eq:th:specific_ij}) 
	\begin{equation}
		\sum_{i = k+l+1}^{L} \sum_{j = 1}^{n_i}\left[(\omega_{i|k+l}^{j} - \omega_{i|k}^{j})r^j_{i|k} -\omega_{i|k+l}^{j}\mathcal{D}_{i}^{j} \right] \leq J_{k+l|k+l} - J_{k+l|k} \leq \sum_{i = k+l+1}^{L} \sum_{j = 1}^{n_i}\left[ (\omega_{i|k+l}^{j} - \omega_{i|k}^{j})r^j_{i|k} +\omega_{i|k+l}^{j}\mathcal{D}_{i}^{j}\right]
	\end{equation}
	\begin{equation}
		\sum_{i = k+l+1}^{L} \sum_{j = 1}^{n_i} \omega_{i|k+l}^{j} \left[r^j_{i|k} - \mathcal{D}_{i}^{j} \right] - J_{k+l|k} \leq J_{k+l|k+l} - J_{k+l|k} \leq \sum_{i = k+l+1}^{L} \sum_{j = 1}^{n_i} \omega_{i|k+l}^{j} \left[ r^j_{i|k} + \mathcal{D}_{i}^{j}\right] - J_{k+l|k}.
	\end{equation}
\end{proof}

We can simplify this further by assuming that $J_{k+l|k+l}$ is estimated using samples from the nominal measurement likelihood, so the $\omega_{i|k+l}^j$ weights are simply given by $\frac{1}{n_i}$ $\forall j$,
	\begin{equation}
		\sum_{i = k+l+1}^{L}\frac{1}{n_i} \sum_{j = 1}^{n_i} \left[r^j_{i|k} - \mathcal{D}_{i}^{j} \right] - J_{k+l|k} \leq J_{k+l|k+l} - J_{k+l|k} \leq \sum_{i = k+l+1}^{L} \frac{1}{n_i} \sum_{j = 1}^{n_i} \left[ r^j_{i|k} + \mathcal{D}_{i}^{j}\right] - J_{k+l|k}.
	\end{equation}

\section{Proof of Theorem \ref{th:JacobBoundProb}.}
\label{app:JacobBoundProb}
\begin{forceTheorem}{\ref{th:JacobBoundProb}}
	Let r(b,u) be \aH continuous with $\lambda_{\alpha}$ and $\alpha \in (0,1]$. Let $J_{k+l|k+l}$ and $J_{k+l|k}$ be objective values of the same time step $k+l$, calculated based on information up to time $k+l$ and $k$ respectively. Let $L$ be a planning horizon such that $L \geq l+1$. Then the difference $(J_{k+l|k+l} -J_{k+l|k})$ is bounded by
	\begin{equation}
		\phi - \psi \leq J_{k+l|k+l} - J_{k+l|k} \leq \phi + \psi,
	\end{equation}
	where
	\begin{align}
		\label{eq:th:phi}
		\phi &\triangleq \sum_{i = k+l+1}^{k+L} \Expec_{z \sim p_k} (\omega-1) r_{i|k}   \\ 
		\label{eq:th:psi}
		\psi &\triangleq \lambda_{\alpha}\left(4 \cdot ln 2\right)^{\frac{\alpha}{2}} \left[ (L-l)\epsilon_{wf}^{\alpha} + \sum_{i = k+l+1}^{k+L} \left( \sum_{j=k+l+1}^{i} \Expec_{z \sim p_{k+l}} \Delta_j \right)^{\frac{\alpha}{2}} \right],  
	\end{align}
    and
    \begin{eqnarray}
    	\JD(b[X_{i|k+l}],b[X_{i|k}]) &=& \sqrt{ \JD^2(b[X_{i-1|k+l}],b[X_{i-1|k}]) + \Delta_{i}},
    	\\
    	\epsilon_{wf} &=& \JD(b[X_{k+l|k+l}],b[X_{k+l|k}]),	
    	\\
    		\omega &=& \frac{\prob{z_{k+l+1:k+L}|H_{k+l|k+l},u_{k+l:k+L-1}}}{\prob{z_{k+l+1:k+L}|H_{k+l|k},u_{k+l:k+L-1}}} \triangleq \frac{p_{k+l}}{p_{k}}.
    \end{eqnarray}
\end{forceTheorem}
\begin{proof}
By definition
	\begin{equation}
		J_{k+l|k+l} - J_{k+l|k} = \sum_{i = k+l+1}^{k+L} \left[ \Expec_{z_{k+l+1:k+L|k+l}} \!\!\!\!\!\!\!\!\!\!\!\!r_i(b[X_{i|k+l}],u) - \Expec_{z_{k+l+1:k+L|k}} \!\!\!\!\!\!\!\!\!\! r_i(b[X_{i|k}],u) \right],
	\end{equation}
	where each expectation is over a different measurement likelihood. In order to use a single expectation over both rewards we use importance sampling (see Eq.~(\ref{eq:basicIM})). For simplicity we define $\omega$ as the likelihood ratio
	\begin{equation}
		\omega = \frac{\prob{z_{k+l+1:k+L}|H_{k+l|k+l},u_{k+l:k+L-1}}}{\prob{z_{k+l+1:k+L}|H_{k+l|k},u_{k+l:k+L-1}}} = \frac{p_{k+l}}{p_{k}},
	\end{equation}
	and can now re-write the objective difference under a single expectation
	\begin{equation}\label{eq:th:jDiff}
		J_{k+l|k+l} - J_{k+l|k} = \sum_{i = k+l+1}^{k+L} \Expec_{z \sim p_k} \left[ \omega r_{i|k+l} - r_{i|k} \right].
	\end{equation}
	Following Corollary~\ref{cor:weightedBound} we can bound the reward difference in (\ref{eq:th:jDiff}) with
	\begin{equation}
		(\omega-1) r_{i|k} - \omega \mathcal{D}_i \leq \omega r_{i|k+l} - r_{i|k} \leq \omega \mathcal{D}_i + (\omega-1) r_{i|k},	
	\end{equation}
	where
	\begin{equation}
		\mathcal{D}_i = \lambda_{\alpha}\left(4 \cdot ln 2\right)^{\frac{\alpha}{2}} \JD^{\alpha}(b[X_{i|k+l}],b[X_{i|k}]).
	\end{equation}
	So the objective value difference is bounded by
	\begin{equation}
		\sum_{i = k+l+1}^{k+L} \Expec_{z \sim p_k} \left[ (\omega-1) r_{i|k} - \omega \mathcal{D}_i\right] \leq J_{k+l|k+l} - J_{k+l|k} \leq 	\sum_{i = k+l+1}^{k+L} \Expec_{z \sim p_k} \left[ \omega \mathcal{D}_i + (\omega-1) r_{i|k} \right]
	\end{equation}
	\begin{equation}
		\sum_{i = k+l+1}^{k+L} \Expec_{z \sim p_k} (\omega-1) r_{i|k} - \Expec_{z \sim p_k} \omega \mathcal{D}_i \leq J_{k+l|k+l} - J_{k+l|k} \leq 	\sum_{i = k+l+1}^{k+L} \Expec_{z \sim p_k} \omega \mathcal{D}_i + \Expec_{z \sim p_k} (\omega-1) r_{i|k},
	\end{equation}
	or in a more compact manner
	\begin{equation}\label{eq:th:cleanBound}
		\mid J_{k+l|k+l} - J_{k+l|k} - \!\!\! \sum_{i = k+l+1}^{k+L}\Expec_{z \sim p_k} (\omega-1) r_{i|k} \mid \leq 	\sum_{i = k+l+1}^{k+L} \Expec_{z \sim p_k} \omega \mathcal{D}_i.
	\end{equation}

	Let us define the delta distance between two consecutive lookahead steps as, 
	\begin{equation}
		\JD(b[X_{i|k+l}],b[X_{i|k}]) = \sqrt{ \JD^2(b[X_{i-1|k+l}],b[X_{i-1|k}]) + \Delta_{i}},
	\end{equation}
	so two sequential yet non consecutive lookahead steps can be written as
	\begin{equation}\label{eq:th:distDelta}
		\JD(b[X_{i+1|k+l}],b[X_{i+1|k}]) = \sqrt{ \JD^2(b[X_{i-1|k+l}],b[X_{i-1|k}]) + \Delta_{i} + \Delta_{i+1}}.
	\end{equation}
	We will now continue with simplifying the RHS of (\ref{eq:th:cleanBound}),
	\begin{equation}
		\sum_{i = k+l+1}^{k+L} \Expec_{z \sim p_k} \omega \mathcal{D}_i = \sum_{i = k+l+1}^{k+L} \Expec_{z \sim p_k} \omega \lambda_{\alpha}\left(4 \cdot ln 2\right)^{\frac{\alpha}{2}} \JD^{\alpha}(b[X_{i|k+l}],b[X_{i|k}])
	\end{equation}
	\begin{equation}
		= \lambda_{\alpha}\left(4 \cdot ln 2\right)^{\frac{\alpha}{2}} \sum_{i = k+l+1}^{k+L} \Expec_{z \sim p_k} \omega \JD^{\alpha}(b[X_{i|k+l}],b[X_{i|k}]),
	\end{equation}
	using the connection given by (\ref{eq:th:distDelta}) as well as moving back to expectation over $p_{k+l}$ we get
	\begin{equation}
		= \lambda_{\alpha}\left(4 \cdot ln 2\right)^{\frac{\alpha}{2}} \sum_{i = k+l+1}^{k+L} \Expec_{z \sim p_{k+l}} \left[ \JD^2(b[X_{k+l|k+l}],b[X_{k+l|k}]) + \sum_{j=k+l+1}^{i} \Delta_j \right]^{\frac{\alpha}{2}},
	\end{equation}
	\begin{equation}
		\leq \lambda_{\alpha}\left(4 \cdot ln 2\right)^{\frac{\alpha}{2}} \sum_{i = k+l+1}^{k+L} \Expec_{z \sim p_{k+l}} \left[ \epsilon_{wf}^2 + \sum_{j=k+l+1}^{i} \Delta_j \right]^{\frac{\alpha}{2}},
	\end{equation}
	where $\frac{\alpha}{2} \in (0,\frac{1}{2}]$ so $(.)^{\frac{\alpha}{2}}$ is a concave function, and $-(.)^{\frac{\alpha}{2}}$ is convex and following Jensen inequality we get
	\begin{equation}\label{eq:th:firUpper}
		\sum_{i = k+l+1}^{k+L} \Expec_{z \sim p_k} \omega \mathcal{D}_i \leq \lambda_{\alpha}\left(4 \cdot ln 2\right)^{\frac{\alpha}{2}} \left[ (L-l)\epsilon_{wf}^{\alpha} + \sum_{i = k+l+1}^{k+L} \left( \sum_{j=k+l+1}^{i} \Expec_{z \sim p_{k+l}} \Delta_j \right)^{\frac{\alpha}{2}} \right].
	\end{equation}
	Going back to (\ref{eq:th:cleanBound}),
	\begin{equation*}
		\mid J_{k+l|k+l} - J_{k+l|k} - \!\!\! \sum_{i = k+l+1}^{k+L} \Expec_{z \sim p_k} (\omega-1) r_{i|k} \mid \leq \sum_{i = k+l+1}^{k+L} \Expec_{z \sim p_k} \omega \mathcal{D}_i
	\end{equation*}
	\begin{equation}\label{eq:tg:rawBound}
		\leq \lambda_{\alpha}\left(4 \cdot ln 2\right)^{\frac{\alpha}{2}} \left[ (L-l)\epsilon_{wf}^{\alpha} + \sum_{i = k+l+1}^{k+L} \left( \sum_{j=k+l+1}^{i} \Expec_{z \sim p_{k+l}} \Delta_j \right)^{\frac{\alpha}{2}} \right].
	\end{equation}
	Using the definitions of $\phi$ and $\psi$ given respectively by Eq.(\ref{eq:th:phi}) and Eq.(\ref{eq:th:psi}), we can reformulate (\ref{eq:tg:rawBound}) into
	\begin{equation}
		\phi - \psi \leq J_{k+l|k+l} - J_{k+l|k} \leq \phi + \psi.
	\end{equation}
	\end{proof}
	For the special case where $\omega = 1$ we get
	\begin{equation}
		\mid J_{k+l|k+l} - J_{k+l|k} \mid \leq \left(4 \cdot ln 2\right)^{\frac{\alpha}{2}} \cdot \lambda_{\alpha} \cdot\left[ (L-l)\cdot \epsilon_{wf}^{\alpha} +  \sum_{i= k+l+1}^{k+L} \left( \sum_{j=k+l+1}^{i} \Expec \Delta_j \right)^{\frac{\alpha}{2}} \right].
	\end{equation}

\section{Proof of Lemma \ref{lm:incDist}}
\label{app:iDistance}
\begin{forceLemma}{\ref{lm:incDist}}[Incremental $\mathbb{D}_{PQ}$ distance]
	Let $b_1$ and $b_2$ be two gaussian beliefs with $(\mu_1 , \Sigma_1)$ and $(\mu_{2} , \Sigma_2)$ respectively, and their two (differently) propagated counterparts $b_{1p}$ and $b_{2p}$ with $(\mu_{1p} , \Sigma_{1p})$ and $(\mu_{2}^+ , \Sigma_{2p})$. When the propagated mean and covariance are defined as
	\begin{equation}\label{eq:appB:def}
		\mu_{ip} = \mu_i + \zeta_i \quad , \quad 
		\Sigma_{ip} = \left( \Sigma_i^{-1} + A_i^T A_i \right)^{-1} \quad , \quad 
		i \in [1,2]
	\end{equation}
	Then the squared $\mathbb{D}_{PQ}$ distance between the propagated beliefs can be written as 
	\begin{equation}
		\JD^2\left( b_{1p},b_{2p}\right) = \JD^2\left( b_1,b_2\right) + \Delta,
	\end{equation}
where 
\begin{multline}
	\Delta = \frac{1}{4}(\mu_2 - \mu_1)^T \left[ A_2^T A_2 + A_1^T A_1 \right](\mu_2 - \mu_1) + 
	\frac{1}{2}(\mu_2 - \mu_1)^T \left[ {\Sigma_{1p}}^{-1} + {\Sigma_{2p}}^{-1} \right](\zeta_2 - \zeta_1) \\
	+ \frac{1}{4}(\zeta_2 - \zeta_1)^T \left[ {\Sigma_{1p}}^{-1} + {\Sigma_{2p}}^{-1} \right](\zeta_2 - \zeta_1)
	+ \frac{1}{4}tr\left(  A_2^T A_2 \Sigma_{1p} - \Sigma_2^{-1} \Sigma_1 A_1^T ( I + A_1 \Sigma_1 A_1^T)^{-1} A_1 \Sigma_1 \right) \\ 
	+ \frac{1}{4}tr\left(  A_1^T A_1 \Sigma_{2p} - \Sigma_1^{-1} \Sigma_2 A_2^T ( I + A_2 \Sigma_2 A_2^T)^{-1} A_2 \Sigma_2 \right) - \frac{1}{2}\left( d_{p} - d\right).
\end{multline}
\end{forceLemma}
\begin{proof}
% Intro
The $\mathbb{D}_{PQ}$ distance between $b_1$ and $b_2$ is thus given by
\begin{equation}\label{eq:appB:dist}
	\JD^2\left( b_1,b_2\right) = \frac{1}{4} \left[ (\mu_2 - \mu_1)^T \left[ \Sigma_1^{-1} + \Sigma_2^{-1}\right](\mu_2 - \mu_1) + tr(\Sigma_2^{-1}\Sigma_1) + tr(\Sigma_1^{-1}\Sigma_2) -d_1 - d_2 \right], 
\end{equation}
when $d_i$ represents the dimension of the un-zero-padded $\Sigma_i$. Equivalently the distance between $b_{1p}$ and $b_{2p}$ is given by
\begin{equation}\label{eq:appB:dist+}
	\JD^2\left( b_{1p},b_{2p}\right) = \frac{1}{4} \left[ (\mu_{2p} - \mu_{1p})^T \left[ {\Sigma_{1p}}^{-1} + {\Sigma_{2p}}^{-1}\right](\mu_{2p} - \mu_{1p}) + tr({\Sigma_{2p}}^{-1}\Sigma_{1p}) + tr({\Sigma_{1p}}^{-1}\Sigma_{2p}) -d_{1p} -d_{2p} \right] 
\end{equation}
where $d^+_i$ represents the dimension of the un-zero-padded $\Sigma_i^+$.

% presenting the problem
We would like to show that 
\begin{equation}
	\JD^2\left( b_{1p},b_{2p}\right) = \JD^2\left( b_1,b_2\right) + \Delta.
\end{equation}
% development
We start from substituting Eq.~(\ref{eq:appB:def}) in the trace expression from Eq.~(\ref{eq:appB:dist+})
\begin{equation*}
	tr({\Sigma_{2p}}^{-1}\Sigma_{1p}) = 	tr(\left( \Sigma_2^{-1} + A_2^T A_2 \right)\left( \Sigma_1^{-1} + A_1^T A_1 \right)^{-1})
\end{equation*}
using Woodbury matrix identity \cite{Woodbury1950mr} we can rewrite it as
\begin{equation*}
	tr({\Sigma_{2p}}^{-1}\Sigma_{1p}) = 	tr(\left( \Sigma_2^{-1} + A_2^T A_2 \right)\left( \Sigma_1 - \Sigma_1 A_1^T( I + A_1\Sigma_1 A_1^T)^{-1}A_1 \Sigma_1  \right) ).
\end{equation*} 
After some simple manipulations we get
\begin{equation}\label{eq:appB:trace21}
	tr({\Sigma_{2p}}^{-1}\Sigma_{1p}) = tr(\Sigma_2^{-1}\Sigma_1) + tr\left(  A_2^T A_2 \Sigma_{1p} - \Sigma_2^{-1} \Sigma_1 A_1^T ( I + A_1 \Sigma_1 A_1^T)^{-1} A_1 \Sigma_1 \right).
\end{equation} 
In a similar manner we can get an expression for the symmetric trace expression in Eq.~(\ref{eq:appB:dist+})
\begin{equation}\label{eq:appB:trace12}
	tr({\Sigma_{1p}}^{-1}\Sigma_{2p}) = tr(\Sigma_1^{-1}\Sigma_2) + tr\left(  A_1^T A_1 \Sigma_{2p} - \Sigma_1^{-1} \Sigma_2 A_2^T ( I + A_2 \Sigma_2 A_2^T)^{-1} A_2 \Sigma_2 \right).
\end{equation} 
We are left with the square root Mahalanobis distance expression in Eq.~(\ref{eq:appB:dist+}). By substituting Eq.~(\ref{eq:appB:def}) in the aforementioned we get 
\begin{multline}\label{eq:appB:mahal}
	(\mu_{2p} - \mu_{1p})^T \left[ {\Sigma_{1p}}^{-1} + {\Sigma_{2p}}^{-1}\right](\mu_{2p} - \mu_{1p}) = (\mu_2 + \zeta_2 - \mu_1 - \zeta_1)^T \left[ \Sigma_1^{-1} + A_1^T A_1 + \Sigma_2^{-1} + A_2^T A_2 \right](\mu_2 + \zeta_2 - \mu_1 - \zeta_1) \\
	= (\mu_2 - \mu_1)^T \left[ \Sigma_1^{-1} + \Sigma_2^{-1}\right](\mu_2 - \mu_1) +
	(\mu_2 - \mu_1)^T \left[ A_2^T A_2 + A_1^T A_1 \right](\mu_2 - \mu_1) + \\
	2(\mu_2 - \mu_1)^T \left[ {\Sigma_{1p}}^{-1} + {\Sigma_{2p}}^{-1} \right](\zeta_2 - \zeta_1) + 
	(\zeta_2 - \zeta_1)^T \left[ {\Sigma_{1p}}^{-1} + {\Sigma_{2p}}^{-1} \right](\zeta_2 - \zeta_1).
\end{multline} 

By substituting Eqs.~(\ref{eq:appB:trace21})-(\ref{eq:appB:mahal}) in Eq.~(\ref{eq:appB:dist+}) we receive 
\begin{equation}
	\JD^2\left( b_{1p},b_{2p}\right) = \JD^2\left( b_1,b_2\right) + \Delta,
\end{equation}
where 
\begin{multline}\label{eq:appB:delta}
	\Delta = \frac{1}{4}(\mu_2 - \mu_1)^T \left[ A_2^T A_2 + A_1^T A_1 \right](\mu_2 - \mu_1) + 
	\frac{1}{2}(\mu_2 - \mu_1)^T \left[ {\Sigma_{1p}}^{-1} + {\Sigma_{2p}}^{-1} \right](\zeta_2 - \zeta_1) \\
	+ \frac{1}{4}(\zeta_2 - \zeta_1)^T \left[ {\Sigma_{1p}}^{-1} + {\Sigma_{2p}}^{-1} \right](\zeta_2 - \zeta_1)
	+ \frac{1}{4}tr\left(  A_2^T A_2 \Sigma_{1p} - \Sigma_2^{-1} \Sigma_1 A_1^T ( I + A_1 \Sigma_1 A_1^T)^{-1} A_1 \Sigma_1 \right) \\ 
	+ \frac{1}{4}tr\left(  A_1^T A_1 \Sigma_{2p} - \Sigma_1^{-1} \Sigma_2 A_2^T ( I + A_2 \Sigma_2 A_2^T)^{-1} A_2 \Sigma_2 \right) - \frac{1}{2}\left( d_{p} - d\right).
\end{multline}
\end{proof}

When considering future beliefs, which are a function of future measurements, the belief solution is a random variable depending on the future measurement. As such $\Delta$ in Eq.~(\ref{eq:appB:delta}), which is a function of belief solution, is in-fact a random variable.

\section{Proof of Lemma \ref{lm:deltaAsGQ}.}
\label{app:deltaAsGQ}
\begin{lemma}[state estimation increment as a random variable]\label{lm:zetaDist}
	 Let $\mu$ denote the state estimate of some gaussian belief $b = \mathcal{N}(\mu_0, \Sigma_0)$. Let $b_p$ denote the gaussian belief resulting from propagating $b$ with some action $u$ and some measurements $z$ using the linear Gaussian models (\ref{eq:wf:linMotionModel})-(\ref{eq:wf:linMeasurementModel}).  Let $\mu_p$ denote the state estimate $b_p$ and define a new random variabble $\zeta = \mu_p - \mu $. 
%	 Let the measurement be distributed as $z \sim \mathcal{N}(\mu_z, \Sigma_v)$.
	Then  
	\begin{equation}
		\zeta \sim \mathcal{N}(\mu_{\zeta},\Sigma_{\zeta}),
	\end{equation}
	where
	\begin{equation}
		\mu_{\zeta} = \sigma_{21} \left(\Sigma_0^{-1} \mu_0 - \mathcal{F}^T \Sigma_w^{-1}\mathcal{J}u \right) + \sigma_{22}\left( \Sigma_w^{-1}\mathcal{J}u + \mathcal{H}^T\Sigma_v^{-1}(\mathcal{H}\mathcal{F}\mu_0 + \mathcal{H}\mathcal{J}u) \right) - \mu_0 
	\end{equation}
	\begin{equation}
		\Sigma_{\zeta} = \sigma_{22}\mathcal{H}^T\Sigma_v^{-1}\mathcal{H}\mathcal{F}\Sigma_0\mathcal{F}^T\mathcal{H}^T\Sigma_v^{-1}\mathcal{H}\sigma_{22} + \sigma_{22}\mathcal{H}^T\Sigma_v^{-1}\mathcal{H}\Sigma_w\mathcal{H}^T\Sigma_v^{-1}\mathcal{H}\sigma_{22} + \sigma_{22}\mathcal{H}^T\Sigma_v^{-1}\mathcal{H}\sigma_{22}
	\end{equation}
	\begin{equation}
		\begin{bmatrix}
 		\sigma_{11} & \sigma_{12} \\
 		\sigma_{21} & \sigma_{22}	
 		\end{bmatrix} = \begin{bmatrix}
 		\Sigma_0^{-1} + \mathcal{F}^T \Sigma_w^{-1}\mathcal{F} & -\mathcal{F}^T \Sigma_w^{-1} \\
 		-\Sigma_w^{-1} \mathcal{F} & \Sigma_w^{-1} + \mathcal{H}^T \Sigma_v^{-1}\mathcal{H}	
 		\end{bmatrix}^{-1}.
	\end{equation}

\end{lemma}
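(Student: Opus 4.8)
The plan is to use the fact that, under linear Gaussian motion and measurement models, the propagated belief is Gaussian and its MAP estimate is an \emph{affine} function of the random future measurement $z$; since an affine image of a Gaussian vector is Gaussian, $\zeta=\mu_p-\mu$ is Gaussian and its mean and covariance can be obtained by pushing the moments of $z$ through that affine map. (Here $\zeta$ is exactly the increment $\zeta_i$ appearing in Lemma~\ref{lm:incDist}.)

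First I would write $b_p$ in information (canonical) form over the enlarged joint state $(x,x')$ with $x'$ the new pose. The propagated belief is the product of three factors: the Gaussian prior $b=\mathcal{N}(\mu_0,\Sigma_0)$ on $x$, the motion factor $\tfrac12\|x'-\mathcal{F}x-\mathcal{J}u\|^2_{\Sigma_w^{-1}}$ coming from (\ref{eq:wf:linMotionModel}), and the measurement factor $\tfrac12\|z-\mathcal{H}x'\|^2_{\Sigma_v^{-1}}$ coming from (\ref{eq:wf:linMeasurementModel}). Collecting the quadratic terms yields precisely the block information matrix $\Lambda=\begin{bmatrix}\Sigma_0^{-1}+\mathcal{F}^T\Sigma_w^{-1}\mathcal{F} & -\mathcal{F}^T\Sigma_w^{-1}\\ -\Sigma_w^{-1}\mathcal{F} & \Sigma_w^{-1}+\mathcal{H}^T\Sigma_v^{-1}\mathcal{H}\end{bmatrix}$ whose inverse defines the blocks $\sigma_{ij}$, while collecting the linear terms yields the information vector $\eta=(\eta_1,\eta_2)$ with $\eta_1=\Sigma_0^{-1}\mu_0-\mathcal{F}^T\Sigma_w^{-1}\mathcal{J}u$ and $\eta_2=\Sigma_w^{-1}\mathcal{J}u+\mathcal{H}^T\Sigma_v^{-1}z$. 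The MAP estimate is $\mu_p=\Lambda^{-1}\eta$, so its new-pose block equals $\sigma_{21}\eta_1+\sigma_{22}\eta_2$; because $\eta_1$ is deterministic and $z$ enters $\eta_2$ only through $\mathcal{H}^T\Sigma_v^{-1}z$, I can write $\mu_p=c+\sigma_{22}\mathcal{H}^T\Sigma_v^{-1}z$ with $c$ constant, hence $\zeta=\mu_p-\mu=(c-\mu_0)+\sigma_{22}\mathcal{H}^T\Sigma_v^{-1}z$.

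Next I would pin down the law of $z$. By the sampling procedure of Alg.~\ref{alg:sampling_z}, the future measurement used in planning is drawn from the propagated belief: one samples $x'\sim b^-=\mathcal{N}(\mathcal{F}\mu_0+\mathcal{J}u,\ \mathcal{F}\Sigma_0\mathcal{F}^T+\Sigma_w)$ and then $z=\mathcal{H}x'+v$ with $v\sim\mathcal{N}(0,\Sigma_v)$ independent of the prior- and process-noise draws. Hence $z$ is Gaussian with mean $\mathcal{H}\mathcal{F}\mu_0+\mathcal{H}\mathcal{J}u$ and covariance $\mathcal{H}\mathcal{F}\Sigma_0\mathcal{F}^T\mathcal{H}^T+\mathcal{H}\Sigma_w\mathcal{H}^T+\Sigma_v$. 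Substituting $\mathbb{E}[z]$ into the affine expression for $\zeta$ gives $\mu_\zeta$ (the terms $\sigma_{21}\eta_1$, $\sigma_{22}\Sigma_w^{-1}\mathcal{J}u$, $\sigma_{22}\mathcal{H}^T\Sigma_v^{-1}\mathbb{E}[z]$ and $-\mu_0$ assembling exactly as stated), and pushing $\mathrm{Cov}(z)$ through the linear map $\sigma_{22}\mathcal{H}^T\Sigma_v^{-1}(\cdot)\Sigma_v^{-1}\mathcal{H}\sigma_{22}$ produces the three summands of $\Sigma_\zeta$, one per noise source, with the measurement-noise term collapsing since $\Sigma_v^{-1}\Sigma_v\Sigma_v^{-1}=\Sigma_v^{-1}$. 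Affineness then gives normality of $\zeta$, finishing the proof.

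The main obstacle I expect is purely bookkeeping rather than conceptual: assembling the block information form with the correct signs on the cross-blocks and the correct split of the motion-factor linear term between $\eta_1$ and $\eta_2$, and then carefully isolating which part of $\mu_p$ is constant versus linear in $z$ so that $c$ and the coefficient $\sigma_{22}\mathcal{H}^T\Sigma_v^{-1}$ drop out cleanly. A secondary point requiring explicitness is the dimensional convention --- $\mu_p$ lives in the augmented state $(x,x')$ whereas $\mu=\mu_0$ is the pre-propagation estimate --- so one must state that $\zeta$ refers to the relevant block, and invoke independence of the prior-sampling noise, the process noise $w$, and the measurement noise $v$ when assembling $\mathrm{Cov}(z)$.
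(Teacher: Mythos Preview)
Your proposal is correct and follows essentially the same route as the paper: the paper also writes the joint posterior over $(x,x')$ as a least-squares/information-form problem, reads off the same block information matrix and information vector (your $\eta_1,\eta_2$ are exactly the paper's $A^Tb$), extracts $\mu_p=\hat x'=\sigma_{21}\eta_1+\sigma_{22}\eta_2$, derives the Gaussian law of $z$ from the model $z=\mathcal{H}\mathcal{F}x+\mathcal{H}\mathcal{J}u+\mathcal{H}w+v$, and pushes the moments through the affine map to obtain $\mu_\zeta,\Sigma_\zeta$. The only cosmetic difference is that you phrase the normal equations in canonical/information language while the paper casts them as $\|AX-b\|^2$; the computations and the identification of $\mu_p$ with the $x'$-block are identical.
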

\begin{proof}
	see Appendix~\ref{app:zetaDist}.
\end{proof}

\begin{forceLemma}{\ref{lm:deltaAsGQ}}[Incremental \JD distance as Gaussian Quadratic]
		Let $b_1$ and $b_2$ be two Gaussian beliefs  $\mathcal{N}(\mu_1 , \Sigma_1)$ and $\mathcal{N}(\mu_{2} , \Sigma_2)$, respectively with state dimension $d$, and their two propagated counterparts $b_{1p}$ and $b_{2p}$ with $\mathcal{N}(\mu_{1p} , \Sigma_{1p})$  $\mathcal{N}(\mu_{2p} , \Sigma_{2p})$ and with state dimension $d_{p}$. 
		There exist $\zeta_i $ and $A_i$ such that the propagated mean and covariance are given by, 
		\begin{equation}
			\mu_{ip} = \mu_i + \zeta_i \quad , \quad 
			\Sigma_{ip} = \left( \Sigma_i^{-1} + A_i^T A_i \right)^{-1} \quad , \quad 
			i \in [1,2].
		\end{equation}
		Then the incremental \JD distance $\Delta \triangleq \JD^2 \left( b_{1p},b_{2p}\right) - \JD^2 \left( b_1,b_2\right)$ is a quadratic form of a gaussian variable.
\end{forceLemma}
\begin{proof}
	The incremental \JD distance under the assumption of linear Gaussian models is given by Lemma~\ref{lm:incDist}. Let us define a new variable $\mathcal{S}$ such that
	\begin{equation}
		\mathcal{S} = \zeta_2 - \zeta_1.
	\end{equation}
	We can now re-write the result for $\Delta$ given by Lemma~\ref{lm:incDist} in terms of $\mathcal{S}$,
	\begin{equation}\label{eq:lm:quadraticForm}
		\Delta(\mathcal{S}) = \mathcal{S}^T C \mathcal{S} + c^T \mathcal{S} + y,
	\end{equation} 
	where
	\begin{equation}
		C = \frac{1}{4} \left[ {\Sigma_{1p}}^{-1} + {\Sigma_{2p}}^{-1} \right]
	\end{equation}
	\begin{equation}
		c = \frac{1}{2}(\mu_2 - \mu_1)^T \left[ {\Sigma_{1p}}^{-1} + {\Sigma_{2p}}^{-1} \right]
	\end{equation}
	\begin{multline}
		y = \frac{1}{4}(\mu_2 - \mu_1)^T \left[ A_2^T A_2 + A_1^T A_1 \right](\mu_2 - \mu_1) 
		+ \frac{1}{4}tr\left(  A_2^T A_2 \Sigma_{1p} - \Sigma_2^{-1} \Sigma_1 A_1^T ( I + A_1 \Sigma_1 A_1^T)^{-1} A_1 \Sigma_1 \right) \\ 
		+ \frac{1}{4}tr\left(  A_1^T A_1 \Sigma_{2p} - \Sigma_1^{-1} \Sigma_2 A_2^T ( I + A_2 \Sigma_2 A_2^T)^{-1} A_2 \Sigma_2 \right) - \frac{1}{2}\left( d_{p} - d\right).
	\end{multline}
	Eq~\ref{eq:lm:quadraticForm} presents $\Delta$ as a quadratic form of $\mathcal{S}$ leaving us to find how $\mathcal{S}$ is distributed. 

	Following Lemma~\ref{lm:zetaDist}, we know the distribution of $\zeta_i$, thus from being a linear combination of Gaussian variables we know the distribution of $\mathcal{S}$ to be also Gaussian 
	\begin{equation}
		\mathcal{S} \sim \mathcal{N}(\mu_{\mathcal{S}}, \Sigma_{\mathcal{S}})
	\end{equation}
	where
	\begin{equation}
		\mu_{\mathcal{S}} = \mu_{\zeta_2} - \mu_{\zeta_1},
	\end{equation}
	\begin{equation}
		\Sigma_{\mathcal{S}} = \Sigma_{\zeta_2} + \Sigma_{\zeta_1} + 2\Sigma_{\zeta_1\zeta_2}. 
	\end{equation}
	So $\Delta$ is a quadratic form of the Gaussian variable $\mathcal{S}$.
\end{proof}

%We are now in a position to calculate the expected value of $\Delta$ using Lemma~\ref{lm:momentsGQ}
%%
%\begin{equation}
%	\Expec{\Delta} = tr(\Sigma^{\frac{1}{2}}_{\mathcal{Z}} C \Sigma^{\frac{1}{2}}_{\mathcal{Z}}) + \mu^T_{\mathcal{Z}} C \mu_{\mathcal{Z}} + c^T\mu_{\mathcal{Z}} + y.
%\end{equation}
%%

\section{Proof of Lemma \ref{lm:zetaDist}.}
\label{app:zetaDist}
\begin{forceLemma}{\ref{lm:zetaDist}}[state estimation increment as a random variable]
	 Let $\mu$ denote the state estimate of some gaussian belief $b = \mathcal{N}(\mu_0, \Sigma_0)$. Let $b_p$ denote the gaussian belief resulting from propagating $b$ with some action $u$ and some measurements $z$ using the linear Gaussian models (\ref{eq:wf:linMotionModel})-(\ref{eq:wf:linMeasurementModel}).  Let $\mu_p$ denote the state estimate $b_p$ and define a new random variabble $\zeta = \mu_p - \mu $. 
%	 Let the measurement be distributed as $z \sim \mathcal{N}(\mu_z, \Sigma_v)$.
	Then  
	\begin{equation}
		\zeta \sim \mathcal{N}(\mu_{\zeta},\Sigma_{\zeta}),
	\end{equation}
	where
	\begin{equation}
		\mu_{\zeta} = \sigma_{21} \left(\Sigma_0^{-1} \mu_0 - \mathcal{F}^T \Sigma_w^{-1}\mathcal{J}u \right) + \sigma_{22}\left( \Sigma_w^{-1}\mathcal{J}u + \mathcal{H}^T\Sigma_v^{-1}(\mathcal{H}\mathcal{F}\mu_0 + \mathcal{H}\mathcal{J}u) \right) - \mu_0 
	\end{equation}
	\begin{equation}
		\Sigma_{\zeta} = \sigma_{22}\mathcal{H}^T\Sigma_v^{-1}\mathcal{H}\mathcal{F}\Sigma_0\mathcal{F}^T\mathcal{H}^T\Sigma_v^{-1}\mathcal{H}\sigma_{22} + \sigma_{22}\mathcal{H}^T\Sigma_v^{-1}\mathcal{H}\Sigma_w\mathcal{H}^T\Sigma_v^{-1}\mathcal{H}\sigma_{22} + \sigma_{22}\mathcal{H}^T\Sigma_v^{-1}\mathcal{H}\sigma_{22}
	\end{equation}
	\begin{equation}
		\begin{bmatrix}
 		\sigma_{11} & \sigma_{12} \\
 		\sigma_{21} & \sigma_{22}	
 		\end{bmatrix} = \begin{bmatrix}
 		\Sigma_0^{-1} + \mathcal{F}^T \Sigma_w^{-1}\mathcal{F} & -\mathcal{F}^T \Sigma_w^{-1} \\
 		-\Sigma_w^{-1} \mathcal{F} & \Sigma_w^{-1} + \mathcal{H}^T \Sigma_v^{-1}\mathcal{H}	
 		\end{bmatrix}^{-1}.
	\end{equation}
\end{forceLemma}
\begin{proof}
	Because the belief $b_p$ resulted from propagating the belief $b$ with motion and measurements, using Bayes rule we can write it as proportional to 
	\begin{equation}
		b_p \propto b \cdot \prob{x' | x, u}\prob{z|x'},
	\end{equation}
	without affecting generality let us assume a single measurement is considered. 
	Denoting $X =\begin{bmatrix} x \\x' \end{bmatrix}$, the inference solution of $b_p$ can be obtained through MAP estimation
	\begin{equation}
		\hat{X} = \argmax_{X} b \cdot \prob{x' | x, u}\prob{z|x'},	
	\end{equation}
	taking the negative log yields the following LS problem
	\begin{equation}
		\hat{X} = \argmin_{X} \|x - \mu_0\|_{\Sigma_0}^2	+ \| x' - \mathcal{F}x - \mathcal{J}u) \|_{\Sigma_w}^2 + \| z - \mathcal{H}x \|_{\Sigma_v}^2,
	\end{equation}
	where (\ref{eq:wf:linMotionModel})-(\ref{eq:wf:linMeasurementModel}) were used for linear motion and measurement models with zero mean Gaussian noises
	\begin{equation*}
		x' = \mathcal{F}x + \mathcal{J}u + w \quad , \quad  w\sim \mathcal{N}(0,\Sigma_w),
	\end{equation*}
	\begin{equation*}
		z = \mathcal{H}x + v\quad , \quad  v\sim \mathcal{N}(0,\Sigma_v).
	\end{equation*}
	We can further reformulate the problem into a LS form
	\begin{equation}
		\hat{X} = \argmin_{X} \|A \cdot X - b\|^2,
	\end{equation}
	where
	\begin{equation}
		A = \begin{bmatrix}
 		\Sigma_0^{-\frac{1}{2}} & 0 \\
 		-\Sigma_w^{-\frac{1}{2}} \mathcal{F} & \Sigma_w^{-\frac{1}{2}} \\
 		0 & \Sigma_v^{-\frac{1}{2}} \mathcal{H} 	
 		\end{bmatrix} \ , \ 
 		b = \begin{bmatrix}
 		\Sigma_0^{-\frac{1}{2}} \mu_0 \\
 		\Sigma_w^{-\frac{1}{2}} \mathcal{J}u \\
 		\Sigma_v^{-\frac{1}{2}} z 	
 		\end{bmatrix}.
	\end{equation}
	The solution to the inference problem is given by
	\begin{equation}
		X \sim \mathcal{N}\left( \Sigma A^Tb , \Sigma\right)	
	\end{equation}
	where the joint covariance matrix is given by
	\begin{equation}
		\Sigma_j = (A^TA)^{-1} = \left(\begin{bmatrix}
 		\Sigma_0^{-\frac{1}{2}} & -\mathcal{F}^T\Sigma_w^{-\frac{1}{2}}  & 0 \\
 		0 & \Sigma_w^{-\frac{1}{2}} & \mathcal{H}^T\Sigma_v^{-\frac{1}{2}} 	
 		\end{bmatrix}\cdot \begin{bmatrix}
 		\Sigma_0^{-\frac{1}{2}} & 0 \\
 		-\Sigma_w^{-\frac{1}{2}} \mathcal{F} & \Sigma_w^{-\frac{1}{2}} \\
 		0 & \Sigma_v^{-\frac{1}{2}} \mathcal{H} 	
 		\end{bmatrix} \right)^{-1} 
 	\end{equation}
	\begin{equation}
		\Sigma_j = \begin{bmatrix}
 		\Sigma_0^{-1} + \mathcal{F}^T \Sigma_w^{-1}\mathcal{F} & -\mathcal{F}^T \Sigma_w^{-1} \\
 		-\Sigma_w^{-1} \mathcal{F} & \Sigma_w^{-1} + \mathcal{H}^T \Sigma_v^{-1}\mathcal{H}	
 		\end{bmatrix}^{-1} = \begin{bmatrix}
 		\sigma_{11} & \sigma_{12} \\
 		\sigma_{21} & \sigma_{22}	
 		\end{bmatrix},
	\end{equation}
	for the readers convenience we denote the block matrices of $\Sigma_j$ by $\sigma_{i,j}$, where they can be calculated explicitly through the Schur complement. So the joint state estimation is given by 
	\begin{equation}
		\hat{X} = \begin{bmatrix}
 		\sigma_{11} & \sigma_{12} \\
 		\sigma_{21} & \sigma_{22}	
 		\end{bmatrix} \begin{bmatrix}
 		\Sigma_0^{-\frac{1}{2}} & -\mathcal{F}^T\Sigma_w^{-\frac{1}{2}}  & 0 \\
 		0 & \Sigma_w^{-\frac{1}{2}} & \mathcal{H}^T\Sigma_v^{-\frac{1}{2}} 	
 		\end{bmatrix} \begin{bmatrix}
 		\Sigma_0^{-\frac{1}{2}} \mu_0 \\
 		\Sigma_w^{-\frac{1}{2}} \mathcal{J}u \\
 		\Sigma_v^{-\frac{1}{2}} z 	
 		\end{bmatrix},
	\end{equation}
	and we can write the state estimation of $x'$ explicitly as
	\begin{equation}
		\mu_p = \hat{x}' = \sigma_{21} \left(\Sigma_0^{-1} \mu_0 - \mathcal{F}^T \Sigma_w^{-1}\mathcal{J}u \right) + \sigma_{22}\left( \Sigma_w^{-1}\mathcal{J}u + \mathcal{H}^T\Sigma_v^{-1}z \right).
	\end{equation}	

	Now that we have developed the state estimation, we are in position to express $\zeta$ in terms of the state estimations
	\begin{equation}\label{th:zetaVsz}
		\zeta = \mu_p - \mu_0 = \sigma_{21} \left(\Sigma_0^{-1} \mu_0 - \mathcal{F}^T \Sigma_w^{-1}\mathcal{J}u \right) + \sigma_{22}\left( \Sigma_w^{-1}\mathcal{J}u + \mathcal{H}^T\Sigma_v^{-1}z \right) - \mu_0
	\end{equation}
	through the measurement and motion models we know 
	\begin{equation}
		z = \mathcal{H}\mathcal{F}x + \mathcal{H}\mathcal{J}u + \mathcal{H}w + v.
	\end{equation}
	Following $z$ is a linear transformation of a gaussian variable, it is also a gaussian variable with
	\begin{equation}\label{th:zProb}
		z \sim \mathcal{N}(\mathcal{H}\mathcal{F}\mu_0 + \mathcal{H}\mathcal{J}u , \mathcal{H}\mathcal{F}\Sigma_0\mathcal{F}^T\mathcal{H}^T + \mathcal{H}\Sigma_w\mathcal{H}^T + \Sigma_v),
	\end{equation}
	the same result can be obtained by explicitly calculating the pdf of the measurement likelihood function as presented in Appendix~\ref{app:measLikeProb}.
	Using (\ref{th:zProb}) and the connection provided in (\ref{th:zetaVsz}) we can say that as a linear transformation of a gaussian variable, $\zeta$ too is a gaussian variable with
	\begin{equation}
		\zeta \sim \mathcal{N}(\mu_{\zeta} ,\Sigma_{\zeta})
	\end{equation}
	where
	\begin{equation}
		\mu_{\zeta} = \sigma_{21} \left(\Sigma_0^{-1} \mu_0 - \mathcal{F}^T \Sigma_w^{-1}\mathcal{J}u \right) + \sigma_{22}\left( \Sigma_w^{-1}\mathcal{J}u + \mathcal{H}^T\Sigma_v^{-1}(\mathcal{H}\mathcal{F}\mu_0 + \mathcal{H}\mathcal{J}u) \right) - \mu_0 
	\end{equation}
	\begin{equation}
		\Sigma_{\zeta} = \sigma_{22}\mathcal{H}^T\Sigma_v^{-1}\mathcal{H}\mathcal{F}\Sigma_0\mathcal{F}^T\mathcal{H}^T\Sigma_v^{-1}\mathcal{H}\sigma_{22} + \sigma_{22}\mathcal{H}^T\Sigma_v^{-1}\mathcal{H}\Sigma_w\mathcal{H}^T\Sigma_v^{-1}\mathcal{H}\sigma_{22} + \sigma_{22}\mathcal{H}^T\Sigma_v^{-1}\mathcal{H}\sigma_{22}
	\end{equation}
\end{proof}

\section{Proof of Corollary~\ref{cor:linModelBounds}.}
\label{app:linModelJacobBaound}
\begin{lemma}[Moments of Gaussian Quadratic]\label{lm:momentsGQ}
	 Let $\Delta$ be quadratic expression $\Delta = \mathcal{S}^T C \mathcal{S} + c^T \mathcal{S} + y$, where $C^T = C$, $\mathcal{S} \sim \mathcal{N}(\mu, \Sigma)$, and $\Sigma>0$.
	Then the first two moments of $\Delta$ are given by
	\begin{equation}
		\Expec [\Delta] = tr(\Sigma^{\frac{1}{2}} C \Sigma^{\frac{1}{2}}) + \mu^T C \mu + c^T\mu + y
	\end{equation}
	\begin{equation}
		\Expec [\Delta]^2 = 2tr(\Sigma^{\frac{1}{2}} C \Sigma C  \Sigma^{\frac{1}{2}}) + \|c + 2C\mu\|_{\Sigma}^2
	\end{equation}
	where $\| . \|^2_{\Sigma}$ is the Mahalanobis distance.
\end{lemma}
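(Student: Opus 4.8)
\textbf{Proof proposal for Lemma~\ref{lm:momentsGQ} (Moments of Gaussian Quadratic).}

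The plan is to reduce the quadratic form in the correlated Gaussian $\mathcal{S}$ to a quadratic form in a standard normal vector, and then invoke the well-known moment formulas for $\chi$-type expressions. First I would write $\mathcal{S} = \mu + \Sigma^{1/2} \xi$ with $\xi \sim \mathcal{N}(0, I)$, where $\Sigma^{1/2}$ is the symmetric positive-definite square root (which exists since $\Sigma > 0$). Substituting into $\Delta = \mathcal{S}^T C \mathcal{S} + c^T \mathcal{S} + y$ and expanding yields
\begin{equation}
	\Delta = \xi^T (\Sigma^{1/2} C \Sigma^{1/2}) \xi + (c + 2C\mu)^T \Sigma^{1/2} \xi + \left( \mu^T C \mu + c^T \mu + y\right),
\end{equation}
i.e. a quadratic-plus-linear-plus-constant form in the standard normal $\xi$. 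Denote $\tilde{C} \triangleq \Sigma^{1/2} C \Sigma^{1/2}$ (symmetric), $\tilde{c} \triangleq \Sigma^{1/2}(c + 2C\mu)$, and $\tilde{y} \triangleq \mu^T C \mu + c^T \mu + y$, so $\Delta = \xi^T \tilde{C} \xi + \tilde{c}^T \xi + \tilde{y}$.

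For the first moment, I would use $\Expec[\xi^T \tilde{C} \xi] = tr(\tilde{C})$ (since $\Expec[\xi\xi^T] = I$) and $\Expec[\tilde{c}^T \xi] = 0$, giving $\Expec[\Delta] = tr(\Sigma^{1/2} C \Sigma^{1/2}) + \mu^T C \mu + c^T \mu + y$, which matches the claim. For the (centered) second moment, write $\Delta - \Expec[\Delta] = (\xi^T \tilde{C}\xi - tr\tilde{C}) + \tilde{c}^T\xi$, square it, and take expectations. The cross term $\Expec[(\xi^T\tilde{C}\xi - tr\tilde{C})(\tilde{c}^T\xi)]$ vanishes because it is an odd-order moment of a symmetric Gaussian (a sum of $\Expec[\xi_i\xi_j\xi_k]$ terms). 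The purely linear term contributes $\Expec[(\tilde{c}^T\xi)^2] = \tilde{c}^T\tilde{c} = (c+2C\mu)^T \Sigma (c+2C\mu) = \|c + 2C\mu\|_\Sigma^2$. The purely quadratic term contributes $\mathrm{Var}(\xi^T\tilde{C}\xi) = 2\,tr(\tilde{C}^2) = 2\,tr(\Sigma^{1/2} C \Sigma C \Sigma^{1/2})$, using the standard identity for the variance of a Gaussian quadratic form together with symmetry of $\tilde{C}$ (equivalently $\Expec[\xi^T\tilde{C}\xi\,\xi^T\tilde{C}\xi] = (tr\tilde{C})^2 + 2\,tr(\tilde{C}^2)$, which follows from Isserlis/Wick's theorem on fourth Gaussian moments). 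Summing gives $\Expec[(\Delta - \Expec\Delta)^2] = 2\,tr(\Sigma^{1/2} C \Sigma C \Sigma^{1/2}) + \|c + 2C\mu\|_\Sigma^2$, matching the stated expression (interpreting $\Expec[\Delta]^2$ in the lemma as the second central moment, i.e. the variance).

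The only mildly delicate point — and the main thing to get right — is the bookkeeping of the fourth-moment identity: one must be careful that the formula $\mathrm{Var}(\xi^T \tilde{C}\xi) = 2\,tr(\tilde{C}^2)$ requires $\tilde{C}$ symmetric, which holds here because $C^T = C$ and $(\Sigma^{1/2})^T = \Sigma^{1/2}$; and one must verify the cross term truly vanishes by parity. Everything else is routine linear algebra (expanding $\mathcal{S} = \mu + \Sigma^{1/2}\xi$, cyclic invariance of trace, and collecting the constant terms). I would present the reduction to standard normal form explicitly, then cite Isserlis' theorem for the fourth-moment computation rather than grinding through the index sums.
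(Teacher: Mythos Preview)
Your derivation is correct and self-contained: the whitening $\mathcal{S}=\mu+\Sigma^{1/2}\xi$, the identification $\Expec[\xi^T\tilde C\xi]=tr(\tilde C)$, the vanishing of the odd cross-moment, and the fourth-moment identity $\mathrm{Var}(\xi^T\tilde C\xi)=2\,tr(\tilde C^2)$ all go through as you describe, and your reading of $\Expec[\Delta]^2$ as the variance is the right one given the stated formula.

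The paper, however, takes a different route entirely: it does not derive the moments at all but simply cites Theorem~3.2b.3 of Mathai and Provost's monograph on quadratic forms in random variables. So your approach is strictly more informative than the paper's --- you actually give the argument rather than pointing to a reference. What the paper's approach buys is brevity and an authoritative source for the reader who wants the general theory; what yours buys is self-containment and transparency about why the formulas hold. If you want to match the paper's style you could replace your derivation with the citation, but there is no mathematical gap in what you wrote.
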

\begin{proof}
	This is Theorem 3.2b.3 in \cite{Mathai92quadratic}.
\end{proof}
Interestingly enough, there are specified conditions under-which a Gaussian quadratic form is distributed as a non-central $\chi^2$ distribution, although not required for our proof we supply these conditions in Appendix~\ref{app:chiSquared} for completion. 

\begin{forceCorollary}{\ref{cor:linModelBounds}}[of Theorem~\ref{th:JacobBoundProb}]
	Let r(b,u) be \aH continuous with $\lambda_{\alpha}$ and $\alpha \in (0,1]$. Let $J_{k+l|k+l}$ and $J_{k+l|k}$ be objective values of the same time step $k+l$, calculated based on information up to time $k+l$ and $k$ respectively. Let $L$ be a planning horizon such that $L \geq l+1$. Let the motion and measurement models be linear with additive Gaussian noise (\ref{eq:wf:linMotionModel})-(\ref{eq:wf:linMeasurementModel}).  Then the bound of $(J_{k+l|k+l} -J_{k+l|k})$ can be explicitly calculated.
	\end{forceCorollary}
\begin{proof}
	Following Theorem~\ref{th:JacobBoundProb}, the difference $(J_{k+l|k+l} -J_{k+l|k})$ is bounded by 
	\begin{equation}\label{eq:wf:boundProb}
		\mid J_{k+l|k+l} - J_{k+l|k}  - \!\!\! \sum_{i = k+l+1}^{k+L} \Expec_{z \sim p_k} (\omega-1) r_{i|k} \mid \leq \left(4 \cdot ln 2\right)^{\frac{\alpha}{2}} \cdot \lambda_{\alpha} \cdot\left[ (L-l)\cdot \epsilon_{wf}^{\alpha} +  \sum_{i= k+l+1}^{k+L} \left( \sum_{j=k+l+1}^{i} \Expec \Delta_j \right)^{\frac{\alpha}{2}} \right],
	\end{equation}
	were 
	\begin{equation*}
		\epsilon_{wf} = \JD(b[X_{k+l|k+l}],b[X_{k+l|k}]).	
	\end{equation*}
	\begin{equation*}
		\Delta_j = \JD^2(b[X_{j|k+l}], b[X_{j|k}]) - \JD^2(b[X_{j-1|k+l}], b[X_{j-1|k}]).
	\end{equation*}
	In order to explicitly calculate (\ref{eq:wf:boundProb}) we are left with calculating the expected value of $\Delta_j$ $\forall j$, where the rewards $r_{i_k}$ are given from previously calculated planning session, and just need to be re-weighted using ($\omega -1$).
	Following Lemma~\ref{lm:deltaAsGQ} we know $\Delta_j$ to be a quadratic expression of the Gaussian multinomial variable $\mathcal{S}_j \sim \mathcal{N}\left(\mu_{\mathcal{S}_j} , \Sigma_{\mathcal{S}_j}\right)$ 
	\begin{equation*}
		\Delta_j = \mathcal{S}_j^T C_j \mathcal{S}_j + c_j^T \mathcal{S}_j + y_j	
	\end{equation*}

	As such following Lemma~\ref{lm:momentsGQ}, the first moment of $\Delta_j$ is readily given by
	\begin{equation}
		\Expec \Delta_j = tr(\Sigma_{\mathcal{S}_j}^{\frac{1}{2}} C \Sigma_{\mathcal{S}_j}^{\frac{1}{2}}) + \mu_{\mathcal{S}_j}^T C_j \mu_{\mathcal{S}_j} + c_j^T\mu_{\mathcal{S}_j} + y_j.	
	\end{equation}

\end{proof}

\section{Measurement Likelihood Under Linear Gaussian Models.}
\label{app:measLikeProb}
In this section we calculate the measurement likelihood probability under the assumption of linear gaussian models. We will show that under these assumptions the measurement likelihood is gaussian. Our proof starts with introducing the state into the measurement likelihood, followed by some manipulations to marginalize the state out and be left with a gaussian function over the measurement.

We start by introducing the state into the measurement likelihood, which allows us to get an equivalent expression with the measurement and motion models,
\begin{equation}
	\prob{z|H^-} = \int \prob{z|x} \prob{x|H^-} dx
\end{equation}
where 
\begin{equation}
	\prob{z|x} = \mathcal{N}(Hx,\Sigma_v) \quad , \quad \prob{x|H^-} = \mathcal{N}(\mu_p, \Sigma_p)
\end{equation}
\begin{equation}
	\mu_p = F \mu_0 + Ju \quad ,\quad \Sigma_p = \Sigma_w + F\Sigma_0 F^T.
\end{equation}

\begin{equation}\label{eq:probdev:explicit}
	\prob{z|H^-} = \int \frac{1}{\sqrt{(2\pi)^{d_z} \mid \Sigma_v\mid}}e^{-\frac{1}{2} \|z-Hx \|^2_{\Sigma_v}} \frac{1}{\sqrt{(2\pi)^{d_x} \mid \Sigma_p\mid}}e^{-\frac{1}{2} \|x-\mu_p \|^2_{\Sigma_p}} dx.
\end{equation}
Let us look only at the argument of the exponent (divided by $-\frac{1}{2}$)
\begin{equation}
		\|z-Hx \|^2_{\Sigma_v} + \|x-\mu_p \|^2_{\Sigma_p} = (z-Hx)^T \Sigma_v^{-1}(z-Hx) + (x-\mu_p)^T \Sigma_p^{-1}(x-\mu_p)
\end{equation}
\begin{equation}
		= z^T \Sigma_v^{-1}z -2x^T H^T \Sigma_v^{-1} z + x^T H^T \Sigma_v^{-1} H x + x^T\Sigma_p^{-1}x - 2x^T\Sigma_p^{-1}\mu_p + \mu_p^T\Sigma_p^{-1}\mu_p
\end{equation}
\begin{equation}\label{eq:probdev:1}
		= x^T(H^T \Sigma_v^{-1}H + \Sigma_p^{-1})x - 2x^T(H^T\Sigma_v^{-1}z + \Sigma_p^{-1}\mu_p) +   z^T \Sigma_v^{-1}z + \mu_p^T\Sigma_p^{-1}\mu_p.
\end{equation}
Let us define
\begin{equation}\label{eq:probdev:sig_x}
	\Sigma_x \doteq (H^T \Sigma_v^{-1}H + \Sigma_p^{-1})^{-1} = \Sigma_p - \Sigma_p H^T(\Sigma_v + H\Sigma_pH^T)^{-1}H\Sigma_p 
\end{equation}
\begin{equation}\label{eq:probdev:mu_x}
	\mu_x \doteq (H^T \Sigma_v^{-1}H + \Sigma_p^{-1})^{-1}(H^T\Sigma_v^{-1}z + \Sigma_p^{-1}\mu_p) = \Sigma_x(H^T\Sigma_v^{-1}z + \Sigma_p^{-1}\mu_p)
\end{equation}
now we can reformulate (\ref{eq:probdev:1}) as
\begin{equation}
		= x^T\Sigma_x^{-1}x - 2x^T\Sigma_x^{-1}\mu_x + \mu_x^T\Sigma_x^{-1}\mu_x - \mu_x^T\Sigma_x^{-1}\mu_x + z^T \Sigma_v^{-1}z + \mu_p^T\Sigma_p^{-1}\mu_p
\end{equation}
\begin{equation}
		= \|x- \mu_x \|^2_{\Sigma_x}- \mu_x^T\Sigma_x^{-1}\mu_x + z^T \Sigma_v^{-1}z + \mu_p^T\Sigma_p^{-1}\mu_p.
\end{equation}
Going back to (\ref{eq:probdev:explicit}) we get
\begin{equation}
	\prob{z|H^-} = \sqrt{\frac{\mid\Sigma_x \mid}{(2\pi)^{d_z} \mid \Sigma_p\mid \mid \Sigma_v\mid }}e^{-\frac{1}{2}\left(z^T \Sigma_v^{-1}z + \mu_p^T\Sigma_p^{-1}\mu_p - \mu_x^T\Sigma_x^{-1}\mu_x \right) } \int \frac{1}{\sqrt{(2\pi)^{d_x} \mid \Sigma_x\mid}}e^{-\frac{1}{2} \|x-\mu_x \|^2_{\Sigma_x}} dx
\end{equation}
\begin{equation}\label{eq:probdev:wo_x}
	 = \sqrt{\frac{\mid\Sigma_x \mid}{(2\pi)^{d_z} \mid \Sigma_p\mid \mid \Sigma_v\mid}}e^{-\frac{1}{2}\left(z^T \Sigma_v^{-1}z + \mu_p^T\Sigma_p^{-1}\mu_p - \mu_x^T\Sigma_x^{-1}\mu_x \right) }.
\end{equation}

We now try to reformulate the exponent argument in (\ref{eq:probdev:wo_x}) into a quadratic form
\begin{equation}
	z^T \Sigma_v^{-1}z + \mu_p^T\Sigma_p^{-1}\mu_p - \mu_x^T\Sigma_x^{-1}\mu_x = 
\end{equation}
\begin{equation}
	= z^T \Sigma_v^{-1}z + \mu_p^T\Sigma_p^{-1}\mu_p - (H^T\Sigma_v^{-1}z + \Sigma_p^{-1}\mu_p)^T\Sigma_x(H^T\Sigma_v^{-1}z + \Sigma_p^{-1}\mu_p) 
\end{equation}
\begin{equation}\label{eq:probdev:pre_z}
	= z^T \left[ \Sigma_v^{-1} - \Sigma_v^{-1} H \Sigma_x H^T \Sigma_v^{-1} \right] z - 2z^T\Sigma_v^{-1} H \Sigma_x \Sigma_p^{-1}\mu_p - \mu_p^T\Sigma_p^{-1}\Sigma_x\Sigma_p^{-1}\mu_p + \mu_p^T\Sigma_p^{-1}\mu_p
\end{equation}
Let us define 
\begin{equation}\label{eq:probdev:sig_z}
	\Sigma_z \doteq \left[ \Sigma_v^{-1} - \Sigma_v^{-1} H \Sigma_x H^T \Sigma_v^{-1} \right]^{-1} = \left[ \Sigma_v^{-1} - \Sigma_v^{-1} H (H^T \Sigma_v^{-1}H + \Sigma_p^{-1})^{-1} H^T \Sigma_v^{-1} \right]^{-1}
\end{equation}
\begin{equation}
	\Sigma_z \doteq \Sigma_v + H\Sigma_p H^T
\end{equation}
\begin{equation}\label{eq:probdev:mu_z}
	\mu_z \doteq  \left( \Sigma_v + H \Sigma_p H^T \right)\Sigma_v^{-1} H \Sigma_x \Sigma_p^{-1}\mu_p
\end{equation}
now we can reformulate (\ref{eq:probdev:pre_z}) as
\begin{equation}
	= z^T \Sigma_z^{-1} z - 2z^T\Sigma_z^{-1} \mu_z + \mu_z^T\Sigma_z^{-1} \mu_z - \mu_z^T\Sigma_z^{-1} \mu_z - \mu_p^T\Sigma_p^{-1}\Sigma_x\Sigma_p^{-1}\mu_p + \mu_p^T\Sigma_p^{-1}\mu_p
\end{equation}
\begin{equation}\label{eq:probdev:z_arg}
	= \| z - \mu_z\|^2_{\Sigma_z} - \mu_z^T\Sigma_z^{-1} \mu_z - \mu_p^T\Sigma_p^{-1}\Sigma_x\Sigma_p^{-1}\mu_p + \mu_p^T\Sigma_p^{-1}\mu_p.
\end{equation}
Using (\ref{eq:probdev:z_arg}) in (\ref{eq:probdev:wo_x}) we get
\begin{equation}
	\prob{z|H^-} = \sqrt{\frac{\mid\Sigma_x \mid \mid\Sigma_z \mid }{\mid\Sigma_p \mid \mid \Sigma_v\mid}}e^{-\frac{1}{2}\left(- \mu_z^T\Sigma_z^{-1} \mu_z - \mu_p^T\Sigma_p^{-1}\Sigma_x\Sigma_p^{-1}\mu_p + \mu_p^T\Sigma_p^{-1}\mu_p \right) }\frac{1}{\sqrt{(2\pi)^{d_z} \mid \Sigma_z\mid}}e^{-\frac{1}{2} \| z -\mu_z \|_{\Sigma_z}^{2}}.
\end{equation}
Now let us use the fact that $\prob{z|H^-}$ is a valid pdf over $z$

\begin{equation}
	\int \prob{z|H^-} dz = \int \sqrt{\frac{\mid\Sigma_x \mid \mid\Sigma_z \mid }{\mid\Sigma_p \mid \mid \Sigma_v\mid}}e^{-\frac{1}{2}\left(- \mu_z^T\Sigma_z^{-1} \mu_z - \mu_p^T\Sigma_p^{-1}\Sigma_x\Sigma_p^{-1}\mu_p + \mu_p^T\Sigma_p^{-1}\mu_p \right) }\frac{1}{\sqrt{(2\pi)^{d_z} \mid \Sigma_z\mid}}e^{-\frac{1}{2} \| z -\mu_z \|_{\Sigma_z}^{2}} dz
\end{equation}
\begin{equation}
	 = \sqrt{\frac{\mid\Sigma_x \mid \mid\Sigma_z \mid }{\mid\Sigma_p \mid \mid \Sigma_v\mid}}e^{-\frac{1}{2}\left(- \mu_z^T\Sigma_z^{-1} \mu_z - \mu_p^T\Sigma_p^{-1}\Sigma_x\Sigma_p^{-1}\mu_p + \mu_p^T\Sigma_p^{-1}\mu_p \right) }\int \frac{1}{\sqrt{(2\pi)^{d_z} \mid \Sigma_z\mid}}e^{-\frac{1}{2} \| z -\mu_z \|_{\Sigma_z}^{2}} dz
\end{equation}
\begin{equation}
	 = \sqrt{\frac{\mid\Sigma_x \mid \mid\Sigma_z \mid }{\mid\Sigma_p \mid \mid \Sigma_v\mid}}e^{-\frac{1}{2}\left( \mu_p^T(\Sigma_p^{-1} - \Sigma_p^{-1}\Sigma_x\Sigma_p^{-1})\mu_p - \mu_z^T\Sigma_z^{-1} \mu_z \right) } = 1
\end{equation}
using the matrix determinant lemma over $\mid \Sigma_x \mid$ we get
\begin{equation}
	\sqrt{\frac{\mid\Sigma_x \mid \mid\Sigma_z \mid }{\mid\Sigma_p \mid \mid \Sigma_v\mid}} = 
	\sqrt{\frac{\mid \Sigma_v + H \Sigma_p H^T \mid \mid \Sigma_p^{-1} + H^T \Sigma_v^{-1} H \mid ^{-1} }{\mid\Sigma_p \mid \mid \Sigma_v\mid}} = \sqrt{\frac{\mid \Sigma_p^{-1} + H^T \Sigma_v^{-1} H \mid  \mid\Sigma_p \mid \mid \Sigma_v\mid}{ \mid \Sigma_p^{-1} + H^T \Sigma_v^{-1} H \mid  \mid\Sigma_p \mid \mid \Sigma_v\mid}}	= 1,
\end{equation}
meaning the following has to hold
\begin{equation}\label{eq:openQ}
	 \mu_p^T(\Sigma_p^{-1} - \Sigma_p^{-1}\Sigma_x\Sigma_p^{-1})\mu_p = \mu_z^T\Sigma_z^{-1} \mu_z. 
\end{equation}
And so
\begin{equation}
	\prob{z|H^-} = \mathcal{N} (\mu_z, \Sigma_z)
\end{equation}
where
\begin{equation}
	\Sigma_z \doteq \Sigma_v + H\Sigma_p H^T = \Sigma_v + \mathcal{H}\Sigma_w\mathcal{H}^T + \mathcal{H}\mathcal{F}\Sigma_0\mathcal{F}^T\mathcal{H}^T 
\end{equation}
\begin{equation}\label{eq:probdev:mu_z}
	\mu_z \doteq  \left( \Sigma_v + H \Sigma_p H^T \right)\Sigma_v^{-1} H \Sigma_x \Sigma_p^{-1}\mu_p = \mathcal{H}\mathcal{F}\mu_0 + \mathcal{H}\mathcal{J}u.
\end{equation}
%

%Side note: explicitly showing that Eq.~(\ref{eq:openQ}) is true, is still an open question although not strictly required as part of the aforementioned proof.
%%
%\begin{equation}
%	 \mu_p^T(\Sigma_p^{-1} - \Sigma_p^{-1}\Sigma_x\Sigma_p^{-1})\mu_p = \mu_z^T\Sigma_z^{-1} \mu_z. 
%\end{equation}
%%
%%
%\begin{equation}
%	 \mu_p^T(\Sigma_p^{-1} - \Sigma_p^{-1}\Sigma_x\Sigma_p^{-1})\mu_p = \mu_p^T  \Sigma_p^{-1} \Sigma_x H^T\Sigma_v^{-1} \left( \Sigma_v + H \Sigma_p H^T \right)\Sigma_v^{-1} H \Sigma_x \Sigma_p^{-1}\mu_p 
%\end{equation}
%%
%%
%\begin{equation}
%	 \Sigma_p^{-1} - \Sigma_p^{-1}\Sigma_x\Sigma_p^{-1} = \Sigma_p^{-1} \Sigma_x H^T\Sigma_v^{-1} \left( \Sigma_v + H \Sigma_p H^T \right)\Sigma_v^{-1} H \Sigma_x \Sigma_p^{-1}
%\end{equation}
%%
%\begin{equation}
%	 \Sigma_p - \Sigma_x = \Sigma_x H^T\Sigma_v^{-1} \left( \Sigma_v + H \Sigma_p H^T \right)\Sigma_v^{-1} H \Sigma_x 
%\end{equation}
%%
%\begin{equation}
%	 \vdots
%\end{equation}
%%

\section{Gaussian Quadratic as $\chi^2$}
\label{app:chiSquared}
\begin{lemma}
	 Let $\Delta$ be quadratic expression $\Delta = \mathcal{S}^T C \mathcal{S} + c^T \mathcal{S} + y$, where $C^T = C$, $\mathcal{S} \sim \mathcal{N}(\mu, \Sigma)$, and $\Sigma>0$. Then the set of necessary and sufficient conditions for $\Delta$ to be distributed as non-central $\chi^2$ with non-centrality parameter $\delta^2$ and degrees of freedom r is that
	 \begin{equation}
	 	C\Sigma C = C
	 \end{equation}
	\begin{equation}
	 	r = tr(C\Sigma)
	 \end{equation}
	 \begin{equation}
	 	c = C\Sigma c \ , \ y = \frac{1}{4}c^T\Sigma c \ , \ \delta^2 = \mu^T C \mu + \mu^Tc+y
	 \end{equation}
\end{lemma}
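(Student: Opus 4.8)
The plan is to whiten the Gaussian vector, reduce $\Delta$ to a decoupled sum of quadratic-plus-linear terms in i.i.d.\ standard normals, and then match moment generating functions (MGFs) with that of the non-central $\chi^2$; this is essentially the argument behind the analogous characterization in \cite{Mathai92quadratic}, so one clean option is simply to cite it. Concretely, I would first set $W=\Sigma^{-1/2}(\mathcal{S}-\mu)\sim\mathcal{N}(0,I)$, so that substituting $\mathcal{S}=\Sigma^{1/2}W+\mu$ gives $\Delta = W^{T}BW + h^{T}W + K$ with $B\triangleq\Sigma^{1/2}C\Sigma^{1/2}$ symmetric, $h\triangleq\Sigma^{1/2}(2C\mu+c)$, and $K\triangleq\mu^{T}C\mu+c^{T}\mu+y$. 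Diagonalizing $B=P^{T}\,\mathrm{diag}(\lambda_1,\dots,\lambda_n)\,P$ with $P$ orthogonal and putting $Y=PW\sim\mathcal{N}(0,I)$, $g=Ph$, one obtains the canonical form $\Delta=\sum_i\lambda_iY_i^2+\sum_ig_iY_i+K$, whose MGF on a neighbourhood of $0$ is $\prod_i(1-2\lambda_it)^{-1/2}\exp\!\big(\tfrac12\sum_i\tfrac{g_i^2t^2}{1-2\lambda_it}+Kt\big)$.

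Next I would compare this with the non-central $\chi^2_r(\delta^2)$ MGF $(1-2t)^{-r/2}\exp\!\big(\tfrac{\delta^2t}{1-2t}\big)$. Equality of these analytic functions forces, by matching poles, that exactly $r$ of the $\lambda_i$ equal $1$ and the remaining $n-r$ vanish; clearing the common denominator $(1-2t)$ in the exponents then forces $g_i=0$ for every $i$ with $\lambda_i=0$, together with $K=\delta^2$ and $\delta^2=\tfrac14\sum_ig_i^2$. It then remains to translate these three facts into the stated matrix identities. The eigenvalue condition is exactly idempotency of $B$, and $B^2=B\iff\Sigma^{1/2}C\Sigma C\Sigma^{1/2}=\Sigma^{1/2}C\Sigma^{1/2}\iff C\Sigma C=C$, whence $r=\mathrm{tr}\,B=\mathrm{tr}(C\Sigma)$. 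The vanishing of $g$ on $\ker B$ is equivalent to $Bh=h$; using $B\Sigma^{1/2}=\Sigma^{1/2}C\Sigma$ and $C\Sigma C=C$ this unwinds to $C\Sigma c=c$. Finally $\tfrac14\sum_ig_i^2=\tfrac14h^{T}h=\tfrac14(2C\mu+c)^{T}\Sigma(2C\mu+c)$, and expanding this with $C\Sigma C=C$ and $C\Sigma c=c$ gives $\mu^{T}C\mu+\mu^{T}c+\tfrac14c^{T}\Sigma c$; equating this with $\delta^2$ and with $K$ yields both $y=\tfrac14c^{T}\Sigma c$ and $\delta^2=\mu^{T}C\mu+\mu^{T}c+y$. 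The converse direction just runs these equivalences backwards: assuming the three identities, the same computation shows the canonical form completes the square into $\sum_{j=1}^{r}(Y_j+a_j)^2$ with $\sum_ja_j^2=\delta^2$.

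The conceptually easy but fiddly part — and the main obstacle — is the MGF bookkeeping in the necessity direction: one must carefully separate the contribution of the eigenvalue-$1$ block from that of the kernel block, and argue that any nonzero linear coefficient on the kernel, or any mismatch in the constant term, destroys the $\chi^2$ form, so that the three matrix identities are genuinely necessary and not merely sufficient. Since none of this is deep and it reproduces a standard fact, the most economical write-up is to carry out the reduction to canonical form explicitly and then invoke \cite{Mathai92quadratic} for the canonical-form characterization, mirroring the treatment of Lemma~\ref{lm:momentsGQ}.
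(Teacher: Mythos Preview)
Your proposal is correct. The paper's own proof is a one-line citation --- ``This is Theorem~5.1.4 in \cite{Mathai92quadratic}'' --- exactly as it did for Lemma~\ref{lm:momentsGQ}, so your closing recommendation already matches the paper. The difference is that you actually reconstruct the argument behind that theorem: whiten to $W\sim\mathcal{N}(0,I)$, diagonalize $B=\Sigma^{1/2}C\Sigma^{1/2}$, and match the resulting MGF against that of $\chi^2_r(\delta^2)$ to force $\lambda_i\in\{0,1\}$ (hence $B^2=B\Leftrightarrow C\Sigma C=C$), $g_i=0$ on $\ker B$ (hence $Bh=h\Leftrightarrow C\Sigma c=c$), and the constant-term identities giving $y=\tfrac14c^T\Sigma c$ and $\delta^2=\mu^TC\mu+\mu^Tc+y$. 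Your bookkeeping checks out, including the translation $B^2=B\Leftrightarrow C\Sigma C=C$ via conjugation by $\Sigma^{1/2}$ and the expansion of $\tfrac14h^Th$ using the already-established idempotency. What the paper buys with its bare citation is brevity; what your sketch buys is self-containment and an explicit demonstration that the conditions are \emph{necessary}, not just sufficient, which a reader without access to \cite{Mathai92quadratic} would otherwise have to take on faith.
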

\begin{proof}
	This is Theorem 5.1.4 in \cite{Mathai92quadratic}.
\end{proof}

%%
%%\section*{Appendix C: Degenerate Cases of Data Association Aware BSP}
%%\label{app:DegenerateCases}
%%\input{Appendix-DegenerateCases}
% ======================

\end{document}